\newcommand{\doublewidetilde}[1]{{%
  \mathpalette\double@widetilde{#1}%
}}
\newcommand{\double@widetilde}[2]{%
  \sbox\z@{$\m@th#1\widetilde{#2}$}%
  \ht\z@=.9\ht\z@
  \widetilde{\box\z@}%
}
\definecolor{myred}{HTML}{FFE5D1}
\definecolor{myblue}{HTML}{D4f2f2}
\newcommand{\dE}{{\mathbb{E}}}
\newcommand{\wR}{\widetilde{R}}
\newcommand{\wF}{\widetilde{F}}
\newcommand{\wQ}{\widetilde{Q}}
\newcommand{\wt}{\widetilde{t}}
\newcommand{\wJ}{\widetilde{J}}
\newcommand{\var}{\textup{var}}
\newcommand{\cM}{\mathcal{M}}
\newcommand{\cV}{\mathcal{V}}
\newtheorem*{rep@theorem}{\rep@title}
\newcommand{\newreptheorem}[2]{%
\newenvironment{rep#1}[1]{%
 \def\rep@title{#2 \ref{##1}}%
 \begin{rep@theorem}}%
 {\end{rep@theorem}}}
\def\renewtheorem#1{%
  \expandafter\let\csname#1\endcsname\relax
  \expandafter\let\csname c@#1\endcsname\relax
  \gdef\renewtheorem@envname{#1}
  \renewtheorem@secpar
}
\def\renewtheorem@secpar{\@ifnextchar[{\renewtheorem@numberedlike}{\renewtheorem@nonumberedlike}}
\def\renewtheorem@numberedlike[#1]#2{\newtheorem{\renewtheorem@envname}[#1]{#2}}
\def\renewtheorem@nonumberedlike#1{  
\def\renewtheorem@caption{#1}
\edef\renewtheorem@nowithin{\noexpand\newtheorem{\renewtheorem@envname}{\renewtheorem@caption}}
\renewtheorem@thirdpar
}
\def\renewtheorem@thirdpar{\@ifnextchar[{\renewtheorem@within}{\renewtheorem@nowithin}}
\def\renewtheorem@within[#1]{\renewtheorem@nowithin[#1]}
\newtheorem{assumption}{Assumption}
\newtheorem*{theorem*}{Theorem}
\newtheorem*{lemma*}{Lemma}
\newtheorem{approximation}{Approximation}
\newcommand\new[1]{{\color{black}{#1}}}
\def\maketitle{\par
 \begingroup
   \def\thefootnote{\fnsymbol{footnote}}
   \def\@makefnmark{\hbox to 0pt{$^{\@thefnmark}$\hss}}
   \deffootnote[1.7em]{1.6em}{2em}{$^\thefootnotemark$}
   \@maketitle \@thanks
 \endgroup
\setcounter{footnote}{0}
 \let\maketitle\relax \let\@maketitle\relax
 \gdef\@thanks{}\gdef\@author{}\gdef\@title{}\let\thanks\relax}
\begin{document}

\title{On Tilted Losses in Machine Learning: \\ Theory and Applications}

\author{\name Tian Li\thanks{Equal contribution.} \email tianli@cmu.edu \\
      \addr Computer Science Department \\
      Carnegie Mellon University \\
      Pittsburgh, PA 15213, USA
      \AND
      \name Ahmad Beirami\footnotemark[1]\hspace{0.07in}\thanks{Work done at Meta AI.} \email beirami@google.com \\
      \addr Google Research \\
      New York, NY 10011, USA
      \AND Maziar Sanjabi \email maziars@fb.com \\
      \addr Meta AI \\
      Menlo Park, CA 94025, USA 
      \AND Virginia Smith \email smithv@cmu.edu \\
      \addr Machine Learning Department \\
      Carnegie Mellon University \\
      Pittsburgh, PA 15213, USA\vspace{-.1in}}


\editor{Zaid Harchaoui}

\maketitle

\begin{abstract}%
Exponential tilting is a technique commonly used in fields such as statistics, probability, information theory, and optimization to create parametric distribution shifts. 
Despite its prevalence in related fields, tilting has not seen widespread use in machine learning. In this work, we aim to bridge this gap by exploring the use of tilting in risk minimization. We study a simple extension to ERM---tilted empirical risk minimization (TERM)---which uses exponential tilting to flexibly tune the impact of individual losses. The resulting framework has several useful properties: We show that TERM can increase or decrease the influence of outliers, respectively, to enable fairness or robustness; has variance-reduction properties that can benefit generalization; and can be viewed as a smooth approximation to {the tail probability of losses}. Our work makes  connections between TERM and related objectives, such as Value-at-Risk, Conditional Value-at-Risk, and distributionally robust optimization (DRO).  We develop batch and stochastic first-order optimization methods for solving TERM, provide convergence guarantees for the solvers, and show that the framework can be efficiently solved relative to common alternatives. Finally, we demonstrate that TERM can be used for a multitude of applications in machine learning, such as enforcing fairness between subgroups, mitigating the effect of outliers, and handling class imbalance. Despite the straightforward modification TERM makes to traditional ERM objectives, we find that the framework can consistently outperform ERM and deliver competitive performance with state-of-the-art, problem-specific approaches. 
\end{abstract}

\vspace{.1in}
\begin{keywords}
Exponential tilting, empirical risk minimization, Value-at-Risk, superquantile optimization, fairness, robustness.
\end{keywords}


\section{Introduction}\label{sec:introduction}
\label{sec:intro}
 
\begingroup
\setlength{\thinmuskip}{0mu}
\setlength{\medmuskip}{0.75mu}
\setlength{\thickmuskip}{0.75mu}
Many statistical estimation procedures rely on the concept of empirical risk minimization (ERM), in which the parameter of interest, 
$\theta \in \Theta \subseteq \mathbb{R}^d$, is estimated by minimizing an average loss over the data $\{x_1, \, \dots, \, x_N\}$:
\endgroup

\vspace{-0.4em}
\begin{equation}\label{eq: ERM}
    \overline{R}(\theta) :=  \frac{1}{N} \sum_{i \in [N]}f(x_i; \theta) \, .
\end{equation} 
\vspace{-0.4em}

Although ERM is widely used in machine learning, it is known to perform poorly in situations where average performance is not an appropriate surrogate for the problem of interest. Significant research has thus been devoted to developing alternatives to traditional ERM for diverse applications, such as learning in the presence of noisy/corrupted data~\citep{khetan2017learning,jiang2018mentornet}, performing classification with imbalanced data~\citep{lin2017focal,malisiewicz2011ensemble}, ensuring that subgroups within a population are treated fairly~\citep{hashimoto2018fairness,samadi2018price}, or developing solutions with favorable out-of-sample performance~\citep{duchi2019variance}.

In this paper, we suggest that deficiencies in ERM can be flexibly addressed via a unified framework, {\em tilted empirical risk minimization (TERM)}. TERM encompasses a family of objectives, parameterized by a  real-valued hyperparameter, $t$. For $t \in \mathbb{R}^{\setminus 0},$ the $t$-tilted loss (TERM objective) is given by: 

\vspace{-0.4em}
\begin{equation}
    \label{eq: TERM}
    \wR(t ;\theta) := \frac{1}{t} \log\bigg( \frac{1}{N}\sum_{i \in [N]}  e^{t f(x_i; \theta)} \bigg) \, .
\end{equation}
\vspace{-0.4em}

TERM generalizes ERM as the $0$-tilted loss recovers the average loss, i.e., $\wR(0, \theta){=} \overline{R}(\theta)$.\footnote{$\wR(0;\theta)$ is defined in~\eqref{eq:def-R0} via the continuous extension of $R(t; \theta)$.} It also recovers other popular alternatives such as the max-loss ($t{\to}{+}\infty$) and min-loss ($t{\to}{-}\infty$) (Lemma~\ref{lemma: special_cases}). As we discuss below, although tilted risk minimization is not widely used in machine learning, variants of tilting have been extensively studied in related fields including statistics, applied probability, optimization, and information theory. 

\subsection{Perspectives on Exponential Tilting} \label{sec:intro:background}
We begin by defining \textit{exponential tilting} and discussing uses of tilting in various fields. Let $\mathcal{P} := \{ p_\theta\}$ be a set of parametric distributions.
For any $x \in \mathcal{X}$, 
we let  $f(x; \theta)$ be the information of $x$ under $\theta$, which is defined as \citep{cover1991information}:
\begin{equation}
\label{eq:information}
    f(x; \theta) := -\log p_\theta(x).
\end{equation}
Further assume that $X$ is a random variable drawn from distribution $p(\cdot),$ which is not necessarily matched to $\cal P$, i.e., the model family may be misspecified.
The cumulant generating function of the information random variable, $f(X; \theta),$ can be stated  as~\cite[Section 2.2]{dembo-zeitouni}:
\begin{equation}
    \Lambda_X(t; \theta) := \log \left( \dE\left[e^{tf(X; \theta)}\right]\right) = \log \sum_x p(x) p_\theta(x)^{-t},
\end{equation}
where in this paper $\dE[ \cdot ]$ denotes expectation with respect to the true distribution $p$ unless otherwise stated.
This expectation is commonly referred to as an \textit{exponential tilt} of the information density, and can induce parametric distribution shifts that have varied applications in probability, statistics, and information theory. In particular, it is noteworthy that if $\cal P$ is an exponential family of distributions parameterized by $\theta,$ then the tilted distribution $p_{\theta}(x)^t$ (when normalized by $\int_{\cal X} p_{\theta}(x)^t dx$) also belongs to the same exponential family. 
Further, given samples $\{x_i\}_{i \in [N]},$ the empirical cumulant generating function is defined as: 
\begin{equation}
    \widetilde{\Lambda}(t; \theta) := \log \left( \frac{1}{N}\sum_{i \in [N]}\left\{e^{tf(x_i; \theta)}\right\}\right).
    \label{eq:cf}
\end{equation}
It is thus evident that TERM~\eqref{eq: TERM} can be viewed as an appropriately scaled variant of the empirical cumulant generating function in~\eqref{eq:cf}. Although tilting of this form has been used in a number of related disciplines, uses of exponential tilting in machine learning are relatively unexplored. We provide several perspectives on exponential tilting from other fields below.

\paragraph{Statistics.} 
Exponential tilting is well-known as a distribution shifting technique in statistics, where the main idea is to draw samples from an exponentially tilted version of the original distribution to improve the convergence properties of statistical estimation, especially when the distribution of interest belongs to an exponential family, such as Gaussian or multinomial.
Common use cases include rejection sampling, rare-event simulation, saddle-point approximation~\cite[p. 156]{butler2007saddlepoint}, and importance sampling~\citep{siegmund1976importance}. 

\paragraph{Applied probability.} 
In large deviations theory, exponential tilting lies at the heart of deriving concentration bounds. For example, Chernoff bounds apply Markov's inequality to $e^{tX},$ which results in a parametric set of bounds by using  exponential tilts of various orders. The bound may then be further optimized on the real tilt value to derive the tightest possible bound~\citep{dembo-zeitouni}.

\paragraph{Information theory.} While source coding limits and channel capacity are characterized by Shannon entropy and Shannon mutual information (which are simple averages over the information~\eqref{eq:information})~\citep{cover1991information}, there are other elements of information theory that are not characterized by the average, such as error exponents in channel decoding~\citep{Gallager-book}, probability of error in list decoding~\citep{Merhav-list}, and computational cost in sequential decoding~\citep{massey94, Arikan-guesswork}. These fundamental elements of information theory are asymptotically determined by a non-zero tilted cumulant generating function of the information random variable~\eqref{eq:information} (see  \citep{beirami2018characterization} for further discussion). 

\paragraph{Optimization.} 
Exponential tilting has also appeared as a minimax smoothing approach in optimization~\citep{kort1972new,pee2011solving,liu2019deep}.
{Such smooth approximations to the max often appear through LogSumExp functions, with applications in geometric programming~\citep[][Sec.~9.7]{calafiore2014optimization}, and boosting~\citep{mason1999boosting,shen2010dual}.} We discuss min-max objectives and the connections with TERM in several subsequent sections of the paper.

\paragraph{\textit{Machine learning.}}
Despite the rich history of tilted objectives in related fields, they have not seen widespread use in ML beyond limited applications such as robust regression~\citep{wang2013robust} and sequential decision making~\citep{howard1972risk,borkar2002q}. 
In this work, we argue that tilting is a critical yet undervalued  tool in machine learning. We demonstrate the effectiveness of tilting by (i) rigorously studying properties of the TERM objective, and (ii) exploring its utility for a wide range of ML applications. Surprisingly, we find that this simple extension to ERM can match or exceed state-of-the-art performance from highly tuned, bespoke solutions to common ML problems,  from learning with noisy data to ensuring fair performance between subgroups. We highlight several motivating applications of TERM below and provide an outline of the remainder of the paper in Section~\ref{sec:contributions}.


\begin{figure}[t!]
    \centering
    \includegraphics[width=1.0\textwidth]{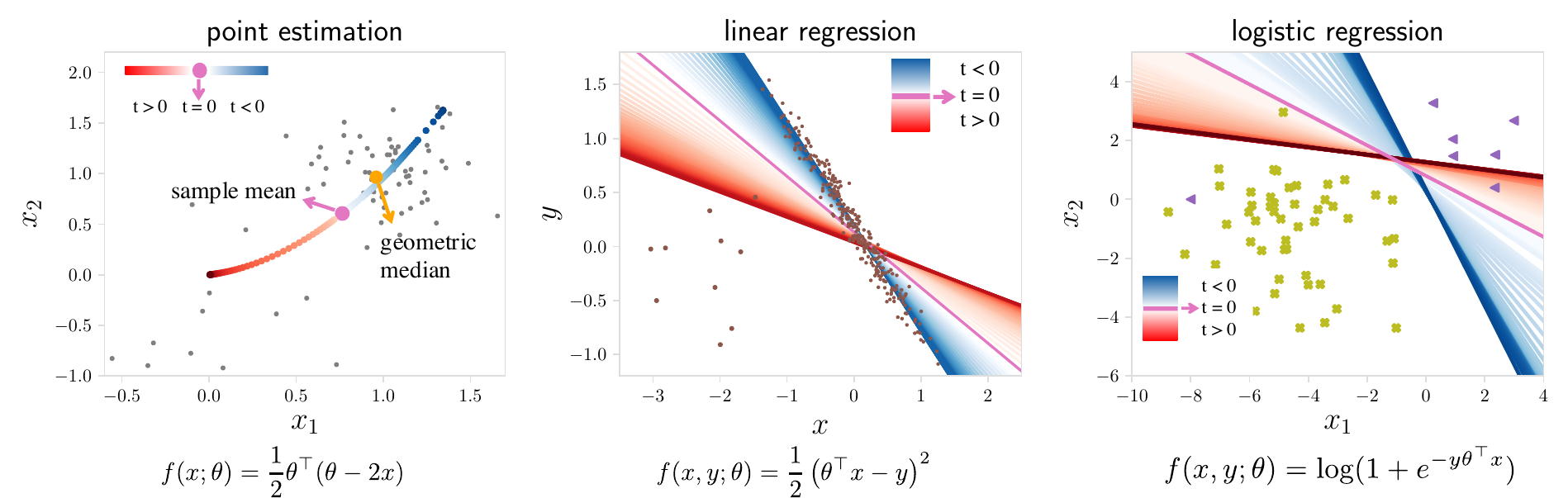}
    \caption{Toy examples illustrating TERM as a function of $t$: (a) finding  a point estimate from a set of 2D samples, (b) linear regression with outliers, and (c) logistic regression with imbalanced classes. While positive values of $t$ magnify outliers, negative values suppress them. Setting $t{=}0$ recovers the original ERM objective~\eqref{eq: ERM}.
    }
    \label{fig:example}
\end{figure}

\subsection{Motivating Examples} 

To motivate how the TERM objective~\eqref{eq: TERM} may be used in machine learning, we provide several running examples below, which are illustrated in Figure~\ref{fig:example}.

\textit{(a) Point estimation:}  As a first example, consider determining a point estimate from a set of samples that contain some outliers. We plot an example 2D dataset in Figure~\ref{fig:example}a, with data centered at (1,1). Using traditional ERM (i.e., TERM with $t=0$) recovers the \textit{sample mean}, which can be biased towards outlier data. 
By setting $t<0$, TERM can suppress outliers by reducing the relative impact of the largest losses (i.e., points that are far from the estimate) in~\eqref{eq: TERM}. A specific value of $t<0$ can in fact approximately recover the geometric median, as the objective in~\eqref{eq: TERM} can be viewed as approximately optimizing specific loss quantiles (a connection which we make explicit in Section~\ref{sec:term_properties}). In contrast, if these `outlier' points are important to estimate, setting $t>0$ will push the solution towards a  point that aims to minimize variance, as we prove  in Section~\ref{sec:term_properties}, Theorem~\ref{thm: variance-reduction}.

\textit{(b) Linear regression:} A similar interpretation holds for the case of linear regression (Figure 2b). As $t \to -\infty,$ 
TERM finds a line of best while ignoring outliers. 
However, this solution may not be preferred if we have reason to believe that these `outliers' should not be ignored. As $t \to +\infty,$ TERM recovers the min-max solution, which aims to minimize the worst loss, thus ensuring the model is a reasonable fit for \textit{all} samples (at the expense of possibly being a worse fit for many). Similar criteria have been used, e.g., in defining notions of fairness~\citep{hashimoto2018fairness,samadi2018price}. We explore several use-cases involving robust regression and fairness in more detail in Section~\ref{sec:experiments}.

\textit{(c) Logistic regression:} Finally, we consider a binary classification problem using logistic regression (Figure 2c). For $t \in \mathbb{R}$,
the TERM solution varies from the nearest cluster center ($t{\to} {-}\infty$), to the logistic regression classifier ($t{=}0$),  towards a classifier that magnifies the misclassified data ($t{\to}{+}\infty$).  
We note that it is common to modify logistic regression classifiers by adjusting the decision threshold from $0.5$, which is equivalent to moving the intercept of the decision boundary.
This is fundamentally different than what is offered by TERM (where the slope is changing). As we show in Section~\ref{sec:experiments}, this added flexibility affords TERM with competitive performance on a number of classification problems, such as those involving noisy data, class imbalance, or a combination of the two.

\subsection{Contributions}
\label{sec:contributions}

In this work, we explore the use of tilting in machine learning through TERM, a simple, unified framework that can flexibly address various challenges with empirical risk minimization. We first analyze the objective and its solutions, showcasing the behavior of TERM with varying tilt parameters $t$ (Section~\ref{sec:term_properties}). We {also} establish connections between TERM and related approaches such as distributionally robust optimization in Section~\ref{sec:other objectives}.

We rigorously analyze the relations between TERM and other risks (e.g, Value-at-Risk {(VaR)}, Conditional Value-at-Risk {(CVaR)}, and Entropic Value-at-Risk {(EVaR)}) in Section~\ref{sec:var}. 
In particular, we introduce a new risk measure based on TERM, called Tilted Value-at-Risk (TiVaR), to approximate VaR. We show that TiVaR {can provide a better approximation of VaR than CVaR in certain regimes,} and improves upon EVaR in all regimes.

We develop efficient first-order batch and stochastic methods for solving TERM, both for hierarchical and non-hierarchical cases (Section~\ref{sec:solver} and~\ref{sec:term-extended}). We provide convergence rates scaling with the hyperparameter $t$ on both convex and non-convex problems for both batch and stochastic algorithms. Our solvers run within 2--3$\times$ wall-clock time compared with that of ERM in all explored case studies.

Finally, we show via numerous case studies that TERM is competitive with  existing, problem-specific state-of-the-art solutions (Section~\ref{sec:experiments}). We also extend TERM to handle compound issues, such as the simultaneous existence of noisy samples and imbalanced classes (Section~\ref{sec:term-extended}). 
Our results demonstrate the effectiveness and versatility of tilted objectives in machine learning.

We note that the material in this paper was presented in part at ICLR 2021 (\hyperlink{cite.TERM}{Li and Beirami et al., 2021}). Compared to this earlier work, the current manuscript provides additional historical background of tilting (Section~\ref{sec:intro}), establishes  stronger and novel relationships between tilted losses and other risk-averse objectives in the literature (Section~\ref{sec:other objectives} and Section~\ref{sec:var}), provides convergence guarantees for our stochastic solver of TERM (Section~\ref{sec:solver}), offers comprehensive details on applications of the framework in practice, and considers new applications of TERM to meta-learning and heteroskedastic deep learning (Section~\ref{sec:experiments}). 

\paragraph{Outline.} This paper is organized as follows. We discuss  general properties and interpretations of TERM in Section~\ref{sec:term_properties}. We connect TERM with other prior risk measures in Section~\ref{sec:other objectives} and propose a new risk motivated by TERM in Section~\ref{sec:var}. In Section~\ref{sec:solver}, we develop both batch and stochastic algorithms for optimizing TERM and provide convergence guarantees for them. We extend TERM to hierarchical multi-objective tilting in Section~\ref{sec:term-extended} and demonstrate the flexibility and competitive performance of the TERM framework via real-world applications in Section~\ref{sec:experiments}. We discuss related work in Section~\ref{sec:background:related_work} and conclude the paper with Section~\ref{sec:conclusion}.

\section{TERM: Properties and Interpretations}\label{sec:term_properties}

To better understand the performance of the $t$-tilted losses in~\eqref{eq: TERM}, in this section we provide several interpretations of the TERM solutions, leaving the full proofs to the appendix. We make no distributional assumptions on the data, and study properties of TERM under the assumption that the loss function forms a generalized linear model, e.g., $L_2$ loss and logistic loss.
However, we also obtain favorable empirical results using TERM with other objectives such as PCA and deep neural networks in Section~\ref{sec:experiments}, motivating the extension of this part of our theory beyond GLMs in future work.

\subsection{Assumptions} \label{app:assumptions}

We first provide notation and assumptions that are used throughout our theoretical analyses. The results in this paper are derived under one of the following {three} nested assumptions {(the assumptions become progressively more restrictive, i.e., $3 \to 2 \to 1$)}:
\begin{assumption}[{Continuous differentiability}]
\label{assump:smoothness}
For $i \in [N],$ the loss function $f(x_i;\theta)$ belongs to the  differentiability class $C^1$ (i.e., continuously differentiable) with respect to $\theta \in \Theta \subseteq \mathbb{R}^d.$ 
\end{assumption}

\begin{assumption}[{Smoothness and} strong convexity condition]\label{assump: regularity}
Assume that Assumption~\ref{assump:smoothness} is satisfied. In addition, for any $i\in [N]$, $f(x_i; \theta)$ belongs to differentiability class $C^2$ (i.e., twice differentiable with continuous Hessian) with respect to $\theta$.
We further assume that there exist $\beta_{\min}, \beta_{\max} \in \mathbb{R}^{>0}$ such that for $i \in [N]$ and any $\theta \in \Theta \subseteq \mathbb{R}^d,$
\begin{equation}
\label{eq: assump1-strong-convex}
    \beta_{\min} \mathbf{I}\preceq  \nabla^2_{\theta \theta^\top} f(x_i;\theta) \preceq \beta_{\max}\mathbf{I},
\end{equation}    
where $\mathbf{I}$ is the identity matrix of appropriate size (in this case $d\times d$), and there does {\bf not} exist any $\theta \in \Theta,$ such that $\nabla_\theta f(x_i; \theta) = 0$ for all $i\in [N].$
\end{assumption}

\begin{assumption}[Generalized linear model condition~\citep{wainwright2008graphical}]\label{assump: expnential_family}
Assume that Assumption~\ref{assump: regularity} is satisfied.
Further, assume that the loss function $f(x; \theta)$ is given by
\begin{equation}
    f(x; \theta) = A(\theta) - \theta^\top T(x),
\label{eq: loss-exponential}
\end{equation}
where $A(\cdot)$ is a convex function such that there exists $\beta_{\max}$ where for any $\theta \in \Theta \subseteq \mathbb{R}^d,$
\begin{equation}
   \beta_{\min} \mathbf{I}\preceq \nabla^2_{\theta \theta^\top} A(\theta)  \preceq \beta_{\max} \mathbf{I}\, ,
\end{equation}   
and
\begin{equation}
    \sum_{i \in [N]} T(x_i) T(x_i)^\top \succ 0. 
\end{equation}
\end{assumption}
This set of assumptions become the most restrictive with 
Assumption~\ref{assump: expnential_family}, which essentially requires 
that the loss be the negative log-likelihood of an exponential family. While the assumption is stated using the natural parameter of an exponential family for ease of presentation, the results hold for any bijective and smooth reparameterization of the exponential family.
For example, Assumption~\ref{assump: expnential_family}
is  satisfied by the commonly used $L_2$ loss for regression and logistic loss for classification (see toy examples (b) and (c) in Figure~\ref{fig:example}). {$\sum_{i \in [N]} T(x_i) T(x_i)^\top \succ 0$ assumes a reasonable regularity on the dataset $\{x_i\}_{i \in [N]}$. For instance, in the case of linear regression ($T(x_i)=x_i \in \mathbb{R}^d$), it reduces to the standard regularity assumption $XX^{T} \succ 0$ (where $X :=[x_1,\cdots, x_N] \in \mathbb{R}^{d \times N}$).}  While Assumption~\ref{assump: expnential_family} is not satisfied when we use neural network function approximators in Section~\ref{sec:experiments}, we observe favorable numerical results motivating the extension of these results beyond the cases that are theoretically studied in this paper.

In the sequel, many of the results are concerned with characterizing the $t$-tilted solutions defined as the parametric set of solutions of $t$-tiled losses by sweeping  $t \in \mathbb{R}$, 
\begin{equation}
    \label{eq: opt_obj}
    \breve{\theta}(t) \in \arg\min_{\theta \in \Theta} \wR(t ;\theta),
\end{equation}
where $\Theta \subseteq \mathbb{R}^d$ is an open subset of $\mathbb{R}^d.$ Further, let the optimal tilted objective be defined as
\begin{equation} \label{def:wF}
    \widetilde{F}(t):= \wR(t; \breve{\theta}(t)).
\end{equation} 
We state a final assumption, on $\breve{\theta}(t)$, below.
\begin{assumption}[Strict saddle property~(Definition 4 in \citet{ge2015escaping})]\label{assump:strict-saddle}
We assume that the set $ \arg\min_{\theta \in \Theta} \wR(t ;\theta)$ is non-empty for all $t \in \mathbb{R}$.
Further, we assume that for all $t \in \mathbb{R},$ $\wR(t; \theta)$ is a ``strict saddle'' as a function of $\theta$, i.e., for all local minima,  $\nabla^2_{\theta \theta^\top}\wR(t; \theta){\succ}0$, and for all other stationary solutions, $\lambda_{\min}(\nabla^2_{\theta \theta^\top}\wR(t; \theta))< 0$, where $\lambda_{\min}(\cdot)$ is the minimum eigenvalue of the matrix.
\end{assumption}
We use the strict saddle property in order to reason about the properties of the $t$-tilted solutions.
In particular, since we are solely interested in the local minima of $\wR(t; \theta),$ the strict saddle property implies that for every $\breve{\theta}(t) \in \arg\min_{\theta \in \Theta} \wR(t; \theta),$ for a sufficiently small $r$, for all $\theta \in \mathcal{B}(\breve{\theta}(t), r),$
\begin{equation}
    \nabla^2_{\theta \theta^\top} \wR(t; \theta) \succ 0,
\end{equation}
where $\mathcal{B}(\breve{\theta}(t), r)$ denotes a $d$-ball of radius $r$ around $\breve{\theta}(t).$
We will show later in Section~\ref{sec:property:general} that the strict saddle property is readily verified for $t \in \mathbb{R}^{>0}$ under Assumption~\ref{assump: regularity}, {and we need Assumption~\ref{assump:strict-saddle} to be able to reason about $t \in \mathbb{R}^{<0}$.} 

\subsection{General Properties of TERM} \label{sec:property:general}
We begin by noting several general properties of the TERM objective~\eqref{eq: TERM}. In particular: (i) $\wR(t;\theta)$ is $L$-Lipschitz continuous in $\theta$ if $f(x;\theta)$ is $L$-Lipschitz (Lemma~\ref{lem: lipschitz}); (ii) If $f(x; \theta)$ is strongly convex, the $t$-tilted loss is strongly convex for  $t>0$ (Lemma~\ref{lem: Hessian}); and (iii) 
Given a smooth $f(x; \theta)$, the $t$-tilted loss is smooth for all finite $t$ (Lemma~\ref{lemma:TERM-smoothness}). We state these properties more formally below.

\begin{lemma}[Lipschitzness of $\wR(t;\theta)$]\label{lem: lipschitz}
For any $t\in \mathbb{R}$ and $\theta \in \Theta$, if for $i\in [N]$, $f(x_i;\theta)$ is $L$-Lipschitz conditnuous in $\theta$, then $\wR(t;\theta)$ is $L$-Lipschitz in $\theta$.
\end{lemma}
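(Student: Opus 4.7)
The plan is to bound the gradient of $\widetilde{R}(t;\theta)$ with respect to $\theta$ by $L$, which by the mean value inequality will yield the claimed $L$-Lipschitz continuity. First, I would differentiate $\widetilde{R}(t;\theta)$ directly using the chain rule, which gives
$$\nabla_\theta \widetilde{R}(t;\theta) = \frac{1}{t}\cdot\frac{\sum_{i\in[N]} t\, e^{t f(x_i;\theta)} \nabla_\theta f(x_i;\theta)}{\sum_{j\in[N]} e^{t f(x_j;\theta)}} = \sum_{i\in[N]} w_i(t;\theta)\,\nabla_\theta f(x_i;\theta),$$
where the tilted weights $w_i(t;\theta) := e^{t f(x_i;\theta)} / \sum_{j} e^{t f(x_j;\theta)}$ form a probability distribution over $[N]$ for every $t \in \mathbb{R}$, since exponentials are positive. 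Applying the triangle inequality and the hypothesis $\|\nabla_\theta f(x_i;\theta)\| \leq L$ (valid a.e.\ since each $f(x_i;\cdot)$ is $L$-Lipschitz) then yields
$$\|\nabla_\theta \widetilde{R}(t;\theta)\| \leq \sum_{i\in[N]} w_i(t;\theta)\,\|\nabla_\theta f(x_i;\theta)\| \leq L.$$
The $t=0$ case reduces by continuous extension to the average loss $\frac{1}{N}\sum_i f(x_i;\theta)$, which is $L$-Lipschitz by linearity of the average and the triangle inequality.

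If one prefers a derivative-free argument that avoids even a.e.\ differentiability, I would write $\widetilde{R}(t;\theta) = \psi_t(f(x_1;\theta),\ldots,f(x_N;\theta)) - \tfrac{1}{t}\log N$, where $\psi_t(u) := \tfrac{1}{t}\log\sum_i e^{tu_i}$ is a log-sum-exp. The map $\psi_t$ is $1$-Lipschitz with respect to the $\ell_\infty$ norm on $\mathbb{R}^N$: it is coordinatewise non-decreasing (its partial derivatives are the non-negative weights $w_i$) and satisfies $\psi_t(u+c\mathbf{1}) = \psi_t(u)+c$, so sandwiching $v - \|u{-}v\|_\infty \mathbf{1} \preceq u \preceq v + \|u{-}v\|_\infty \mathbf{1}$ and applying monotonicity forces $|\psi_t(u)-\psi_t(v)| \leq \|u-v\|_\infty$. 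Composing with the per-sample $L$-Lipschitz bound $|f(x_i;\theta_1) - f(x_i;\theta_2)| \leq L\|\theta_1-\theta_2\|$ closes the argument.

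I do not anticipate any real obstacle; the only point worth double-checking is that the factor of $1/t$ does not break the bound in the $t<0$ regime. The reason it does not is that the sign of $t$ cancels cleanly between the $1/t$ prefactor and the inner $t$ from differentiating $e^{tf}$, leaving the convex-combination representation intact for every $t\in\mathbb{R}^{\setminus 0}$. Notably, the Lipschitz constant obtained is independent of $t$, which will be useful downstream when specializing the smoothness/strong-convexity lemmas referenced after this statement.
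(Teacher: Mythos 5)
Your proof is correct, and in fact you give two valid routes. The first — writing $\nabla_\theta \wR(t;\theta)$ as the convex combination $\sum_i w_i(t;\theta)\nabla_\theta f(x_i;\theta)$ with the tilted weights forming a probability vector, then applying the triangle inequality — is exactly the argument the paper's machinery supports via Lemma~\ref{lemma:TERM-gradient}, so on that route you are essentially aligned with the paper. Your second, derivative-free argument is genuinely different and arguably the more careful one: the lemma's hypothesis only grants $L$-Lipschitz continuity of each $f(x_i;\cdot)$, not differentiability, so the gradient route needs either Assumption~\ref{assump:smoothness} or an appeal to Rademacher-type a.e.\ differentiability (which you flag). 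Decomposing $\wR(t;\theta)=\psi_t\bigl(f(x_1;\theta),\ldots,f(x_N;\theta)\bigr)-\tfrac{1}{t}\log N$ and showing $\psi_t$ is $1$-Lipschitz in $\ell_\infty$ via coordinatewise monotonicity plus translation equivariance $\psi_t(u+c\mathbf{1})=\psi_t(u)+c$ handles every $t\in\mathbb{R}^{\setminus 0}$ (the weights $e^{tu_i}/\sum_j e^{tu_j}$ stay nonnegative for $t<0$, as you note), and the $t=0$ case follows by the continuous extension to the average loss. The payoff of your second route is generality and independence of $t$ in the constant without any smoothness assumption; the payoff of the first is that it reuses the weight representation the paper leans on throughout. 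Either suffices; no gaps.
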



\begin{lemma}[Tilted Hessian and strong convexity for $t \in \mathbb{R}^{>0}$]\label{lem: Hessian}
 Under Assumption~\ref{assump: regularity}, for any $t\in \mathbb{R},$ 
\begin{align}
        \nabla^2_{\theta \theta^\top}\wR(t ;\theta) &= 
         {\frac{\color{black}t}{N}} \sum_{i \in [N]} ( \nabla_{\theta}f(x_i;\theta) - \nabla_\theta \wR(t ;\theta))(\nabla_{\theta}f(x_i;\theta) - \nabla_\theta \wR(t ;\theta))^\top  e^{t (f(x_i; \theta) - \wR(t; \theta))}\label{eq:smoothness-23}\\
        & \quad + {\frac{1}{N}} \sum_{i \in [N]} \nabla^2_{\theta\theta^\top}f(x_i;\theta) e^{t ( f(x_i; \theta) - \wR(t; \theta))}.\label{eq:smoothness-24}
\end{align}
In particular, for all $\theta \in \Theta$ and all $t \in \mathbb{R}^{>0},$ the $t$-tilted objective is strongly convex. That is
\begin{equation}
     \nabla^2_{\theta \theta^\top}\wR(t ;\theta) \succ \beta_{\min} \mathbf{I}.
\end{equation}
\end{lemma}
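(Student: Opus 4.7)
The plan is to derive the Hessian identity by differentiating $\wR(t;\theta)$ twice, exploiting the fact that $\wR(t;\theta)$ is essentially a rescaled log-partition functional for a Gibbs-type distribution over the $N$ samples. Once the identity is in hand, the strong convexity claim will follow from the resulting ``expected Hessian plus centered second moment of gradients'' decomposition, familiar from the exponential family Hessian identity.

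First I would introduce the tilted weights
\[
w_i(\theta) := \frac{e^{tf(x_i;\theta)}}{\sum_{j\in[N]} e^{tf(x_j;\theta)}} = \frac{1}{N}\, e^{t(f(x_i;\theta)-\wR(t;\theta))},
\]
which satisfy $\sum_{i\in[N]} w_i = 1$ since $\tfrac{1}{N}\sum_j e^{tf(x_j;\theta)} = e^{t\wR(t;\theta)}$. A direct chain-rule computation on the definition of $\wR(t;\theta)$ gives $\nabla_\theta \wR(t;\theta) = \sum_i w_i\, \nabla_\theta f(x_i;\theta)$. Differentiating the weights via the quotient rule yields $\nabla_\theta w_i = t\, w_i (\nabla_\theta f(x_i;\theta) - \nabla_\theta \wR(t;\theta))$, so the product-rule expansion
\[
\nabla^2_{\theta\theta^\top}\wR(t;\theta) = \sum_{i} (\nabla_\theta f(x_i;\theta))\,(\nabla_\theta w_i)^\top + \sum_{i} w_i\, \nabla^2_{\theta\theta^\top}f(x_i;\theta)
\]
produces a cross term $t\sum_i w_i\, (\nabla_\theta f(x_i;\theta))(\nabla_\theta f(x_i;\theta)-\nabla_\theta \wR(t;\theta))^\top$. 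Using the centering identity $\sum_i w_i(\nabla_\theta f(x_i;\theta) - \nabla_\theta \wR(t;\theta)) = 0$ to subtract a vanishing $t\sum_i w_i (\nabla_\theta \wR)(\nabla_\theta f(x_i;\theta)-\nabla_\theta \wR)^\top$ converts it into the symmetric centered second moment $t\sum_i w_i(\nabla_\theta f_i - \nabla_\theta \wR)(\nabla_\theta f_i - \nabla_\theta \wR)^\top$, and substituting $Nw_i = e^{t(f(x_i;\theta)-\wR(t;\theta))}$ reproduces the identity in \eqref{eq:smoothness-23}--\eqref{eq:smoothness-24}.

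For the strong convexity claim with $t>0$, each rank-one block $(\nabla_\theta f(x_i;\theta)-\nabla_\theta \wR(t;\theta))(\nabla_\theta f(x_i;\theta)-\nabla_\theta \wR(t;\theta))^\top$ is PSD and carries the positive weight $t\,w_i$, so the first term of the Hessian is PSD. For the second term, Assumption~\ref{assump: regularity} gives $\nabla^2_{\theta\theta^\top}f(x_i;\theta) \succeq \beta_{\min}\mathbf{I}$ for every $i$, and because $\sum_i w_i = 1$, the convex combination $\sum_i w_i\,\nabla^2_{\theta\theta^\top}f(x_i;\theta)$ inherits the bound $\succeq \beta_{\min}\mathbf{I}$. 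Adding the two pieces yields $\nabla^2_{\theta\theta^\top}\wR(t;\theta) \succeq \beta_{\min}\mathbf{I}$, and strict positivity of the first term (ruling out the degenerate case in which all per-sample gradients coincide, an outcome excluded by the final clause of Assumption~\ref{assump: regularity}) gives the strict inequality.

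The main obstacle is the second step: performing the centering that converts the raw product-rule cross term $\sum_i w_i (\nabla_\theta f_i)(\nabla_\theta f_i - \nabla_\theta \wR)^\top$ into the \emph{centered} second moment $\sum_i w_i(\nabla_\theta f_i - \nabla_\theta \wR)(\nabla_\theta f_i - \nabla_\theta \wR)^\top$. This is the tilted analogue of the classical identity that the Hessian of a log-partition function equals the covariance of the sufficient statistics under the associated exponential family; once that centering is executed, the PSD structure of the first term and the convex-combination bound on the second term combine to deliver the $\beta_{\min}$-strong convexity claim almost immediately.
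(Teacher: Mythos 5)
Your proposal is correct and takes essentially the same route as the paper's proof: differentiate the gradient $\nabla_\theta\wR(t;\theta)=\sum_i w_i\nabla_\theta f(x_i;\theta)$, use $\nabla_\theta w_i = t\,w_i(\nabla_\theta f(x_i;\theta)-\nabla_\theta\wR(t;\theta))$ together with the centering identity to obtain the covariance-plus-weighted-Hessian decomposition, and then bound the first term as PSD and the second by $\beta_{\min}\mathbf{I}$ (the paper states exactly this, just far more tersely, and is equally loose about the $1/N$ bookkeeping in the displayed identity). One small caveat: the final clause of Assumption~\ref{assump: regularity} only excludes all per-sample gradients vanishing simultaneously, not all of them coinciding at a common nonzero value, so your appeal to it for the strict inequality $\succ\beta_{\min}\mathbf{I}$ is slightly misplaced --- though the paper's own proof is no more rigorous on that point.
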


Lemma~\ref{lem: lipschitz} and~\ref{lem: Hessian} are proved in Appendix~\ref{app:properties}. Lemma~\ref{lem: Hessian} also implies that under Assumption~\ref{assump: regularity}, the strict saddle assumption (Assumption~\ref{assump:strict-saddle}) is readily verified.

\begin{lemma}[Smoothness of $\wR(t; \theta)$] 
\label{lemma:TERM-smoothness}
For any $t \in \mathbb{R}$, let $\beta(t)$ be the smoothness parameter of twice differentiable $\wR(t;\theta)$:
\begin{equation}
    \beta(t) :=  \lambda_{\max} \left( \nabla^2_{\theta\theta^\top} \wR(t; \theta)\right),
\end{equation}    
where $\nabla^2_{\theta\theta^\top} \wR(t; \theta)$ is the Hessian of $\wR(t; \theta)$ at $\theta$ and $\lambda_{\max}(\cdot)$ denotes the largest eigenvalue. 
Under Assumption~\ref{assump: regularity}, 
for any $t\in \mathbb{R},$ $\wR(t; \theta)$ is a  $\beta(t)$-smooth function of $\theta$. 
Further, for $t\in \mathbb{R}^{\leq 0},$\footnote{{$\mathbb{R}^{\leq 0}$ denotes the set of  non-positive real numbers.}} 
\begin{equation}
     \beta(t) <  \beta_{\max}, 
\end{equation}
where $\beta_{\max}$ is defined in Assumption~\ref{assump: regularity}.
For $t \in \mathbb{R}^{>0},$ 
\begin{align}
0 <\lim_{t \to +\infty} \frac{\beta(t)}{t} &< +\infty.
\end{align}
\end{lemma}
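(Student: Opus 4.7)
My starting point is the Hessian expression in Lemma~\ref{lem: Hessian}. Writing $g_i(\theta) := \nabla_\theta f(x_i;\theta)$ and introducing the normalized tilted weights $w_i(t;\theta) := \tfrac{1}{N} e^{t(f(x_i;\theta)-\wR(t;\theta))}$, which satisfy $w_i \geq 0$ and $\sum_{i \in [N]} w_i = 1$, that formula rearranges into
\[
\nabla^2_{\theta\theta^\top} \wR(t;\theta) \;=\; t\, M_1(t,\theta) \;+\; M_2(t,\theta),
\]
where $M_1(t,\theta) := \sum_{i \in [N]} w_i (g_i - \nabla_\theta \wR)(g_i - \nabla_\theta \wR)^\top \succeq 0$ is a weighted gradient-variance matrix and $M_2(t,\theta) := \sum_{i \in [N]} w_i \nabla^2_{\theta\theta^\top} f(x_i;\theta)$ is a convex combination of the individual Hessians. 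By Assumption~\ref{assump: regularity}, every term in the $M_2$-sum is sandwiched between $\beta_{\min}\mathbf{I}$ and $\beta_{\max}\mathbf{I}$, so the convex combination inherits the same bound: $\beta_{\min}\mathbf{I} \preceq M_2(t,\theta) \preceq \beta_{\max}\mathbf{I}$ uniformly in $(t,\theta)$. The Hessian is therefore a continuous, finite-valued function of $\theta$ for every finite $t$, which immediately yields the generic $\beta(t)$-smoothness assertion.

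For $t \in \mathbb{R}^-$, the contribution $tM_1$ is negative semi-definite and can only decrease the largest eigenvalue, giving $\lambda_{\max}(\nabla^2 \wR(t;\theta)) \leq \lambda_{\max}(M_2) \leq \beta_{\max}$. To upgrade this to the strict inequality $\beta(t) < \beta_{\max}$, I would combine the sharper bound $\lambda_{\max}(\nabla^2 \wR) \leq t\, \lambda_{\min}(M_1) + \lambda_{\max}(M_2)$ with the observation that at any $\theta$ achieving the extremal eigenvalue, $M_1$ must be strictly positive in some direction; otherwise all $g_i(\theta)$ would coincide, which combined with the identity $\nabla_\theta \wR = \sum_i w_i g_i$ can be ruled out by invoking the no-joint-zero clause of Assumption~\ref{assump: regularity}. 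Since $t < 0$, the extra factor then strictly lowers the maximum eigenvalue.

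For $t \in \mathbb{R}^+$ the positive term $tM_1$ drives the asymptotics. The upper bound $\limsup_{t\to+\infty} \beta(t)/t < +\infty$ follows from the uniform boundedness of $M_1$: on a bounded $\theta$-domain the Hessian control in Assumption~\ref{assump: regularity} makes the $g_i$ uniformly bounded, and a convex combination of outer products of bounded vectors has bounded operator norm. Dividing the Hessian identity by $t$ and letting $t \to +\infty$ yields $\beta(t)/t \to \lambda_{\max}(M_1^{\infty})$, where $M_1^{\infty}$ is assembled from the limiting weights $w_i^{\infty}(\theta)$ that concentrate on the loss-maximizing indices. The main obstacle is the strict lower bound $\liminf_{t\to+\infty} \beta(t)/t > 0$: this requires locating a $\theta^\star$ at which the set $\arg\max_i f(x_i;\theta^\star)$ contains at least two indices $i,j$ whose gradients $g_i(\theta^\star) \neq g_j(\theta^\star)$ differ. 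I would produce such a $\theta^\star$ as a boundary crossing between two level sets $\{\theta : f(x_i;\theta) = f(x_j;\theta)\}$ for a pair whose gradients disagree there, existence of which ultimately traces back to the no-joint-zero clause of Assumption~\ref{assump: regularity}. At that $\theta^\star$, $M_1^{\infty}(\theta^\star)$ carries a nontrivial rank-one positive direction, delivering $\liminf_{t\to+\infty} \beta(t)/t \geq c > 0$ and completing the proof.
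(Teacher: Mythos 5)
Your decomposition of the Hessian into $tM_1+M_2$ (weighted gradient covariance plus convex combination of per-sample Hessians) and the use of Weyl-type eigenvalue bounds is exactly the paper's route: the paper invokes Lemma~\ref{lem: Hessian} and Weyl's inequality to drop the negative-semidefinite term and get $\lambda_{\max}\big(\nabla^2_{\theta\theta^\top}\wR(t;\theta)\big)\le\beta_{\max}$ for $t<0$, and divides by $t$ for the two $t\to+\infty$ claims. The differences appear precisely where you try to be sharper than the paper, and both of those steps have gaps. For $t<0$, your strict inequality rests on $\lambda_{\max}(tM_1+M_2)\le t\,\lambda_{\min}(M_1)+\lambda_{\max}(M_2)$, which only improves on $\beta_{\max}$ if $\lambda_{\min}(M_1)>0$, i.e.\ if $M_1$ is strictly positive \emph{definite}; but your justification only argues that $M_1$ is nonzero (\emph{strictly positive in some direction}), which controls $\lambda_{\max}(M_1)$ and says nothing about $\lambda_{\min}(M_1)$ --- $M_1$ is a sum of $N$ rank-one matrices and is rank-deficient whenever the gradient differences fail to span $\mathbb{R}^d$. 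Moreover, the no-joint-zero clause of Assumption~\ref{assump: regularity} excludes all gradients \emph{vanishing} simultaneously, not all gradients \emph{coinciding} at a common nonzero value, so even $M_1\neq 0$ does not follow from it. (For what it is worth, the paper's own proof does not attempt strictness here and only derives the bound $\le\beta_{\max}$.)

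For the $t\to+\infty$ lower bound your route is genuinely different from the paper's: the paper applies Weyl once to get $\lambda_{\max}\big(\nabla^2_{\theta\theta^\top}\wR(t;\theta)\big)\ge t\,\lambda_{\max}(M_1)$ and then cites the no-joint-zero clause, whereas you propose to exhibit a fixed $\theta^\star$ at which two maximizing losses tie with distinct gradients so that the limiting weights charge at least two indices and $M_1$ stays bounded away from zero. That is actually the right mechanism if $\beta(t)$ is read as a supremum over $\theta$ (at any fixed $\theta$ with a unique loss maximizer the tilted weights collapse exponentially and $t\,\lambda_{\max}(M_1)\to 0$, so a tie point is genuinely needed), and in this respect your plan is more careful than the paper's one-liner. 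But the key existence step is only asserted: the \emph{boundary crossing between level sets} is not constructed, and the claimed reduction to the no-joint-zero clause cannot work, since under Assumption~\ref{assump: regularity} alone all per-sample gradients may coincide everywhere (e.g.\ losses differing by constants), in which case no tie with distinct gradients ever occurs even though the clause holds. Your $t>0$ upper bound also quietly adds a bounded-$\Theta$ hypothesis to bound the $g_i$, which neither the lemma nor the paper assumes. In short: same core decomposition and eigenvalue bounds as the paper, but the two strictness claims, as you argue them, are not established.
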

Lemma~\ref{lemma:TERM-smoothness} (proved in Appendix~\ref{app:general-TERM-properties}) indicates that $t$-tilted losses are $\beta(t)$-smooth for all $t$. $\beta(t)$ is bounded for all negative  $t$ and moderately positive $t$, whereas it scales linearly with $t$ as $t \to +\infty$, which has been previously studied in the context of exponential smoothing of the max~\citep{kort1972new, pee2011solving}. This can also be observed visually via the toy example in Figure~\ref{fig:toy1}.

As discussed in Section~\ref{sec:introduction}, TERM can recover traditional ERM ($t{=}0$), the max-loss ($t{\to}{+}\infty$), and the min-loss ($t{\to}{-}\infty$). We formally state this in Lemma~\ref{lemma: special_cases} below.

\begin{figure}[t!]
    \centering
    \hspace{-.14in}
    \includegraphics[trim=0 0 0 5,clip, width=0.55\textwidth]{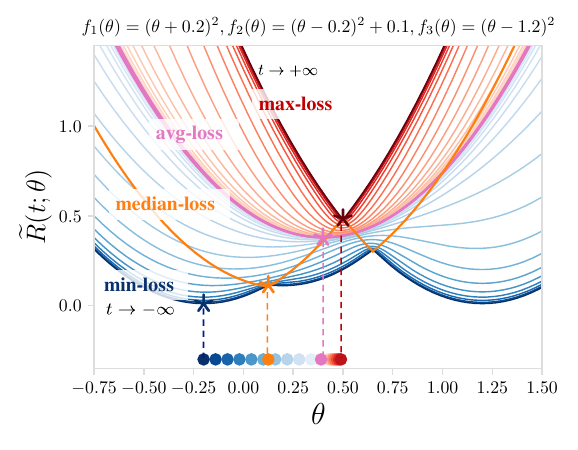}
    \vspace{-5mm}
    \caption{TERM objectives for a squared loss with three samples ($N$=3). As $t$ moves from $-\infty$ to $+\infty$, $t$-tilted losses recover  min-loss, avg-loss, and  max-loss. TERM is smooth for all finite $t$ and convex for positive $t$.}
    \label{fig:toy1}
\end{figure}

\begin{lemma}
\label{lemma: special_cases}
Under Assumption~\ref{assump:smoothness},
\begin{align}
\wR(-\infty; \theta) &:= \lim_{t \to -\infty}\wR(t ;\theta) = \widecheck{R}( \theta),\\
\wR(0; \theta) &:= \lim_{t \to 0}\wR(t ;\theta) = \overline{R}( \theta), \label{eq:def-R0} \\
\wR(+\infty; \theta) &:= \lim_{t \to +\infty}\wR(t ;\theta) = \widehat{R}( \theta),
\end{align}
where $\widehat{R}(\theta)$ is the max-loss and $\widecheck{R}(\theta)$ is the min-loss\footnote{{When the argument of the max-loss or the min-loss is not unique, for the purpose of differentiating the loss function, we define $\widehat{R}(\theta)$ as the average of the individual losses that achieve the maximum, and $\widecheck{R}(\theta)$ as the average of the individual losses that achieve the minimum.}}:
\begin{equation}
    \widehat{R}(\theta) := \max_{i \in [N]} f(x_i; \theta), \quad\quad\quad\quad\quad     \widecheck{R}(\theta) := \min_{i \in [N]} f(x_i; \theta). 
\label{eq:best-worst-risk}
\end{equation}
\end{lemma}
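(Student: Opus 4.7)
The plan is to prove the three limit statements separately, since each corresponds to a different asymptotic regime of the scalar $\wR(t;\theta)=\tfrac{1}{t}\log\bigl(\tfrac{1}{N}\sum_i e^{tf(x_i;\theta)}\bigr)$. The core trick for the two infinite limits is to factor the extremal loss out of the sum-of-exponentials, turning the log-sum-exp into a controlled perturbation of a constant. The zero limit is essentially L'Hôpital at $t=0$. Assumption~\ref{assump:smoothness} is not really needed for the pointwise statement (continuity of $f(x_i;\cdot)$ is enough to keep the tilted quantity well-defined), but it guarantees everything is smooth in $\theta$.

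First I would handle the $t\to+\infty$ case. Fix $\theta$, write $M:=\widehat{R}(\theta)=\max_i f(x_i;\theta)$ and let $I^\star:=\{i:f(x_i;\theta)=M\}$. Then
\begin{equation*}
\wR(t;\theta) \;=\; M \;+\; \frac{1}{t}\log\!\Bigl(\tfrac{1}{N}\sum_{i\in[N]} e^{t(f(x_i;\theta)-M)}\Bigr).
\end{equation*}
Every exponent is $\le 0$, and for $i\notin I^\star$ it is strictly negative so $e^{t(f(x_i;\theta)-M)}\to 0$ as $t\to+\infty$, while for $i\in I^\star$ it equals $1$. Hence the inner sum converges to $|I^\star|/N \in (0,1]$, its log is bounded, and dividing by $t$ sends the correction to $0$. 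This gives $\wR(t;\theta)\to M=\widehat{R}(\theta)$.

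The $t\to -\infty$ case is symmetric: factor out $m:=\widecheck{R}(\theta)=\min_i f(x_i;\theta)$, so
\begin{equation*}
\wR(t;\theta)\;=\; m \;+\; \frac{1}{t}\log\!\Bigl(\tfrac{1}{N}\sum_{i\in[N]} e^{t(f(x_i;\theta)-m)}\Bigr).
\end{equation*}
Now the exponents are $\ge 0$, and multiplying by $t<0$ flips signs, so for non-minimizers the exponential vanishes as $t\to-\infty$ and for minimizers it stays $1$. The sum converges to a positive constant in $(0,1]$, its log is bounded, and dividing by $t\to-\infty$ kills the remainder, yielding $\widecheck{R}(\theta)$. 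For the $t\to 0$ case, set $h(t):=\log\bigl(\tfrac{1}{N}\sum_i e^{tf(x_i;\theta)}\bigr)$; then $h(0)=0$ and $h$ is differentiable with $h'(0)=\tfrac{1}{N}\sum_i f(x_i;\theta)=\overline{R}(\theta)$, so L'Hôpital gives $\lim_{t\to 0}\wR(t;\theta)=\lim_{t\to 0}h(t)/t = h'(0)=\overline{R}(\theta)$.

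There is no real obstacle; the only mildly delicate point is the footnote regarding non-uniqueness of the argmax/argmin when differentiating, but at the level of the pointwise limit it is a bookkeeping issue (the numerator $|I^\star|/N$ is strictly positive in both infinite-limit calculations, so its log contributes only an $O(1)$ term that is annihilated by the $1/t$ factor). The main check I would double-verify is that the three limits are taken pointwise in $\theta$, so no uniformity or exchange-of-limits argument is required beyond continuity of $\exp$ and $\log$ on the respective ranges.
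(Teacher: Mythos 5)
Your proof is correct and follows essentially the same route as the paper: L'H\^opital's rule at $t\to 0$, and isolating the dominant (extremal) exponential for $t\to\pm\infty$, where your factoring-out of $\max_i f(x_i;\theta)$ or $\min_i f(x_i;\theta)$ is just a repackaging of the paper's squeeze between one copy and $N$ copies of the extremal term. No gaps.
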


Note that Lemma~\ref{lemma: special_cases} has been studied or observed before in the entropic risk literature~\citep[e.g.,][]{ahmadi2012entropic}, as well as other contexts~\citep{cohen2014simnets}.
This lemma also implies that $ \breve{\theta}(0)$ is the ERM solution, $\breve{\theta}(+\infty)$ is the min-max solution, and $\breve{\theta}(-\infty)$ is the min-min solution. In other words, a benefit of TERM is that it offers a continuum of solutions between the min and max losses.

Providing a smooth trade-off between these specific losses can be beneficial for a number of practical use-cases---both in terms of the resulting solution and the difficulty of solving the problem itself. We empirically demonstrate the benefits of such a trade-off in Section~\ref{sec:experiments}. We also visualize the solutions to TERM for a toy problem in Figure~\ref{fig:toy1}, which allows us to illustrate several special cases of the general framework. Interestingly, we additionally show that the TERM solution can be viewed as a smooth approximation to {the \textit{tail probability of losses}}, which effectively minimizes quantiles of losses such as the median loss (Section~\ref{sec:var}). In Figure~\ref{fig:toy1}, it is clear to see why this may be beneficial, as the median loss (orange) can be highly non-smooth in practice. In Theorem~\ref{thm: obj-increasing} and~\ref{thm: opt-obj-increasing} below, we formally characterize how tilted objectives change as a function of values $t$ (proofs provided in Appendix~\ref{app:properties}).

\begin{theorem}[Tilted objective is increasing with $t$]
\label{thm: obj-increasing}
Under Assumption~\ref{assump: expnential_family}, for all $t \in \mathbb{R},$ and all $\theta \in \Theta,$
\begin{equation}
    \frac{\partial}{\partial t} \wR(t; \theta) \geq 0.
\end{equation}
\end{theorem}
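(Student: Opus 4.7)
The plan is to rewrite $\wR(t;\theta) = L(t)/t$, where $L(t) := \log\!\left(\frac{1}{N}\sum_{i \in [N]} e^{tf(x_i;\theta)}\right)$ is, for fixed $\theta$, the empirical cumulant generating function associated with the per-sample losses. Two elementary one-variable facts about $L$ drive the whole argument: $L(0) = \log 1 = 0$, and $L$ is convex in $t$. Convexity is the familiar variance identity $L''(t) = \var_{w_t}(f(x_i;\theta)) \geq 0$, where $w_t(i) \propto e^{tf(x_i;\theta)}$ is the tilted empirical distribution over the samples. Notably, this step requires no structural assumption on $f$ as a function of $\theta$---convexity of $L$ in the scalar $t$ is automatic from the definition.

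Given convexity and $L(0)=0$, the tangent-line inequality of $L$ at an arbitrary point $t$ reads $L(0) \geq L(t) + L'(t)(0 - t)$, which rearranges to $tL'(t) - L(t) \geq 0$ for every $t \in \mathbb{R}$. Differentiating $\wR(t;\theta) = L(t)/t$ gives $\partial_t \wR(t;\theta) = (tL'(t) - L(t))/t^2$, so the preceding inequality immediately yields $\partial_t \wR(t;\theta) \geq 0$ on $\mathbb{R}\setminus\{0\}$. Crucially, the denominator $t^2$ is positive on both sides of the origin, so no sign flip occurs and the same inequality governs both $t<0$ and $t>0$.

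It remains to handle $t=0$. Taylor expanding around the origin, $L(t) = t\,\overline{R}(\theta) + \tfrac{1}{2} t^2 \var(f(x_i;\theta)) + O(t^3)$, using $L(0)=0$ and $L'(0) = \tfrac{1}{N}\sum_i f(x_i;\theta) = \overline{R}(\theta)$. Hence $L(t)/t = \overline{R}(\theta) + \tfrac{1}{2} t\, \var(f(x_i;\theta)) + O(t^2)$, which shows $\wR(t;\theta) \to \overline{R}(\theta)$ as $t \to 0$ (matching the continuous extension in~\eqref{eq:def-R0}) and further gives $\partial_t \wR(t;\theta)|_{t=0} = \tfrac{1}{2}\var(f(x_i;\theta)) \geq 0$. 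Combined with the monotonicity on each of $(-\infty,0)$ and $(0,+\infty)$, this promotes the inequality to all of $\mathbb{R}$.

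The only real subtlety is the behavior at $t=0$, where both numerator and denominator of $\partial_t \wR$ vanish; the Taylor expansion above resolves this cleanly. I would also remark that Assumption~\ref{assump: expnential_family} is invoked only for consistency with the rest of the paper---the monotonicity itself is a one-variable calculus fact about the cumulant generating function and requires no structural hypothesis on $f$ beyond its values on the $N$ samples.
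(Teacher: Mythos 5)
Your proof is correct, and it takes a genuinely different route from the paper's. The paper proves this theorem inside its GLM machinery: writing $f(x;\theta)=A(\theta)-\theta^\top T(x)$, it expresses $\partial_t\wR(t;\theta)$ in terms of the log-partition function $\Gamma(t;\theta)$ and the tilted mean $\mathcal{M}(t;\theta)$ of the sufficient statistic, then shows that $t^2\,\partial_t\wR(t;\theta)$ has derivative $t\,\theta^\top\mathcal{V}(t;\theta)\theta$, so it is decreasing on $t<0$, increasing on $t>0$, and vanishes at $t=0$, forcing $\partial_t\wR\geq 0$. Your argument instead treats $\theta$ as fixed, works only with the scalar function $L(t)=\log\bigl(\tfrac{1}{N}\sum_i e^{tf(x_i;\theta)}\bigr)$, and uses $L(0)=0$ together with convexity of $L$ (the tangent-line inequality at $t$ evaluated at $0$ gives $tL'(t)-L(t)\geq 0$), plus a Taylor expansion to handle $t=0$. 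The two arguments are secretly the same nonnegativity fact---under the GLM assumption $\theta^\top\mathcal{V}(t;\theta)\theta$ is exactly the tilted variance $L''(t)=\var_{w_t}(f(x_i;\theta))$---but your packaging is shorter and strictly more general: it shows Assumption~\ref{assump: expnential_family} is not needed for this particular theorem, since monotonicity of $\wR(t;\theta)$ in $t$ is a pure one-variable property of the empirical cumulant generating function. What the paper's route buys is uniformity with the surrounding GLM apparatus ($\mathcal{M}$, $\mathcal{V}$, $\Gamma$), which it reuses for the subsequent results such as Theorem~\ref{thm: opt-obj-increasing} and Theorem~\ref{thm: variance-reduction}; your remark that the assumption is invoked only for consistency is accurate.
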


\begin{theorem}[Optimal tilted objective is increasing with $t$] 
\label{thm: opt-obj-increasing}
Under Assumption~\ref{assump: expnential_family}, for all $t \in \mathbb{R},$ and all $\theta \in \Theta,$
\begin{equation}
    \frac{\partial}{\partial t} \wF(t) = \frac{\partial}{\partial t} \wR(t; \breve{\theta}(t)) \geq 0.
\end{equation}
\end{theorem}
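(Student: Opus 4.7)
The plan is to apply an envelope-theorem / chain-rule argument. Writing
\begin{equation*}
\frac{d}{dt}\wF(t) = \frac{d}{dt}\wR(t; \breve{\theta}(t)) = \frac{\partial \wR}{\partial t}(t; \breve{\theta}(t)) + \nabla_{\theta} \wR(t; \breve{\theta}(t))^\top \frac{d\breve{\theta}(t)}{dt},
\end{equation*}
the second term vanishes by the first-order optimality condition $\nabla_{\theta}\wR(t; \breve{\theta}(t)) = 0$, and the first term is nonnegative by Theorem~\ref{thm: obj-increasing}. This immediately gives the conclusion, so the content of the proof reduces to justifying that the chain rule is applicable.

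The main obstacle is showing that $t \mapsto \breve{\theta}(t)$ is differentiable. I would apply the implicit function theorem to the stationarity equation $\nabla_{\theta}\wR(t; \breve{\theta}(t)) = 0$. Under Assumption~\ref{assump: expnential_family} the map $(t, \theta) \mapsto \nabla_{\theta}\wR(t; \theta)$ is $C^1$, and the Jacobian with respect to $\theta$, namely the Hessian $\nabla^2_{\theta\theta^\top}\wR(t; \breve{\theta}(t))$, is positive definite and hence invertible at any local minimum: this is automatic from Lemma~\ref{lem: Hessian} for $t \in \mathbb{R}^+$, and is guaranteed by the strict saddle property (Assumption~\ref{assump:strict-saddle}) in general. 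IFT then produces a $C^1$ local selection of $\breve{\theta}(t)$, validating the chain rule.

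As a clean fallback that sidesteps differentiability altogether and makes the substance of the claim transparent, I would observe that for any $t_0 \leq t_1$, optimality of $\breve{\theta}(t_0)$ at tilt $t_0$ combined with Theorem~\ref{thm: obj-increasing} (integrated over $[t_0, t_1]$ at the fixed argument $\breve{\theta}(t_1)$) gives
\begin{equation*}
\wF(t_0) = \wR(t_0; \breve{\theta}(t_0)) \leq \wR(t_0; \breve{\theta}(t_1)) \leq \wR(t_1; \breve{\theta}(t_1)) = \wF(t_1),
\end{equation*}
so $\wF$ is nondecreasing on $\mathbb{R}$; at any point where $\wF$ is differentiable (which, by the IFT argument above, is everywhere under the stated assumptions), the derivative is therefore nonnegative.
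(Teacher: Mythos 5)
Your proposal is correct, and your ``fallback'' argument is essentially the paper's own proof: the paper simply chains $\wR(t+\epsilon;\theta)\geq \wR(t;\theta)\geq \wR(t;\breve{\theta}(t))$ for $\epsilon>0$, using Theorem~\ref{thm: obj-increasing} and optimality of $\breve{\theta}(t)$, to conclude that $\wF$ is nondecreasing along the branch of solutions. Your primary route---the envelope-theorem computation in which $\nabla_\theta\wR(t;\breve{\theta}(t))=0$ kills the second term of the chain rule and Theorem~\ref{thm: obj-increasing} handles the first---is a genuinely different (though standard) argument, and it has the merit of explicitly addressing the differentiability of $t\mapsto\breve{\theta}(t)$ via the implicit function theorem, which the paper's proof glosses over: the paper establishes monotone increments $\wF(t+\epsilon)\geq\wF(t)$ but does not separately justify that $\frac{\partial}{\partial t}\wF(t)$ exists, even though it does carry out the IFT computation elsewhere (Lemma~\ref{lem:implicit-func-theorem}, using positive definiteness of the Hessian under Assumption~\ref{assump:strict-saddle}) for other purposes. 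The trade-off is that the paper's two-inequality argument needs no smoothness of the solution path at all (and tacitly handles non-uniqueness by restricting the minimization to a ball around $\breve{\theta}(t)$), whereas your envelope argument needs the $C^1$ local selection from the IFT but then delivers the derivative statement directly; combining your two routes, as you do, gives the cleanest complete justification of the theorem as stated.
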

{
Recall that TERM as $t \to -\infty$ and $t \to \infty$ corresponds to min-loss and max-loss, respectively. 
We  discuss in Section~\ref{sec:var:approximate} that solving TERM with any $t \in \mathbb{R}$ can indeed be viewed as approximately minimizing the $k$-th smallest loss ($k \in [N]$) among all $N$ individual losses. As we increase $k$ from $1$ to $N,$ the corresponding value of $t$ sweeps in $(-\infty, \infty)$.
Theorem~\ref{thm: opt-obj-increasing} hence roughly states that the optimal $k$-th smallest loss is non-decreasing with $k$, which is intuitively expected.} 

We next provide two interesting interpretations of the TERM framework to further understand its behavior. 

\subsection{Interpretation 1: Re-Weighting Samples to Magnify/Suppress Outliers} \label{sec:property:reweighting}

\begin{figure}[h]
    \centering
    \begin{subfigure}{0.32\textwidth}
        \centering
        \includegraphics[width=\textwidth]{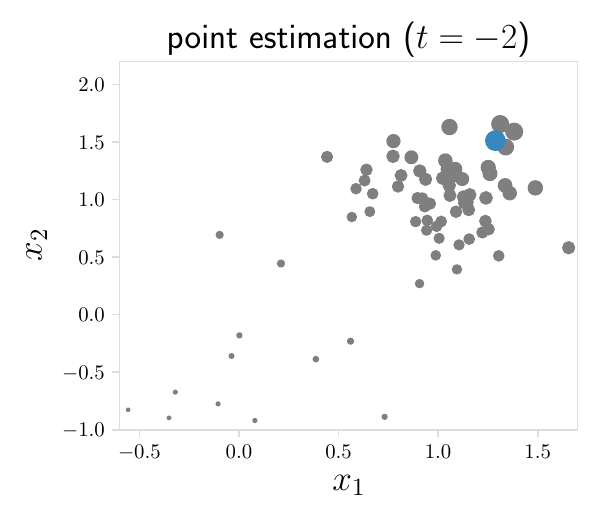}
    \end{subfigure}
    \begin{subfigure}{0.32\textwidth}
        \centering
        \includegraphics[width=\textwidth]{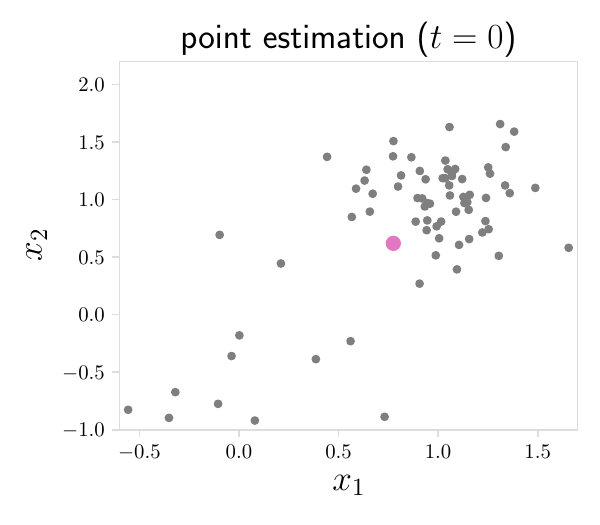}
    \end{subfigure}
    \begin{subfigure}{0.32\textwidth}
        \centering
        \includegraphics[width=\textwidth]{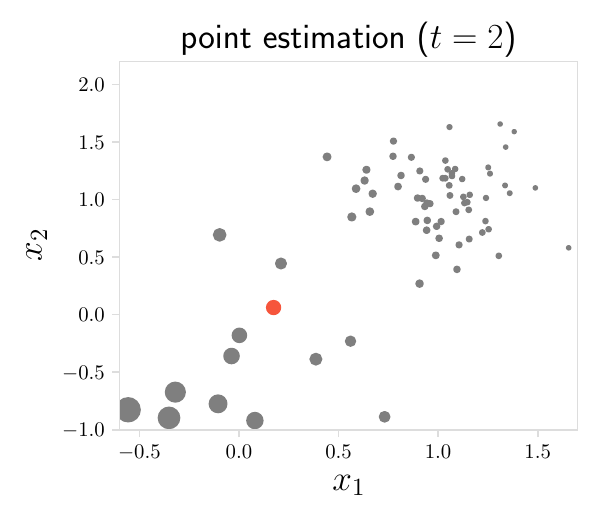}
    \end{subfigure}
    \begin{subfigure}{0.32\textwidth}
        \centering
        \includegraphics[width=\textwidth]{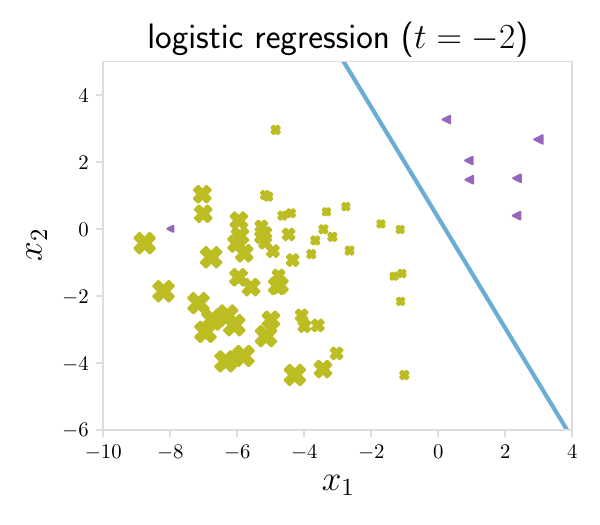}
    \end{subfigure}
    \begin{subfigure}{0.32\textwidth}
        \centering
        \includegraphics[width=\textwidth]{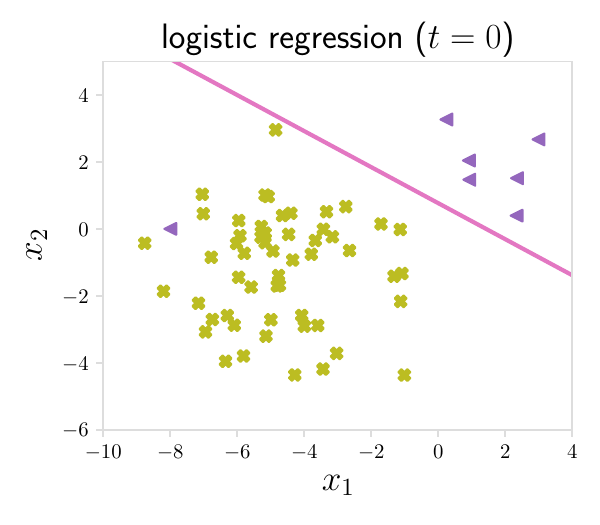}
    \end{subfigure}
    \begin{subfigure}{0.32\textwidth}
        \centering
        \includegraphics[width=\textwidth]{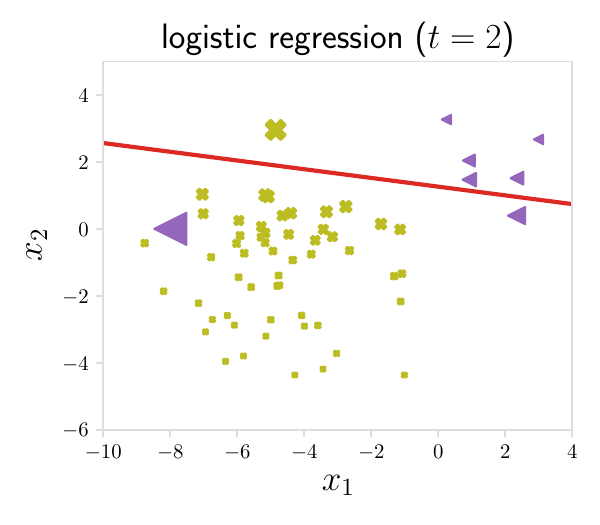}
    \end{subfigure}
    \caption{We visualize the size of the samples using their gradient weights. Negative $t$'s  ($t=-2$ on the left) focus on the inlier samples (suppressing outliers), while positive $t$'s ($t=2$ on the right) magnify the outlier samples. 
    }
    \label{fig:weights}
\end{figure}

As discussed via the toy examples in Section 1,  TERM  can be tuned (using $t$) to magnify or suppress the influence of outliers. We make this notion rigorous by exploring the \textit{gradient} of the $t$-tilted loss in order to reason about the solutions to the objective  defined in~\eqref{eq: TERM}.
\begin{lemma}[Tilted gradient]
\label{lemma:TERM-gradient}
For a smooth loss function $f(x; \theta)$,
\begin{equation}
\setlength{\thinmuskip}{0mu}
\setlength{\medmuskip}{0.75mu}
\setlength{\thickmuskip}{0.75mu}
\label{eq: tilted_grad}
    \nabla_\theta \wR(t ;\theta){=}\sum_{i \in [N]} w_i(t; \theta) \nabla_\theta f(x_i;\theta), 
\end{equation}
where {tilted weights are given by}
\begin{equation}
w_i (t; \theta) := \frac{e^{t f(x_i; \theta)}}{\sum_{j \in [N]} e^{t f(x_j; \theta)}} = \frac{1}{N}e^{t(f(x_i; \theta) - \wR(t; \theta))}. \label{eq: w_i}
\end{equation}
\end{lemma}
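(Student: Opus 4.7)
The plan is to compute $\nabla_\theta \wR(t;\theta)$ directly by the chain rule applied to the definition of $\wR(t;\theta)$ in~\eqref{eq: TERM}, and then to rewrite the resulting weights in two equivalent forms. Since $f(x_i;\theta)$ is smooth by Assumption~\ref{assump:smoothness} (in particular, the hypothesis of Lemma~\ref{lemma:TERM-gradient}), the composition $e^{tf(x_i;\theta)}$ is differentiable in $\theta$, and so is the finite sum $\tfrac{1}{N}\sum_i e^{tf(x_i;\theta)}$. The logarithm then gives a well-defined derivative so long as this sum is strictly positive, which is automatic as each summand is positive.

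First I would apply the chain rule $\nabla_\theta \log g(\theta) = g(\theta)^{-1}\nabla_\theta g(\theta)$ with $g(\theta)=\tfrac{1}{N}\sum_{i\in[N]} e^{tf(x_i;\theta)}$. Differentiating each exponential via $\nabla_\theta e^{tf(x_i;\theta)} = t\,e^{tf(x_i;\theta)}\nabla_\theta f(x_i;\theta)$ and pulling the prefactor $\tfrac{1}{t}$ back in cancels the $t$ factor, yielding
\begin{equation*}
\nabla_\theta \wR(t;\theta) = \frac{\sum_{i\in[N]} e^{tf(x_i;\theta)}\,\nabla_\theta f(x_i;\theta)}{\sum_{j\in[N]} e^{tf(x_j;\theta)}}.
\end{equation*}
This immediately gives the first representation in~\eqref{eq: tilted_grad} with $w_i(t;\theta)$ equal to the first expression in~\eqref{eq: w_i}.

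To obtain the second representation of $w_i(t;\theta)$, I would use the definition of $\wR(t;\theta)$ itself: exponentiating $\wR(t;\theta) = \tfrac{1}{t}\log\bigl(\tfrac{1}{N}\sum_j e^{tf(x_j;\theta)}\bigr)$ and raising to the power $t$ gives $\sum_{j\in[N]} e^{tf(x_j;\theta)} = N\,e^{t\wR(t;\theta)}$. Substituting this into the denominator of $w_i(t;\theta)$ produces $w_i(t;\theta) = \tfrac{1}{N}e^{t(f(x_i;\theta)-\wR(t;\theta))}$, as claimed.

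There is no real obstacle here: the result is a direct chain-rule calculation combined with one algebraic rearrangement using the definition of $\wR$. The only thing to be slightly careful about is the case $t=0$, but the statement is explicitly for the tilted gradient at $t\neq 0$ (the $t=0$ case is handled separately in~\eqref{eq:def-R0} via continuous extension), so no limiting argument is needed.
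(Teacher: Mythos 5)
Your proposal is correct and follows essentially the same route as the paper: a direct chain-rule computation giving $\nabla_\theta \wR(t;\theta) = \sum_i e^{tf(x_i;\theta)}\nabla_\theta f(x_i;\theta)\big/\sum_j e^{tf(x_j;\theta)}$, with the paper leaving the second form of $w_i$ (via $\sum_j e^{tf(x_j;\theta)} = N e^{t\wR(t;\theta)}$) as an implicit algebraic identity that you spell out explicitly.
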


\begin{proof}
Under Assumption~\ref{assump:smoothness}, we have:
\begin{align}
        \nabla_\theta \wR(t ;\theta) &= \nabla_\theta \left\{\frac{1}{t} \log\left( \frac{1}{N}\sum_{i \in [N]}  e^{t f(x_i; \theta)} \right)\right\} = \frac{\sum_{i \in [N]}\nabla_{\theta}f(x_i;\theta)e^{t f(x_i; \theta)}}{\sum_{i \in [N]} e^{t f(x_i; \theta)}} \, .
\end{align}
\end{proof}
Lemma~\ref{lemma:TERM-gradient} provides the gradient of the tilted objective, which has been studied previously in the context of exponential smoothing (see~\citet[Proposition 2.1]{pee2011solving}). From this, we can observe that the tilted gradient is a weighted average of the gradients of the original individual losses, where each data point is weighted exponentially proportional to the value of its loss.  
Note that $t=0$ recovers the uniform weighting associated with ERM, i.e., $w_i(t; \theta) = 1/N$. For positive $t,$ this has the effect of  {\it magnifying} the outliers---samples with large losses---by assigning more weight to them, and for negative $t,$ it {\it suppresses} the outliers by assigning less weight to them (Figure~\ref{fig:weights}).

{Generalizing the notion of tilted gradients (weighted average of individual gradients), we define tilted empirical mean over any $N$-vector $\mathbf{u} \in \mathbf{R}^N$ below, which will be used throughout the paper.}
{\begin{definition}[Tilted empirical mean and variance]
\label{def: tilted_mean}
For $\mathbf{u} \in \mathbf{R}^N$, let  weighted empirical mean with weights $\mathbf{w} \in \Delta^N$ (where $\Delta^N$ stands for $N$ dimensional simplex) be defined as
\begin{align}
     \widehat{E}_\mathbf{w}(\mathbf{u}) &: = \sum_{i \in [N]} w_i u_i.
\end{align}
Tilted empirical mean is weighted empirical mean with tilted weights, i.e.,
\begin{align}
     \widehat{E}_{\mathbf{w}(t; \theta)}(\mathbf{u}) &: = \sum_{i \in [N]} w_i(t; \theta) u_i, \label{eq: tilted_mean} \\
    \widehat{E}_{\mathbf{w}(t; \breve{\theta}(t))}(\mathbf{u}) &: = \sum_{i \in [N]} w_i(t; \breve{\theta}(t)) u_i,  ~~ \widehat{E}_t : = \widehat{E}_{\mathbf{w}(t; \breve{\theta}(t))}(\mathbf{u}), \label{eq: t_tilted_mean}
\end{align}
where $w_i(t; \theta)$ is defined in Eq.~\eqref{eq: w_i}, and $\breve{\theta}(t)$ is defined in Eq.~\eqref{eq: opt_obj}. We also refer to $\widehat{E}_t$ as the ``$t$-tilted empirical mean''. Similarly, tilted empirical variance is defined as
\begin{align}
\widehat{\var}_{\mathbf{w}(t; \theta)}(\mathbf{u}) & : =     \widehat{E}_{\mathbf{w}(t; \theta(t))}\left(u_i-\widehat{E}_{\mathbf{w}(t; \theta(t))}\left(\mathbf{u}\right)\right)^2, \label{eq: tilted_variance} \\ 
\widehat{\var}_{\mathbf{w}(t; \breve{\theta}(t))}(\mathbf{u}) &:= \widehat{E}_t (u_i - \widehat{E}_t\left(\mathbf{u})\right)^2, ~~\widehat{\var}_{t} :=  \widehat{\var}_{\mathbf{w}(t; \breve{\theta}(t))}(\mathbf{u}), \label{eq: t_tilted_variance}
\end{align}
and we refer to $\widehat{\var}_{t}$ as the ``$t$-tilted empirical variance''.
\end{definition}
As discussed before, the full gradient of TERM is tilted empirical mean of individual gradients $\{\nabla_{\theta} f(x_i;\theta)\}_{i \in [N]}$ with weights proportional to $e^{tf(x_i;\theta)}$. In the next section as well as Appendix~\ref{app: general-TERM-GLM-solution}, we will prove other properties of TERM using tilted empirical mean and variance defined here.}

\subsection{Interpretation 2: Empirical Bias/Variance Trade-off}
\label{sec:property:variance}


{Another key property of the TERM solutions is that for any $t \in \mathbb{R}$, $t$-tilted empirical variance of the losses across all samples will decrease if we increase $t$ by a small amount of value. We formally stated this in Theorem~\ref{thm: variance-reduction}}.

{
\begin{theorem}[Variance reduction]
Let $\mathbf{f}(\theta):= (f(x_1; \theta)), \ldots, f(x_N; \theta))$. 
Then, under Assumption~\ref{assump: expnential_family} and Assumption~\ref{assump:strict-saddle}, for any $t \in \mathbb{R}$,
\begin{equation}
 \left. \frac{\partial}{\partial t} \left\{  \widehat{\var}_\tau(\mathbf{f}(\breve{\theta}(t)))\right\} \right|_{t = \tau} <0.
\end{equation}
\label{thm: variance-reduction}
\vspace{-1em}
\end{theorem}
Note that $\widehat{\var}_\tau$ is $\tau$-tilted empirical variance defined in Eq.~\eqref{eq: t_tilted_variance}. Hence, for any $t$, the $t$-tilted empirical variance among $N$ losses will decrease if we increase $t$ by a small value. When $\tau=0$, $\widehat{\var}_\tau$ reduces to standard empirical variance. In particular, Theorem~\ref{thm: variance-reduction} states that the empirical variance of the loss vector decreases if $t$ is chosen to be a small positive value.} 
Therefore, it is possible to trade off between optimizing the average loss vs. reducing variance, allowing the solutions to potentially  achieve a better bias-variance trade-off for generalization~\citep{maurer2009empirical,bennett1962probability,hoeffding1994probability}.  {At a high level, this property is consistent with and extends the approximation of TERM mentioned by~\citet[Section V.A]{liu2019deep}, which approximates TERM as the empirical risk regularized with  variance of the loss at $t=0$.}
We {rely on} this property to achieve better generalization in classification in Section~\ref{sec:experiments}. 


In addition to empirical variance across all losses, there are other related distribution uniformity measures. In Theorem~\ref{thm: uniform_gradient_weights} below, we also prove that entropy of the weight distribution at solution $\breve{\theta}(t)$ tilted by $\tau$ close to $t$ is increasing with $t$, which indicates that larger $t$'s encourages more uniform solutions measured via entropy.




\begin{theorem}[Gradient weights become more uniform by increasing $t$]\label{thm: uniform_gradient_weights}
Under Assumption~\ref{assump: expnential_family} and Assumption~\ref{assump:strict-saddle},
for any $t \in \mathbb{R}^{>0},$
\begin{equation}
    \left. \frac{\partial}{\partial t} H({\bf w} (\tau; \breve{\theta}(t))) \right|_{\tau=t} > 0,
\end{equation}
where $H(\cdot)$ denotes the Shannon entropy function measured in nats,
\begin{equation}
H\left({\bf w}(t; \theta)\right) := -  \sum_{i \in [N]} w_i(t; \theta) \log w_i(t; \theta ).
\end{equation}
\label{theorem:weight-uniformity}
\vspace{-0.5em}
\end{theorem}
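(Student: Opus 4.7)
Since only the implicit dependence on $t$ through $\breve\theta(t)$ contributes to the derivative (the $\tau$ slot is held fixed before substitution), the plan is to apply the chain rule,
\begin{equation*}
\left.\frac{\partial}{\partial t} H(\mathbf{w}(\tau; \breve\theta(t)))\right|_{\tau=t}
= \left.\nabla_\theta H(\mathbf{w}(t; \theta))\right|_{\theta = \breve\theta(t)} \cdot \frac{d \breve\theta(t)}{dt},
\end{equation*}
and then compute the two factors explicitly and show their inner product is a positive semidefinite quadratic form in a nonzero vector.

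First I would compute $\nabla_\theta H(\mathbf{w}(\tau;\theta))$ for fixed $\tau$. Writing $\log w_i(\tau;\theta) = \tau f(x_i;\theta) - \log Z(\tau;\theta)$ and using $\sum_i \partial_\theta w_i = 0$, the derivative of the $-w_i \log w_i$ terms collapses to $\partial_{\theta_k} H = -\sum_i (\partial_{\theta_k} w_i)\log w_i$. Substituting $\partial_{\theta_k} w_i = \tau w_i(\partial_{\theta_k} f_i - \langle \partial_{\theta_k} f\rangle_\tau)$ and discarding the $\log Z$ piece (it multiplies a mean-zero quantity) gives the clean identity
\begin{equation*}
\nabla_\theta H(\mathbf{w}(\tau;\theta)) = -\tau^2\,\mathrm{Cov}_\tau\!\left(f(x;\theta),\nabla_\theta f(x;\theta)\right),
\end{equation*}
where the (co)variance is taken under the weight distribution $\{w_i(\tau;\theta)\}$. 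Setting $\tau=t$ and $\theta=\breve\theta(t)$, the first-order optimality $\nabla_\theta \wR(t;\breve\theta(t))=\langle \nabla_\theta f\rangle_t=0$ from Lemma~\ref{lemma:TERM-gradient} lets us drop the mean correction and write the vector as $-t^2\,\langle f\cdot\nabla_\theta f\rangle_t$.

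For the second factor, implicit differentiation of $\nabla_\theta \wR(t;\breve\theta(t))=0$ yields
\begin{equation*}
\frac{d\breve\theta(t)}{dt} = -\bigl(\nabla^2_{\theta\theta^\top}\wR(t;\breve\theta(t))\bigr)^{-1}\left.\frac{\partial}{\partial t}\nabla_\theta \wR(t;\theta)\right|_{\theta=\breve\theta(t)}.
\end{equation*}
The mixed partial equals $\partial_t\langle \nabla_\theta f\rangle_t = \mathrm{Cov}_t(f,\nabla_\theta f)$, which, again by $\langle \nabla_\theta f\rangle_t=0$, equals $\langle f\cdot\nabla_\theta f\rangle_t$. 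Multiplying the two factors together, cancellation of signs gives
\begin{equation*}
\left.\frac{\partial}{\partial t} H(\mathbf{w}(\tau;\breve\theta(t)))\right|_{\tau=t}
= t^2\,\langle f\cdot\nabla_\theta f\rangle_t^{\top}\bigl(\nabla^2_{\theta\theta^\top}\wR(t;\breve\theta(t))\bigr)^{-1}\langle f\cdot\nabla_\theta f\rangle_t.
\end{equation*}
By Lemma~\ref{lem: Hessian}, for $t\in\mathbb{R}^+$ the tilted Hessian is strictly positive definite, so its inverse is as well, and the right-hand side is $\ge 0$.

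The remaining hurdle---and what I expect to be the main point of the proof---is strict positivity, namely ruling out $\langle f\cdot\nabla_\theta f\rangle_t = 0$. Here I would use the GLM structure of Assumption~\ref{assump: expnential_family}: with $\nabla_\theta f(x_i;\theta)= \nabla A(\theta) - T(x_i)$ and $f(x_i;\theta)-\langle f\rangle_t = -\theta^\top(T(x_i)-\langle T\rangle_t)$, a direct computation at $\theta=\breve\theta(t)$ yields
\begin{equation*}
\langle f\cdot\nabla_\theta f\rangle_t = -\mathrm{Cov}_t(f,T) = \mathrm{Var}_t(T)\,\breve\theta(t),
\end{equation*}
so strictness reduces to showing that $\breve\theta(t)$ does not lie in the nullspace of the weighted second-moment matrix $\mathrm{Var}_t(T)$, which one argues from the non-degeneracy $\sum_{i\in[N]} T(x_i)T(x_i)^\top \succ 0$ together with the optimality of $\breve\theta(t)$ (equivalently, that $\breve\theta(t)$ is not stationary in $t$, since a vanishing quadratic form would force $d\breve\theta/dt=0$ by the implicit differentiation above). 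This last step is the subtle one; the other steps are essentially forced once one commits to the chain-rule decomposition.
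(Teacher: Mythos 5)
Your proposal is correct and takes essentially the same route as the paper's proof: the chain rule through $\breve{\theta}(t)$ with the $\tau$ slot frozen, the entropy-gradient identity (your $-\tau^2\,\mathrm{Cov}_\tau(f,\nabla_\theta f)$ specializes under the GLM assumption to the paper's $-t^2\mathcal{V}(t;\theta)\theta$), the implicit-function-theorem formula $\tfrac{d}{dt}\breve{\theta}(t) = -\left(\nabla^2_{\theta\theta^\top}A(\breve{\theta}(t)) + t\mathcal{V}(t;\breve{\theta}(t))\right)^{-1}\mathcal{V}(t;\breve{\theta}(t))\breve{\theta}(t)$ of Lemma~\ref{lem:implicit-func-theorem}, and the resulting quadratic form $t^2\,\breve{\theta}(t)^\top\mathcal{V}(\nabla^2 A + t\mathcal{V})^{-1}\mathcal{V}\breve{\theta}(t)$ with the positive definite tilted Hessian in the middle. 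The strict-positivity issue you single out is real but is not settled by the paper either---its own proof concludes only with ``$\geq 0$'' (strictness would further require $\mathcal{V}(t;\breve{\theta}(t))\breve{\theta}(t)\neq 0$, e.g.\ $\breve{\theta}(t)\neq 0$ given Lemma~\ref{lem: V})---so your sketch is at least as complete as the paper's argument on that point.
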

Full proofs of the theorems presented in this section can be found in Appendix~\ref{app: general-TERM-GLM-solution}. In the next section, we connect TERM to other objectives. Note that the results in all subsequent sections do not require the GLMs assumption, unless stated otherwise.

\section{Connections to Other Risk Measures} \label{sec:other objectives}

In this section (and subsequently in Section~\ref{sec:var}) we explore TERM by comparing, contrasting, and drawing connections between  TERM and other common risk measures.  To do so, we first introduce a distributional version of TERM, which is closely related to entropic risk (measure) in previous literature~\citep{ahmadi2012entropic,follmer2004stochastic}.  Entropic risk, denoted as $R_X(t; \theta)$, can be viewed as the scaled cumulant generating function of $f(X; \theta)$, i.e., 

\begin{equation}
    R_X(t; \theta) := \frac{1}{t}\Lambda_X(t; \theta) = \frac{1}{t}\log \left( \dE\left[e^{tf(X; \theta)}\right]\right) = \frac{1}{t} \log \sum_x p(x) p_\theta(x)^{-t}. \label{eq:entropic_risk}
\end{equation}
{We note that entropic risk is usually defined over $t \in \mathbb{R}^{>0}$ in the literature~\citep{follmer2004stochastic}. In Eq.~\eqref{eq:entropic_risk} above, we naturally extend its definition to support $t \in \mathbb{R}$.} The TERM objective $\widetilde{R}(t;\theta)$ is {the} empirical version of entropic risk $R_{X}(t;\theta)$ ($t \in \mathbb{R}$). {One of the contributions of this work can be viewed as providing an operational meaning to the value of the (empirical) entropic risk and rigorously investigating its properties for $t \in \mathbb{R}^{<0}$.} In the next sections (Section~\ref{sec:background_renyi_ce}--Section~\ref{sec:background:dro}), we characterize various relations between tilted risks (TERM or entropic risk)  and other common risk measures, both in terms of the empirical variants (involving TERM) and distributional forms (involving entropic risk).

\subsection{TERM and R\'enyi Cross Entropy} \label{sec:background_renyi_ce}
We begin by demonstrating that TERM can be viewed as form of R\'enyi cross entropy minimization, which helps to explain the uniformity properties of TERM discussed in Section~\ref{sec:property:variance}. Consider the cross entropy between $p$ and $p_\theta$ defined by
\begin{equation}
\label{eq: CE-risk}
    H(p \| p_\theta) := \dE\left[f(X; \theta) \right] = \sum_x p(x) \log \left(\frac{1}{p_\theta(x)}\right).
\end{equation}
Hence, minimizing $\dE \left[f(X; \theta) \right]$ is equivalent to minimizing the cross entropy between the true distribution and the postulated distribution. The empirical variant of~\eqref{eq: CE-risk} would be empirical risk minimization~\eqref{eq: ERM}.

For $\rho \in \mathbb{R}^{>0},$ let R\'enyi cross entropy of order $\rho$ between $p$ and $q$ be defined as:\footnote{$H_1$ is defined via continuous extension.}
\begin{equation}
    H_\rho(p \| q ) := \frac{1}{1-\rho} \log \left(\sum_{x} p(x) q(x)^{\rho-1} \right).
\end{equation}
R\'enyi cross entropy can be viewed as a natural extension of cross entropy, and in fact it recovers cross entropy for $\rho =  1,$ i.e., $H_1(p\|q) = H(p\|q).$ R\'enyi cross-entropy can also be viewed as a natural extension of R\'enyi entropy, which it recovers when $p=q,$ i.e., $H_\rho(p\|p) = H_\rho(p),$ where 
R\'enyi entropy of order $\rho$ is defined as
\begin{equation}
    H_\rho(p) := \frac{1}{1-\rho} \log \left(\sum_{x}  p(x)^{\rho} \right).
\end{equation}

It is straightforward to see that the entropic risk can be expressed in terms of R\'enyi cross entropy:
\begin{equation}
    R_X(t; \theta) = H_{1-t}(p\|p_\theta).
\end{equation}
Equivalently, in the empirical world, TERM can be expressed as:
\begin{equation}
    \wR(t; \theta) = H_{1-t} (\mathbf{u} \| \mathbf{w}(1; \theta)),
\end{equation}
where $\mathbf{u}$ denotes the uniform $N$-vector and
$\mathbf{w}(1; \theta) := \left(w_1(1; \theta), \ldots, w_{n}(1; \theta)\right)$ with $w_i(1;\theta)$  defined in Eq.~\eqref{eq: w_i}, and for any two $N$-vectors $\mathbf{p}$ and $\mathbf{q},$
\begin{equation}
    H_\rho(\mathbf{p}\|\mathbf{q}) := \frac{1}{1-\rho} \log \left( \sum_{i \in N}p_i q_i^{\rho-1}\right).
\end{equation}
In other words, if we treat the loss $f(x_i; \theta)$ as log-likelihood of the sample $x_i$ under $p_\theta,$ this implies that TERM is the R\'enyi entropy of order $(1-t)$ between the uniform vector and the normalized likelihood vector of all samples, $\mathbf{w}(1; \theta)$. Hence, minimizing over $\theta$ is encouraging the \textit{uniformity} of $\mathbf{w}(1;\theta)$ in the sense of the R\'enyi cross entropy with the uniform vector.

\subsection{TERM as a Regularizer to Empirical Risk} \label{sec:background_CE}
TERM can also be interpreted as a form of regularization in traditional ERM. 
{We first note that by Taylor series expansion at $t=0$, TERM can be approximately decomposed into empirical risk regularized by $t$ times the empirical variance of the loss, for small $t$~\citep[Section V.A]{liu2019deep}. Here, we provide an exact interpretation of TERM as regularized ERM for all $t$.}
We first look at the distributional case, i.e.,  relating $R_X(t; \theta)$ to cross entropy as follows.
\begin{lemma}
\label{lem: entropic_risk_is_CE+KL}
The entropic risk of order $t$ can be stated as:
\begin{equation}
   R_X(t; \theta) =   H(p \| p_\theta) +  \frac{1}{t}D(p \| T(p, p_\theta, -t)),
\end{equation}
where $D$ denotes KL divergence between two distributions and $T(p, p_\theta, -t)$ is 
a mismatched tilted distribution  defined as~\cite[Definition 1]{salamatian2019mismatched}
\begin{equation}
    T(p, p_\theta, -t)(x) := \frac{p(x) p_\theta(x)^{-t}}{\sum_u p(u) p_\theta(u)^{-t}}.
\end{equation}
\end{lemma}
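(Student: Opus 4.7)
The plan is to prove this identity by direct algebraic manipulation, exploiting the fact that the tilted distribution $T(p, p_\theta, -t)$ is defined precisely so that its normalization constant equals $e^{t R_X(t;\theta)}$. So the logarithm of the partition function is essentially $t$ times the entropic risk, and the KL divergence will conveniently split into a cross-entropy term plus this log-partition term.

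First I would introduce the shorthand $Z(t;\theta) := \sum_u p(u) p_\theta(u)^{-t}$, so that by the definition of $R_X(t;\theta)$ in equation~\eqref{eq:entropic_risk} we have $R_X(t;\theta) = \tfrac{1}{t} \log Z(t;\theta)$, and the mismatched tilted distribution becomes $T(p,p_\theta,-t)(x) = p(x) p_\theta(x)^{-t}/Z(t;\theta)$. The whole proof then hinges on this single substitution.

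Next I would expand the KL divergence directly:
\begin{equation*}
D\bigl(p \,\|\, T(p,p_\theta,-t)\bigr) = \sum_x p(x) \log \frac{p(x) \, Z(t;\theta)}{p(x) \, p_\theta(x)^{-t}} = \log Z(t;\theta) \;+\; t \sum_x p(x) \log p_\theta(x).
\end{equation*}
The $p(x)$ terms in the ratio cancel, and the factor $p_\theta(x)^{-t}$ in the denominator pulls out of the logarithm as $+t \log p_\theta(x)$. Recognizing $-\sum_x p(x) \log p_\theta(x) = H(p\|p_\theta)$ from the definition~\eqref{eq: CE-risk}, this simplifies to $D(p \,\|\, T(p,p_\theta,-t)) = \log Z(t;\theta) - t\, H(p\|p_\theta)$.

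Finally, dividing through by $t$ and rearranging yields $H(p\|p_\theta) + \tfrac{1}{t} D(p\,\|\, T(p,p_\theta,-t)) = \tfrac{1}{t}\log Z(t;\theta) = R_X(t;\theta)$, which is the claimed identity. There is no real obstacle here — the only subtle point worth flagging is that the statement is silent about the sign of $t$, but the derivation is valid for all $t \neq 0$ provided the partition sum $Z(t;\theta)$ converges, and the case $t=0$ is recovered in the limit via continuous extension, consistent with how $R_X(0;\theta)$ is defined elsewhere in the paper.
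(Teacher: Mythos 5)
Your proposal is correct and follows essentially the same route as the paper: the paper's proof is exactly the expansion $\sum_x p(x)\log\bigl(p(x)/T(p,p_\theta,-t)(x)\bigr) = -t\,H(p\|p_\theta) + \log\bigl(\sum_x p(x)p_\theta(x)^{-t}\bigr)$, which is your identity $D(p\|T) = \log Z - t\,H(p\|p_\theta)$, followed by dividing by $t$ and rearranging. Your added remark on $t\neq 0$ and convergence of the partition sum is a reasonable, if minor, clarification beyond what the paper states.
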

\begin{proof}
Consider the following equation:
\begin{equation}
    \sum_x p(x) \log\left( \frac{p(x)}{T(p, p_\theta, -t) (x)} \right)  = -t \sum_x p(x) \log \frac{1}{p_\theta(x)} + \log \left(\sum_x p(x) p_\theta(x)^{-t}\right),
\end{equation}
which directly implies the desired identity.
\end{proof}
In other words, entropic risk of order $t$ is equivalent to the cross entropy risk  regularized via a tilted mismatched distribution. 
Let $ \mathbf{w}(t; \theta) := \left(w_1(t; \theta), \ldots, w_{n}(t; \theta)\right)$ denote the tilted weight vector of the $n$ samples.
 Our next result is an empirical variant of Lemma~\ref{lem: entropic_risk_is_CE+KL}.
\begin{lemma}
TERM objective can be restated as follows:
 \begin{equation}
   \wR(t; \theta) =  \widebar{R}(\theta) +  \frac{1}{t}D(\mathbf{u} \| \mathbf{w}(t; \theta)),
\end{equation}
where $\widebar{R}(\theta)$ is the empirical risk~\eqref{eq: ERM}, $\mathbf{u}$ denotes the uniform $N$-vector, i.e., 
$
    \mathbf{u} := \left( \frac{1}{N}, \ldots, \frac{1}{N}\right),
$
and where for $N$-vectors $\mathbf{p}$ and $\mathbf{q},$
\begin{equation}
    D(\mathbf{p}\|\mathbf{q}) : = \sum_{i \in [N]} p_i \log \left( \frac{p_i}{q_i}\right).
\end{equation}
\end{lemma}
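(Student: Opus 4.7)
The plan is to prove this identity by direct substitution, starting from the definition of $D(\mathbf{u}\|\mathbf{w}(t;\theta))$ and using the closed-form expression for the tilted weights $w_i(t;\theta)$ already derived in Lemma~\ref{lemma:TERM-gradient}. No auxiliary machinery or extra assumptions beyond those used to define TERM should be required; the result is essentially an algebraic rearrangement that mirrors the distributional Lemma~\ref{lem: entropic_risk_is_CE+KL}, with the tilted mismatched distribution replaced by the empirical weight vector and the true distribution replaced by the uniform vector $\mathbf{u}$.

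First I would write out
\begin{equation*}
D(\mathbf{u}\|\mathbf{w}(t;\theta)) = \sum_{i\in[N]} \frac{1}{N}\log\frac{1/N}{w_i(t;\theta)},
\end{equation*}
and then substitute the identity $w_i(t;\theta) = \tfrac{1}{N}e^{t(f(x_i;\theta) - \wR(t;\theta))}$ from Eq.~\eqref{eq: w_i}. The factor $1/N$ in the numerator and denominator cancels, so $\log\frac{1/N}{w_i(t;\theta)} = -t(f(x_i;\theta)-\wR(t;\theta))$.

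Next I would pull the constant $-t$ and the constant (in $i$) quantity $\wR(t;\theta)$ outside the sum. The remaining sum $\frac{1}{N}\sum_{i\in[N]} f(x_i;\theta)$ is precisely $\widebar{R}(\theta)$ by~\eqref{eq: ERM}, while $\frac{1}{N}\sum_{i\in[N]}\wR(t;\theta) = \wR(t;\theta)$. This yields
\begin{equation*}
D(\mathbf{u}\|\mathbf{w}(t;\theta)) = -t\,\widebar{R}(\theta) + t\,\wR(t;\theta).
\end{equation*}
Dividing by $t$ and rearranging gives the claimed identity. As a sanity check, both sides are well-defined for $t\neq 0$, and taking $t\to 0$ recovers $\wR(0;\theta) = \widebar{R}(\theta)$ since $D(\mathbf{u}\|\mathbf{w}(t;\theta)) = O(t^2)$ around $t=0$ (the weights collapse to the uniform vector at $t=0$), consistent with~\eqref{eq:def-R0}.

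There is no genuine obstacle here; the only minor care point is to keep track of signs when taking the logarithm of the ratio $(1/N)/w_i(t;\theta)$, and to verify that the identity holds for both positive and negative $t$ (the algebra is symmetric in the sign of $t$, so the same manipulation works throughout $\mathbb{R}^{\setminus 0}$). No smoothness, convexity, or GLM assumption is invoked, so the lemma holds under the minimal assumption that the tilted loss and weights are well-defined.
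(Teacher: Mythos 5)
Your proof is correct and follows essentially the same route as the paper: the paper's one-line identity is exactly what you obtain by substituting $w_i(t;\theta) = \tfrac{1}{N}e^{t(f(x_i;\theta)-\wR(t;\theta))}$ into the definition of $D(\mathbf{u}\|\mathbf{w}(t;\theta))$ and rearranging. Your additional remarks on the $t\to 0$ limit and the sign of $t$ are fine but not needed.
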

\begin{proof}
The proof is a consequence of the following identity:
\begin{equation}
   \frac{1}{t} \frac{1}{N} \sum_{i \in [N]}  \log\left( \frac{\frac{1}{N}}{w_i(t; \theta)} \right)  + \frac{1}{N}\sum_{i \in [N]}  f(x_i; \theta) =  \frac{1}{t}\log \left(\frac{1}{N}\sum_{i \in [N]} e^{tf(x_i; \theta)}\right).
\end{equation}
\end{proof}
Hence, TERM aims to minimize an average loss regularized by the KL divergence between the weight vector (which exponentially tilts the individual losses) and the uniform vector.

\subsection{TERM and Distributionally Robust Risks} \label{sec:background:dro}

Finally, we note that TERM is closely related to distributionally robust optimization (DRO) objectives~\citep[e.g.,][]{namkoong2017variance,duchi2019variance,chen2020distributionally,gurbuzbalaban2022stochastic,duchi2018learning}. In particular, 
TERM with $t > 0$ is equivalent to a form of DRO with a max-entropy regularizer, i.e., the constraint set is determined by a KL ball around uniform distribution~\citep{follmer2011entropic,qi2020attentional,shapiro2014lectures}: 
\begin{align}
    \widetilde{R}(t;\theta) &= \max_{q \in \Delta_N} \left\{\sum q_i f(x_i;\theta)-\frac{1}{t}\sum_{i \in [N]} q_i \log N q_i \right\} 
    = \max_{q \in \Delta_N} \left\{H(\mathbf{q}\|\mathbf{w}(1; \theta)) - \frac{1}{t} D(\mathbf{q}\| \mathbf{u})\right\},
\end{align}
and the corresponding relations in the distributional form is 
\begin{align}
    R_X(t;\theta) = \max_{q} \left\{ \dE_q[f(X;\theta)] - \frac{1}{t} D(q\|p)\right\} =  \max_{q} \left\{ H(q\|p_\theta) - \frac{1}{t} D(q\|p)\right\}.
\end{align}

This relation is also a special case of Donsker-Varadhan Variational Formula~\citep{dupuis1997weak}. We note that similar connections between DRO and TERM have also been explored in concurrent works by~\citet{qi2020practical,qi2020attentional} specifically in the limited context of stochastic optimization methods for solving class imbalance with $t>0$. 

In the next section, we propose a new risk motivated by TERM, which may be of independent interest.

\section{Tilted Value-at-Risk and Value-at-Risk} \label{sec:var}

In this section we provide connections between TERM and risk measures such as Value-at-Risk  (VAR) that specifically target loss quantiles. 
In particular, based on TERM, we propose a new risk---\textit{Tilted Value-at-Risk (TiVaR)} and discuss its relations with existing risks (Section~\ref{sec:var:approximate}). We find that TiVaR is a computationally efficient alternative to VaR that provides tighter approximations to VaR than prior risks, which again helps to motivate the use of TERM. 

\subsection{{Tail Probabilities of Losses} and Value-at-Risk (VaR)} \label{sec:var:def}

{The tail probabilities of losses focus on quantiles of losses that exceed a certain threshold, as formally defined below.} 
\begin{definition}[{Tail probability of losses}]
\label{def: superquantile-losses}
For all $\gamma \in \mathbb{R},$ let $Q_X(\gamma;\theta)$ denote the probability of the losses $f(X;\theta)$ no smaller than $\gamma$, i.e., 
\begin{align}
    Q_X(\gamma;\theta) := P\left[f(X;\theta) \geq \gamma\right].
\end{align}
Equivalently, define the empirical variant $\wQ(\gamma; \theta)$ over samples ${x_i}$ for ${i \in [N]}$:
\begin{equation}
    \wQ(\gamma; \theta):= \frac{1}{N} \sum_{i \in [N]}  \mathbb{I}\left\{ f(x_i;\theta) \geq \gamma \right\}
\end{equation}
where $\mathbb{I}\{\cdot\}$ is the indicator function.
\end{definition}

Notice that $\wQ(\gamma; \theta) \in \left\{0, \frac{1}{N}, \ldots, 1\right\}$ quantifies the fraction of the data for which loss is at least $\gamma$. 
For example, optimizing for 90\% of the individual losses (ignoring the worst-performing 10\%) 
could be a more reasonable practical objective than the pessimistic min-max objective. Another common application of this is to use the median in contrast to the mean in the presence of noisy outliers. 

Using tail distribution of losses, Value-at-Risk (VaR)~\citep{jorion1996value} with confidence $\alpha$ ($0<\alpha<1$) is defined as 
\begin{align}
    \text{VaR}_X(1-\alpha;\theta) 
    := \min_{\gamma} \left\{\gamma: Q_X(\gamma;\theta) \leq \alpha \right\},
\end{align}
and the empirical variant for $\alpha \in \{\frac{k}{N}\}_{k \in [N]}$ is
\begin{align}
    \widetilde{\text{VaR}}(1-\alpha;\theta) 
    &:= \min_{\gamma} \left\{\gamma : \wQ(\gamma;\theta) \leq \alpha  \right\}.
\end{align}

{Notice that when we view the loss as log-likelihood of a parametric probability distribution function, $Q_X(\gamma; \theta)$ (Definition~\ref{def: superquantile-losses}) can be viewed as the complementary cumulative distribution function (CDF) of the information random variable $f(X; \theta)$. Given the definition of VaR, $Q_X(\gamma; \theta)$ can also be viewed as `inverted' VaR, as we formalize and prove in Lemma~\ref{lemma:var_Q_dis} and~\ref{lemma:var_Q_emp} below.}
Let
\begin{align}
\label{eq: def-theta-Q}
 Q^0_X (\gamma) &: = \min_\theta Q_X(\gamma; \theta),  \quad \theta_X^0(\gamma)~ \new{\in} \arg\min_{\theta}Q_X(\gamma;\theta), \\
\widetilde{Q}^0(\gamma) &:= \min_{\theta} \wQ(\gamma;\theta), \quad \theta^0(\gamma) ~\new{\in} \arg \min_{\theta} \wQ(\gamma;\theta) .
\end{align}
where $Q_X$ and $\widetilde{Q}$ is defined in Definition~\ref{def: superquantile-losses}.
Optimizing $\wQ(\gamma;\theta)$ is equivalent to optimizing VaR. Formally, we have the following lemmas. 

\begin{lemma} \label{lemma:var_Q_dis}
Assume $\min_{\theta} Q_X(\gamma;\theta)$ is strictly decreasing with $\gamma$.
We note
\begin{align}
    \min_{\theta} \left\{ \text{VaR}_{X}(1-Q_X^0(\gamma);\theta)\right\}  = \gamma, \quad \arg\min_{\theta} \left\{ \text{VaR}_{X}(1-Q_X^0(\gamma);\theta)\right\} \new{\ni}~ \theta_X^0(\gamma).
\end{align}
\end{lemma}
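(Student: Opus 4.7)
Set $\alpha := Q^0_X(\gamma)$. The strategy is to establish the equality $\min_{\theta} \text{VaR}_X(1-\alpha;\theta) = \gamma$ via a two-sided bound, and then argue that a minimizer $\theta^*$ of the left-hand side must lie in $\arg\min_{\theta} Q_X(\gamma;\theta)$. Throughout, I will use only the definitions of $Q_X$, $Q^0_X$, $\theta^0_X$, $\text{VaR}_X$, together with the hypothesis that $Q^0_X$ is strictly decreasing in $\gamma$.

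The plan is to obtain the \emph{upper bound} $\min_\theta \text{VaR}_X(1-\alpha;\theta) \leq \gamma$ by plugging in the distinguished choice $\theta = \theta^0_X(\gamma)$. By definition of $\theta^0_X(\gamma)$ we have $Q_X(\gamma;\theta^0_X(\gamma)) = Q^0_X(\gamma) = \alpha$, so $\gamma$ itself belongs to the feasible set $\{\gamma' : Q_X(\gamma';\theta^0_X(\gamma)) \leq \alpha\}$ appearing in the definition of $\text{VaR}_X$. Hence $\text{VaR}_X(1-\alpha;\theta^0_X(\gamma)) \leq \gamma$, which already proves the upper bound and also shows $\theta^0_X(\gamma)$ is a candidate minimizer.

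For the matching \emph{lower bound} $\min_\theta \text{VaR}_X(1-\alpha;\theta) \geq \gamma$, I would fix an arbitrary $\theta$ and any $\gamma' < \gamma$, and invoke strict monotonicity of $Q^0_X$ to write
\begin{equation}
Q_X(\gamma';\theta) \;\geq\; \min_{\vartheta} Q_X(\gamma';\vartheta) \;=\; Q^0_X(\gamma') \;>\; Q^0_X(\gamma) \;=\; \alpha.
\end{equation}
So no $\gamma' < \gamma$ can belong to $\{\gamma'' : Q_X(\gamma'';\theta) \leq \alpha\}$, uniformly in $\theta$. Consequently $\text{VaR}_X(1-\alpha;\theta) \geq \gamma$ for every $\theta$, and combining with the upper bound from the previous paragraph yields $\min_\theta \text{VaR}_X(1-\alpha;\theta) = \gamma$, attained at $\theta = \theta^0_X(\gamma)$.

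To conclude the $\arg\min$ identity, I would argue the reverse containment: any $\theta^*$ attaining the minimum satisfies $\text{VaR}_X(1-\alpha;\theta^*) = \gamma$, so by definition $\gamma \in \{\gamma' : Q_X(\gamma';\theta^*) \leq \alpha\}$, i.e.\ $Q_X(\gamma;\theta^*) \leq \alpha = Q^0_X(\gamma)$. The reverse inequality $Q_X(\gamma;\theta^*) \geq Q^0_X(\gamma)$ holds trivially by definition of $Q^0_X$, forcing equality and hence $\theta^* \in \arg\min_\theta Q_X(\gamma;\theta) = \theta^0_X(\gamma)$. The main obstacle I anticipate is a mild measure-theoretic subtlety in the definition of $\text{VaR}_X$: one must justify that the infimum in $\{\gamma' : Q_X(\gamma';\theta) \leq \alpha\}$ is actually attained (so that the upper-bound step is clean), which follows from the left-continuity of $\gamma \mapsto Q_X(\gamma;\theta) = P(f(X;\theta) \geq \gamma)$. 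The strict-monotonicity hypothesis then does the rest of the work in ruling out that the minimum drops below $\gamma$.
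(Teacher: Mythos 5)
Your proposal is correct and follows essentially the same route as the paper's proof: the lower bound comes from strict monotonicity of $Q_X^0$ (for every $\theta$ and every $\gamma' < \gamma$, $Q_X(\gamma';\theta) \geq Q_X^0(\gamma') > Q_X^0(\gamma)$, which is exactly the paper's contradiction argument stated directly), and the upper bound and attainment come from the feasibility of $\gamma$ at $\theta_X^0(\gamma)$. Your extra reverse-containment step for the $\arg\min$ goes slightly beyond what the paper writes, and the only caveat is your closing aside: left-continuity of $\gamma \mapsto P(f(X;\theta)\geq\gamma)$ does not by itself make the sublevel set $\{\gamma' : Q_X(\gamma';\theta)\leq \alpha\}$ closed (that would require right-continuity; left-continuity only gives $Q_X(c;\theta)\geq\alpha$ at the infimum $c$), so what you are really using there is that $\text{VaR}_X$ is defined as an attained minimum, which is also how the paper treats it.
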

{Note that $\min_{\theta} Q_X(\gamma;\theta)$ is non-increasing as $\gamma$ increases by definition. The additional strict monotonic assumption on $\gamma \mapsto \min_{\theta} Q_X(\gamma;\theta)$  can be easily satisfied if $f(X; \theta)$ is a continuous random variable and $\gamma$ is in the range of $f$. Lemma~\ref{lemma:var_Q_dis} is proved as follows.}
\begin{proof}
    First, we note for any $\theta$ and $\gamma_0$ such that $Q_X(\gamma_0;\theta) \leq Q_X^0(\gamma)$,  we have $\gamma_0 \geq \gamma$.
    Otherwise, there exist $\theta', \gamma' < \gamma$ and $Q_X(\gamma'; \theta') \leq Q_X^0(\gamma),$
    which in turn implies that
    \begin{align}
       \min_{\theta} P[f(X;\theta) \geq \gamma'] \leq P[f(X;\theta') \geq \gamma'] \leq Q_X^0(\gamma),
    \end{align}
    contradicting $Q_X^0(\gamma') > Q_X^0(\gamma)$.
The proof completes combining with the fact that the function value of $\text{VaR}_{X}(1-Q_X^0(\gamma);\theta)$ can achieve $\gamma$ at any $\theta_X^0(\gamma)$.
\end{proof} 
Lemma~\ref{lemma:var_Q_emp} below describes the empirical variant, {which does not require the strict monotonic assumption.}
\begin{lemma} \label{lemma:var_Q_emp}
For any $\gamma \in (\wF(-\infty), \wF(+\infty))$ where $\wF(t)$ is defined as the optimal tilted objective as in Eq.~\eqref{def:wF}, let $\gamma^0 = \min \left\{\gamma' | \wQ^0(\gamma') = \wQ^0(\gamma) \right\}$. Then
   \begin{align}
       \min_{\theta} \left\{\widetilde{\text{VaR}}(1-
       \widetilde{Q}^0(\gamma^0); \theta)\right\} = \gamma^0 , \quad \arg\min_{\theta} \left\{\widetilde{{\text{VaR}}}(1-\widetilde{Q}^0(\gamma^0);\theta)\right\} \new{\ni}~ \theta^0(\gamma^0).
   \end{align}
\end{lemma}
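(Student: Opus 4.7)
The plan is to adapt the proof of Lemma~\ref{lemma:var_Q_dis} to the empirical setting, where the main technical wrinkle is that $\wQ^0(\gamma)$ takes values in the finite set $\{0,1/N,\ldots,1\}$ and is therefore piecewise constant in $\gamma$ rather than strictly decreasing. The definition $\gamma^0 = \min\{\gamma' \mid \wQ^0(\gamma') = \wQ^0(\gamma)\}$ selects the left endpoint of the flat piece of $\wQ^0$ containing $\gamma$, which is exactly what the argument needs in order to pin down the VaR value. First I would observe that, since $\wQ(\gamma;\theta)$ is a sum of indicators $\mathbb{I}\{f(x_i;\theta)\geq\gamma\}$, it is non-increasing in $\gamma$ for each $\theta$, and therefore $\wQ^0(\gamma)$ is also non-increasing. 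The range restriction $\gamma \in (\wF(-\infty),\wF(+\infty))$ guarantees $\wQ^0(\gamma) \in \{1/N,\ldots,(N-1)/N\}$, which ensures that $\gamma^0$ is well defined (the level set of $\wQ^0$ at value $\wQ^0(\gamma)$ is a closed subinterval of the finite partition induced by $\wQ^0$, whose left endpoint is attained) and that the subsequent constraint sets are non-empty.

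For the lower bound, I would argue by contradiction in the style of Lemma~\ref{lemma:var_Q_dis}: suppose there exist $\theta$ and $\gamma_0 < \gamma^0$ with $\wQ(\gamma_0;\theta) \leq \wQ^0(\gamma^0)$. Then $\wQ^0(\gamma_0) \leq \wQ(\gamma_0;\theta) \leq \wQ^0(\gamma^0)$, while monotonicity of $\wQ^0$ together with $\gamma_0 < \gamma^0$ forces $\wQ^0(\gamma_0) \geq \wQ^0(\gamma^0)$. The two inequalities together yield $\wQ^0(\gamma_0) = \wQ^0(\gamma^0) = \wQ^0(\gamma)$, contradicting the minimality of $\gamma^0$. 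Hence for every $\theta$,
\begin{equation*}
\widetilde{\text{VaR}}\bigl(1-\wQ^0(\gamma^0);\theta\bigr) \;=\; \min\bigl\{\gamma : \wQ(\gamma;\theta)\leq\wQ^0(\gamma^0)\bigr\} \;\geq\; \gamma^0,
\end{equation*}
and taking $\min_\theta$ preserves the inequality.

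For the matching upper bound, I would evaluate at $\theta = \theta^0(\gamma^0)$. By definition of $\theta^0$ we have $\wQ(\gamma^0;\theta^0(\gamma^0)) = \wQ^0(\gamma^0)$, so $\gamma^0$ is feasible in the $\min$ defining $\widetilde{\text{VaR}}(1-\wQ^0(\gamma^0);\theta^0(\gamma^0))$; therefore this quantity is at most $\gamma^0$. Chaining with the lower bound gives $\min_\theta \widetilde{\text{VaR}}(1-\wQ^0(\gamma^0);\theta) = \gamma^0$ with the minimum attained at $\theta^0(\gamma^0)$, which is the claimed identity for both the value and the $\arg\min$.

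The main obstacle, and the reason the statement is phrased in terms of $\gamma^0$ rather than the original $\gamma$, is precisely the non-strict monotonicity of $\wQ^0$: without restricting to the left endpoint of the flat piece, the contradiction step would fail, since $\wQ^0(\gamma_0) \leq \wQ^0(\gamma)$ no longer implies $\gamma_0 \geq \gamma$. A secondary care point is verifying that $\gamma^0$ is attained rather than merely an infimum; this follows from $\wQ^0$ being a right-continuous step function with only finitely many jump values, so each of its level sets is a closed interval whose left endpoint exists.
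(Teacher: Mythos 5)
Your main line of argument is sound and is essentially the paper's intended route: the paper offers no separate proof of this empirical lemma, and the expected argument is precisely the adaptation of the proof of Lemma~\ref{lemma:var_Q_dis} that you give --- a lower bound by contradiction (any $\gamma_0<\gamma^0$ with $\wQ(\gamma_0;\theta)\le\wQ^0(\gamma^0)$ would be sandwiched into $\wQ^0(\gamma_0)=\wQ^0(\gamma^0)=\wQ^0(\gamma)$, contradicting minimality of $\gamma^0$), plus the matching upper bound from feasibility of $\gamma^0$ at $\theta^0(\gamma^0)$. Your diagnosis of why $\gamma^0$ must replace $\gamma$ (loss of strict monotonicity of $\wQ^0$ in the empirical case) is also the right one, and the observation that the reverse inclusion for the $\arg\min$ follows from $\wQ(\gamma^0;\theta)\le\wQ^0(\gamma^0)$ would round it out, though the paper itself is no more careful on that point.

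There is, however, one incorrect claim in your closing "care point," and it sits exactly where care is needed. With the paper's convention $\wQ(\gamma;\theta)=\frac{1}{N}\sum_{i\in[N]}\mathbb{I}\{f(x_i;\theta)\ge\gamma\}$, each indicator equals $1$ at $\gamma=f(x_i;\theta)$ and drops to $0$ only to the right of it, so $\wQ(\cdot;\theta)$ --- and hence $\wQ^0$, a pointwise minimum of non-increasing left-continuous step functions --- is \emph{left}-continuous, not right-continuous. Its level sets are therefore generically of the form $(a,b]$, and the minimum defining $\gamma^0$ need not be attained. Concretely, take $N=2$, $f(x_1;\theta)=\theta^2$, $f(x_2;\theta)=(\theta-1)^2$: then $\wF(-\infty)=0$, $\wF(+\infty)=\tfrac{1}{4}$, $\wQ^0\equiv\tfrac{1}{2}$ on $(0,\tfrac{1}{4}]$ while $\wQ^0(0)=1$, so for any $\gamma\in(0,\tfrac{1}{4})$ the set $\{\gamma':\wQ^0(\gamma')=\wQ^0(\gamma)\}$ has infimum $0$ but no minimum. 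So attainment of $\gamma^0$ does not ``follow'' as you assert; it is an implicit hypothesis of the statement as written, and your proof genuinely uses it twice --- in the contradiction step (minimality of $\gamma^0$ within the level set) and, more essentially, in the upper bound, where you need $\wQ(\gamma^0;\theta^0(\gamma^0))=\wQ^0(\gamma^0)=\wQ^0(\gamma)$, i.e., $\gamma^0$ must actually belong to the level set. State this as an assumption (or restrict to $\gamma$ for which the left endpoint is attained) and your argument is complete; as written, the right-continuity justification is false and would not repair the degenerate case above.
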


Both tail distribution of losses and VaR are usually non-smooth and non-convex, and solving them to global optimality is very challenging. In the next section, we show that TiVaR (an objective based on TERM) provides a good upper bound on VaR, and is computationally more efficient, as VaR is not even continuous. 
In parallel, in Appendix~\ref{app:superquantile}, we prove that TERM also provides a reasonable approximate solution to {the minimizer of tail probability of losses (i.e., inverted VaR).} 

{The proof of one of the main theorems of this section (Theorem~\ref{thm:Q}) relies on a new variant of Chernoff bound for non-negative random variables, which may be of independent interest.} 
{
\begin{theorem}[Chernoff bound for non-negative random variables]
Let $X$ be a non-negative random variable. Further assume that $E\left[e^{tX}\right] < \infty$ for all $t \in \mathbb{R}$. Then for $\gamma > 0$,
\begin{equation}
    P[X \geq \gamma] \leq \inf_{t \in \mathbb{R}} \left\{ \frac{E\left[e^{tX}\right] - 1}{e^{t \gamma} - 1}\right\}  \leq \inf_{t \in \mathbb{R}^{+}} \left\{ \frac{E\left[e^{tX}\right]}{e^{ t \gamma}}\right\},
\end{equation}
where the latter term is the generic Chernoff bound with $\gamma >0$.
\label{thm: new-chernoff}
\end{theorem} }
{
\begin{proof}
The theorem holds by applying Markov's inequality twice on $e^{tX}-1~(t \geq 0)$ and $1- e^{tX}~(t < 0)$, and noting that
\begin{align}
 P[X \geq \gamma] \leq \min\left\{ \inf_{t \in \mathbb{R}^{\geq 0}} \left\{ \frac{E\left[e^{tX}\right] - 1}{e^{t \gamma} - 1}\right\},   \inf_{t \in \mathbb{R}^-} \left\{\frac{1-E\left[e^{tX}\right] }{1-e^{t \gamma}}\right\} \right\} = \inf_{t \in \mathbb{R}} \left\{ \frac{E\left[e^{tX}\right] - 1}{e^{t \gamma} - 1}\right\}.
\end{align}
\end{proof}}
{Theorem~\ref{thm: new-chernoff} presents a tighter Chernoff bound for non-negative random variables. To the best of our knowledge, despite the fact that this bound is a simple extension of the generic Chernoff bound, and the existing variants of Chernoff bounds in prior works~\citep{boucheron2013concentration,yang2017complexity}, we have not seen the result we have here appear elsewhere in this form. In particular, notice that the search for an optimal value of $t$ has been extended from non-negative values to all real numbers. This can result in significantly tighter bounds, especially in small deviations regime, as  visualized empirically on two simple distributions in Figure~\ref{fig:new_chernoff_bound}, Appendix~\ref{app:superquantile}. We will see how this leads to significantly better bounds in robustness applications.} 

\subsection{TiVaR: Tilted Value-at-Risk} \label{sec:var:approximate}

{In this section, we introduce a new risk measure, called Tilted Value-at-Risk (TiVaR). To put TiVaR in perspective, we briefly state other existing risks first.} 
Conditional Value-at-Risk (CVaR) minimizes the average risk of tail events where the risk is above some threshold~\citep{rockafellar2000optimization,rockafellar2002conditional}. One form of CVaR is
\begin{align} \label{eq: cvar}
    \text{CVaR}_X(1-\alpha;\theta) := \min_{\gamma} \left\{\gamma+\frac{1}{\alpha} \dE [f(X;\theta)-\gamma]_{+}\right\}.
\end{align} 
It is worth noting that CVaR$_X(1-\alpha;\theta)$ is a dual formulation of DRO with an uncertainty set that perturbs arbitrary parts of the data by an amount up to $\frac{1}{\alpha}$~\citep{rockafellar2000optimization, curi2019adaptive}. 
Formally, the dual of DRO $\max_{Q: \left\{\frac{dQ}{dP} \leq \frac{1}{\alpha}\right\}}\mathbb{E}_{Q}[f(X; \theta)]$ is $\text{CVaR}_{X}(1-\alpha; \theta) = \min_{\gamma} \left\{\gamma+\frac{1}{\alpha} \mathbb{E}[f(X;\theta)-\gamma]_{+}\right\}$.
Some previous works implicitly minimize CVaR by only training on samples with top-$k$ losses~\citep[e.g.,][]{NIPS2017_6c524f9d}. Entropic Value-at-Risk (EVaR) is proposed as an upper bound of CVaR and VaR that could be more computationally efficient~\citep{ahmadi2012entropic}. EVaR with a confidence level $\alpha$ $(0< \alpha <1)$ is defined as:
\begin{align}
     \text{EVaR}_{X}(1-\alpha;\theta) := \min_{t\in \mathbb{R}^{>0}} \left\{ \frac{1}{t} \log \left( \frac{\mathbb{E}[e^{tf(X; \theta)}]}{\alpha}\right) \right\}= \min_{t \in \mathbb{R}^{>0}} \left\{R_X(t;\theta)-\frac{1}{t}\log \alpha \right\}.
\end{align}
Similarly, for $\alpha \in \{\frac{k}{N}\}_{k \in [N]}$, the empirical variants of  CVaR and EVaR are
\begin{align*}
    \widetilde{\text{CVaR}}(1-\alpha;\theta) &:= \min_{\gamma} \left\{\gamma+\frac{1}{\alpha}\frac{1}{N} \sum_{i \in [N]}[f(x_i;\theta)-\gamma]_{+}\right\}, \\
    \widetilde{\text{EVaR}}(1-\alpha;\theta) &:= \min_{t\in \mathbb{R}^{>0}} \left\{ \frac{1}{t} \log \left( \frac{\frac{1}{N} \sum_{i \in [N]} e^{tf(x_i; \theta)}}{\alpha}\right)\right\} =\min_{t\in \mathbb{R}^{>0}} \left\{ \wR(t; \theta) - \frac{1}{t}  \log \alpha \right\} .
\end{align*}
Notice that TERM objective appears as part of the objective in $\widetilde{\text{EVaR}}$, and particularly optimizing $\widetilde{\text{EVaR}}$ with respect to $\theta$ would be equivalent to solving TERM for some value of $t$ implicitly defined through $\alpha$ (see Lemma~\ref{lem:ER-EVaR relation} and Lemma~\ref{lem:ER-EVaR relation empirical} in the appendix). 

It is known that $\text{VaR}_{X}(1-\alpha;\theta) \leq \text{CVaR}_{X}(1-\alpha;\theta) \leq \text{EVaR}_{X}(1-\alpha;\theta)$~\citep{ahmadi2012entropic}
which directly yields $
    \widetilde{\text{VaR}}(1-\alpha;\theta) \leq \widetilde{\text{CVaR}}(1-\alpha;\theta) \leq \widetilde{\text{EVaR}}(1-\alpha;\theta)$.
Meanwhile, to the best of our knowledge, it is not clear from existing works how entropic risk (or TERM) is related to VaR or EVaR. Next, based on TERM, we propose a new risk-averse objective Tilted Value-at-Risk, showing that it upper bounds VaR and lower bounds EVaR.

\begin{definition}[Tilted Value-at-Risk (TiVaR)]
Let TiVaR for $\alpha \in (0, 1]$ be defined as 
\begin{equation} \label{def:TiVaR-distribution}
  \text{TiVaR}_X(1-\alpha; \theta) := \min_{t \in \mathbb{R}} \left\{ F_X(-\infty) +  \frac{1}{t} \log  \left[\frac{e^{(R_X(t; \theta) - F_X(-\infty))t } - (1-\alpha)}{\alpha}  \right]_+\right\}.
\end{equation}
Similarly, empirical TiVaR is defined for $\alpha \in (0, 1)$, 
\begin{equation} \label{def:TiVaR-empirical}
  \widetilde{\text{TiVaR}}\left(1-\alpha; \theta\right) := \min_{t \in \mathbb{R}} \left\{ \wF(-\infty) +  \frac{1}{t} \log  \left[\frac{e^{(\wR(t; \theta) - \wF(-\infty))t } - (1-\alpha)}{\alpha}  \right]_+\right\}.
\end{equation}
\end{definition}
We note that TiVaR is not a coherent risk measure (see the work of~\citet{artzner1997thinking,artzner1999coherent} for definition of coherent risks), despite that it can be tighter than CVaR in some cases, as discussed in detail later. We next present our main result on relations between TiVaR, VaR, and EVaR.
\begin{theorem} \label{thm:var-TiVaR-cvar}
For $\alpha \in (0,1]$ and any $\theta$,
\begin{equation}
    \text{VaR}_X(1-\alpha; \theta) \leq \text{TiVaR}_X(1-\alpha; \theta) \leq \text{EVaR}_X(1-\alpha; \theta).
\end{equation}
Similarly, for $\alpha \in \{\frac{k}{N}\}_{k \in [N]}$ and any $\theta$, 
\begin{equation}
    \widetilde{\text{VaR}}\left(1-\alpha; \theta\right) \leq \widetilde{\text{TiVaR}}\left(1-\alpha; \theta\right) \leq \widetilde{\text{EVaR}}\left(1-\alpha; \theta\right).
\end{equation}
\end{theorem}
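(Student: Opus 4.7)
The plan is to prove both inequalities pointwise in $t$, i.e., to show that for each fixed $t$ the quantity inside $\min_t$ defining $\text{TiVaR}$ sits between the EVaR minimand and $\text{VaR}_X(1-\alpha;\theta)$, and then take the min over $t$. I will write up the distributional case in detail; the empirical case is identical once expectations are replaced by $\tfrac{1}{N}\sum_i$ and $P(f(X;\theta)\geq \gamma)$ by $\wQ(\gamma;\theta)$, using the discreteness of $\alpha=k/N$.

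A preliminary observation is that $R_X(t;\theta)\geq F_X(-\infty)$ for every $t\in\mathbb{R}$ (and analogously $\wR(t;\theta)\geq \wF(-\infty)$). This follows because $R_X(t;\theta)\geq \text{ess\,inf}_x f(x;\theta)\geq \min_\theta \text{ess\,inf}_x f(x;\theta) = F_X(-\infty)$ by Lemma~\ref{lemma: special_cases}. Consequently $e^{t(R_X(t;\theta)-F_X(-\infty))}\geq 1$ for $t>0$, so the $[\,\cdot\,]_+$ in the TiVaR definition is superfluous on $t>0$ and equals $\alpha$ or more there. For $t<0$ the bracket can vanish, but then the TiVaR minimand becomes $+\infty$ and is harmless to the min.

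For the upper bound $\text{TiVaR}\leq \text{EVaR}$, I would simply subtract the TiVaR minimand from the EVaR minimand at a common $t>0$: writing $A:=e^{t(R_X(t;\theta)-F_X(-\infty))}\geq 1$, a one-line algebraic computation gives
\begin{equation*}
\Big(R_X(t;\theta)-\tfrac{1}{t}\log\alpha\Big) - \Big(F_X(-\infty)+\tfrac{1}{t}\log\tfrac{A-(1-\alpha)}{\alpha}\Big) = \tfrac{1}{t}\log\tfrac{A}{A-(1-\alpha)} \geq 0.
\end{equation*}
So the TiVaR minimand at each $t>0$ dominates the EVaR minimand at that $t$ from below; since TiVaR minimizes over the larger set $t\in\mathbb{R}$ and EVaR only over $t>0$, the inequality follows.

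For the lower bound $\text{VaR}\leq \text{TiVaR}$, let $\gamma^*:=\text{VaR}_X(1-\alpha;\theta)$, and note that by the definition of VaR, $P(f(X;\theta)\geq \gamma)>\alpha$ for every $\gamma<\gamma^*$. The key is a tail-splitting bound on $E[e^{tf(X;\theta)}]$ applied to the event $\{f\geq \gamma\}$ and its complement, using $f\geq F_X(-\infty)$ almost surely. For $t>0$ I lower bound $e^{tf}$ by $e^{tF_X(-\infty)}$ on $\{f<\gamma\}$ and by $e^{t\gamma}$ on $\{f\geq \gamma\}$; rewriting the resulting bound as $e^{tF_X(-\infty)}+(e^{t\gamma}-e^{tF_X(-\infty)})P(f\geq \gamma)$, which is increasing in $P(f\geq\gamma)$ (since $\gamma\geq F_X(-\infty)$), and using $P(f\geq\gamma)>\alpha$, yields $E[e^{tf}]>e^{tF_X(-\infty)}(1-\alpha)+e^{t\gamma}\alpha$. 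Letting $\gamma\uparrow \gamma^*$, dividing through by $e^{tF_X(-\infty)}$, isolating $e^{t(\gamma^*-F_X(-\infty))}$, taking logs (the argument is positive by the preliminary observation), and dividing by $t>0$ gives exactly $\gamma^*\leq F_X(-\infty)+\tfrac{1}{t}\log\tfrac{e^{t(R_X(t;\theta)-F_X(-\infty))}-(1-\alpha)}{\alpha}$. For $t<0$ the same splitting produces the reverse inequality on $E[e^{tf}]$ (now upper bounds since $e^{t\cdot}$ is decreasing), but dividing by $t<0$ flips the final inequality and yields the same bound; if the bracket inside the log is nonpositive, $[\,\cdot\,]_+$ sends the RHS to $+\infty$ and the bound is trivial. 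Minimizing the bound over $t\in\mathbb{R}$ gives $\gamma^*\leq \text{TiVaR}_X(1-\alpha;\theta)$; the $t=0$ value is a continuity limit equal to $F_X(-\infty)+\alpha^{-1}(E[f]-F_X(-\infty))$, which also dominates $\gamma^*$ by the same splitting applied to $E[f]$ (a Jensen-type inequality).

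The main obstacle is the case analysis in step three, specifically reconciling the $t>0$ and $t<0$ branches through the $[\,\cdot\,]_+$ operator: the sign flips coming from dividing by $t$ need to cancel against the flipping inequality on $E[e^{tf}]$, and one must verify that whenever $[\,\cdot\,]_+$ collapses to zero the bound on $\gamma^*$ is automatic rather than vacuously false. Everything else is bookkeeping; the empirical proof is obtained by substituting $\wR,\wF$ and empirical tail probabilities, noting that for $\alpha=k/N$ exactly $k$ of the $N$ losses satisfy $f(x_i;\theta)\geq \widetilde{\mathrm{VaR}}$ (up to the usual tie-breaking handled by the same $\gamma\uparrow\gamma^*$ limit).
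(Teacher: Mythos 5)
Your proposal is correct and takes essentially the same route as the paper: the EVaR comparison is the same pointwise-in-$t$ argument (dropping the $(1-\alpha)$ term inside the log and enlarging the minimization domain from $\mathbb{R}^+$ to $\mathbb{R}$), and your tail-splitting bound on $\dE[e^{tf(X;\theta)}]$ is precisely the Chernoff-type bound of Theorem~\ref{thm:Q} (proved in the paper via $\mathbb{I}\{x\geq 1\}\leq x$ after shifting by $\wF(-\infty)$). The paper inverts that bound by noting TiVaR is exactly the level where the bound equals $\alpha$ and using monotonicity in $\gamma$, while you invert it via monotonicity in the tail probability and a limit $\gamma\uparrow\gamma^*$ — a cosmetic difference only, and your explicit treatment of the $t<0$ branch and the $[\,\cdot\,]_+$ clipping is sound.
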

We defer the proof to Appendix~\ref{app:superquantile}, {where the main steps include applying the new Chernoff bound variant (Theorem~\ref{thm: new-chernoff}).} Theorem~\ref{thm:var-TiVaR-cvar} indicates that $\widetilde{\text{TiVaR}}\left(1-\alpha; \theta\right)$ is a tighter approximation to $\widetilde{\text{VaR}}\left(1-\alpha; \theta\right)$ than $\widetilde{\text{EVaR}}\left(1-\alpha; \theta\right)$. 

\paragraph{Comparing TiVaR and CVaR.}
In general, TiVaR and CVaR are not directly comparable,
as both of them can be viewed as approximations to VaR  and neither dominates the other,
i.e., one risk can be tighter than the other depending on the quantile value, 1-$\alpha$. In
the regimes where $\alpha$ is a large value between some intermediate constant and 1, TiVaR provides a tighter approximation to VaR than CVaR. For instance, in the extreme case when $\alpha \to 1$, $\widetilde{\text{VaR}}$ will be close to the min-loss ($\min_{i \in [N]} f(x_i;\theta)$), while the value of $\widetilde{\text{CVaR}}$ is the mean of the losses ($\frac{1}{N}\sum_{i \in [N]} f(x_i;\theta)$). $\widetilde{\text{TiVaR}}$ reduces to the min-loss in this case. In other words, both $\widetilde{\text{VaR}}$ and $\widetilde{\text{TiVaR}}$  sweep the values between the min-loss and max-loss; whereas $\widetilde{\text{CVaR}}$ sweeps the values between the avg-loss and max-loss.
We compare TiVaR with CVaR and other risks in Figure~\ref{fig:super-quantile-2} on mean estimation and linear regression problems, and demonstrate that TiVaR is tighter than CVaR especially when $\alpha$ is close 1 (corresponding to robustness applications).\footnote{\new{While CVaR focuses on upper quantiles, one may explore `inverse' CVaR to better approximate the lower quantiles. However, inverse CVaR, ranging from avg-loss to min-loss, is not a valid upper bound of VaR. Despite this, we empirically explore this approximation to solving VaR, among others, in Appendix~\ref{app:superquantile}.}}


\new{We also note that there exist other risk-averse or risk-seeking formulations that focus on the upper or lower tail of losses, such as the mean-semideviation framework~\citep{kalogerias2018recursive}.  Mean-semideviation recovers a set of risk measures including mean-upper-semideviations and entropic mean-semideviation. Nevertheless, these risks usually cannot handle both fairness and robustness in a single formulation, and can incur more per-iteration gradient evaluations or worse convergence rates compared to vanilla ERM~\citep{kalogerias2018recursive, gurbuzbalaban2022stochastic,zhu2023distributionally}.}
\begin{figure}[t!]
    \centering
    \begin{subfigure}{0.47\textwidth}
        \centering
        \includegraphics[width=\textwidth]{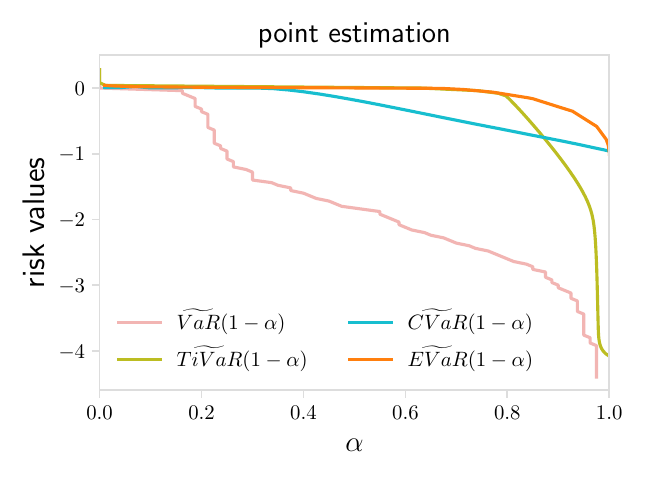}
    \end{subfigure}
    \hfill
    \begin{subfigure}{0.47\textwidth}
        \centering
        \includegraphics[width=\textwidth]{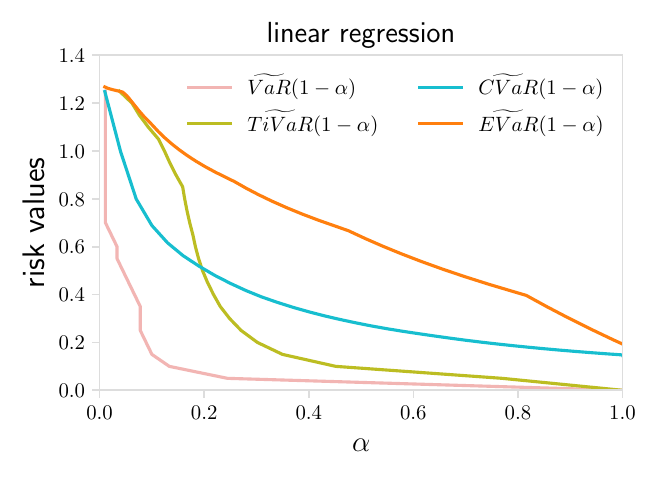}
    \end{subfigure}
    \caption{Comparing values of VaR, TiVaR, CVaR, and EVaR.   $\widetilde{\text{VaR}} (1-\alpha) := \min_{\theta} \widetilde{\text{VaR}}(1-\alpha;\theta)$, and $\widetilde{\text{TiVaR}} (1-\alpha)$, $\widetilde{\text{CVaR}} (1-\alpha)$, and $\widetilde{\text{EVaR}} (1-\alpha)$ are defined in a similar way. From Theorem~\ref{thm:var-TiVaR-cvar}, we know $\widetilde{\text{VaR}}(1-\alpha;\theta) \leq \widetilde{\text{TiVaR}}(1-\alpha;\theta) \leq \widetilde{\text{EVaR}}(1-\alpha;\theta)$, which is also visualized here. Both CVaR and TiVaR values are between VaR and EVaR. TiVaR provides a tighter approximation to VaR than CVaR when $\alpha$ is closer to 1. 
    }
\label{fig:super-quantile-2}
\end{figure}

Finally, we draw connections between the above results and the $k$-loss, defined as the $k$-th smallest loss of $N$ (i.e., $1$-loss is the min-loss, $N$-loss is the max-loss, $\small (N{-}1)/2$-loss is the median-loss). 
Formally, let $R_{(k)}(\theta)$ be the $k$-th order statistic of the loss vector. Hence, $R_{(k)}$ is the $k$-th smallest loss, and particularly 
\begin{align}
    R_{(1)}(\theta) &= \widecheck{R}(\theta), \quad
    R_{(N)} (\theta) = \widehat{R}(\theta). 
\end{align}
Thus, for any $k \in [N],$ we define
\begin{equation}
    R^*_{(k)} := \min_{\theta} R_{(k)}(\theta), \quad
    \theta^*(k) := \arg\min_{\theta} R_{(k)}(\theta).
\end{equation}
Note that 
\begin{align}
    R^*_{(1)}  &= \wF(-\infty), \quad
    R^*_{(N)}  = \wF(+\infty).
\end{align}

While minimizing the $k$-loss is more desirable than ERM in many applications, 
the $k$-loss is non-smooth (and generally non-convex), and is challenging to solve for large-scale problems~\citep{jin2019minmax, nouiehed2019solving}. TERM offers a good approximation to $k$-loss as well. Note that if we fix $\alpha=1-\frac{k}{N}$, minimizing $k$-loss is equivalent to minimizing $\gamma$ where $\wQ(\gamma;\theta) = \alpha$. Based on the bound of $\widetilde{\text{VaR}}$, we obtain a bound on $k$-loss:

\begin{corollary}\label{coro:k-loss-2}
For all $k \in \{2, \ldots, N-1\},$ and all $t \in \mathbb{R}:$
\begin{equation}
    R_{(k)} (\theta) \leq \min_t \left\{\wF(-\infty) + \frac{1}{t} \log  \left[\frac{e^{(\wR(t; \theta) - \wF(-\infty))t } - \frac{k}{N}}{1- \frac{k}{N}}  \right]_+ \right\} \leq \min_{t\in \mathbb{R}^{>0}} \left\{ \wR(t;\theta)-\frac{1}{t} \log \left(1-\frac{k}{N}\right)\right\}.
\end{equation}
\end{corollary}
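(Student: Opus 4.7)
The plan is to derive Corollary~\ref{coro:k-loss-2} as an immediate consequence of Theorem~\ref{thm:var-TiVaR-cvar}, applied at the confidence level $\alpha = 1 - k/N$, combined with the elementary observation that the empirical Value-at-Risk at this level upper-bounds the $k$-th order statistic $R_{(k)}(\theta)$.

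First I would identify $R_{(k)}(\theta)$ with an empirical VaR bound. From Definition~\ref{def: superquantile-losses}, $\wQ(\gamma;\theta)$ equals the fraction of samples whose loss is at least $\gamma$. The condition $\wQ(\gamma;\theta) \leq 1 - k/N$ is equivalent to requiring at most $N - k$ samples to have loss $\geq \gamma$, which by order-statistic counting happens precisely when $\gamma > R_{(k)}(\theta)$. Hence substituting $\alpha = 1 - k/N$ into the definition of $\widetilde{\text{VaR}}$ yields
\[
\widetilde{\text{VaR}}(k/N;\theta) \;=\; \inf\bigl\{\gamma : \wQ(\gamma;\theta) \leq 1 - k/N\bigr\} \;\geq\; R_{(k)}(\theta),
\]
regardless of whether the infimum is attained.

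Next I would invoke the empirical chain of Theorem~\ref{thm:var-TiVaR-cvar} at level $1 - \alpha = k/N$, giving
\[
R_{(k)}(\theta) \;\leq\; \widetilde{\text{VaR}}(k/N;\theta) \;\leq\; \widetilde{\text{TiVaR}}(k/N;\theta) \;\leq\; \widetilde{\text{EVaR}}(k/N;\theta).
\]
Substituting $1 - \alpha = k/N$ directly into the definition~\eqref{def:TiVaR-empirical} of $\widetilde{\text{TiVaR}}$ and into the formula for $\widetilde{\text{EVaR}}$ stated in Section~\ref{sec:var:approximate} reproduces, respectively, the middle and rightmost expressions of the corollary verbatim, completing the proof.

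The main obstacle is minor, since Theorem~\ref{thm:var-TiVaR-cvar} does the heavy lifting. The only delicate point is the identification of $\widetilde{\text{VaR}}(k/N;\theta)$ as an upper bound on $R_{(k)}(\theta)$ in light of the step-function structure of $\wQ(\gamma;\theta)$; a careful case analysis tracking how many ordered losses lie above $\gamma$, combined with interpreting $\min$ as $\inf$ when the minimum is not attained, handles this cleanly and requires no additional regularity assumption on the losses.
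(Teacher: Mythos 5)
Your proposal is correct and follows essentially the same route as the paper's own proof: identify $R_{(k)}(\theta)$ with the empirical Value-at-Risk at level $k/N$ (the paper asserts equality, which holds with the $\min$ read as an infimum, while you use the one-sided bound that actually suffices), then apply Theorem~\ref{thm:var-TiVaR-cvar} with $\alpha = 1-\frac{k}{N}$ and unwind the definitions of $\widetilde{\text{TiVaR}}$ and $\widetilde{\text{EVaR}}$. Your extra care with the step-function structure of $\wQ(\gamma;\theta)$ is a welcome refinement but does not change the argument.
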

\begin{proof}
Note that
\begin{equation}
   R_{(k)} (\theta) = \widetilde{\text{VaR}}\left(\frac{k}{N}; \theta \right). 
\end{equation}
The proof completes by setting $\alpha=1-\frac{k}{N}$ in Eq.~\eqref{def:TiVaR-empirical} and noting $\widetilde{VaR}(1-\alpha;\theta) \leq \widetilde{TiVaR}(1-\alpha;\theta) \leq \widetilde{EVaR}(1-\alpha;\theta)$ .
\end{proof}
Corollary~\ref{coro:k-loss-2} optimizes over all $t \in \mathbb{R}$ over the upper bound of $R_{(k)}(\theta)$, which can be relaxed to searching over positive $t$'s, as stated in Corollary~\ref{coro:k-loss} below.

\begin{corollary}\label{coro:k-loss}
For all $k \in \{2, \ldots, N-1\},$ and all $t \in \mathbb{R}^{>0}:$
\begin{equation}
    R_{(k)} (\theta) \leq \wF(-\infty) + \frac{1}{t} \log  \left(\frac{e^{(\wR(t; \theta) - \wF(-\infty))t } - \frac{k}{N}}{1- \frac{k}{N}}  \right).
\end{equation}
\end{corollary}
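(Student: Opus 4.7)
The plan is to derive Corollary~\ref{coro:k-loss} as a direct specialization of the bound in Corollary~\ref{coro:k-loss-2}, so no new machinery is needed. Concretely, I would start from the first inequality in Corollary~\ref{coro:k-loss-2},
\begin{equation*}
    R_{(k)}(\theta) \;\leq\; \min_{t\in\mathbb{R}}\left\{\wF(-\infty) + \frac{1}{t}\log\left[\frac{e^{(\wR(t;\theta) - \wF(-\infty))t} - \tfrac{k}{N}}{1 - \tfrac{k}{N}}\right]_+\right\},
\end{equation*}
and then argue that (i) restricting the minimum to $t\in\mathbb{R}^+$ only weakens the bound, (ii) the positive-part bracket is redundant on that restricted range, and (iii) dropping the minimum gives a valid inequality for every individual $t\in\mathbb{R}^+$.

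The only nontrivial step is (ii), namely verifying that the argument inside $[\,\cdot\,]_+$ is actually positive for every $t>0$. For this I would invoke monotonicity of the tilted objective in $t$ (Theorem~\ref{thm: obj-increasing}) to conclude $\wR(t;\theta)\geq \wR(-\infty;\theta)$ for all $t\geq -\infty$, and then use
\begin{equation*}
   \wR(-\infty;\theta) \;\geq\; \min_{\theta'}\wR(-\infty;\theta') \;=\; \wF(-\infty),
\end{equation*}
so that $\wR(t;\theta) - \wF(-\infty)\geq 0$. Multiplying by $t>0$ and exponentiating gives $e^{(\wR(t;\theta)-\wF(-\infty))t}\geq 1$. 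Since the statement restricts to $k\in\{2,\ldots,N-1\}$, we have $\tfrac{k}{N}<1$, so the numerator $e^{(\wR(t;\theta)-\wF(-\infty))t}-\tfrac{k}{N}$ is strictly positive and the $[\,\cdot\,]_+$ may be dropped without change. This is essentially the entire content of the argument.

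Steps (i) and (iii) are immediate: a minimum over $\mathbb{R}^+$ is at least a minimum over $\mathbb{R}$, which keeps the upper bound valid; and once the bracket is shown to coincide with its positive part, any single $t\in\mathbb{R}^+$ evaluation upper-bounds the min, so it also upper-bounds $R_{(k)}(\theta)$. I do not anticipate any real obstacle here; the only point requiring care is making sure the inequality $\wR(t;\theta)\geq \wF(-\infty)$ is justified for all $t$, which follows directly from the monotonicity result already proved. The proof can therefore be presented in a few lines once these observations are assembled.
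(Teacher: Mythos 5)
Your proposal is correct and follows essentially the same route the paper takes, which obtains Corollary~\ref{coro:k-loss} from Corollary~\ref{coro:k-loss-2} simply by restricting the minimization to $t\in\mathbb{R}^+$, noting the bracket is then strictly positive so $[\cdot]_+$ is redundant, and dropping the minimum. One small refinement: instead of invoking Theorem~\ref{thm: obj-increasing}, which is stated under the GLM Assumption~\ref{assump: expnential_family} not assumed here, observe directly that for $t>0$ one has $\frac{1}{N}\sum_{i\in[N]} e^{tf(x_i;\theta)} \geq e^{t\widecheck{R}(\theta)}$, hence $\wR(t;\theta)\geq \widecheck{R}(\theta)\geq \wF(-\infty)$, so the positivity of the numerator (and thus the removal of the positive part) holds with no extra assumptions.
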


\section{Solving TERM}\label{sec:solver}

In this section,  we 
develop first-order batch (Section~\ref{sec:solver:batch}) and stochastic (Section~\ref{sec:solver:stochastic})  optimization methods for solving TERM, and rigorously analyze the effects that $t$ has on the convergence of these methods.

Recall that in Section~\ref{sec:property:general}, we discuss the Lipschitzness, convexity, and smoothness properties of TERM. 
$t$-tilted loss remains strongly convex for $t>0,$ so long as the original loss function is strongly convex. On the other hand, for sufficiently large negative $t$, the $t$-tilted loss becomes non-convex.
Hence, while the $t$-tilted solutions for positive $t$ are unique, the objective may have multiple (spurious) local minima for negative $t$ even if the original loss function is strongly convex. 
For negative $t$, we seek the solution for which the parametric set of $t$-tilted solutions obtained by sweeping $t \in \mathbb{R}$ {(i.e., $\breve{\theta}(t)$ defined in Eq.~\eqref{eq: opt_obj})} remains continuous (as in Figure~\ref{fig:example}a-c {and Figure~\ref{fig:toy1}}). To this end, for negative $t$, we solve TERM by smoothly decreasing $t$ from $0$ \new{observing} that the solutions form a continuum in $\mathbb{R}^d$ \new{empirically}. Despite the non-convexity of TERM with $t < 0$, we find that this approach produces effective solutions to multiple real-world problems in Section~\ref{sec:experiments}. Additionally, as the objective remains smooth, it is still relatively efficient to solve. {On the toy problem studied in Figure~\ref{fig:toy1}}, we plot the convergence with $t$ in Figure~\ref{fig:solving} below.

\begin{figure}[h!]
  \centering
  \includegraphics[trim=0 0 0 8,clip, width=0.45\textwidth]{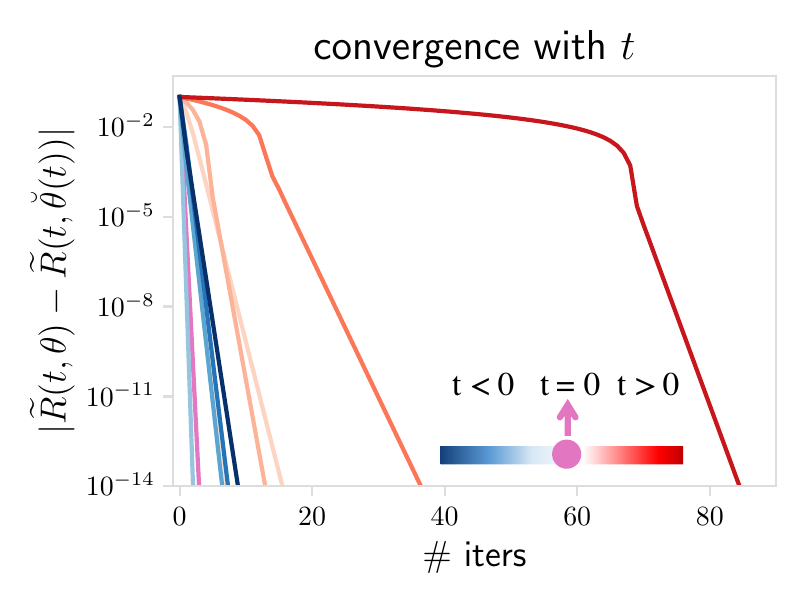}
  \vspace{-1em}
  \caption{As $t\to +\infty$, the objective becomes less smooth in the vicinity of the final solution {where smoothness can be measured by the upper bound of Hessian (see Lemma~\ref{lemma:TERM-smoothness})}, hence suffering from slower convergence. For negative values of $t$, TERM converges fast due to the smoothness in the vicinity of solutions despite its non-convexity. }
  \label{fig:solving}
\end{figure}

\subsection{First-Order Batch Methods} \label{sec:solver:batch}

TERM solver in the batch setting is summarized in Algorithm~\ref{alg:batch-non-h-TERM}. The main steps include {running gradient descent on $\wR(t;\theta)$}, which involve computing the tilted gradients (i.e., a weighted aggregation of individual gradients (Lemma~\ref{lemma:TERM-gradient})) of the objective.  
We also provide convergence results in Theorem~\ref{thm: convergence}--\ref{thm: convergence-batch-NCS} below for Algorithm~\ref{alg:batch-non-h-TERM}.

\begin{algorithm}[h]
\SetKwInOut{Init}{Initialize}
\SetAlgoLined
\DontPrintSemicolon
\SetNoFillComment
\KwIn{$t, \alpha, \theta$}
\While{stopping criteria not reached}{

compute the loss $f(x_i; \theta)$ and gradient $\nabla_\theta f(x_i; \theta)$ for all $i \in [N]$\;
$\wR(t; \theta) \gets \text{$t$-tilted loss~\eqref{eq: TERM}}$ on all $i \in [N]$\;
$w_i(t; \theta) \gets e^{t(f(x_i; \theta) - \wR(t; \theta))}$\;
$\theta \gets \theta - \frac{\alpha}{N} \sum_{i \in [N]} w_i(t; \theta) \nabla_{\theta}f(x_i; \theta)$\;
}
\caption{Batch {(}Non-Hierarchical{)} TERM}\label{alg:batch-non-h-TERM}
\end{algorithm}

{
\begin{theorem}[Convergence of Algorithm~\ref{alg:batch-non-h-TERM} for strongly-convex problems] \label{thm: convergence}
Under Assumption~\ref{assump: regularity}, there exist $\beta_{\max} \leq C_1 < \infty$ and $C_2<\infty$ 
that do not depend on $t$ such that for any $t\in \mathbb{R}^{>0},$ setting the step size $\alpha = \frac{1}{C_1 + C_2t},$ after $k$ iterations: 
\begin{equation}
    \wR(t, \theta_k) - \wR(t, \breve{\theta}(t)) \leq \left(1 - \frac{\beta_{\min}}{C_1 + C_2t}\right)^k \left(  \wR(t, \theta_0) - \wR(t, \breve{\theta}(t)) \right).
\end{equation}
\end{theorem}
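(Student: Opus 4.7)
The plan is to combine the strong-convexity and smoothness facts already established in Lemmas~\ref{lem: Hessian} and~\ref{lemma:TERM-smoothness} with the textbook contraction argument for gradient descent on a smooth, strongly convex objective. Under Assumption~\ref{assump: regularity}, Lemma~\ref{lem: Hessian} gives, for every $t \in \mathbb{R}^+$, the uniform lower bound $\nabla^2_{\theta\theta^\top} \wR(t;\theta) \succeq \beta_{\min} \mathbf{I}$, i.e., $\wR(t;\cdot)$ is $\beta_{\min}$-strongly convex, and strong convexity with modulus $\beta_{\min}$ (independent of $t$) is exactly what drives the contraction factor.

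The substantive step is to upgrade the asymptotic smoothness information in Lemma~\ref{lemma:TERM-smoothness} to a concrete affine bound of the form $\beta(t) \le C_1 + C_2 t$ that holds for all $t \ge 0$. I would start from the Hessian decomposition in~\eqref{eq:smoothness-23}--\eqref{eq:smoothness-24}, and bound the two pieces separately. The Hessian term \eqref{eq:smoothness-24} is a convex combination (after the implicit $1/N$ normalization through $w_i(t;\theta)$) of the individual Hessians, and is therefore bounded in operator norm by $\beta_{\max}$. The rank-one term \eqref{eq:smoothness-23} is $t$ times a weighted average of outer products $(\nabla f_i - \nabla \wR)(\nabla f_i - \nabla \wR)^\top$; using the fact that $\nabla \wR$ is itself a convex combination of the $\nabla f_i$, this is at most $t$ times the squared diameter of $\{\nabla_\theta f(x_i;\theta)\}_{i\in[N]}$. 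Combined with Lemma~\ref{lemma:TERM-smoothness}, which already ensures $\beta(t)$ is continuous in $t$ with $\lim_{t\to+\infty} \beta(t)/t$ finite, this yields constants $C_1 \ge \beta_{\max}$ and $C_2 < \infty$, independent of $t$, such that $\beta(t) \le C_1 + C_2 t$ for every $t \ge 0$; this is the quantity that determines the admissible step size.

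Given these two ingredients, the rest is the standard one-step descent lemma. With $\alpha = 1/(C_1 + C_2 t) = 1/\beta(t)$, $\beta(t)$-smoothness gives
\begin{equation*}
\wR(t; \theta_{k+1}) \le \wR(t; \theta_k) - \tfrac{1}{2\beta(t)} \|\nabla_\theta \wR(t;\theta_k)\|^2,
\end{equation*}
and $\beta_{\min}$-strong convexity implies the Polyak--Łojasiewicz inequality
\begin{equation*}
\tfrac{1}{2}\|\nabla_\theta \wR(t;\theta_k)\|^2 \ge \beta_{\min} \bigl(\wR(t;\theta_k) - \wR(t; \breve\theta(t))\bigr).
\end{equation*}
Subtracting $\wR(t;\breve\theta(t))$ from both sides of the descent inequality and chaining the resulting one-step contraction factor $1 - \beta_{\min}/(C_1 + C_2 t)$ over $k$ iterations gives the stated bound.

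The only genuine obstacle is the uniform affine smoothness bound; the rank-one piece of the Hessian is what makes $\beta(t)$ grow with $t$, and one must argue that the diameter of $\{\nabla_\theta f(x_i;\theta)\}_i$ that appears there can be absorbed into a constant $C_2$ that does not depend on $t$ (this is where Assumption~\ref{assump: regularity}, together with the fact that $\breve\theta(t)$ lies in a region where the gradients are controlled, does the work; Lemma~\ref{lemma:TERM-smoothness}'s asymptotic statement is essentially a compactness repackaging of this). Once that bound is in hand, the convergence rate is immediate from the classical GD analysis.
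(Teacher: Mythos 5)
Your proposal is correct and follows essentially the same route as the paper: $\beta_{\min}$-strong convexity from Lemma~\ref{lem: Hessian}, an affine smoothness bound $C_1+C_2t$ extracted from Lemma~\ref{lemma:TERM-smoothness}, and then the standard linear-rate argument (the paper simply cites Theorem~1 of Karimi et al., which is exactly the descent-lemma-plus-PL chaining you write out explicitly). Your extra discussion of how the rank-one Hessian term yields the affine bound is just an unpacking of what the paper delegates to Lemma~\ref{lemma:TERM-smoothness} (and, for explicit constants, to the Lipschitz case noted after the theorem), so there is no substantive difference.
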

\begin{proof}
First note that by Lemma~\ref{lem: Hessian}, $\wR(t, \theta)$ is $\beta_{\min}$-strongly convex for all $t\in \mathbb{R}^{>0}.$ Next, by Lemma~\ref{lemma:TERM-smoothness}, there exist $C_1, C_2 <\infty$ such that $\wR(t; \theta)$ has $(C_1 + C_2t)$-Lipschitz gradients for all $t \in \mathbb{R}^{>0}.$ The result follows directly from~\citet[Theorem 1]{karimi2016linear}.
\end{proof}
}
Note that under additional assumptions on $L$-Lipschitzness of $f(x;\theta),$ we can plug in the explicit smoothness constants established by~\citet[Lemma 5.3]{lowy2021output} to obtain explicit constants in the convergence rate, i.e., $C_1=\beta_{\max}$ and $C_2 =L^2$.
Theorem~\ref{thm: convergence} indicates that solving TERM to a local optimum using gradient-based methods tends to be as efficient as traditional ERM for small-to-moderate values of $t$~\citep{jin2017escape}, which we corroborate via experiments on multiple real-world datasets in Section~\ref{sec:experiments}. This is in contrast to solving for the min-max solution, which would be similar to solving TERM as $t \to +\infty$~\citep{kort1972new,pee2011solving, ostrovskii2020efficient}.


\begin{theorem}[Convergence of Algorithm~\ref{alg:batch-non-h-TERM} for  smooth problems satisfying PL conditions] \label{thm: convergence-batch-NCSPL}
Assume $f(x;\theta)$ is $\beta_{\max}$-smooth and (possibly) non-convex. Further assume $\sum_{i\in [N]} p_i f(x_i; \theta)$ is $\frac{\mu}{2}$-PL for any $\mathbf{p} \in \Delta_{N}$ where $\mathbf{p} := (p_1, \dots, p_N)$. There exist $\beta_{\max} \leq C_1 < \infty$ and $C_2<\infty$ that do not depend on $t$ such that for any $t\in \mathbb{R}^{>0},$ setting the step size $\alpha = \frac{1}{C_1 + C_2t},$ after $k$ iterations: 
\begin{equation}
    \wR(t, \theta_k) - \wR(t, \breve{\theta}(t)) \leq \left(1 - \frac{\mu}{C_1 + C_2t}\right)^k \left(  \wR(t, \theta_0) - \wR(t, \breve{\theta}(t)) \right),
\end{equation}
\end{theorem}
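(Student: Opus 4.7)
My plan mirrors the structure of Theorem 3 for the strongly-convex case: establish smoothness of $\wR(t;\theta)$ with constant $L(t) = C_1+C_2 t$, show that $\wR(t;\theta)$ inherits a Polyak--\L{}ojasiewicz (PL) condition from the assumption on convex combinations of the individual losses, and conclude with the standard linear-convergence theorem for gradient descent on smooth PL functions (e.g., Karimi et al., 2016, Theorem 1). The smoothness step is unchanged: Lemma~\ref{lemma:TERM-smoothness} yields $\beta(t) \leq C_1 + C_2 t$ with constants that do not depend on $t$, using only $\beta_{\max}$-smoothness of the $f(x_i;\theta)$, which still holds in the present non-convex setting. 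Thus the new content is the PL inheritance.

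To get a PL inequality for $\wR(t;\theta)$, I would exploit the identity from Lemma~\ref{lemma:TERM-gradient}, namely $\nabla_\theta \wR(t;\theta) = \nabla_\theta \bigl[\sum_i p_i f(x_i;\theta)\bigr]\big|_{p=\mathbf{w}(t;\theta)}$, where the weights $p_i=w_i(t;\theta)$ are frozen during differentiation. Setting $p=\mathbf{w}(t;\theta)$ in the hypothesis and invoking $\tfrac{\mu}{2}$-PL on this convex combination gives
\begin{equation*}
    \|\nabla_\theta \wR(t;\theta)\|^2 \;\geq\; \mu\Bigl( \sum_{i\in [N]} w_i(t;\theta) f(x_i;\theta) \;-\; \min_{\theta'} \sum_{i\in [N]} w_i(t;\theta) f(x_i;\theta') \Bigr).
\end{equation*}
I then close the loop via the Donsker--Varadhan-style variational identity from Section~\ref{sec:background:dro}, $\wR(t;\theta) = \max_{q\in\Delta_N}\bigl\{\sum_i q_i f(x_i;\theta) - \tfrac{1}{t} D(q\|\mathbf{u})\bigr\}$ for $t>0$, whose maximum is attained at $q=\mathbf{w}(t;\theta)$. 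Writing the equality at $\theta$ and the variational upper bound at $\breve\theta(t)$, both evaluated with the common weight vector $\mathbf{w}(t;\theta)$, makes the entropic regularizer cancel to give $\wR(t;\theta) - \wR(t;\breve\theta(t)) \leq \sum_i w_i(t;\theta)\bigl[f(x_i;\theta) - f(x_i;\breve\theta(t))\bigr]$, which is in turn bounded above by $\sum_i w_i(t;\theta) f(x_i;\theta) - \min_{\theta'}\sum_i w_i(t;\theta) f(x_i;\theta')$ since $\breve\theta(t)$ is feasible for the inner minimization. Chaining the two bounds yields the PL inequality $\|\nabla_\theta \wR(t;\theta)\|^2 \geq \mu\bigl(\wR(t;\theta)-\wR(t;\breve\theta(t))\bigr)$.

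Combining $L(t)$-smoothness with this PL inequality is then standard: the descent lemma with step $\alpha = 1/L(t) = 1/(C_1+C_2 t)$ gives a one-step contraction of the form $\wR(t,\theta_{k+1}) - \wR(t,\breve\theta(t)) \leq \bigl(1 - \tfrac{\mu}{C_1+C_2 t}\bigr)\bigl(\wR(t,\theta_k) - \wR(t,\breve\theta(t))\bigr)$, and iterating $k$ times produces the claim. I expect the main obstacle to be the PL-inheritance step: while each ingredient is elementary, one has to be careful that the entropic regularizer in the variational formula genuinely cancels (it depends on the weight vector $\mathbf{w}(t;\theta)$ but not on the second argument of $f(x_i;\cdot)$), and that the auxiliary minimization $\min_{\theta'}\sum_i w_i(t;\theta) f(x_i;\theta')$ is well-posed over the open domain $\Theta$, which follows from the $\tfrac{\mu}{2}$-PL hypothesis applied with this same weight vector. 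The smoothness step and the final invocation of the smooth-PL convergence lemma are routine.
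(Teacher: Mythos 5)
Your proposal is correct and shares the paper's overall skeleton: $(C_1+C_2t)$-smoothness from Lemma~\ref{lemma:TERM-smoothness} (which uses only $\beta_{\max}$-smoothness of the $f(x_i;\theta)$ and so survives non-convexity), a PL inequality for $\wR(t;\theta)$, and the standard smooth-PL linear-convergence result of \citet{karimi2016linear} exactly as in Theorem~\ref{thm: convergence}. Where you genuinely differ is the central step: the paper disposes of the PL inheritance with a one-line citation to \citet{qi2020practical}, whereas you derive it from scratch---freezing the weights $\mathbf{w}(t;\theta)$ in the tilted gradient (Lemma~\ref{lemma:TERM-gradient}), applying the hypothesis to $g_p(\cdot)=\sum_{i} p_i f(x_i;\cdot)$ at $p=\mathbf{w}(t;\theta)$, and then using the KL-regularized variational representation of $\wR(t;\theta)$ from Section~\ref{sec:background:dro} at $\theta$ (with equality) and at $\breve{\theta}(t)$ (as a lower bound) with the common weight vector, so the entropic term cancels and $\wR(t;\theta)-\wR(t;\breve{\theta}(t)) \le g_p(\theta)-\min_{\theta'}g_p(\theta')$, hence $\|\nabla_\theta \wR(t;\theta)\|^2 \ge \mu\bigl(\wR(t;\theta)-\wR(t;\breve{\theta}(t))\bigr)$. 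This argument is valid for $t>0$ (where the variational formula holds), correctly handles the cancellation you flag, and in effect reconstructs the cited result, making your write-up self-contained where the paper's is not. The only blemish is constant bookkeeping: from $\|\nabla\wR\|^2 \ge \mu(\wR-\wR^*)$ the descent lemma with $\alpha=1/(C_1+C_2t)$ yields the contraction factor $1-\mu/\bigl(2(C_1+C_2t)\bigr)$ rather than $1-\mu/(C_1+C_2t)$; this factor-of-two slack mirrors an ambiguity already in the paper itself (the statement assumes ``$\tfrac{\mu}{2}$-PL'' while its proof invokes ``$\mu$-PL''), so it is a convention issue rather than a flaw in your reasoning.
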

\begin{proof}
If $\sum_{i\in [N]} p_i f(x_i; \theta)$ is $\mu$-PL for any $\mathbf{p} \in \Delta_{N}$, then $\wR(t;\theta)$ is $\mu$-PL~\citep{qi2020practical}. $\wR(t;\theta)$ is  $\beta_{\max}$ smooth for $t<0$ and its smoothness parameter scales linearly with $t$ for $t>0$, following the same proof as Lemma~\ref{lemma:TERM-smoothness}. 
\end{proof}
{Theorem~\ref{thm: convergence-batch-NCSPL} applies to both convex and non-convex smooth functions satisfying PL conditions.} Again, here we can plug in explicit smoothness parameter~\citep[Lemma 5.3]{lowy2021output} if $f(x;\theta)$ is Lipschitz. We next state results without the PL condition assumption for completeness. 

\begin{theorem}[Convergence of Algorithm~\ref{alg:batch-non-h-TERM} for non-convex smooth problems] \label{thm: convergence-batch-NCS}
Assume $f(x;\theta)$ is $\beta_{\max}$-smooth and (possibly) non-convex. Setting the step size $\alpha = \frac{1}{\beta(t)},$ after $K$ iterations, we have:
\begin{equation}
    \frac{1}{K}\sum_{k=0}^{K-1} \| \nabla \wR(t, \theta_k) \|^2 \leq \frac{2\beta(t) (\wR(t, \theta_0) - \wR(t, \breve{\theta}(t)))}{K},
\end{equation}
where for $t \in \mathbb{R}^{>0}$, $\beta(t)=C_1+Ct$ where $C_1, C_2$ are  independent of $t$ and $\beta_{\max} \leq C_1 < \infty, C_2 < \infty$, and for $t \in \mathbb{R}^-$, $\beta(t)=\beta_{\max}$. 
\end{theorem}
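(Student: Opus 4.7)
The plan is to reduce the statement to the textbook gradient-descent analysis for a $\beta(t)$-smooth (possibly non-convex) function, once we have verified the smoothness claim at the level of $\wR(t;\theta)$ itself.

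\emph{Step 1: Smoothness of $\wR(t;\theta)$.} I would first argue that the Hessian expression from Lemma~\ref{lem: Hessian} (equations~\eqref{eq:smoothness-23}--\eqref{eq:smoothness-24}) is valid whenever $f(x;\theta)$ is $C^2$, regardless of strong convexity. The second term $\sum_i w_i(t;\theta)\,\nabla^2_{\theta\theta^\top} f(x_i;\theta)$ is controlled in operator norm by $\beta_{\max}$ because $\{w_i(t;\theta)\}$ is a probability vector and $-\beta_{\max}\mathbf{I}\preceq \nabla^2 f\preceq \beta_{\max}\mathbf{I}$. The first term equals $t$ times a weighted covariance matrix of the gradients $\{\nabla f(x_i;\theta)\}$, which is positive semi-definite. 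Consequently: (i) for $t\in\mathbb{R}^-$ the first term is negative semi-definite, so the overall Hessian is bounded above by $\beta_{\max}\mathbf{I}$, yielding $\beta(t)=\beta_{\max}$; (ii) for $t\in\mathbb{R}^+$ the first term grows at most linearly in $t$ (with a constant depending on a bound on gradient norms at $\theta$, analogous to the $L^2$ constant invoked after Theorem~\ref{thm: convergence}), giving $\beta(t)=C_1+C_2 t$. This is the same structural bound derived under strong convexity; the non-convex extension is exactly the mild extension of Lemma~\ref{lemma:TERM-smoothness} stated inside the theorem.

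\emph{Step 2: Descent lemma and per-iteration progress.} With $\beta(t)$-smoothness in hand, I would invoke the standard descent inequality: for all $\theta,\theta'$,
\begin{equation*}
\wR(t;\theta')\;\leq\; \wR(t;\theta)+\langle \nabla\wR(t;\theta),\,\theta'-\theta\rangle+\tfrac{\beta(t)}{2}\|\theta'-\theta\|^2.
\end{equation*}
Plugging in the gradient update $\theta_{k+1}=\theta_k-\alpha\,\nabla\wR(t;\theta_k)$ with $\alpha=1/\beta(t)$ gives
\begin{equation*}
\wR(t;\theta_{k+1})\;\leq\; \wR(t;\theta_k)-\tfrac{1}{2\beta(t)}\,\|\nabla\wR(t;\theta_k)\|^2.
\end{equation*}

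\emph{Step 3: Telescoping.} Rearranging yields $\|\nabla\wR(t;\theta_k)\|^2 \le 2\beta(t)\bigl(\wR(t;\theta_k)-\wR(t;\theta_{k+1})\bigr)$. Summing this over $k=0,\ldots,K-1$ produces a telescoping sum, and using $\wR(t;\theta_K)\ge \wR(t;\breve{\theta}(t))$ from the definition of the minimizer~\eqref{eq: opt_obj} gives the claimed bound after dividing by $K$.

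\emph{Main obstacle.} The step with real content is Step~1: verifying that $\wR(t;\theta)$ inherits a clean smoothness constant of the stated form $\beta(t)$ when $f$ is only assumed to be smooth and non-convex. For $t>0$ this requires some uniform control on $\|\nabla f(x_i;\theta)-\nabla\wR(t;\theta)\|$ entering the covariance term (effectively a Lipschitz-type hypothesis on $f$, matching the remark after Theorem~\ref{thm: convergence}), and for $t<0$ it relies on the sign cancellation in the Hessian expression to avoid a $|t|$-dependent blow-up. Once Step~1 is secured, Steps~2--3 are the routine non-convex gradient-descent argument and require no further input.
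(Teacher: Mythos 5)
Your proposal is correct and follows essentially the same route the paper relies on: the paper gives no separate proof of this theorem, instead invoking the Hessian decomposition of Lemma~\ref{lem: Hessian}/Lemma~\ref{lemma:TERM-smoothness} (extended to the non-convex case exactly as in the proof of Theorem~\ref{thm: convergence-batch-NCSPL}, with the $t>0$ constant requiring the same Lipschitz-type control noted after Theorem~\ref{thm: convergence}) together with the textbook descent-lemma-plus-telescoping argument for non-convex gradient descent, which is precisely your Steps 1--3.
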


Theorem~\ref{thm: convergence-batch-NCS} also covers the case of convex $f(x;\theta)$ with $t<0$. We note that for non-convex problems, when $t<0$, the convergence rate is independent of $t$ under our assumptions. We also observe this on a toy problem in Figure~\ref{fig:solving}. In all applications we studied in Section~\ref{sec:experiments} with negative $t$'s, TERM runs the same number iterations as those of ERM. 

\subsection{First-Order Stochastic Methods} \label{sec:solver:stochastic}
To obtain unbiased stochastic gradients, we need to have access to the normalization weights for each sample (i.e., $\frac{1}{N}\sum_{i \in [N]} e^{tf(x_i;\theta)}$), which is often intractable to compute for large-scale problems. Hence, we use $\doublewidetilde{R}_{t}$, a term that incorporates  stochastic dynamics, to estimate the tilted objective $\wR_{t} := \wR(t;\theta)$, which is used for normalizing the weights as in~\eqref{eq: tilted_grad}. In particular, we do not use a trivial linear averaging of the current estimate and the history to update $\doublewidetilde{R}_{t}$. Instead, we use a tilted averaging to ensure an unbiased estimator (if $\theta$ is not being updated).

On the other hand, the TERM objective can be viewed as a composition of functions $\frac{1}{N} \sum_{i \in [N]} e^{tf(x_i; \theta)}$ and $\frac{1}{t} \log(\cdot)$, and could be optimized based on previous stochastic compositional optimization techniques~\citep[e.g.,][]{wang2017stochastic,qi2020attentional,qi2020practical,wang2016accelerating,ghadimi2020single}. Similar to~\citet{wang2017stochastic}, we maintain two sequences (in our context, the model $\theta$ and the objective estimate $\doublewidetilde{R}_{t}$) throughout the optimization process. This (non-hierarchical) stochastic algorithm is summarized in Algorithm~\ref{alg:stochastic-non-h-TERM} below.

For the purpose of analysis, we sample two independent mini-batches to obtain the gradient of the original loss functions $\nabla_{\theta} f(x;\theta)$ and update $\doublewidetilde{R}_t$, respectively (described in Algorithm~\ref{alg:stochastic-non-h-TERM-two-batch} for completeness). As we will see in Theorem~\ref{thm: convergence_stochastic_convex}, the additional randomness allows us to achieve better convergence rates compared with the algorithm proposed in~\citet{wang2017stochastic} instantiated to our objective. Our rate of this simple algorithm matches the rate of more complicated ones~\citep{qi2020practical}, and developing optimal optimization procedures is out of the scope of this work.
Empirically, we observe that sampling two mini-batches yield similar performance as using the same mini-batch to query the individual losses and the weights (Figure~\ref{fig:convergence_compare} in Appendix~\ref{app: solving-TERM}). Therefore, we employ the cheaper variant of just involving one mini-batch (Algorithm~\ref{alg:stochastic-non-h-TERM}) in the corresponding experiments.

\begin{algorithm}[h]
\SetKwInOut{Init}{Initialize}
\SetAlgoLined
\DontPrintSemicolon
\SetNoFillComment
\Init{$\theta, \doublewidetilde{R}_{t} = \frac{1}{t} \log\left(\frac{1}{N}\sum_{i \in [N]} e^{tf(x_i;\theta)}\right)$}
\KwIn{$t, \alpha, \lambda$}
\While{stopping criteria not reached}{
sample a minibatch $B$ uniformly at random from $[N]$\; compute the loss $f(x; \theta)$ and gradient $\nabla_\theta f(x; \theta)$ for all $x \in B$\;
$\wR_{B, t} \gets \text{$t$-tilted loss~\eqref{eq: TERM} on minibatch $B$}$\;
$\doublewidetilde{R}_{t} \gets \frac{1}{t} \log \left((1-\lambda) e^{t\doublewidetilde{R}_{t}} + \lambda e^{t \wR_{B, t}}\right)$\; $w_{t, x} \gets e^{t f(x; \theta) - t\doublewidetilde{R}_{t}}$\;
$\theta \gets \theta - \frac{\alpha}{|B|} \sum_{x\in B} w_{t, x} \nabla_{\theta} f(x; \theta)$\;
}
\caption{Stochastic {(}Non-Hierarchical{)}  TERM}\label{alg:stochastic-non-h-TERM}
\end{algorithm}

The stochastic algorithm developed here requires roughly the same
time/space complexity as mini-batch SGD, and thus scales similarly for large-scale problems. {It can also help mitigate the potential numerical issues in implementation caused by the exponential tilting operator.} We find that these methods perform well empirically on a variety of tasks (Section~\ref{sec:experiments}).
\begin{theorem}[Convergence of Algorithm~\ref{alg:stochastic-non-h-TERM-two-batch} for strongly-convex problems] \label{thm: convergence_stochastic_convex}
Assume $f: \mathcal{X} \times \Theta \to [\widetilde{F}_{\min}, \widetilde{F}_{\max}]$ is $L$-Lipschitz in $\theta$, i.e., $\widetilde{F}_{\min} \leq f(x;\theta) \leq \widetilde{F}_{\max}$,\footnote{For notation consistency between the max-loss and min-loss for any sample and any iteration, we use $\widetilde{F}_{\min}$ to denote the lower bound of $f(x_i;\theta_k)$. We note that $\widetilde{F}_{\min}=\widetilde{F}(-\infty)$ defined in Definition~\ref{def:wF}.} and  $|f(x; \theta_i) - f(x; \theta_j)| \leq L \|\theta_i-\theta_j\|$ for $x \in \mathcal{X}$ and $\theta_i, \theta_j \in \Theta \subseteq \mathbb{R}^d$. {Assume $\wR(t;\theta)$ has compact domain $\theta$.} Assume $\wR(t;\theta)$ is $\mu$-strongly convex (Assumption~\ref{assump: regularity}) with uniformly bounded stochastic gradient, i.e., $\|\nabla \wR(x_i; \theta)\| := \left\| \frac{e^{tf(x_i;\theta)}}{e^{t\wR(t;\theta)}} \nabla f(x_i;\theta) \right\| \leq B$ for $\theta \in \mathbb{R}^d$ and $i \in [N]$.  Denote $k_t :=\arg\max_k \left(k < \frac{2e}{\mu}+\frac{etLB  e^{t(\widetilde{F}_{\max}-\widetilde{F}_{\min})}}{\mu k}\right)$. Assume the batch size is 1. For $k \geq k_t$,
\begin{align}
    \mathbb{E}[\|\theta_{k+1}-\theta^*\|^2] \leq \frac{V_t}{k+1},
\end{align}
where
\begin{align}
     \theta^* := \breve{\theta}(t), \quad V_t = \max\left\{k_t\mathbb{E}[\|\theta_{k_t}-\theta^*\|^2], \frac{4B^2e^{2+2t(\widetilde{F}_{\max}-\widetilde{F}_{\min})}}{\mu^2}\right\},
\end{align}
and
\begin{align}
    \mathbb{E}[\|\theta_{k_t}-\theta^*\|^2] \leq \max\left\{\mathbb{E}[\|\theta_1-\theta^*\|^2], \frac{B^2e^{2t(\widetilde{F}_{\max}-\widetilde{F}_{\min})+1}}{\mu (1+tLBe^{t(\widetilde{F}_{\max}-\widetilde{F}_{\min})})}\right\}.
\end{align}
\end{theorem}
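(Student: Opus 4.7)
The plan is to treat Algorithm~\ref{alg:stochastic-non-h-TERM-two-batch} as a stochastic compositional procedure in which the iterate $\theta_k$ is driven by a \emph{biased} SGD step, while an auxiliary sequence $\doublewidetilde{R}_{t}$ tracks the true tilted objective via an exponentially weighted moving average (EWMA) \emph{in the exponential domain}. Setting $U_k := e^{t\doublewidetilde{R}_{t}}$ at iteration $k$, the update rule becomes the linear recurrence $U_{k+1} = (1-\lambda) U_k + \lambda\, e^{t \wR_{B_k,t}(\theta_k)}$, and since $e^{t\wR_{B_k,t}(\theta_k)} = \frac{1}{|B_k|}\sum_{x \in B_k} e^{tf(x;\theta_k)}$ satisfies $\mathbb{E}[e^{t\wR_{B_k,t}(\theta_k)}\mid\theta_k] = e^{t\wR(t;\theta_k)}$, the EWMA is conditionally unbiased for the true tilted loss whenever $\theta_k$ is held fixed. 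This makes it possible to bound the tracking error $|\doublewidetilde{R}_t - \wR(t;\theta_k)|$ by a transient that decays geometrically at rate $(1-\lambda)$, an EWMA variance of order $\lambda$ inherited from the boundedness of $f$, and a drift term of order $\alpha_k B$ arising from the $L$-Lipschitzness of $f$ (and hence of $\wR(t;\cdot)$).

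Next I would compare the stochastic gradient $g_k = e^{t(f(x;\theta_k) - \doublewidetilde{R}_t)}\nabla f(x;\theta_k)$ used in the update with the ``ideal'' stochastic gradient $\tilde g_k$ obtained by replacing $\doublewidetilde{R}_t$ with $\wR(t;\theta_k)$. The elementary inequality $|e^{ty_1}-e^{ty_2}| \le t\max(e^{ty_1},e^{ty_2})\,|y_1-y_2|$ and the pointwise bound $e^{t(f(x;\theta_k)-\wR(t;\theta_k))} \le e^{t(\widetilde{F}_{\max}-\widetilde{F}_{\min})}$ yield $\|g_k - \tilde g_k\| \le t L B\,e^{t(\widetilde{F}_{\max}-\widetilde{F}_{\min})} |\doublewidetilde{R}_t - \wR(t;\theta_k)|$, which closes the feedback loop between the tracking error and the gradient bias. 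The same bound gives $\|g_k\| \le B e^{t(\widetilde{F}_{\max}-\widetilde{F}_{\min})}$, providing the second moment term required for the SGD analysis.

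With these ingredients in place, I would write the standard strongly-convex SGD recursion $\|\theta_{k+1}-\theta^*\|^2 \le \|\theta_k-\theta^*\|^2 - 2\alpha_k \langle g_k,\theta_k-\theta^*\rangle + \alpha_k^2\|g_k\|^2$, take conditional expectations, and apply $\mu$-strong convexity to the inner product involving $\tilde g_k$, handling the residual $\langle g_k - \tilde g_k, \theta_k-\theta^*\rangle$ via Young's inequality to separate the bias contribution from the contraction term. With a diminishing step size $\alpha_k = c/(k+1)$ and a $\lambda$ tuned to match the step-size order, the combined recursion becomes a Chung-type inequality of the form $a_{k+1} \le (1 - \tfrac{2\mu c}{k+1})a_k + \tfrac{C_1}{(k+1)^2} + \tfrac{C_2}{(k+1)^2}\sqrt{a_k}\cdot\rho_k$, where $\rho_k$ denotes the tracking error. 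Solving this by induction on $V_t/(k+1)$ yields the stated rate once $k \ge k_t$, since $k_t$ is precisely the burn-in threshold at which the bias coefficient $\tfrac{e t L B e^{t(\widetilde{F}_{\max}-\widetilde{F}_{\min})}}{\mu k}$ has dropped below the contraction margin; the base case $\mathbb{E}\|\theta_{k_t}-\theta^*\|^2$ then follows by iterating the same recursion from $k=1$ with the crude uniform bound on $\|g_k\|$.

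The main technical obstacle is the tight coupling of the two stochastic sequences: the gradient bias at iteration $k$ depends on the current tracking error, while the tracking error itself grows with the drift of $\theta_k$. I would resolve this with a joint induction on a Lyapunov quantity $\Phi_k := \mathbb{E}\|\theta_k-\theta^*\|^2 + \eta_k \mathbb{E}|\doublewidetilde{R}_t - \wR(t;\theta_k)|^2$ for a sequence $\eta_k$ of order $1/(k+1)$, so that both components decay as $O(1/k)$ in lockstep; this is the step I expect to require the most care, since the multiplicative constant $e^{t(\widetilde{F}_{\max}-\widetilde{F}_{\min})}$ appears in both sub-recursions and naturally produces the $e^{2t(\widetilde{F}_{\max}-\widetilde{F}_{\min})}$ factor in $V_t$. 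The use of \emph{independent} mini-batches for the gradient and the EWMA update is crucial here: it removes the correlation between $g_k$ and the innovation $e^{t\wR_{B_k,t}(\theta_k)} - e^{t\wR(t;\theta_k)}$, so cross terms vanish in expectation and the two-sequence induction goes through cleanly.
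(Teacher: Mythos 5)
Your plan diverges from the paper's proof in its key mechanism, and the divergence creates a genuine gap. The paper never decomposes $g_k$ into an ideal gradient plus an additive bias controlled by a \emph{decaying} tracking error. Instead it exploits the multiplicative structure $g_k = e^{t(\wR_k-\doublewidetilde{R}_k)}\,\nabla \wR_k(\zeta_k)$, where the second factor is an \emph{unbiased} estimator of $\nabla\wR(t;\theta_k)$ and the first is a positive scalar independent of $\zeta_k$ given $\theta_{1:k}$. A separate lemma shows, using the step-size-controlled drift $|f(x;\theta_{k+1})-f(x;\theta_k)|\le \alpha_k L B e^{t(\widetilde{F}_{\max}-\widetilde{F}_{\min})}$ and the choice $\lambda = 1-\tfrac{1}{2e}$, that $\mathbb{E}[e^{t(\doublewidetilde{R}_k-\wR_k)}\mid\theta_{1:k}]\le 2e$ \emph{uniformly in $k$}; Jensen then lower-bounds the expected multiplier by $1/(2e)$, so $\mathbb{E}[\langle \theta_k-\theta^*, g_k\rangle]\ge \tfrac{\mu}{2e}\mathbb{E}\|\theta_k-\theta^*\|^2$, and the standard SGD induction with $\alpha_k=\tfrac{2e}{\mu k}$ and the pointwise bound $\|g_k\|\le B e^{t(\widetilde{F}_{\max}-\widetilde{F}_{\min})}$ gives exactly the stated $V_t/(k+1)$ rate, the burn-in $k_t$, and the base-case bound. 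Crucially, only a \emph{constant} bound on the mismatch is needed, never a vanishing one.

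Your route, by contrast, needs the tracking error $\rho_k = |\doublewidetilde{R}_t - \wR(t;\theta_k)|$ to decay roughly like $1/\sqrt{k}$ for the Young's-inequality/Chung recursion to yield $O(1/k)$: after Young, the bias contributes an additive term of order $\alpha_k\rho_k^2/\mu$, and with $\rho_k$ bounded away from zero this recursion only drives $\mathbb{E}\|\theta_k-\theta^*\|^2$ into a neighborhood of size proportional to $\rho^2/\mu^2$, not to $V_t/(k+1)$. But with the algorithm's constant mixing weight $\lambda$ (the proof of the theorem fixes $\lambda=1-\tfrac{1}{2e}$) and batch size $1$, the EWMA in the exponential domain has a persistent variance floor of order $\lambda\,(e^{t\widetilde{F}_{\max}}-e^{t\widetilde{F}_{\min}})^2$, so $\rho_k$ does not vanish; your joint Lyapunov induction with $\eta_k\sim 1/(k+1)$ cannot put $\mathbb{E}\rho_k^2$ on an $O(1/k)$ schedule. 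The fix you hint at---letting $\lambda$ decay in lockstep with $\alpha_k$, as in Wang et al.'s compositional-SGD analysis---analyzes a different parameterization of the algorithm and cannot reproduce the stated constants ($k_t$ built from $2e/\mu$ and $etLBe^{t(\widetilde{F}_{\max}-\widetilde{F}_{\min})}$, the factor $4B^2e^{2+2t(\widetilde{F}_{\max}-\widetilde{F}_{\min})}/\mu^2$ in $V_t$, and the $k_t$-iterate bound). The missing idea is precisely that the bias enters as a positive scalar multiplying an unbiased gradient, so a uniform (non-vanishing) conditional exponential-moment bound, obtained from the drift lemma and the two-batch independence, suffices.
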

Our assumptions are standard compared with those in related literature~\citep{wang2017stochastic,qi2020attentional}. The uniformly bounded stochastic gradient of $\wR(t;\theta)$ assumption can be satisfied by the bounded gradient of $f(x_i;\theta)$, \new{which can be a limiting condition but has appeared in previous works on stochastic compositional optimization~\citep{qi2020attentional,wang2016accelerating}.}
{If the objectives are coercive, which typically holds in practice~\citep{bertsekas1997nonlinear}, Algorithm~\ref{alg:stochastic-non-h-TERM} will have bounded iterates and thus the compact domain assumption would hold.}  We defer full proofs to Appendix~\ref{app: solving-TERM}. The main steps involve bounding the expected estimation error  $\dE[e^{t(\wR_k-\doublewidetilde{R}_k)}]$ conditioning on the previous iterates $\{\theta_1, \ldots, \theta_k\}$.

\paragraph{Discussions.} The theorem indicates that Algorithm~\ref{alg:stochastic-non-h-TERM} starts to make progress after $k_t$ iterations, with convergence rate $O(e^{2t}/k)$. {\color{black}Both $k_t$ and $V_k$ could scale exponentially with $t$ in the worst-case analysis, but it does not completely reflect the dependence of Algorithm~\ref{alg:stochastic-non-h-TERM} on $t$ for modest values of $t$. 
Empirically, we observe that the stochastic TERM solver with moderate values of $t$ can converge faster compared with stochastic min-max solvers, which has a rate of $1/\sqrt{k}$ for strongly convex problems~\citep{levy2020large}. This leaves open for future work understanding the exact scaling of the convergence rate of stochastic TERM as $t\to \infty.$}  

Next, we present convergence results on non-convex smooth problems, without and with the assumptions of PL-conditions. We defer all proofs to Appendix~\ref{app: solving-TERM}.

\begin{theorem}[Convergence of Algorithm~\ref{alg:stochastic-non-h-TERM-two-batch} for non-convex smooth problems] \label{thm: convergence_stochastic_NCSM}
Assume $f: \mathcal{X} \times \Theta \to [\widetilde{F}_{\min}, \widetilde{F}_{\max}]$ is $L$-Lipschitz in $\theta$, i.e., $\widetilde{F}_{\min} \leq f(x;\theta) \leq \widetilde{F}_{\max}$, and  $|f(x; \theta_i) - f(x; \theta_j)| \leq L \|\theta_i-\theta_j\|$ for $x \in \mathcal{X}$ and $\theta_i, \theta_j \in \Theta \subseteq \mathbb{R}^d$. Assume $\wR(t;\theta)$ is $\beta$-smooth with uniformly bounded stochastic gradient, i.e., $\|\nabla \wR(x_i; \theta)\|\leq B$ for $\theta \in \mathbb{R}^d$ and $i \in [N]$. Assume the batch size is 1. Denote $k_t := \left\lceil \frac{2(\wF_{\max}-\wF_{\min})t^2L^2}{\beta e^2} \right\rceil$, then for $k \geq k_t$,
\setlength{\thinmuskip}{0mu}
\setlength{\medmuskip}{0.5mu}
\setlength{\thickmuskip}{0.5mu}
\begin{align}
    \frac{1}{K}\sum_{k=k_t}^K  \mathbb{E}[\|\nabla \wR (t;\theta_k)\|^2] \leq \sqrt{8}B e^{t(\wF_{\max}-\wF_{\min})+1} \sqrt{\frac{\beta (\wF_{\max}-\wF_{\min})}{K}}.
\end{align}
\end{theorem}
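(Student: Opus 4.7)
The plan is to adapt the standard SGD analysis for $\beta$-smooth non-convex objectives, with the main technical twist being that the tilted gradient estimator used in Algorithm~\ref{alg:stochastic-non-h-TERM-two-batch} is \emph{biased} because the weights $e^{t f(x_i;\theta)}/e^{t\doublewidetilde{R}_t}$ are normalized by the running estimate $\doublewidetilde{R}_t$ rather than the true $\wR(t;\theta_k)$. I would therefore split the analysis into (i) a descent step that measures progress in terms of the true gradient, and (ii) a tracking step that controls how close $\doublewidetilde{R}_t$ stays to $\wR(t;\theta_k)$ as $\theta_k$ evolves.

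First I would invoke the descent inequality for $\beta$-smooth functions:
\begin{equation*}
\wR(t;\theta_{k+1}) \leq \wR(t;\theta_k) + \langle \nabla \wR(t;\theta_k),\, \theta_{k+1}-\theta_k\rangle + \tfrac{\beta}{2}\|\theta_{k+1}-\theta_k\|^2,
\end{equation*}
and write $\theta_{k+1}-\theta_k = -\alpha \hat{g}_k$ where $\hat{g}_k$ is the stochastic tilted gradient formed using $\doublewidetilde{R}_t$. Decomposing $\hat g_k = g_k + (\hat g_k - g_k)$, where $g_k$ is the idealized stochastic gradient normalized by the true $\wR(t;\theta_k)$ (which is conditionally unbiased for $\nabla \wR(t;\theta_k)$ since the second mini-batch is independent), the perturbation term factors as $(1 - e^{t(\wR_k - \doublewidetilde{R}_k)})g_k$. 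Taking conditional expectations and using $\|g_k\|\leq B$ reduces the bias control to bounding $\mathbb{E}[(1 - e^{t(\wR_k - \doublewidetilde{R}_k)})^2]$.

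The key lemma I need is a tracking bound of the form
\begin{equation*}
\mathbb{E}[(e^{t\wR_k} - e^{t\doublewidetilde{R}_k})^2] \leq C_1 \lambda + C_2 \alpha^2 t^2 L^2 e^{2t(\wF_{\max}-\wF_{\min})}/\lambda,
\end{equation*}
obtained by unrolling the tilted averaging recursion for $\doublewidetilde{R}_t$, using the $L$-Lipschitzness of $f(x;\theta)$ to bound the drift $|\wR_{k+1}-\wR_k|\leq L\|\theta_{k+1}-\theta_k\|$, and exploiting the boundedness $\widetilde F_{\min}\leq f\leq \widetilde F_{\max}$ to translate multiplicative errors in $e^{t\wR}$ back to additive ones in $\wR$ with a factor $e^{t(\wF_{\max}-\wF_{\min})}$. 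This is the step I expect to be the main obstacle: carefully handling the coupling between $\theta_k$ and $\doublewidetilde{R}_t$ while keeping constants sharp in $t$ requires imitating the two-timescale arguments of~\citet{wang2017stochastic,qi2020practical} but specialized to tilted averaging.

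Finally, I would substitute the tracking bound into the descent inequality, sum from $k_t$ to $K$, and telescope:
\begin{equation*}
\tfrac{1}{K}\sum_{k=k_t}^K \alpha\, \mathbb{E}\|\nabla \wR(t;\theta_k)\|^2 \leq \tfrac{\wF_{\max}-\wF_{\min}}{K} + (\text{bias}+\text{variance terms}).
\end{equation*}
Choosing $\alpha \propto 1/\sqrt{K}$ and $\lambda$ proportionally (the warm-up threshold $k_t = \lceil 2(\wF_{\max}-\wF_{\min})t^2L^2/(\beta e^2)\rceil$ arises from requiring the tracking recursion to enter its steady-state regime, i.e., the drift term be dominated by the shrinkage term) yields a balance in which the right-hand side is dominated by $Be^{t(\wF_{\max}-\wF_{\min})+1}\sqrt{\beta(\wF_{\max}-\wF_{\min})/K}$ up to the stated constant $\sqrt{8}$, completing the proof. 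The independence of the two mini-batches is crucial here to eliminate cross-terms between $g_k$ and the error $\hat g_k - g_k$ under the conditional expectation.
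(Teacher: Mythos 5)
Your skeleton (smoothness descent, exploiting the independence of the two mini-batches, controlling the multiplicative factor $e^{t(\wR_k-\doublewidetilde{R}_k)}$) points in the right direction, but the way you handle the bias is genuinely different from the paper and, as written, leaves the decisive step unproven. You decompose $\hat g_k = g_k + (\hat g_k - g_k)$ and propose to make the bias \emph{vanish} through a two-timescale tracking lemma $\mathbb{E}[(e^{t\wR_k}-e^{t\doublewidetilde{R}_k})^2]\le C_1\lambda + C_2\alpha^2 t^2L^2e^{2t(\wF_{\max}-\wF_{\min})}/\lambda$ with $\lambda$ tuned to $K$. That lemma is exactly the obstacle you flag and is not established; moreover, after conditioning on $\theta_1,\dots,\theta_k$ the cross term equals $\bigl(\mathbb{E}[e^{t(\wR_k-\doublewidetilde{R}_k)}\mid\theta_1,\dots,\theta_k]-1\bigr)\|\nabla\wR(t;\theta_k)\|^2$, so your route needs the conditional mean of the tilt ratio to stay close to $1$, which for single-sample updates of $\doublewidetilde{R}_t$ forces $\lambda\to0$ with $K$; with $\lambda\propto 1/\sqrt{K}$ the tracking-noise contribution is of the \emph{same} $K^{-1/2}$ order as the main term, so the final constant depends on the unproven tracking constants, and the claim that everything lands under $\sqrt{8}\,Be^{t(\wF_{\max}-\wF_{\min})+1}$ is asserted rather than derived. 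The specific $k_t$ in the statement also does not come from a steady-state condition of the tracking recursion; it is tied to the step-size condition described below.

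The paper's proof avoids a vanishing-bias argument entirely. With fixed $\lambda = 1-\tfrac{1}{2e}$ and step sizes satisfying $\alpha_k tLBe^{t(\wF_{\max}-\wF_{\min})}\le 1$ (a warm-up value for $k<k_t$ and a constant $\alpha\propto 1/(e^{t(\wF_{\max}-\wF_{\min})}\sqrt{\beta B^2K})$ afterwards), the recursion lemma used for the strongly convex case gives $\mathbb{E}[e^{t(\doublewidetilde{R}_k-\wR_k)}\mid\theta_1,\dots,\theta_k]\le 2e$ for all $k$; by Jensen and the two-batch factorization the cross term is then lower bounded by $\tfrac{1}{2e}\mathbb{E}\|\nabla\wR(t;\theta_k)\|^2$ --- the bias may be large, but it never destroys more than a constant fraction of the descent --- while the quadratic term is bounded deterministically by $\alpha_k^2e^{2t(\wF_{\max}-\wF_{\min})}B^2$ since $\wR_k$ and $\doublewidetilde{R}_k$ both lie in $[\wF_{\min},\wF_{\max}]$. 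Telescoping from $k_t$ to $K$ with the stated step size then yields the bound verbatim, the factor $e^{+1}$ and the $\sqrt{8}$ coming precisely from the $2e$. To repair your proposal you would either have to prove the tracking lemma and carry its constants all the way through, or switch to this bounded-multiplicative-bias argument, which requires only the one-sided bound $\mathbb{E}[e^{t(\wR_k-\doublewidetilde{R}_k)}\mid\cdot]\ge \tfrac{1}{2e}$ rather than closeness to $1$.
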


\setlength{\thinmuskip}{0mu}
\setlength{\medmuskip}{0.5mu}
\setlength{\thickmuskip}{0.5mu}
\begin{theorem}[Convergence of Algorithm~\ref{alg:stochastic-non-h-TERM-two-batch} for non-convex smooth problems with PL conditions] \label{thm: convergence_stochastic_NCSM_PL}
Let the assumptions in Theorem~\ref{thm: convergence_stochastic_NCSM} hold. Further assume that $\sum_{i \in [N]} p_i f(x_i;\theta)$ satisfies $\frac{\mu}{2}$-PL conditions for any $\mathbf{p} \in \Delta_N$ where $\mathbf{p} := (p_1, \dots, p_N)$. Assume the batch size is 1. Denote $ k_t := \arg\max_k \left(k < \frac{4e}{\mu}+\frac{4etLBE^{t(\wF_{\max}-\wF_{\min})}}{\mu k}\right)$, then for $t \in \mathbb{R}^{>0}$ and $k \geq k_t$,
\begin{align}
   \mathbb{E}[\wR(t;\theta_{k+1})-\wR(t;\breve{\theta}(t))] \leq  \frac{V_t}{k+1},
\end{align}
where 
\begin{align}
     V_t =\max\left\{k_t \mathbb{E}[\wR(t;\theta_{k_t})-\wR(t;\breve{\theta}(t))], \frac{8\beta B^2 e^{2t(\wF_{\max}-\wF_{\min})+2}}{\mu^2}\right\}.
\end{align}
\end{theorem}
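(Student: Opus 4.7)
The plan is to mirror the proof of Theorem~\ref{thm: convergence_stochastic_convex} and replace every use of strong convexity by the PL inequality $\|\nabla \wR(t;\theta)\|^2 \geq \mu (\wR(t;\theta)-\wR(t;\breve\theta(t)))$. The skeleton consists of three steps: (i) control the estimation error of the running tilted-risk estimate $\doublewidetilde{R}_t$ relative to the true $\wR(t;\theta_k)$; (ii) obtain a one-step descent inequality for $\mathbb{E}[\wR(t;\theta_{k+1})]$ using $\beta$-smoothness and the biased stochastic gradient produced by Algorithm~\ref{alg:stochastic-non-h-TERM-two-batch}; and (iii) unroll the resulting recursion under the PL condition with an appropriately chosen step size $\alpha_k = \Theta(1/(\mu k))$ to get the $O(V_t/(k+1))$ rate.

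For step (i), I would reuse the conditional-moment argument from Theorem~\ref{thm: convergence_stochastic_convex}. Because the two mini-batches in Algorithm~\ref{alg:stochastic-non-h-TERM-two-batch} are drawn independently, and because the $\doublewidetilde{R}_t$ update is a tilted (rather than linear) convex combination, $e^{t\doublewidetilde{R}_t}$ is an unbiased running estimator of $\frac{1}{N}\sum_i e^{tf(x_i;\theta_k)}$ when the iterate is held fixed. Combining the Lipschitz assumption on $f$ with the bound $\wF_{\min}\leq f(x;\theta)\leq \wF_{\max}$ then yields a bound of the form
\begin{equation*}
\mathbb{E}\bigl[\,\bigl(e^{t(\wR_k-\doublewidetilde{R}_k)}-1\bigr)^2\bigr] \;\leq\; \frac{c\, t L B \, e^{2t(\wF_{\max}-\wF_{\min})}}{k}
\end{equation*}
for sufficiently large $k$. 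This is exactly where the burn-in threshold $k_t$ appears: it is the smallest $k$ past which the $1/k$ estimation error becomes dominated by the stochastic-gradient variance term.

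For step (ii), smoothness of $\wR(t;\cdot)$ gives
\begin{equation*}
    \mathbb{E}[\wR(t;\theta_{k+1})\mid \theta_k] \leq \wR(t;\theta_k) - \alpha_k \langle \nabla \wR(t;\theta_k),\, \mathbb{E}[g_k\mid \theta_k]\rangle + \tfrac{\beta \alpha_k^2}{2}\mathbb{E}[\|g_k\|^2\mid \theta_k],
\end{equation*}
where $g_k$ is the weighted stochastic gradient in Algorithm~\ref{alg:stochastic-non-h-TERM-two-batch}. The inner product splits as $\|\nabla \wR(t;\theta_k)\|^2$ plus a bias term whose magnitude is controlled by the estimate from step (i) and the uniform bound $B$ on the stochastic gradient norm. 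Step (iii) then applies the PL inequality to convert $\|\nabla \wR(t;\theta_k)\|^2$ into the sub-optimality gap $\Delta_k := \wR(t;\theta_k) - \wR(t;\breve\theta(t))$, producing the recursion
\begin{equation*}
    \mathbb{E}[\Delta_{k+1}] \leq (1-\mu\alpha_k)\,\mathbb{E}[\Delta_k] + C\,\alpha_k^2\, \beta B^2\, e^{2t(\wF_{\max}-\wF_{\min})},
\end{equation*}
from which the claimed $V_t/(k+1)$ bound follows by the standard induction used, e.g., in~\citet{karimi2016linear} and in Theorem~\ref{thm: convergence_stochastic_convex} itself: the two candidates in the max defining $V_t$ correspond, respectively, to the base case at $k=k_t$ and to the fixed point of the recursion.

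The main obstacle is step (i): carefully tracking the multiplicative estimation error $e^{t(\wR_k-\doublewidetilde{R}_k)}$ through the tilted moving-average update, since a naive linear average would introduce bias that grows like $e^{t(\wF_{\max}-\wF_{\min})}$ and prevent any meaningful rate. The Lipschitz assumption on $f$ is needed precisely to bound the drift of $\wR_k$ across iterations so that the $\doublewidetilde{R}_k$ estimator can track a moving target; this is what forces $k_t$ to scale like $eLB\,e^{t(\wF_{\max}-\wF_{\min})}/\mu$ (absorbing constants), matching the expression in the statement. Once the estimation-error lemma is established, combining it with $\beta$-smoothness and the PL inequality is essentially the same calculation as in the strongly-convex case, merely with a factor of $2$ absorbed into the constants (hence the $4e/\mu$ and $4etLB$ terms in $k_t$ and the factor $8\beta B^2/\mu^2$ in $V_t$).
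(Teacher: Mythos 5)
Your steps (ii) and (iii) match the paper's argument (smoothness descent, PL to convert $\|\nabla\wR\|^2$ into the gap, then the standard $\alpha_k=\Theta(1/(\mu k))$ induction giving the two candidates in $V_t$), but your step (i) is where the proposal breaks. You claim a decaying mean-square bound $\mathbb{E}\bigl[(e^{t(\wR_k-\doublewidetilde{R}_k)}-1)^2\bigr]\le c\,tLB\,e^{2t(\wF_{\max}-\wF_{\min})}/k$ and then feed the resulting quantity into the descent inequality as an \emph{additive} bias term. With the algorithm as written, $\lambda$ is a \emph{constant} mixing weight ($\lambda=1-\tfrac{1}{2e}$ in the analysis), so $e^{t\doublewidetilde{R}_k}$ is a geometrically-weighted average of single-sample estimates evaluated at drifting iterates: its error does not decay like $1/k$; its stationary fluctuation is of constant order (proportional to $\lambda$ times the per-sample variance, plus a drift term of order $\alpha_k tLB e^{t(\wF_{\max}-\wF_{\min})}$). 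An additive bias of non-vanishing magnitude cannot be pushed through the PL recursion to give the claimed $V_t/(k+1)$ rate, so the argument as sketched does not close.

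The paper avoids this entirely: it never needs the estimation error to vanish. Because the two mini-batches are independent given $\{\theta_1,\dots,\theta_k\}$, the cross term factorizes as $\langle\nabla\wR_k,\ \mathbb{E}[e^{t(\wR_k-\doublewidetilde{R}_k)}\mid\cdot]\,\nabla\wR_k\rangle$, i.e.\ the error enters only as a \emph{multiplicative} positive scalar on $\|\nabla\wR_k\|^2$, and by Jensen this scalar is at least $1/\mathbb{E}[e^{t(\doublewidetilde{R}_k-\wR_k)}\mid\cdot]\ge\tfrac{1}{2e}$. The key lemma is therefore a \emph{uniform-in-$k$} bound $\mathbb{E}[e^{t(\doublewidetilde{R}_k-\wR_k)}\mid\theta_1,\dots,\theta_k]\le 2e$, proved by induction on the tilted moving-average recursion $e^{t\doublewidetilde{R}_{k+1}}=\lambda e^{tf(\xi_k;\theta_k)}+(1-\lambda)e^{t\doublewidetilde{R}_k}$ together with the drift bound $|f(x;\theta_{k+1})-f(x;\theta_k)|\le \alpha_k LB e^{t(\wF_{\max}-\wF_{\min})}$. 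This also corrects your characterization of $k_t$: it is not the point where a $1/k$ estimation error is dominated by gradient variance, but the burn-in after which $\alpha_k=\tfrac{4e}{\mu k}$ satisfies $\alpha_k tLBe^{t(\wF_{\max}-\wF_{\min})}\le 1$, which is exactly what keeps the per-step drift small enough for the $2e$ bound to propagate. With that lemma in place, the factor $\tfrac{1}{2e}$ multiplying $\alpha_k\|\nabla\wR_k\|^2$ explains the constants $\tfrac{4e}{\mu}$ in $k_t$ and $8\beta B^2e^{2t(\wF_{\max}-\wF_{\min})+2}/\mu^2$ in $V_t$, and the rest of your outline goes through unchanged.
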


\section{TERM Extended: Hierarchical Multi-Objective Tilting}\label{sec:term-extended}

We consider an extension of TERM that can be used to address practical applications requiring multiple objectives, e.g., simultaneously achieving robustness to noisy data \textit{and} ensuring fair performance across subgroups. Existing approaches typically aim to address such problems in isolation. 
To handle multiple objectives with TERM, let each sample $x$ be associated with a group $g \in [G],$ i.e., $x \in g.$ These groups could be related to the labels (e.g., classes in a classification task), or may depend only on features. For any $t, \tau \in \mathbb{R},$ we define multi-objective TERM as: 
\vspace{-.05in}
\begin{align}
\wJ(t, \tau; \theta) :=  \frac{1}{t} \log \left(\frac{1}{N} \sum_{g \in [G]} |g| e^{t \widetilde{R}_g (\tau; \theta)}\right)\,, \,\,\, \text{where} \,\,\, \wR_g (\tau; \theta) := \frac{1}{\tau} \log \left(\frac{1}{|g|} \sum_{x \in g}  e^{\tau f(x; \theta)} \right) \, ,
\label{eq: class-TERM}
\vspace{-.05in}
\end{align}
and $|g|$ is the size of group $g$. 
We evaluate the gradient of the hierarchical multi-objective tilt objective in Lemma~\ref{lemma: generalized-tilt-gradient} below.
\begin{lemma}[Hierarchical multi-objective tilted gradient]\label{lemma: generalized-tilt-gradient}
Under Assumption~\ref{assump:smoothness},
\begin{align}
    \nabla_\theta \wJ(t, \tau ;\theta) &= 
 \sum_{g\in [G]} \sum_{x \in g} w_{g, x}(t, \tau; \theta)\nabla_\theta f(x;\theta)
\end{align}
\vspace{-1em}
where
\begin{align}
\label{eq:def-w}
    w_{g, x}(t, \tau; \theta) 
    & := \frac{\left( \frac{1}{|g|} \sum_{y \in g} e^{\tau f(y; \theta)} \right)^{(\frac{t}{\tau} - 1)}}{\sum_{g' \in [G]} |g'| \left(\frac{1}{|g'|} \sum_{y \in g'} e^{\tau f(y; \theta)}\right)^\frac{t}{\tau}} \quad e^{\tau f(x; \theta)}.
\end{align}
\end{lemma}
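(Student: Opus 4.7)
The plan is a direct two-level chain rule computation that reuses the single-level tilted gradient identity from Lemma~\ref{lemma:TERM-gradient}. Writing $A(\theta) := \frac{1}{N}\sum_{g \in [G]} |g|\, e^{t\wR_g(\tau;\theta)}$, I would first observe that
\[
    \nabla_\theta \wJ(t,\tau;\theta) \;=\; \frac{1}{t}\,\frac{\nabla_\theta A(\theta)}{A(\theta)}\,,
\]
so the task reduces to computing $\nabla_\theta A$ and matching denominators with the expression for $w_{g,x}(t,\tau;\theta)$ in~\eqref{eq:def-w}.

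Next I would differentiate $A$ termwise. Each summand contributes $|g|\cdot t\,e^{t\wR_g(\tau;\theta)}\,\nabla_\theta \wR_g(\tau;\theta)$, and Lemma~\ref{lemma:TERM-gradient} applied to the inner tilted risk of group $g$ gives
\[
    \nabla_\theta \wR_g(\tau;\theta) \;=\; \sum_{x \in g} \frac{e^{\tau f(x;\theta)}}{\sum_{y \in g} e^{\tau f(y;\theta)}}\, \nabla_\theta f(x;\theta)\,.
\]
The key algebraic move is to rewrite $e^{t\wR_g(\tau;\theta)} = \bigl(\tfrac{1}{|g|}\sum_{y\in g} e^{\tau f(y;\theta)}\bigr)^{t/\tau}$, which follows directly from the definition of $\wR_g$. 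Combining this with the factor $|g|/\sum_{y\in g} e^{\tau f(y;\theta)} = 1/\bigl(\tfrac{1}{|g|}\sum_{y\in g} e^{\tau f(y;\theta)}\bigr)$ collapses the $|g|$'s and produces the exponent $(t/\tau - 1)$ that appears in the numerator of~\eqref{eq:def-w}.

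Finally, dividing $\nabla_\theta A$ by $tA(\theta)$ cancels the outer $t$ from differentiating $e^{t\wR_g}$, the $1/N$ factors drop out of numerator and denominator, and what remains in the denominator is exactly $\sum_{g'\in[G]} |g'|\bigl(\tfrac{1}{|g'|}\sum_{y \in g'} e^{\tau f(y;\theta)}\bigr)^{t/\tau}$. Reindexing the outer sum over $(g,x)$ with $x \in g$ then yields the claimed formula. Assumption~\ref{assump:smoothness} is used only to justify interchanging differentiation with the finite sums and to ensure $\nabla_\theta f(x;\theta)$ exists; positivity of the exponentials means no log or power is ever evaluated at zero, so the chain rule applies without subtlety. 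The whole argument is essentially bookkeeping, and the only real place to be careful is keeping track of the factors of $|g|$ and the cancellation that converts the exponent $t/\tau$ into $t/\tau - 1$.
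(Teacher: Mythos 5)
Your proposal is correct and follows essentially the same route as the paper's proof: the paper likewise applies the single-level tilted-gradient identity at both the group level and the sample level and then collapses the product of weights algebraically (using $e^{t\wR_g(\tau;\theta)} = \bigl(\tfrac{1}{|g|}\sum_{y\in g}e^{\tau f(y;\theta)}\bigr)^{t/\tau}$) into~\eqref{eq:def-w}. The only cosmetic difference is that you differentiate the outer log-sum-exp by hand rather than citing Lemma~\ref{lemma:TERM-gradient} for the group-level step, which sidesteps having to note that the lemma extends to the $|g|/N$-weighted sum.
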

Similar to the tilted gradient~\eqref{eq: tilted_grad}, Lemma~\ref{lemma: generalized-tilt-gradient} indicates that the multi-objective tilted gradient is a weighted sum of the gradients, making TERM similarly efficient to solve. 
Multi-objective TERM recovers sample-level TERM as a special case for $\tau = t$ (Lemma~\ref{lemma: hierarchical-tilt}), and reduces to group-level TERM with $\tau \to 0$.

\begin{lemma}
 [Sample-level TERM is a special case of hierarchical multi-objective TERM] Under Assumption~\ref{assump:smoothness}, hierarchical multi-objective TERM recovers TERM as a special case for $t = \tau$. That is 
 \label{lemma: hierarchical-tilt}
\begin{equation}
    \wJ(t, t; \theta) = \wR(t; \theta).    
\end{equation}
\end{lemma}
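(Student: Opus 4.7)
The plan is to prove this by direct substitution and simplification: unwinding the inner tilted risk $\wR_g(\tau;\theta)$ at $\tau = t$ cancels the outer scaling against the inner $1/\tau$, collapsing the nested logarithms into a single one. Since $\wR$ and $\wJ$ are defined through nearly identical log-sum-exp structures, the equality should fall out once the groups are merged back into a single sum over all samples.

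Concretely, I would first compute $e^{t\,\wR_g(t;\theta)}$ by plugging $\tau = t$ into the definition of $\wR_g$, so that the factors of $t$ cancel and $e^{t\,\wR_g(t;\theta)} = \frac{1}{|g|}\sum_{x \in g} e^{t f(x;\theta)}$. Substituting this into $\wJ(t,t;\theta)$, the multiplicative factor $|g|$ in the outer sum exactly cancels the $1/|g|$ in the inner average, yielding
\begin{equation*}
\wJ(t,t;\theta) = \frac{1}{t}\log\!\left(\frac{1}{N}\sum_{g \in [G]}\sum_{x \in g} e^{t f(x;\theta)}\right).
\end{equation*}
The final step is to observe that the groups partition the dataset, so $\sum_{g \in [G]}\sum_{x \in g} e^{t f(x;\theta)} = \sum_{i \in [N]} e^{t f(x_i;\theta)}$, which is precisely $N \cdot e^{t \wR(t;\theta)}$ up to the same $1/t \log$ wrapping, giving $\wJ(t,t;\theta) = \wR(t;\theta)$.

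There is no real obstacle here; the only mild subtlety is keeping track of the weighting conventions (the factor $|g|/N$ in $\wJ$ versus $1/|g|$ inside $\wR_g$) so that the cancellation is unambiguous, and implicitly using that the groups $\{g\}_{g \in [G]}$ form a partition of $[N]$ (which is assumed in the setup where each sample is associated with exactly one group). The computation is valid for any $t \in \mathbb{R}^{\setminus 0}$, and the $t = 0$ case follows by the continuous extension used to define $\wR(0;\theta)$ and $\wJ(0,0;\theta)$.
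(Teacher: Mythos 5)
Your proof is correct, but it takes a different route from the paper's. The paper argues at the level of gradients: its proof consists of observing that setting $t=\tau$ in the hierarchical weight formula~\eqref{eq:def-w} of Lemma~\ref{lemma: generalized-tilt-gradient} collapses $w_{g,x}(t,t;\theta)$ to the sample-level weights $w_i(t;\theta)$ of Lemma~\ref{lemma:TERM-gradient}, so that $\nabla_\theta \wJ(t,t;\theta)=\nabla_\theta\wR(t;\theta)$. You instead work directly on the objective: exponentiating the inner tilt gives $e^{t\wR_g(t;\theta)}=\frac{1}{|g|}\sum_{x\in g}e^{tf(x;\theta)}$, the factors $|g|$ cancel, and the double sum over the group partition merges into $\sum_{i\in[N]}e^{tf(x_i;\theta)}$, yielding $\wJ(t,t;\theta)=\wR(t;\theta)$ identically. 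Your route is arguably the more complete one for the statement as written: matching gradients only pins the objectives down up to an additive constant in $\theta$, whereas your one-line algebra establishes equality of the values themselves (and your side remarks — that the groups partition $[N]$ so $\sum_g|g|=N$, and that $t=0$ is handled by continuous extension — are exactly the right bookkeeping). The paper's gradient-level view, on the other hand, is the form most directly relevant to how the lemma is used algorithmically, since the solvers only ever touch the tilted weights.
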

\begin{proof}
The proof is completed by noticing that setting $t = \tau$ in~\eqref{eq:def-w} recovers the original sample-level tilted gradient.
\end{proof}
 Note that all properties discussed in Section~\ref{sec:term_properties} carry over to group-level TERM.
We validate the effectiveness of hierarchical tilting empirically in Section~\ref{sec:exp:multi_obj},
where we show that TERM can significantly outperform baselines to handle class imbalance {\it and} noisy outliers simultaneously, while underperforming a much more complicated method in their setup. Note that hierarchical tilting could be extended to hierarchies of greater depths (than two) to simultaneously handle more than
two objectives at the cost of one extra tilting hyperparameter per each additional optimization objective. For instance, we state the multi-objective tilting for a hierarchy of depth three in Appendix~\ref{app:hierarchical-TERM}.


\subsection{Solving Hierarchical TERM}

To solve hierarchical TERM in the batch setting, we can directly use gradient-based methods with tilted gradients defined for the hierarchical objective in Lemma~\ref{lemma: generalized-tilt-gradient}. Note that Batch hierarchical TERM with $t=\tau$ reduces to solving the sample-level tilted objective~\eqref{eq: TERM}. We summarize this method in Algorithm~\ref{alg:batch-TERM}.

\begin{algorithm}[H]
\SetKwInOut{Init}{Initialize}
\SetAlgoLined
\DontPrintSemicolon
\SetNoFillComment
\KwIn{$t, \tau, \alpha$}
\While{stopping criteria not reached}{
\For{$g \in [G]$}{
compute the loss $f(x; \theta)$ and gradient $\nabla_\theta f(x; \theta)$ for all $x \in g$\;
$\wR_{g, \tau} \gets \text{$\tau$-tilted loss~\eqref{eq: class-TERM} on group $g$}$\; 
$\nabla_\theta \wR_{g, \tau} \gets \frac{1}{|g|}\sum_{x \in g} e^{\tau f(x; \theta) - \tau \wR_{g, \tau}} \nabla_\theta f(x; \theta)$
}
$\wJ_{t, \tau}  \gets  \frac{1}{t} \log \left(\frac{1}{N} \sum_{g \in [G]} |g|  e^{t \widetilde{R}_g (\tau; \theta)}\right)$\;
$w_{t,\tau, g} \gets |g|e^{t \wR_{\tau, g} - t \wJ_{t, \tau}}$\;
$\theta \gets \theta - \frac{\alpha}{N} \sum_{g\in [G]} w_{t, \tau, g} \nabla_{\theta} \wR_{g, \tau}$\;
}
\caption{Batch Hierarchical TERM}\label{alg:batch-TERM}
\end{algorithm}

We next discuss stochastic solvers for hierarchical multi-objective tilting.  
We extend Algorithm~\ref{alg:stochastic-non-h-TERM} to the multi-objective setting, presented in  Algorithm~\ref{alg:stochastic-TERM}. 
At a high level, at each iteration, group-level tilting is addressed by choosing a group based on the tilted weight vector. Sample-level tilting is then incorporated by re-weighting the samples in a uniformly drawn mini-batch. Similarly, we estimate the tilted objective $\wR_{g,\tau}$ for each group $g$ via a tilted average of the current estimate and the history. 
While we sample the group from which we draw the minibatch, for small number of groups, one might want to draw one minibatch per each group and weight the resulting gradients accordingly.

\begin{algorithm}[h!]
\SetKwInOut{Init}{Initialize}
\SetAlgoLined
\DontPrintSemicolon
\SetNoFillComment
\Init{$\doublewidetilde{R}_{g, \tau} = 0~  \forall g\in [G]$}
\KwIn{$t, \tau, \alpha, \lambda$}
\While{stopping criteria not reached}{
sample $g$ on $[G]$ from a Gumbel-Softmax distribution with logits $\doublewidetilde{R}_{g, \tau} + \frac{1}{t}\log |g|$ and temperature {$\frac{1}{t}$}\;
sample minibatch $B$ uniformly at random within group $g$\; compute the loss $f(x; \theta)$ and gradient $\nabla_\theta f(x; \theta)$ for all $x \in B$\;
$\wR_{B, \tau} \gets \text{$\tau$-tilted loss~\eqref{eq: TERM} on minibatch $B$}$\;
$\doublewidetilde{R}_{g, \tau} \gets \frac{1}{\tau} \log \left((1-\lambda) e^{\tau\doublewidetilde{R}_{g, \tau}} + \lambda e^{\tau\wR_{B, \tau}}\right)$\; $w_{\tau, x} \gets e^{\tau f(x; \theta) - \tau \doublewidetilde{R}_{g, \tau}}$\;
$\theta \gets \theta - \frac{\alpha}{|B|} \sum_{x\in B} w_{\tau, x} \nabla_{\theta} f(x; \theta)$\;
}
\caption{Stochastic Hierarchical TERM}\label{alg:stochastic-TERM}
\end{algorithm}

Group-level tilting can be recovered from Algorithm~\ref{alg:batch-TERM} and \ref{alg:stochastic-TERM} by setting the inner-level tilt parameter $\tau=0$.  We apply TERM to a variety of machine learning problems; for clarity, we summarize the applications and their corresponding algorithms in Table~\ref{table: alg_choice} in the appendix.

\section{TERM in Practice: Use Cases}\label{sec:experiments}

We now showcase the  {flexibility}, {wide applicability}, and competitive performance of the TERM framework through empirical results on a variety of real-world problems such as handling outliers (Section~\ref{sec:exp:robustness}), ensuring fairness and improving generalization (Section~\ref{sec:exp:fairness}), and addressing compound issues (Section~\ref{sec:exp:multi_obj}). Despite the relatively straightforward modification TERM makes to traditional ERM, we show that $t$-tilted losses not only outperform ERM, but either outperform or are competitive with state-of-the-art, problem-specific tailored baselines on a wide range of applications. 
We provide implementation details in Appendix~\ref{appen:exp_detail}. 
All code, datasets, and experiments are publicly available at \href{https://github.com/litian96/TERM}{\texttt{github.com/litian96/TERM}}. {The applications explored are summarized in Table~\ref{table: applications} below.}

\begin{table}[h]
	\caption{{Summary of TERM applications.}}
	\centering
	\label{table: applications}
	\scalebox{0.91}{
	\begin{tabular}{llc} 
	       \toprule[\heavyrulewidth]
        \multicolumn{2}{c}{\textbf{Applications}}  & \textbf{Sections} \\
        \midrule
        \multirow{3}{*}{Mitigating noisy outliers ($t<0$)} &	Robust regression &	 Sec.~\ref{sec:exp:robustness:regression} \\
         &	Robust classification &	 Sec.~\ref{sec:exp:robustness:classification} \\
         &	Low-quality annotators & Sec.~\ref{sec:exp:noisy_annotator} \\
        \hline
        \multirow{5}{*}{Fairness and generalization ($t>0$)} & Fair PCA & Sec.~\ref{sec:exp:fairness:pca} \\
        & Fair federated learning  & Sec.~\ref{sec:exp:fairness:fl} \\
         & Fair meta-learning  & Sec.~\ref{sec:exp:fairness:meta} \\
         & Handling class imbalance  & Sec.~\ref{sec:exp:class_imbalance} \\
         & Improving generalization via variance reduction  & Sec.~\ref{sec:variance_reduction} \\
        \hline
        \multirow{2}{*}{Hierarchical multi-objective tilting} & Class imbalance and random noise &  Sec.~\ref{sec:exp:hierarchical1} \\
         & Class imbalance and adversarial noise &  Sec.~\ref{sec:exp:hierarchical2} \\
    \bottomrule[\heavyrulewidth]
	\end{tabular}}
\end{table}

\paragraph{{Choosing $t$.}}{{In applications when we consider tradeoffs between different objectives (e.g., fair
meta-learning and federated learning), we perform a grid search over $t$ from \{0.1, 1, 2, 5, 10, 50, 100, 200\} on the validation set and pick the one with the best fairness performance while not degrading mean performance. When there is not a single t dominating other values (e.g., fair PCA), we report results under different values of $t$.}  
In our initial robust regression experiments, we find that the performance is robust to various $t$'s, and we thus use a fixed $t=-2$ for all experiments involving negative $t$ (Section~\ref{sec:exp:robustness} and Section~\ref{sec:exp:multi_obj}).
For all values of $t$ tested, the number of iterations required to solve TERM is within 2$\times$  that of standard ERM, with the same per-iteration complexity.}

\subsection{Mitigating Noisy Outliers ($t<0$)} \label{sec:exp:robustness}

We begin by investigating TERM's ability to find robust solutions that  reduce the effect of noisy outliers. 
We note that we specifically focus on the setting of `robustness' involving random additive noise; the applicability of TERM to more adversarial forms of robustness would be an interesting direction of future work.
We do not compare with approaches that require additional clean validation data~\citep[e.g.,][]{roh2020fr,veit2017learning,hendrycks2018using, ren2018learning}, as such data can be costly to obtain in practice.

\subsubsection{Robust Regression} \label{sec:exp:robustness:regression}

\paragraph{Label noise.} We first consider a regression task with noise corrupted targets, where we aim to minimize the root mean square error (RMSE) on samples from the Drug Discovery dataset~\citep{olier2018meta,diakonikolas2019sever}. The task is to predict the bioactivities given a set of chemical   compounds. 
We compare against linear regression with an $L_2$ loss, which we view as the `standard' ERM solution for regression, as well as with losses commonly used to mitigate outliers---the $L_1$ loss and Huber loss~\citep{Huber-loss}. We also compare with consistent robust regression (CRR)~\citep{bhatia2017consistent} and STIR~\citep{pmlr-v89-mukhoty19a}, recent state-of-the-art methods specifically designed for label noise in robust regression. 
In this particular problem, TERM is equivalent to exponential squared loss, studied in~\citep{wang2013robust}.
We apply TERM at the sample level with an $L_2$ loss, and generate noisy outliers by assigning random targets drawn from $\mathcal{N}(5,5)$ on a fraction of the samples.

In Table~\ref{table: outlier}, we report RMSE on clean test data for each objective and  under different noise levels. We also present the performance of an oracle method (Genie ERM) which has access to all of the clean data samples with the noisy samples removed. {\it Note that Genie ERM is not a practical algorithm and is solely presented to set the expected performance limit in the noisy setting}. The results indicate that TERM is competitive with baselines on the 20\% noise level, and achieves better robustness with moderate-to-extreme noise. We observe similar trends in  scenarios involving both noisy features and targets  (Appendix~\ref{appen:complete_exp}).  CRR tends to run slowly as it scales cubicly with the number of dimensions~\citep{bhatia2017consistent}, while solving TERM is roughly as efficient as ERM.

\begin{table}[h!]
	\caption{TERM is competitive with robust \textit{regression} baselines, particularly in high noise regimes.}
	\centering
	\label{table: outlier}
	\scalebox{0.88}{
	\begin{tabular}{lcccc} 
	       \toprule[\heavyrulewidth]
        \multicolumn{1}{l}{\multirow{2}{*}{\textbf{objectives}}} & \multicolumn{3}{c}{\textbf{test RMSE} ({\fontfamily{qcr}\selectfont{Drug Discovery}})} \\
        \cmidrule(r){2-4}
          &   20\% noise & 40\% noise & 80\% noise\\
        \midrule
        \rule{0pt}{2ex}ERM  &	1.87 {(.05)}  &	2.83  {(.06)}	& 4.74 {(.06)} \\
        $L_1$ & 	\textbf{1.15} { (.07)}  &	1.70  {(.12)}	& 4.78 {(.08)} \\
        Huber~\citep{Huber-loss} & 	\textbf{1.16} {(.07)}  &	1.78  {(.11)}	& 4.74 {(.07)} \\
         
         STIR~\citep{pmlr-v89-mukhoty19a} & {\bf 1.16} {(.07)} & 1.75 {(.12)} & 4.74 {(.06)} \\
         
         CRR~\citep{bhatia2017consistent}  &	\textbf{1.10} {(.07)}  &	1.51  {(.08)}	& 4.07 {(.06)} \\
          
          \rowcolor{myblue}
        TERM & 	\textbf{1.08} {(.05)}  &	\textbf{1.10}  {(.04)}	& \textbf{1.68} {(.03)} \\
        \hline
        \rule{0pt}{2ex}Genie ERM  &	1.02 {(.04)}  &	1.07  {(.04)} &	1.04 {(.03)} \\
    \bottomrule[\heavyrulewidth]
	\end{tabular}}
\end{table}

\paragraph{Label and feature noise.} Here, we present results involving both feature noise and target noise.
We investigate the performance of TERM on two datasets (cal-housing~\citep{pace1997sparse} and abalone~\citep{dua2019uci}) used in~\citet{yu2012polynomial}. Both datasets have features with 8 dimensions. We generate noisy samples following the setup in~\citet{yu2012polynomial}---sampling 100 training samples, and randomly corrupting 5\% of them by multiplying their features by 100 and multiply their targets by 10,000.  From Table~\ref{table: outlier_setting2} below, we see that TERM significantly outperforms the baseline objectives in the noisy regime on both datasets.

\begin{table}[h]
	\caption{An alternative noise setup involving both feature and label noise. Similarly, TERM with $t=-2$ significantly outperforms several baseline objectives for noisy outlier mitigation.
	}
	\centering
	\label{table: outlier_setting2}
	\scalebox{0.85}{
	\begin{tabular}{ l  ll | ll} 
			\toprule[\heavyrulewidth]
			 \multicolumn{1}{l}{\multirow{2}{*}{ \textbf{objectives}}} & \multicolumn{2}{c}{\textbf{test RMSE} ({\fontfamily{qcr}\selectfont{cal-housing}})}  & \multicolumn{2}{c}{\textbf{test RMSE}  ({\fontfamily{qcr}\selectfont{abalone}})}  \\
			 \cmidrule(r){2-5}
            &      clean         & noisy     & clean             & noisy  \\
            \hline
            ERM	   &     0.766 {\tiny(0.023)}	& 239   {\tiny(9)}  &  2.444 {\tiny (0.105)}	&  1013  {\tiny (72)  }  \\
            $L_1$	   &     0.759 {\tiny(0.019)}	& 139   {\tiny(11)} &   2.435 {\tiny (0.021)}	&  1008  {\tiny (117) }  \\
            Huber~\citep{Huber-loss}	   &     0.762 {\tiny(0.009)}	& 163   {\tiny(7)}  &  2.449 {\tiny (0.018)}	&  922   {\tiny (45) } \\
            CRR~\citep{bhatia2017consistent} & 0.766 {\tiny (0.024)} & 245 {\tiny (8)} & 2.444 {\tiny (0.021)} & 986 {\tiny (146)} \\
            \rowcolor{myblue}
            TERM 	   &      0.745 {\tiny(0.007)}	& {\textbf{0.753} {\tiny(0.016)}}	 &  2.477 {\tiny (0.041)}	&  \textbf{2.449} {\tiny (0.028)} \\
            \hline
            Genie ERM & 0.766 {\tiny(0.023)} & 0.766 {\tiny (0.028)} & 2.444 {\tiny (0.105)} & 2.450 {\tiny (0.109)} \\
			\bottomrule
	\end{tabular}}
\end{table}

\paragraph{Unstructured random v.s. adversarial noise.} 

As a word of caution, we note that the experiments thus far have focused on random noise. This makes it possible for the methods to find the underlying structure of clean data even if the majority of the samples are noisy outliers. To gain more intuition on these cases, we generate synthetic two-dimensional data points and test the performance of TERM under 0\%, 20\%, 40\%, and 80\% noise for linear regression. TERM with $t=-2$ performs well in all noise levels (Figure~\ref{fig:robust_regression_synthetic} and \ref{fig:robust_regression_synthetic2}). However, as one might expect, TERM with negative $t$'s could potentially overfit to outliers if they are constructed in an adversarial way. In the examples shown in Figure~\ref{fig:adversarial_noise}, under 40\% noise and 80\% noise, TERM has a high error measured on the clean data (green dots).

\begin{figure}[h!]
    \centering
    \includegraphics[width=0.97\textwidth]{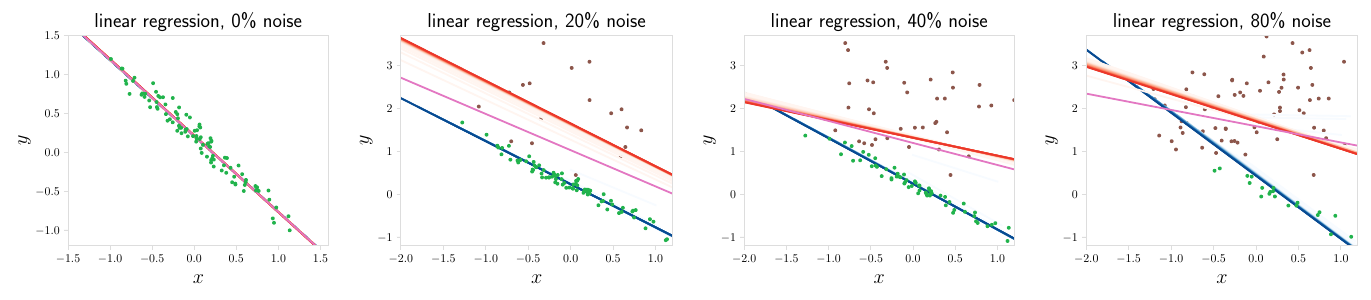}
    \vspace{-0.2in}
    \caption{{Robust regression on synthetic data with random noise where the mean of the noisy samples is different from that of clean ones. TERM with negative $t$'s (blue, $t=-2$) can fit structured clean data at all noise levels, while ERM (purple) and TERM with positive $t$'s (red) overfit to corrupted data. We color inliers in green and outliers in brown for visualization.}}
    \label{fig:robust_regression_synthetic}
    \vspace{-0.1in}
\end{figure}

\begin{figure}[h!]
    \centering
    \includegraphics[width=0.97\textwidth]{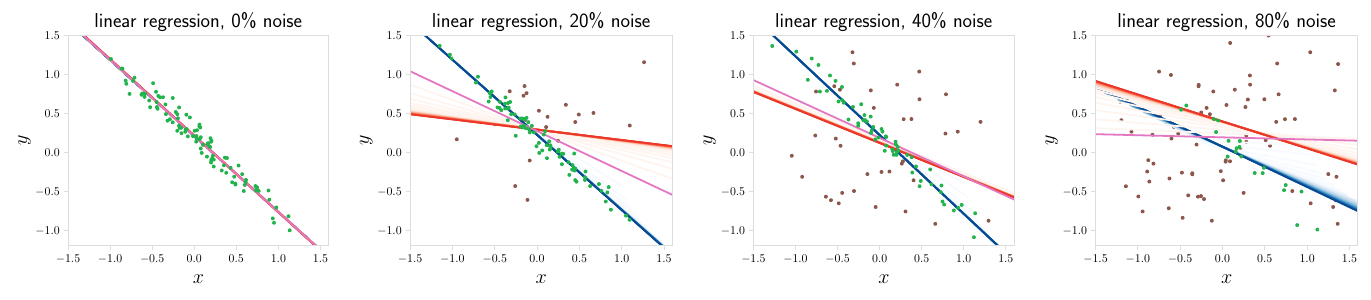}
    \vspace{-0.2in}
    \caption{{In the presence of random noise with the same mean as that of clean data, TERM with negative $t$'s (blue) can still surpass outliers in all cases, while ERM (purple) and TERM with positive $t$'s (red) overfit to corrupted data.  While the performance drops for 80\% noise, TERM can still learn useful information, and achieves much lower error than ERM.}}
    \label{fig:robust_regression_synthetic2}
\end{figure}

\begin{figure}[h!]
    \centering
    \includegraphics[width=1.0\textwidth]{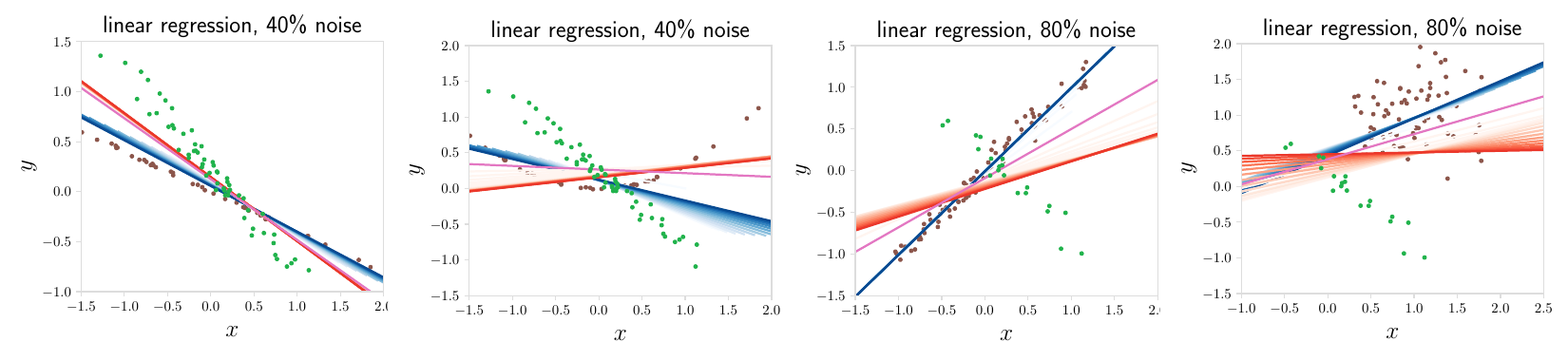}
    \vspace{-0.2in}
    \caption{{TERM with negative $t$'s (blue) cannot fit clean data if the noisy samples (brown) are adversarial or structured in a manner that differs substantially from the underlying true distribution.}}
    \label{fig:adversarial_noise}
\end{figure}

\subsubsection{Robust Classification} \label{sec:exp:robustness:classification}

Deep neural networks can easily overfit to corrupted labels~\cite[e.g.,][]{zhang2016understanding}. While the theoretical properties we study for TERM (Section~\ref{sec:term_properties}) do not directly cover objectives with neural network function approximations, we show that TERM can be applied empirically to DNNs to achieve robustness to noisy training labels. 
MentorNet~\citep{jiang2018mentornet} is a popular method in this setting, which learns to assign weights to samples based on feedback from a student net. Following the setup in~\citet{jiang2018mentornet}, we explore classification on CIFAR10~\citep{krizhevsky2009learning} when a fraction of the training labels are corrupted with uniform noise---comparing TERM with ERM and several state-of-the-art approaches~\citep{kumar2010self,ren2018learning,zhang2018generalized,krizhevsky2009learning}.  As shown in Table~\ref{table:label_noise_cnn}, TERM performs competitively with 20\% noise, and outperforms all baselines in the high noise regimes.  We use MentorNet-PD as a baseline since it does not require clean validation data. In Appendix~\ref{appen:complete_exp}, we  show that TERM also matches the performance of MentorNet-DD, which requires clean validation data. 
{To help reason about the performance of TERM, we also explore a simpler, two-dimensional logistic regression problem in Figure~\ref{fig:robust_classification_synthetic}, Appendix~\ref{appen:complete_exp}, finding that TERM with $t$=$-2$ is similarly robust across the considered noise regimes.}

\begin{table}[h!]
\caption{TERM is competitive with robust \textit{classification} baselines, and is superior in high noise regimes.}
\centering
\label{table:label_noise_cnn}
\scalebox{0.92}{
\begin{tabular}{lccc}
	   \toprule[\heavyrulewidth]
        \multicolumn{1}{l}{\multirow{2}{*}{\textbf{objectives}}} & \multicolumn{3}{c}{\textbf{test accuracy} ({\fontfamily{qcr}\selectfont{CIFAR10}}, Inception)} \\
        \cmidrule(r){2-4}
          &  20\% noise & 40\% noise & 80\% noise\\
        \midrule
    ERM & 0.775 {(.004)} &  0.719 {(.004)} & 0.284 {(.004)}  \\
 RandomRect~\citep{ren2018learning} & 0.744 {(.004)} & 0.699 {(.005)} & 0.384 {(.005)}  \\
SelfPaced~\citep{kumar2010self} & 0.784 {(.004)} & 0.733 {(.004)} & 0.272 {(.004)}  \\
 MentorNet-PD~\citep{jiang2018mentornet} & 0.798 {(.004)} & 0.731 {(.004)} & 0.312 {(.005)} \\
GCE~\citep{zhang2018generalized} & \textbf{0.805} {(.004)}  & 0.750 {(.004)} & 0.433 {(.005)} \\
 \rowcolor{myblue}
 TERM & 0.795 {(.004)}  & \textbf{0.768} {(.004)} & \textbf{0.455} {(.005)} \\
 \hline 
Genie ERM & 0.828 {(.004)} & 0.820 {(.004)} & 0.792 {(.004)} \\
\bottomrule[\heavyrulewidth]
\end{tabular}}
\end{table}

\subsubsection{Low-Quality Annotators}
\label{sec:exp:noisy_annotator}

It is not uncommon for practitioners to obtain human-labeled  data for their learning tasks from crowd-sourcing platforms.  
However, these labels are usually noisy in part due to the varying quality of the human annotators. Given a collection of labeled samples from crowd-workers, we aim to learn statistical models that are robust to the potentially low-quality annotators. 
As a case study, following the setup of~\citep{khetan2017learning}, we take the CIFAR-10 dataset and simulate 100 annotators where 20 of them are {\it hammers} (i.e., always correct)  and 80 of them are {\it spammers} (i.e., assigning labels uniformly at random). We apply TERM at the annotator group level in~\eqref{eq: class-TERM}, which is equivalent to assigning annotator-level weights based on the aggregate value of their loss. As shown in Figure~\ref{fig:noisy_annotator}, TERM is able to achieve the test accuracy limit set by \textit{Genie ERM}, i.e., \textit{the ideal performance obtained by completely removing the known outliers}. 
We note in particular that the accuracy reported by~\citep{khetan2017learning} (0.777) is lower than TERM (0.825) in the same setup, even though their approach is a two-pass algorithm requiring at least to double the training time. We provide full empirical details and investigate additional noisy annotator scenarios in Appendix~\ref{appen:complete_exp}.

\begin{figure}[h!]
    \centering
    \includegraphics[width=0.5\linewidth]{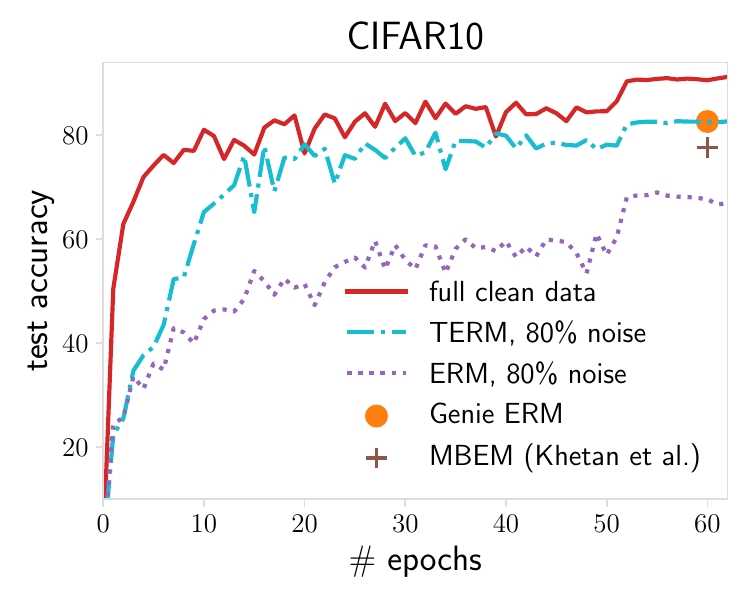}
    \captionof{figure}{TERM ($t{=}{-}2$) completely removes the impact of noisy annotators, reaching the performance limit set by Genie ERM.}
    \label{fig:noisy_annotator}
\end{figure}

\subsection{Fairness and  Generalization ($t>0$)}\label{sec:exp:fairness}

In this section, we show that positive values of $t$ in TERM can help promote fairness via learning fair representations and enforcing fairness during optimization, and offer variance reduction for better generalization.

\subsubsection{Fair Principal Component Analysis (PCA)} \label{sec:exp:fairness:pca}

We explore the flexibility of TERM in learning fair representations using PCA. 
In fair PCA, the goal is to learn low-dimensional representations which are fair to all considered subgroups (e.g., yielding similar reconstruction errors)~\citep{samadi2018price, tantipongpipat2019multi, kamani2019efficient}. 
Despite the non-convexity of the fair PCA problem, we apply TERM to this task, referring to the resulting objective as TERM-PCA. We tilt the same loss function as in~\citet{samadi2018price}: 
$f(X; U) = \frac{1}{|X|}\left(\|X-XUU^\top \|_{F}^2- \|X-\hat{X}\|_{F}^2\right) \, ,$
where $X \in \mathbb{R}^{n \times d}$ is a subset (group) of data, $U \in \mathbb{R}^{d \times r}$ is the current projection, and $\hat{X} \in \mathbb{R}^{n \times d}$ is the optimal rank-$r$ approximation of $X$.
Instead of solving a more complex min-max problem using semi-definite programming as in~\citet{samadi2018price}, which scales poorly with problem dimension,
we apply gradient-based methods,  re-weighting the gradients at each iteration based on the loss on each group. In Figure~\ref{fig:pca}, we plot the aggregate loss for two groups (high vs. low education) in the Default Credit dataset~\citep{yeh2009comparisons} for  different target dimensions $r$.
By varying $t$, we achieve varying degrees of performance improvement on different groups---TERM ($t=200$) recovers the min-max results of~\citep{samadi2018price} by forcing the losses on both groups to be (almost) identical, while TERM ($t=10$) offers the flexibility of reducing the performance gap less aggressively. {We also provide convergence plots for different values of $t$ in this application (Figure~\ref{fig:pca_convergence}), and observe slower convergence for larger values of $t$, which is consistent with our analyses in Section~\ref{sec:term_properties} and~\ref{sec:solver}. However, we do not observe exponential dependence on $t$ from the convergence curves, which suggest that the theoretical dependence on $t$ in convergence proofs for the solvers may be an artifact of our proof techniques, and might possibly be further improved by other analysis techniques for typical practical use cases.} 

\begin{figure}[h!]
\centering
\begin{minipage}{.5\textwidth}
  \centering
  \includegraphics[width=0.96\linewidth]{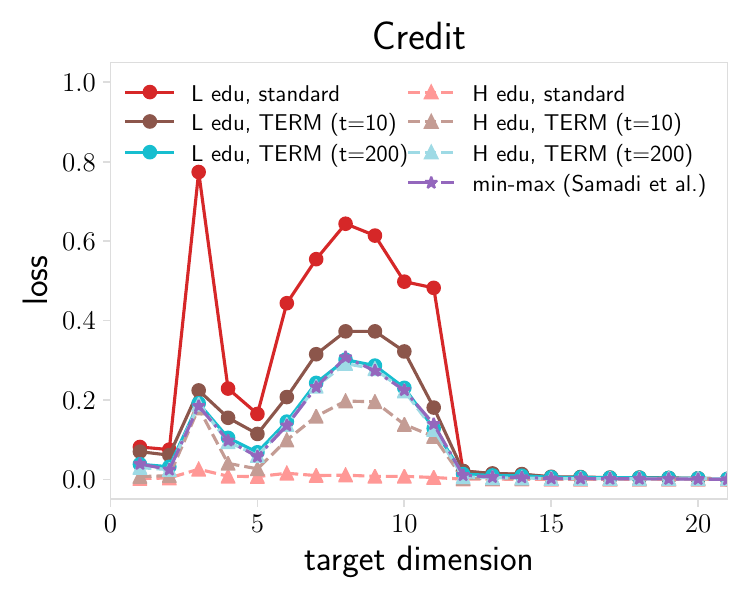}
  \captionof{figure}{TERM-PCA  flexibly trades the performance on the high (H) edu group for the performance on the low (L) edu group.}
  \label{fig:pca}
\end{minipage}%
\hfill
\begin{minipage}{.46\textwidth}
  \centering
  \includegraphics[width=0.92\linewidth]{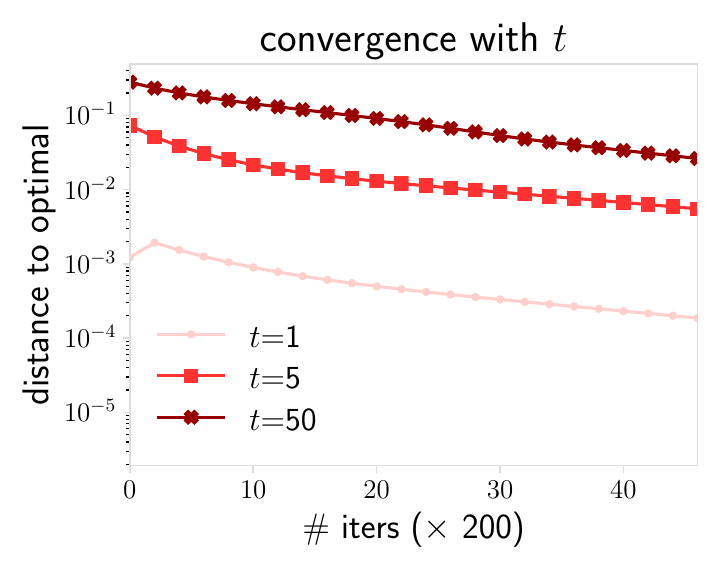}
  \captionof{figure}{{Convergence of TERM with respect to $t$ in fair PCA (target dimension=7). We tune optimal learning rates separately for each $t$. As $t$ increases, the convergence becomes slower, which validates our analyses in Section~\ref{sec:term_properties} and~\ref{sec:solver}.}}
  \label{fig:pca_convergence}
\end{minipage}
\end{figure}

\subsubsection{Fair Federated Learning} \label{sec:exp:fairness:fl}
Federated learning involves learning statistical models across massively distributed networks of remote devices or isolated organizations~\citep{mcmahan2016FedAvg,li2020survey}. Ensuring fair (i.e., uniform) performance distribution across the devices is a major concern in federated settings~\citep{mohri2019agnostic, li2019fair}, as using current approaches for federated learning (FedAvg~\citep{mcmahan2016FedAvg}) may result in highly variable performance across the network.~\citet{li2019fair} consider solving an alternate objective for federated learning, called $q$-FFL, to dynamically emphasize the worst-performing devices, which is conceptually similar to the goal of TERM, though it is applied specifically to the problem of federated learning and limited to the case of positive $t$. 
Here, we compare TERM with $q$-FFL in their setup on the vehicle dataset~\citep{duarte2004vehicle} consisting of data collected from 23 distributed sensors (hence 23 devices). We tilt the $L_2$ regularized linear SVM objective at the device level. At each communication round, we re-weight the accumulated local model updates from each selected device based on the weights estimated via Algorithm~\ref{alg:stochastic-TERM}. From Figure~\ref{fig:flearn}, we see that similar to $q$-FFL, TERM ($t=0.1$) can also significantly promote the accuracy on the worst device while maintaining the overall performance. 
The statistics of the accuracy distribution are reported in Table~\ref{table: fair_flearn} below.

\begin{figure}[h]
\centering
\begin{minipage}{.46\textwidth}
  \centering
  \includegraphics[width=0.92\linewidth]{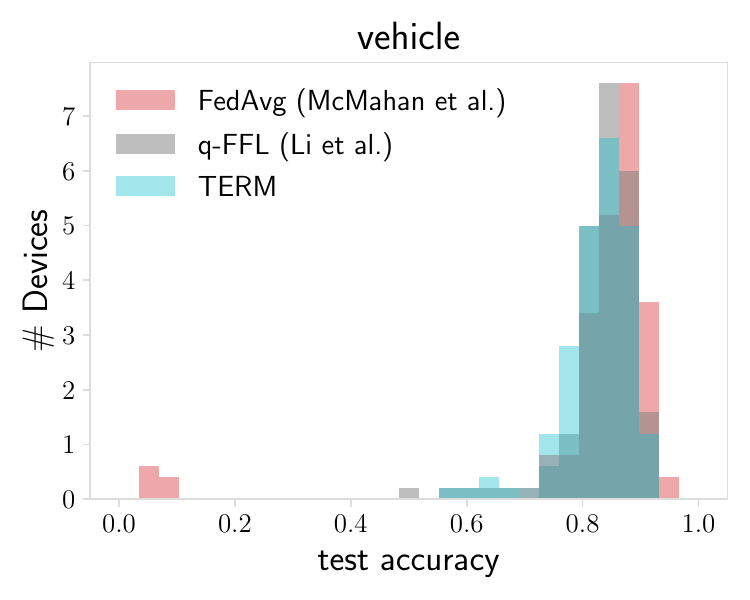}
  \captionof{figure}{TERM FL ($t = 0.1$) significantly increases the accuracy on the worst-performing device (similar to $q$-FFL) while obtaining a similar average accuracy.}
  \label{fig:flearn}
\end{minipage}%
\hfill
\begin{minipage}{.51\textwidth}
  \centering
  \captionof{table}{Both $q$-FFL and TERM can encourage more uniform accuracy distributions across the devices in
federated networks while maintaining similar average performance. {Numbers in the parentheses correspond to the standard error of each metric across 5 runs.}} 
\setlength{\tabcolsep}{3pt}
  \scalebox{0.85}{
 \begin{tabular}{l ccc} 
     \toprule[\heavyrulewidth]
       \multirow{2}{*}{\textbf{objectives}} & \multicolumn{3}{c}{\textbf{test accuracy}} \\
        \cmidrule(r){2-4}
		  & average &  worst 10\%  & stdev \\
		\hline
		 FedAvg & 0.853 {\tiny (.078)} & 0.421 {\tiny (.007)}  & 0.173 {\tiny  (.001)} \\
		$q$-FFL ($q=5$) & 0.862 {\tiny (.029)} & {\bf 0.704} {\tiny (.033)} & {\bf 0.064} {\tiny (.005)}\\
		\rowcolor{myblue}
		TERM ($t=0.1$) & 0.853 {\tiny (.027)} & {\bf 0.707} {\tiny (.009)} & {\bf 0.061} {\tiny (.003)} \\
       \bottomrule[\heavyrulewidth]
      \end{tabular}}
      \vspace{1em}
  \label{table: fair_flearn}
\end{minipage}
\vspace{-1em}
\end{figure}

\subsubsection{Fair Meta-Learning} \label{sec:exp:fairness:meta}

Meta-learning aims to learn a shared initialization across all tasks such that the initialization can quickly adapt to unseen tasks (i.e., meta-testing tasks) using a few samples. In practice, the resulting performance across meta-testing tasks can vary due to different data distributions associated with these tasks. One of the popular meta-learning methods is MAML~\citep{finn2017model}, whose objective is to minimize the sum of empirical losses across tasks $\{\mathcal{T}_i\}$ generated from $p(\mathcal{T})$ after one step of adaptation, i.e., $\min_{\theta} \sum_{\mathcal{T}_i \sim p(\mathcal{T})} f\left(\mathcal{T}_i; \theta-\alpha \nabla_{\theta} f(\mathcal{T}_i; \theta) \right)$. Previous works have proposed a min-max variant of MAML to encourage a more fair (uniform) performance distribution by optimizing the worst meta-training task called TR-MAML~\citep{collins2020task}. We apply TERM to MAML by replacing the ERM formulation with tilted losses. Following the setup in~\citet{collins2020task}, we evaluate TERM on the popular sin wave regression problem. For a fair comparison, we perform task-level tilting for TERM, and operates on task-level reweighting for TR-MAML.  From Table~\ref{table:fair_meta_learning}, we see that TERM with $t=2$ not only decreases the standard deviation of test errors, but also achieves lower mean errors than MAML. As the number of tasks is large (5,000), solving the min-max variant (TR-MAML) is challenging, and results in slightly worse performance than TERM.

\begin{figure}[h]
\centering
\begin{minipage}{.44\textwidth}
  \centering
  \includegraphics[width=0.96\linewidth]{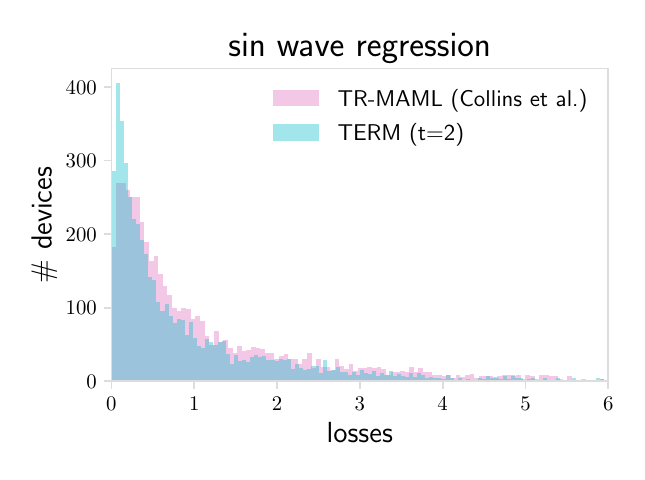}
  \captionof{figure}{{Loss distribution of TERM compared with the TR-MAML baseline.}}
  \label{fig:sin_loss_distribution}
\end{minipage}%
\hfill
\begin{minipage}{.52\textwidth}
  \centering
  \captionof{table}{TERM ($t=2$) results in fairer and lower test errors across meta-test tasks after adaptation compared with MAML~\citep{finn2017model}. TERM also outperforms a recently proposed min-max task-robust MAML method (TR-MAML)~\citep{collins2020task}.}
\setlength{\tabcolsep}{3pt}
  \scalebox{0.96}{
 \begin{tabular}{lcccc} 
     \toprule[\heavyrulewidth]
       methods & mean & std & max & worst 10\%  \\ \hline
       MAML & 1.23 &	1.63 &	19.1 & 5.16 \\
       TR-MAML 
       & 1.25	& 1.51	& 14.31  & 4.85 \\
       \rowcolor{myblue}
       TERM ($t=2$) & \textbf{1.14}	& \textbf{1.33}	& \textbf{13.59} &  \textbf{4.29} \\ 
       \bottomrule[\heavyrulewidth]
      \end{tabular}}
      \vspace{1em}
  \label{table:fair_meta_learning}
\end{minipage}
\end{figure}

\subsubsection{Handling Class Imbalance}
\label{sec:exp:class_imbalance}
Next, we show that TERM can reduce the performance variance across classes 
with extremely imbalanced data when training deep neural networks. 
We compare TERM with several baselines which re-weight samples during training, including assigning weights inversely proportional to the class size (InverseRatio), focal loss~\citep{lin2017focal}, HardMine~\citep{malisiewicz2011ensemble}, and LearnReweight~\citep{ren2018learning}.  Following the setting of~\citet{ren2018learning}, the datasets are composed of imbalanced $4$ and $9$ digits from MNIST~\citep{lecun1998gradient}. In Figure~\ref{fig:class_imabalance}, we see that TERM obtains similar (or higher) final accuracy on the clean test data as the state-of-the-art methods. We note that compared with LearnReweight, 
which optimizes the model over an additional balanced validation set and requires three gradient calculations for each update, TERM neither requires  such balanced  validation data nor does it increase the per-iteration complexity.

\begin{figure}[h!]
    \centering
    \vspace{-1em}
    \includegraphics[width=0.5\linewidth]{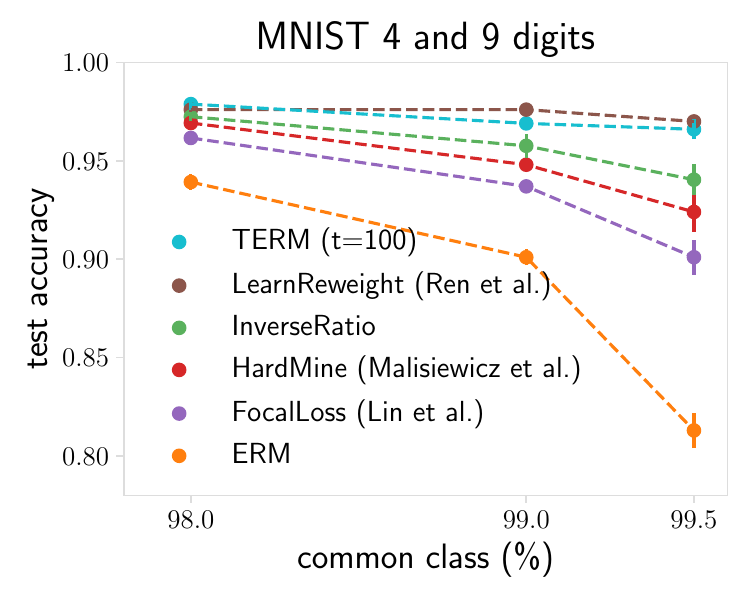}
    \caption{TERM ($t{=}100$) is competitive with state-of-the-art methods for classification with imbalanced classes.}
    \label{fig:class_imabalance}
\end{figure}

\subsubsection{Improving Generalization via Variance Reduction} \label{sec:variance_reduction}

\setlength{\tabcolsep}{3pt}
\begin{table}[b!]
	\caption{TERM ($t=0.1$) is competitive with strong baselines in generalization. TERM ($t=50$) outperforms ERM$_{+}$ (with decision threshold changed for providing fairness) and is competitive with RobustRegRisk$_{+}$ with no need for extra hyperparameter tuning. 
	}
	\centering
	\label{table: dro}
\scalebox{0.84}{
	\begin{tabular}{ l  ll  ll  ll }
			\toprule[\heavyrulewidth]
        \multicolumn{1}{l}{\multirow{2}{*}{ \textbf{objectives}}} 
			& \multicolumn{2}{c}{\textbf{accuracy} ($Y=0$)}  & \multicolumn{2}{c}{\textbf{accuracy} ($Y=1$)} & \multicolumn{2}{c}{\textbf{overall accuracy} (\%)}    \\
            \cmidrule(r){2-7}
             & train & test & train & test & train & test \\
            \hline 
           \rule{0pt}{2ex}ERM   & 0.841 {\tiny (.005)} &	0.822 {\tiny (.009)} &	0.971 {\tiny (.000)} &	0.966 {\tiny (.002)}  &	0.944 {\tiny (.000)} &	0.934 {\tiny (.003)} \\
            Linear SVM & 0.873 {\tiny (.003)}	& {\bf 0.838} {\tiny (.013)}	& 0.965 {\tiny (.000)}	& 0.964 {\tiny (.002)}	& 0.951 {\tiny (.001)}	& \textbf{0.937} {\tiny (.004)} \\
            {CVaR~\citep{rockafellar2000optimization}} & {0.877 {\tiny (.004)}} & {{\bf 0.844} {\tiny (.013)}} & {0.972 {\tiny (.000)}} & {0.964 {\tiny (.003)}} & {0.952 {\tiny (.001)}} & {{\bf 0.937} {\tiny (.003)}}  \\
            LearnReweight~\citep{ren2018learning}  & 0.860 {\tiny (.004)} & \textbf{0.841} {\tiny (.014)} & 0.960 {\tiny (.002)} & 0.961 {\tiny (.004)} & 0.940 {\tiny (.001)} & 0.934 {\tiny (.004)}\\
            FocalLoss~\citep{lin2017focal} & 0.871 {\tiny  (.003) } &	{\bf 0.834} {\tiny (.013)} &	0.970 {\tiny (.000)} &	0.966 {\tiny (.003)} &	0.949 {\tiny (.001)} &	\textbf{0.937} {\tiny (.004)} \\
            HRM~\citep{leqi2019human}  & 0.875 {\tiny (.003)}  & {\bf 0.839} {\tiny (.012)} & 0.972 {\tiny (.000)} & 0.965 {\tiny (.003)} & 0.952 {\tiny (.001)} & {\bf 0.937} {\tiny (.003)} \\
            RobustRegRisk
            (\hyperlink{cite.duchi2019variance}{Duchi et al., 2019})
            & 0.875 {\tiny (.003)} &   \textbf{0.844} {\tiny (.010)} &	0.971 {\tiny (.000)} &	0.966 {\tiny (.003)}  &	0.951 {\tiny (.001)} &	\textbf{0.939} {\tiny (.004)} \\
            \rowcolor{myblue}
            TERM ($t=0.1$) & 0.864 {\tiny (.003)} &	\textbf{0.840}  {\tiny (.011)} &	0.970 {\tiny (.000)} &	0.964 {\tiny (.003)} &	0.949 {\tiny (.001)} &	\textbf{0.937} {\tiny (.004)} \\
            \hline
           \rule{0pt}{2ex}ERM$_{+}$ (thresh = 0.26)  & 0.943 {\tiny (.001)} &	0.916 {\tiny (.008)} &	0.919 {\tiny (.001)}  & 0.917 {\tiny (.003)}  & 0.924 {\tiny (.001)}  &	0.917 {\tiny (.002)} \\
           RobustRegRisk$_{+}$ (thresh=0.49) & 0.943 {\tiny (.000)} &	0.917 {\tiny (.005)} &	0.928 {\tiny (.001)} &	{\bf 0.928} {\tiny (.002)} &	0.931 {\tiny (.001)} &	\textbf{0.924} {\tiny (.001)} \\
           \rowcolor{myblue}
      TERM ($t = 50$) &  0.942 {\tiny(.001)} &   0.917 {\tiny (.005)} &	0.926 {\tiny (.001)} &  {\bf 0.925} {\tiny(.002)}   &	0.929 {\tiny(.001)}	 &  \textbf{0.924} {\tiny (.001)} \\
			\bottomrule
	\end{tabular}}
\end{table}

A common alternative to ERM is to consider a distributionally robust objective, which optimizes for the worst-case training loss over a set of distributions, and has been shown to offer variance-reduction properties that benefit generalization~\citep[e.g.,][]{duchi2019variance,sinha2017certifying,chen2020distributionally,duchi2018learning}.
While not directly developed for distributional robustness, TERM also enables variance reduction for positive values of $t$ (Theorem~\ref{thm: variance-reduction}), which can be used to strike a better bias-variance trade-off for generalization. 
We compare TERM with several baselines including robustly regularized risk (RobustRegRisk)~\citep{duchi2019variance}, linear SVM~\citep{ren2018learning}, {Conditional Value-at-Risk (CVaR)~\citep{rockafellar2000optimization,soma2020statistical}},  LearnRewight~\citep{ren2018learning}, FocalLoss~\citep{lin2017focal}, and HRM~\citep{leqi2019human} on the HIV-1 dataset~\citep{hiv1,dua2019uci} originally investigated by~\citet{duchi2019variance}. We  examine the accuracy on the rare class ($Y=0$), the common class ($Y=1$), and overall accuracy. 

The mean and standard error of accuracies are reported in Table~\ref{table: dro}. RobustRegRisk and TERM offer similar performance improvements compared with other baselines, such as linear SVM, {CVaR}, LearnRewight, FocalLoss, and HRM. {Note that here RobustRegRisk~\citep{duchi2019variance} and CVaR~\citep{rockafellar2000optimization} can both be viewed as specific instances of the distributionally robust optimization framework, with different uncertainty sets.} For larger $t$, TERM achieves similar accuracy in both classes, while RobustRegRisk does not show similar trends by sweeping its hyperparameters. 
It is common to adjust the decision threshold to boost the accuracy on the rare class. 
We do this for ERM and RobustRegRisk and optimize the threshold so that ERM$_{+}$ and RobustRegRisk$_{+}$ result in the same validation accuracy on the rare class as TERM ($t=50$). TERM achieves similar performance to RobustRegRisk$_{+},$ without 
the need for an extra tuned hyperparameter.

\subsection{Solving Compound Issues: Hierarchical Multi-Objective Tilting}\label{sec:exp:multi_obj}

Finally, in this section, we focus on settings where multiple issues, e.g., class imbalance and label noise, exist in the data simultaneously. We discuss two possible instances of hierarchical multi-objective TERM to tackle such problems. One can think of other variants in this hierarchical tilting space which could be useful depending on applications at hand. 


\subsubsection{Class Imbalance and Random Noise} \label{sec:exp:hierarchical1}
We explore the  HIV-1 dataset~\citep{hiv1}, as in Section~\ref{sec:exp:fairness}. 
We report both overall accuracy and accuracy on the rare class in four  scenarios: {\bf (a) clean and 1:4}, the original dataset that is naturally slightly imbalanced with rare samples represented 1:4 with respect to the common class; {\bf (b) clean and 1:20}, where we subsample to introduce a 1:20 imbalance ratio; {\bf (c) noisy and 1:4}, which is the original dataset with labels associated with 30\% of the samples randomly reshuffled; and {\bf (d) noisy and 1:20}, where 30\% of the labels of the 1:20  imbalanced dataset are reshuffled.

\setlength{\tabcolsep}{3pt}
\begin{table}[h!]
\centering
\caption{Hierarchical TERM can address both class imbalance and noisy samples.}
\label{table:hierarchical}
\scalebox{0.68}{
\begin{tabular}{lcc|cc|cc|cc}
\toprule[\heavyrulewidth]
\multicolumn{1}{l}{\multirow{3}{*}{\textbf{objectives}}} & 
\multicolumn{8}{c}{\textbf{test accuracy} ({\fontfamily{qcr}\selectfont{HIV-1}})} \\
\cmidrule(r){4-7}
\multicolumn{1}{c}{~}& \multicolumn{4}{c}{clean data}  & \multicolumn{4}{c}{30\% noise} \\
\cmidrule(r){3-4}\cmidrule(r){7-8}
\multicolumn{1}{c}{~}& \multicolumn{2}{c}{1:4}  & \multicolumn{2}{c}{1:20} & \multicolumn{2}{c}{1:4} & \multicolumn{2}{c}{1:20} \\
\cmidrule(r){2-9}
  & $Y=0$ & overall & $Y=0$ & overall  & $Y=0$ & overall  & $Y=0$ & overall  \\
\hline
\rule{0pt}{2ex}ERM    & 0.822 {\tiny (.009)} &   0.934 {\tiny (.003)} & 0.503 {\tiny (.013)} & 0.888 {\tiny (.006)} & 0.656 {\tiny (.014)} & 0.911 {\tiny (.006)} & 0.240 {\tiny (.018)} & 0.831 {\tiny (.011)}\\ 
{CVaR~\citep{rockafellar2000optimization}} & {{\bf 0.844} {\tiny (.013)}} & {{\bf 0.937} {\tiny (.003)}} & {0.621 {\tiny (.011)}} & {0.906 {\tiny (.005)}} & {0.651 {\tiny (.015)}} & { 0.909 {\tiny (.006)}} & {0.252 {\tiny (.014)}} & {0.834 {\tiny (.010)}} \\
GCE~\citep{zhang2018generalized} & 0.822 {\tiny (.009)} &   0.934 {\tiny (.003)} & 0.503 {\tiny (.013)} & 0.888 {\tiny (.006)} & 0.732 {\tiny (.021)} &  {\bf 0.925} {\tiny (.005)} & 0.324 {\tiny (.017)}  & 0.849 {\tiny (.008)}\\
LearnReweight~\citep{ren2018learning} & \textbf{0.841} {\tiny (.014)} & 0.934 {\tiny (.004)} & 0.800 {\tiny (.022)} & 0.904 {\tiny (.003)} & 0.721 {\tiny (.034)} & 0.856 {\tiny (.008)} & 0.532 {\tiny (.054)} & 0.856 {\tiny (.013)} \\
RobustRegRisk (\hyperlink{cite.duchi2019variance}{Duchi et al., 2019}) &  {\bf 0.844}  \tiny (.010) & {\bf 0.939} {\tiny (.004)} & 0.622 {\tiny (.011)} & 0.906 {\tiny (.005)} & 0.634 {\tiny (.014)} & 0.907 {\tiny (.006)} & 0.051 {\tiny (.014)} & 0.792 {\tiny (.012)} \\
FocalLoss~\citep{lin2017focal}  &  {\bf 0.834} {\tiny (.013)} &  {\bf 0.937} {\tiny (.004)} & {\bf 0.806} {\tiny (.020)} & {\bf 0.918} {\tiny (.003)} & 0.638 {\tiny (.008)} & 0.908 {\tiny (.005)} & 0.565 {\tiny (.027)} & {\bf 0.890} {\tiny (.009)}\\
HAR~\citep{cao2021heteroskedastic}  & {\bf 0.842} {\tiny (.011)} & 0.936 {\tiny (.004)} & 0.817 {\tiny (.013)} & {\bf 0.926} {\tiny (.004)} & {\bf 0.870} {\tiny (.010)} & 0.915 {\tiny (.004)} & {\bf 0.800} {\tiny (.016)} & 0.867 {\tiny (.012)} \\
\rowcolor{myred}
TERM$_{sc}$ & {\bf 0.840}  {\tiny (.010)} &	{\bf 0.937} {\tiny (.004)} & {\bf 0.836} {\tiny (.018)} & {\bf 0.921} {\tiny (.002)} & {\bf 0.852} {\tiny (.010)} & {\bf 0.924} {\tiny (.004)} & {\bf 0.778} {\tiny (.008)} & {\bf 0.900} {\tiny (.005)} \\
\hline
\rowcolor{myblue}
 TERM$_{ca}$ & {\bf 0.844} {\tiny (.014)} & {\bf 0.938} {\tiny (.004)} & {\bf 0.834} {\tiny (.021)} & {\bf 0.918} {\tiny (.003)} & {\bf 0.846} {\tiny (.015)} & {\bf 0.933} {\tiny (.003)} & {\bf 0.806} {\tiny (.020)} & {\bf 0.901} {\tiny (.010)}\\
\bottomrule
\end{tabular}
}
\end{table}

In Table~\ref{table:hierarchical}, hierarchical TERM is applied at the sample level and class level (TERM$_{sc}$), where we use the sample-level tilt of $\tau{=}{-}2$ for noisy data. We use class-level tilt of $t{=}0.1$ for the 1:4 case and $t{=}50$ for the 1:20 case. We compare against baselines for robust classification and class imbalance (discussed previously in Sections~\ref{sec:exp:robustness} and~\ref{sec:exp:fairness}), where we tune them for best performance (Appendix~\ref{appen:exp_detail}). Similar to the experiments in Section~\ref{sec:exp:robustness}, we avoid using baselines that require clean validation data~\cite[e.g.,][]{roh2020fr}. We compare TERM with an additional baseline of HAR~\citep{cao2021heteroskedastic}, a recent work addressing the issues of noisy and rare samples simultaneously with adaptive Lipschitz regularization. While different baselines (except HAR) perform well in their respective problem settings, TERM and HAR are far superior to all baselines when considering noisy samples and class imbalance simultaneously (rightmost column in Table~\ref{table:hierarchical}). Finally, in the last row of Table~\ref{table:hierarchical}, we simulate the noisy annotator setting of Section~\ref{sec:exp:noisy_annotator} assuming that the data is coming from 10 annotators, i.e., in the 30\% noise case we have 7 hammers and 3 spammers. In this case, we apply hierarchical TERM at both class and annotator levels (TERM$_{ca}$), where we perform the higher level tilt at the annotator (group) level and the lower level tilt at the class level (with no sample-level tilting). We show that this approach can benefit noisy/imbalanced data even further (far right, Table~\ref{table:hierarchical}), while suffering only a small performance drop on the clean and noiseless data (far left, Table~\ref{table:hierarchical}).

\subsubsection{Class Imbalance and Adversarial Noise} \label{sec:exp:hierarchical2}

We evaluate hierarchical tilting on a more difficult task involving more adversarial noise with deep neural network models. We take the setup studied in~\citet{cao2021heteroskedastic}. The noise is created by exchanging labels of 40\% samples which come from similar classes (`cat' and `dog', `vehicle' and `automobile') in the CIFAR10 dataset. To simulate class imbalance, only 10\% of the training data from these four noisy classes are subsampled. For TERM, we apply group-level positive tilting by linearly scaling $t$ from 0 to 3, and perform sample-level negative tilting within each class with $\tau$ scaling from 0 to -2. Table~\ref{table:hierarchical_setup2} reports the results of hierarchical TERM (TERM$_{sc}$) compared with HAR~\citep{cao2021heteroskedastic} and other baselines. We see that TERM underperforms HAR, 
and outperforms all other approaches. Note that HAR is a more complicated method which requires to perform end-to-end training for two times with higher per-iteration complexity (involving second-order information), while TERM is a simple method and enjoys the same training time as that of ERM on this problem.

\begin{table}[h]
\caption{TERM outperforms most baselines addressing the co-existence of noisy samples and class imbalance by a large margin, and is worse than a more complicated method HAR.}
\centering
\label{table:hierarchical_setup2}
\scalebox{0.92}{
\begin{tabular}{lccc}
	   \toprule[\heavyrulewidth]
        \multicolumn{1}{l}{\multirow{2}{*}{\textbf{objectives}}} & \multicolumn{2}{c}{\textbf{test accuracy} ({\fontfamily{qcr}\selectfont{CIFAR10}}, ResNet32)} \\
        \cmidrule(r){2-3}
          & noisy, rare class & clean, common class \\
        \midrule
    ERM & 0.529 {(.012)} & \textbf{0.944} (.001) \\
     GCE~\citep{zhang2018generalized} & 0.482 {(.006)} & 0.916 {(.003)} \\
     MentorNet~\citep{jiang2018mentornet} & 0.541 (.010)  & 0.903 (.005)  \\
 MW-Net~\citep{sun2019mw-net} & 0.554 {(.011)} & 0.917 {(.005)} \\
HAR~\citep{cao2021heteroskedastic} & \textbf{0.635} {(.008)}  & \textbf{0.943} {(.002)}  \\
 \rowcolor{myblue}
 TERM$_{sc}$ & 0.585 {(.014)}  & {0.913} {(.003)} \\
\bottomrule[\heavyrulewidth]
\end{tabular}}
\end{table}

\section{Related Approaches in Machine Learning} \label{sec:background:related_work}


Here we discuss related problem-specific works in machine learning addressing deficiencies of ERM. We roughly group them into alternate aggregation schemes, alternate loss functions, and sample re-weighting schemes.

\paragraph{Alternate aggregation schemes.} 
A common alternative to the standard average loss in empirical risk minimization is to consider a {min-max} objective, which aims to minimize the max-loss. {Min-max} objectives are commonplace in machine learning, and have been used for a wide range of applications, such as ensuring fairness across subgroups~\citep{hashimoto2018fairness,mohri2019agnostic, stelmakh2019peerreview4all, samadi2018price,tantipongpipat2019multi,lahoti2020fairness}, enabling robustness under small perturbations~\citep{sinha2017certifying}, or generalizing to unseen domains~\citep{volpi2018generalizing}. As discussed in Section~\ref{sec:term_properties}, the TERM objective can be viewed as a minimax smoothing~\citep{kort1972new,pee2011solving} with the added flexibility of a tunable $t$ to allow the user to optimize utility for different quantiles of loss similar to superquantile approaches~\citep{rockafellar2000optimization, laguel2020device}, directly trading off between robustness/fairness and utility for positive and negative values of $t$ (see Section~\ref{sec:term_properties} for these connections). 
However, the TERM objective remains smooth (and efficiently solvable) for moderate values of $t$, resulting in faster convergence even when the resulting solutions are effectively the same as the min-max solution or other desired quantiles of the loss (as we demonstrate in the experiments of Section~\ref{sec:experiments}).  Interestingly, Cohen et al. introduce Simnets~\citep{cohen2014simnets,cohen2016deep}, with a similar exponential smoothing operator, though for a differing purpose of  achieving layer-wise operations {\it between} sum and max in deep neural networks.  

\paragraph{Alternate loss functions.} Rather than modifying the way the losses are aggregated, as in (smoothed) min-max or superquantile methods, it is also quite common to modify the losses themselves.
For example, in robust regression, it is common to consider losses such as the $L_1$ loss, Huber loss, or general $M$-estimators~{\citep{holland2019better}} as a way to mitigate the effect of outliers~\citep{bhatia2015robust}. \citet{wang2013robust} study a similar exponentially tilted loss for robust regression and characterize the break down point, though it is limited to the squared loss and only corresponds to $t{<}0$. Losses can also be modified to address outliers by favoring small losses~\citep{yu2012polynomial,zhang2018generalized} or gradient clipping~\citep{menon2020can}. Some works mitigate label noise by explicitly modeling noise distributions into end-to-end training combined with an additional noise model regularizer~\citep{jindal2016learning,jindal2019effective}. 
On the other extreme, the largest losses can  be magnified to encourage focus on hard samples~\citep{lin2017focal,wang2016adaptive, li2019fair}, which is a popular approach for curriculum learning. 
Constraints could also be imposed to promote fairness during the optimization procedure~\citep{hardt2016equality,donini2018empirical,rezaei2019fair,zafar2017fairness,baharlouei2019r,cotter2019optimization, lowy2021fermi, alghamdi2020model,zafar2019fairness,prost2019toward}. A line of work proposes $\alpha$-loss, which is able to promote fairness or robustness for classification tasks~\citep{sypherd2019tunable}.
Ignoring the log portion of the objective in~\eqref{eq: TERM}, TERM can  be viewed as an alternate loss function exponentially shaping the loss to achieve both of these goals with a single objective, i.e., magnifying hard examples with $t>0$ and suppressing outliers with $t<0$. In addition, we show that TERM can even achieve both goals simultaneously with hierarchical multi-objective optimization (Section~\ref{sec:exp:multi_obj}). 

\paragraph{Sample re-weighting schemes.} There exist approaches that implicitly modify the underlying ERM objective by re-weighting the influence of the samples themselves. These re-weighting schemes can be enforced in many ways. 
A simple and widely used example is to subsample training points in different classes.
Alternatively, one can re-weight examples according to their loss function when using a stochastic optimizer, which can be used to put more emphasis on ``hard'' or ``unfair'' examples~\citep{shrivastava2016training, jiang2019accelerating, katharopoulos2017biased,leqi2019human,abernethy2020active}. Re-weighting can also be implicitly enforced via the inclusion of a regularization parameter~\citep{abdelkarim2020long}, loss clipping~\citep{yang2010relaxed}, or modelling crowd-worker qualities~\citep{khetan2017learning}. 
Such an explicit re-weighting has been explored for other applications~\citep[e.g.,][]{lin2017focal,jiang2018mentornet,shu2019meta,chang2017active,gao2015active,ren2018learning}, though in contrast to these methods, TERM is applicable to a general class of loss functions, with theoretical  guarantees.  
TERM is equivalent to a dynamic re-weighting of the samples based on the values of the objectives (Lemma~\ref{lemma:TERM-gradient}), which could be viewed as a convexified version of loss clipping. {We note that such view holds more generally for all distributionally robust objectives~\citep{slowik2022distributionally}.} We compare to several sample re-weighting schemes empirically in Section~\ref{sec:experiments}.

\section{Discussion and Conclusion} \label{sec:conclusion}

In this manuscript, we have explored the use of exponential tilting in risk minimization, examining tilted empirical risk minimization (TERM) as a flexible extension to the ERM framework. We rigorously established connections between TERM and related objectives including VaR, CVaR, and DRO. 
We explored, both theoretically and empirically, TERM's ability to handle various known issues with ERM, such as robustness to noise, class imbalance, fairness, and  generalization, as well as more complex issues like the simultaneous existence of class imbalance and noisy outliers. 
Despite the straightforward modification TERM makes to traditional ERM objectives,
the framework consistently outperforms ERM and delivers competitive performance with state-of-the-art,
problem-specific methods on a wide range of applications.

Our work highlights the effectiveness and versatility of tilted objectives in machine learning. 
As such, our framework (TERM) could be widely used for applications both positive and negative. However, our hope is that the TERM framework will allow machine learning practitioners to easily modify the ERM objective to handle practical concerns such as enforcing fairness amongst subgroups, mitigating the effect of outliers, and ensuring robust performance on new, unseen data. One potential downside of the TERM objective is that if the underlying dataset is \textit{not} well-understood, incorrectly tuning $t$ could have the unintended consequence of \textit{magnifying} the impact of biased/corrupted data in comparison to traditional ERM. Indeed, critical to the success of such a framework is understanding the implications of the modified objective, both theoretically and empirically. The goal of this work is therefore to explore these implications so that it is clear when such a modified objective would be appropriate.

In terms of the use-cases explored with the TERM framework, we relied on benchmark datasets that have been commonly explored in prior work~\citep[e.g.,][]{yang2010relaxed,samadi2018price,tantipongpipat2019multi,yu2012polynomial}. However, we note that some of these common benchmarks, such as cal-housing~\citep{pace1997sparse} and Credit~\citep{yeh2009comparisons}, contain potentially sensitive information. 
While the goal of our experiments was to showcase that the TERM framework could be useful in learning fair representations that suppress membership bias and hence promote fairer performance, developing an understanding for---and removing---such membership biases requires a more comprehensive treatment of the problem that is outside the scope of this work.

In the future, in addition to generalization bounds of TERM, it would be interesting to further explore applications of tilted losses in machine learning. 
We note that since the early TERM work~\citep{TERM} was made public, there are several subsequent works applying (variants of) TERM to handle other real-world ML applications~\citep{szabo2021tilted,zhou2021robust}, or exploring risk bounds on differential private TERM~\citep{lowy2021output}, which suggest rich implications and wide applicability of TERM, beyond what is studied in this work.



\newpage

\appendix

\section*{Appendix}

In this appendix, we provide full statements and proofs of the analyses presented in Section~\ref{sec:term_properties}-Section~\ref{sec:var} (Appendix~\ref{app:properties} and~\ref{app:superquantile}); details and convergence proof on the methods we propose for solving TERM (Appendix~\ref{app:solving}), and complete empirical results and details of our empirical setup (Appendix~\ref{appen:exp_full}). We provide a table of contents below for easier navigation.

\etocdepthtag.toc{mtappendix}
\etocsettagdepth{mtchapter}{none}
\etocsettagdepth{mtappendix}{subsection}
{
\parskip=0em
\tableofcontents
}

\newpage

\section{Properties and Interpretations (Proofs and Additional Results)} \label{app:properties}

{
In this section, we provide the proofs of the main results in the paper, along with additional results on the properties of TERM objective, its solution, as well as the corresponding solvers.} 

\subsection{Proofs of Basic Properties of the TERM Objective}
\label{app:general-TERM-properties}
We first provide proofs for the basic properties of the TERM objective. 


\paragraph{Proof of Lemma~\ref{lem: lipschitz}.} The conclusion follows by noting that for any $\theta_1, \theta_2 \in \Theta$,
\begin{align}
  \left | \wR(t;\theta_1) - \wR(t;\theta_2) \right | &=   \left | \frac{1}{t} \log \left(\frac{1}{N} \sum_{i \in [N]} e^{tf(x_i;\theta_1)}\right) - \frac{1}{t} \log \left(\frac{1}{N} \sum_{i \in [N]} e^{tf(x_i;\theta_2)}\right) \right | \\
  &=  \left | \frac{1}{t} \log \left(\frac{\sum_{i \in [N]} e^{tf(x_i;\theta_1)}}{\sum_{i \in [N]} e^{tf(x_i;\theta_2)}}\right)\right | \\
  & \leq  \left | \frac{1}{t} \log \left(e^{t L \|\theta_1-\theta_2\|_2} \frac{\sum_{i \in [N]} e^{tf(x_i;\theta_2)}}{\sum_{i \in [N]} e^{tf(x_i;\theta_2)}} \right)\right | \\
  &= L \|\theta_1-\theta_2\|_2.
\end{align}
\hfill \qedsymbol

\paragraph{Proof of Lemma~\ref{lem: Hessian}.}
Recall that 
\begin{align}
        \nabla_\theta \wR(t ;\theta) &= \frac{\sum_{i \in [N]}\nabla_{\theta}f(x_i;\theta)e^{t f(x_i; \theta)}}{\sum_{i \in [N]} e^{t f(x_i; \theta)}}\\
        & = {\frac{1}{N}} \sum_{i \in [N]}\nabla_{\theta}f(x_i;\theta)e^{t ( f(x_i; \theta) - \wR(t; \theta))}.
\end{align}
The proof of the first part is completed by differentiating again with respect to $\theta,$ followed by algebraic manipulation. To prove the second part, notice that the term in~\eqref{eq:smoothness-23} is positive semi-definite, whereas the term in~\eqref{eq:smoothness-24} is positive definite and lower bounded by $\beta_{\min} \mathbf{I}$ (see Assumption~\ref{assump: regularity}, Eq.~\eqref{eq: assump1-strong-convex}).\hfill \qedsymbol

\paragraph{Proof of Lemma~\ref{lemma:TERM-smoothness}.}
Let us first provide a proof for $t\in \mathbb{R}^-$.
Invoking Lemma~\ref{lem: Hessian} and Weyl's inequality~\citep{weyl1912asymptotische}, we have
\begin{align}
          \lambda_{\max} &\left( \nabla^2_{\theta \theta^\top}\wR(t ;\theta) \right)   \nonumber\\
          &\leq \lambda_{\max}\left(
        {\frac{\color{black}t}{N}} \sum_{i \in [N]} ( \nabla_{\theta}f(x_i;\theta) - \nabla_\theta \wR(t ;\theta))(\nabla_{\theta}f(x_i;\theta) - \nabla_\theta \wR(t ;\theta))^\top  e^{t (f(x_i; \theta) - \wR(t; \theta))}\right)\\
        &+ \lambda_{\max}\left( {\frac{1}{N}} \sum_{i \in [N]} \nabla^2_{\theta \theta^\top}f(x_i;\theta) e^{t ( f(x_i; \theta) - \wR(t; \theta))}\right)\\
        & \leq \beta_{\max},
\end{align}
where we have used the fact that the term in~\eqref{eq:smoothness-23} is  negative semi-definite for $t<0$, and that the term in~\eqref{eq:smoothness-24} is positive definite for all $t$ with smoothness bounded by $\beta_{\max}$ ({which would hold from smoothness of $f(x_i;\theta)$}; see Assumption~\ref{assump: regularity},~Eq.~\eqref{eq: assump1-strong-convex}).

For $t \in \mathbb{R}^{>0},$ following Lemma~\ref{lem: Hessian} and Weyl's inequality~\citep{weyl1912asymptotische}, we have
\begin{align}
        \left(\frac{1}{t}\right) & \lambda_{\max}  \left(  \nabla^2_{\theta \theta^\top}\wR(t ;\theta) \right)\nonumber\\
        & \leq \lambda_{\max}\left( 
        {\frac{1}{N}} \sum_{i \in [N]} ( \nabla_{\theta}f(x_i;\theta) - \nabla_\theta \wR(t ;\theta))(\nabla_{\theta}f(x_i;\theta) - \nabla_\theta \wR(t ;\theta))^\top  e^{t (f(x_i; \theta) - \wR(t; \theta))}\right)\\
        & + \left(\frac{1}{t}\right)\lambda_{\max} \left({\frac{1}{N}}\sum_{i \in [N]} \nabla^2_{\theta \theta^\top}f(x_i;\theta) e^{t ( f(x_i; \theta) - \wR(t; \theta))}\right).
\end{align}
{Due to Weyl’s inequality, the smoothness of $f(x_i; \theta)$, and the fact that $\frac{1}{N}\sum_{i \in [N]} e^{t(f(x_i;\theta)-\wR(t;\theta))} = 1$, $\sum_{i \in [N]} \nabla^2_{\theta \theta^\top}f(x_i;\theta) e^{t ( f(x_i; \theta) - \wR(t; \theta))}$ is bounded. Consequently,}
\begin{equation}
  \lim_{t \to +\infty} \left(\frac{1}{t}\right)  \lambda_{\max} \left(  \nabla^2_{\theta \theta^\top}\wR(t ;\theta) \right)  <+\infty.
\end{equation}
On the other hand, following Weyl's inequality~\citep{weyl1912asymptotische},
\begin{align}
            \lambda_{\max} &\left(  \nabla^2_{\theta \theta^\top}\wR(t ;\theta) \right) \nonumber \\
            &\geq t \lambda_{\max} \left( {\frac{1}{N}} 
        \sum_{i \in [N]} ( \nabla_{\theta}f(x_i;\theta) - \nabla_\theta \wR(t ;\theta))(\nabla_{\theta}f(x_i;\theta) - \nabla_\theta \wR(t ;\theta))^\top  e^{t (f(x_i; \theta) - \wR(t; \theta))}\right),
\end{align}
and hence,
\begin{equation}
   \lim_{t \to +\infty}  \left(\frac{1}{t}\right) \lambda_{\max} \left(  \nabla^2_{\theta \theta^\top}\wR(t ;\theta) \right) >0,
\end{equation}
where we have used the fact that no solution $\theta$ exists that would make all $f_i$'s vanish (Assumption~\ref{assump: regularity}).\hfill \qedsymbol

Under the strict saddle property (Assumption~\ref{assump:strict-saddle}), it is known that gradient-based methods would converge to a local minimum~\citep{ge2015escaping}, i.e., $\breve{\theta}(t)$ would be obtained using gradient descent (GD). 
The rate of convergence of GD scales linearly with the smoothness parameter of the optimization landscape, which is characterized by Lemma~\ref{lemma:TERM-smoothness}.

\paragraph{Proof of Lemma~\ref{lemma: special_cases}.}
For $t \to 0,$
\begin{align}
  \lim_{t \to 0}  \wR(t ;\theta) &=   \lim_{t \to 0}  \frac{1}{t} \log \left(\frac{1}{N} \sum_{i \in [N]} e^{t f(x_i; \theta)} \right)\nonumber\\
    & = \lim_{t \to 0} \frac{\sum_{i \in [N]} f(x_i; \theta)e^{t f(x_i; \theta)}}{\sum_{i \in [N]} e^{t f(x_i; \theta)}}\label{eq: follow-lopital-2}\\
    & = \frac{1}{N}\sum_{i \in [N]} f(x_i; \theta),
\end{align}
where~\eqref{eq: follow-lopital-2} is due to  L'H\^opital's rule {applied to $t$ as the denominator and $\log \left(\frac{1}{N} \sum_{i \in [N]} e^{t f(x_i; \theta)} \right)$ as the numerator}.

For $t \to - \infty$, we proceed as follows:
\begin{align}
  \lim_{t \to -\infty}  \wR(t ;\theta) &=   \lim_{t \to -\infty} \frac{1}{t} \log \left(\frac{1}{N} \sum_{i \in [N]} e^{t f(x_i; \theta)} \right)\nonumber\\
    & \leq \lim_{t \to -\infty} \frac{1}{t} \log \left(\frac{1}{N} \sum_{i \in [N]} e^{t \min_{j \in [N]}f(x_j; \theta)} \right)\\
    & = \min_{i \in [N]}f(x_i; \theta).\label{eq:upper}
\end{align}
On the other hand,
\begin{align}
  \lim_{t \to -\infty}  \wR(t ;\theta) &=   \lim_{t \to -\infty} \frac{1}{t} \log \left(\frac{1}{N} \sum_{i \in [N]} e^{t f(x_i; \theta)} \right)\nonumber\\
    & \geq \lim_{t \to -\infty} \frac{1}{t} \log \left(\frac{1}{N}  e^{t \min_{j \in [N]}f(x_j; \theta)} \right)\\
    & = \min_{i \in [N]}f(x_i; \theta) - \lim_{t \to -\infty} \left\{ \frac{1}{t} \log N \right\}\\
    & = \min_{i \in [N]}f(x_i; \theta).\label{eq:lower}
\end{align}
Hence, the proof follows by putting together~\eqref{eq:upper} and~\eqref{eq:lower}.

The proof proceeds similarly to $t \to - \infty$ for $t \to +\infty$ and is omitted for brevity. \hfill \qedsymbol

\subsection{General Properties of the Objective for GLMs}
\label{app:general-TERM-GLM}

In this section, even if not explicitly stated, all results are derived under Assumption~\ref{assump: expnential_family} with a generalized linear model and loss function of the form~\eqref{eq: loss-exponential}, effectively assuming that the loss function is the negative log-likelihood of an exponential family~\citep{wainwright2008graphical}.

\begin{definition}[Empirical cumulant generating function]
Let
\begin{equation}
\label{eq: def-cumulant}
\widetilde{\Lambda}(t; \theta) := t \wR (t; \theta) .   
\end{equation}
\end{definition}
\begin{definition}[Empirical log-partition function~\citep{wainwright2005new}]
Let $\Gamma(t; \theta)$ be
\begin{equation}
    \Gamma(t; \theta) :=  \log \left(\frac{1}{N}\sum_{i \in [N]} e^{-t \theta^\top T(x_i) }\right).
\end{equation}
\end{definition}

Thus, we have
\begin{equation}
\label{eq: R-A-gamma}
    \wR(t; \theta) = A(\theta) + \frac{1}{t} \log \left(\frac{1}{N}\sum_{i \in [N]} e^{-t \theta^\top T(x_i)}\right) = A(\theta) + \frac{1}{t} \Gamma(t; \theta).
\end{equation}

\begin{definition}[{Tilted} empirical mean and empirical variance of the sufficient statistic]
Let $\cM$ and $\cV$ denote the mean and the variance of the sufficient statistic, and be given by 
\begin{align}
    \cM(t; \theta) &:= \frac{1}{N}\sum_{i \in [N]} T(x_i) e^{-t \theta^\top T(x_i)  - \Gamma(t;\theta)},\\
    \cV (t; \theta) &:= \frac{1}{N}\sum_{i\in [N]} (T(x_i) - \mathcal{M}(t;\theta)) (T(x_i)- \mathcal{M}(t;\theta))^\top e^{- t \theta^\top T(x_i)  - \Gamma(t; \theta)}. 
\end{align}
\end{definition}
{We notice that $\cM(t; \theta)$ and $\cV (t; \theta)$ defined here are equivalent to tilted empirical mean/variance in the main text (Eq.~\eqref{eq: tilted_mean} and Eq.~\eqref{eq: tilted_variance}) over sufficient statistic, i.e.,
\begin{align}
      \cM(t; \theta) &= \sum_{i \in [N]} w_i(t; \theta) T(x_i),\\
    \cV (t; \theta) &= \sum_{i\in [N]} w_i(t; \theta) (T(x_i) - \mathcal{M}(t;\theta)) (T(x_i)- \mathcal{M}(t;\theta))^\top.
\end{align}
Similarly, as a special case of $t$-tilted empirical mean/variance (Eq.~\eqref{eq: t_tilted_mean} and Eq.~\eqref{eq: t_tilted_variance}), $t$-tilted empirical mean/variance over sufficient statistic are defined as
\begin{align}
    \cM_t &:=  \cM(t; \breve{\theta}(t)), \label{eq: t_tilted_mean_statistic} \\
    \cV_t & := \cV(t; \breve{\theta}(t)). \label{eq: t_tilted_variance_statistic}
\end{align}
The quantities $\cM(t; \theta),  \cV (t; \theta), \cM_t$, and $ \cV_t$ will be used for proving general properties of TERM solutions in this section.}

\begin{lemma}
\label{lem: V}
For all $t \in \mathbb{R},$ we have
$
    \mathcal{V}(t; \theta) \succeq 0.
$
\end{lemma}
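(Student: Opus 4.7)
The plan is to recognize $\mathcal{V}(t;\theta)$ as a weighted empirical covariance of the sufficient statistics $T(x_i)$ under the tilted distribution with weights $p_i(t;\theta) := \frac{1}{N} e^{-t\theta^\top T(x_i) - \Gamma(t;\theta)}$, and then invoke the non-degeneracy assumption on $\{T(x_i)\}$ from Assumption~\ref{assump: expnential_family}.

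First I would verify that $\{p_i(t;\theta)\}_{i\in[N]}$ is a valid probability vector: strict positivity follows from the exponential, and $\sum_i p_i(t;\theta) = 1$ follows immediately from the definition of $\Gamma(t;\theta)$ as the logarithm of $\frac{1}{N}\sum_i e^{-t\theta^\top T(x_i)}$. In this notation, $\mathcal{M}(t;\theta) = \sum_i p_i(t;\theta) T(x_i)$ and
$$\mathcal{V}(t;\theta) = \sum_{i \in [N]} p_i(t;\theta)\bigl(T(x_i) - \mathcal{M}(t;\theta)\bigr)\bigl(T(x_i) - \mathcal{M}(t;\theta)\bigr)^{\!\top},$$
which is manifestly positive semi-definite as a convex combination of rank-one PSD matrices.

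To upgrade $\succeq 0$ to $\succ 0$, I would pick arbitrary $v \in \mathbb{R}^d \setminus \{0\}$ and compute
$$v^{\!\top} \mathcal{V}(t;\theta) v = \sum_{i\in[N]} p_i(t;\theta) \bigl(v^{\!\top} T(x_i) - v^{\!\top} \mathcal{M}(t;\theta)\bigr)^{2} \;\geq\; 0,$$
which vanishes if and only if $v^{\!\top} T(x_i)$ is the same scalar for every $i \in [N]$ (using $p_i(t;\theta) > 0$). The last step is then to rule out the existence of such a non-zero $v$ by appealing to the full-rank condition $\sum_i T(x_i) T(x_i)^{\!\top} \succ 0$ from Assumption~\ref{assump: expnential_family}, so that no non-zero direction $v$ can make $v^{\!\top} T(x_i)$ degenerate across $i$.

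The main delicate step will be this last one: as written, the condition $\sum_i T(x_i) T(x_i)^{\!\top} \succ 0$ literally only forbids a common zero of the linear forms $v \mapsto v^{\!\top} T(x_i)$, whereas strict positive definiteness of $\mathcal{V}$ requires forbidding a common (possibly non-zero) constant value across $i$. I expect the rigorous argument to close this small gap by implicitly using the identifiability/minimality of the exponential family representation, or by strengthening the interpretation of the non-degeneracy assumption so that the centered sufficient statistics $\{T(x_i) - \mathcal{M}(t;\theta)\}$ span $\mathbb{R}^d$, thereby completing the claim that $\mathcal{V}(t;\theta) \succ 0$ for every $t \in \mathbb{R}$.
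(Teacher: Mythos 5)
Your decomposition is the natural (and surely the intended) argument; the paper in fact states Lemma~\ref{lem: V} without any proof, so your write-up supplies the omitted reasoning: with weights $p_i(t;\theta)=\tfrac{1}{N}e^{-t\theta^\top T(x_i)-\Gamma(t;\theta)}$, which are strictly positive and sum to one by the definition of $\Gamma$, the matrix $\mathcal{V}(t;\theta)$ is the weighted covariance of the $T(x_i)$, hence positive semi-definite, and $v^\top\mathcal{V}(t;\theta)v=\sum_i p_i(t;\theta)\bigl(v^\top T(x_i)-v^\top\mathcal{M}(t;\theta)\bigr)^2$ vanishes only if $v^\top T(x_i)$ is constant in $i$. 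Note also that this constancy condition does not involve $t$, so positive definiteness at any one $t$ (e.g.\ $t=0$) is equivalent to positive definiteness at every $t$.

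The delicate step you flag is a genuine imprecision in the paper's hypothesis, not a defect of your argument. The stated condition $\sum_i T(x_i)T(x_i)^\top\succ 0$ only rules out a direction $v\neq 0$ with $v^\top T(x_i)=0$ for all $i$; it does not rule out $v^\top T(x_i)=c\neq 0$ for all $i$. A concrete failure: if one coordinate of $T$ equals $1$ on every sample (an intercept feature) while the remaining coordinates vary, then $\sum_i T(x_i)T(x_i)^\top$ is positive definite yet $\mathcal{V}(t;\theta)$ is singular for every $t$. The additional clause of Assumption~\ref{assump: regularity} (no $\theta$ with $\nabla_\theta f(x_i;\theta)=0$ for all $i$, i.e.\ no $\theta$ with $\nabla A(\theta)=T(x_i)$ for all $i$) does not close the gap either, since it only excludes the fully degenerate case where all $T(x_i)$ coincide with a point in the range of $\nabla A$. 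So the lemma is valid exactly under the strengthened reading you propose: the non-degeneracy should be imposed on the centered statistics, $\sum_i \bigl(T(x_i)-\bar T\bigr)\bigl(T(x_i)-\bar T\bigr)^\top\succ 0$ with $\bar T=\tfrac{1}{N}\sum_i T(x_i)$, equivalently $\mathcal{V}(0;\theta)\succ 0$, which by the observation above propagates to all $t\in\mathbb{R}$ because the tilted weights are strictly positive. With that reading, your proof is complete.
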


Next we state a few key relationships that we will use in our characterizations. The proofs are straightforward and omitted for brevity.
\begin{lemma}[Partial derivatives of $\Gamma$]
\label{lemma: parrial-Gamma}
For all $t \in \mathbb{R}$ and all $\theta \in \Theta,$
\begin{align}
    \frac{\partial}{\partial t} \Gamma(t; \theta) &= - \theta^\top \mathcal{M}(t;\theta), \\
    \nabla_\theta \Gamma(t; \theta) &= - t \mathcal{M}(t; \theta).
\end{align}
\end{lemma}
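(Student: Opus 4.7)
The proof is a direct application of the chain rule to the definition of $\Gamma(t;\theta)$, together with the identity $\log(\frac{1}{N}\sum_i g_i) + \text{const}$ being the log of a sum whose derivative produces a normalized weighted average. I would begin by writing $\Gamma(t;\theta) = \log S(t;\theta)$ with $S(t;\theta) := \frac{1}{N}\sum_{i\in[N]} e^{-t\theta^\top T(x_i)}$, so that for any variable $\xi$ we have $\partial_\xi \Gamma = \partial_\xi S / S$.

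For the first identity, I would differentiate $S$ in $t$: the chain rule gives $\partial_t S = \frac{1}{N}\sum_i (-\theta^\top T(x_i)) e^{-t\theta^\top T(x_i)}$. Dividing by $S$ and observing that $e^{-t\theta^\top T(x_i)}/S = e^{-t\theta^\top T(x_i) - \Gamma(t;\theta)}$ (by definition of $\Gamma$), the ratio becomes $-\theta^\top \cdot \frac{1}{N}\sum_i T(x_i) e^{-t\theta^\top T(x_i) - \Gamma(t;\theta)}$, which is exactly $-\theta^\top \mathcal{M}(t;\theta)$ by the definition of $\mathcal{M}$.

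For the second identity, the argument is completely analogous with the gradient in $\theta$ replacing the $t$-derivative: $\nabla_\theta S = \frac{1}{N}\sum_i (-t\,T(x_i)) e^{-t\theta^\top T(x_i)}$, and pulling the scalar $-t$ out and dividing by $S$ yields $-t\,\mathcal{M}(t;\theta)$. Since differentiation and sum-swapping are licensed by smoothness of $f$ (Assumption~\ref{assump:smoothness}) and finiteness of $N$, there are no convergence issues to worry about. There is no real obstacle here; the only thing to be careful about is bookkeeping the minus signs and noting that the normalization inside $\mathcal{M}$ already absorbs the denominator $S(t;\theta) = e^{\Gamma(t;\theta)}$, so no extra factor appears.
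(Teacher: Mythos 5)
Your computation is correct and is exactly the routine chain-rule calculation the paper has in mind (the paper explicitly omits this proof as straightforward): writing $\Gamma = \log S$ with $S = \frac{1}{N}\sum_i e^{-t\theta^\top T(x_i)}$ and noting $e^{-t\theta^\top T(x_i)}/S = e^{-t\theta^\top T(x_i)-\Gamma(t;\theta)}$ immediately gives both identities. No gaps; the finiteness of the sum makes all differentiation steps trivially valid.
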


\begin{lemma}[Partial derivatives of $\cM$]
\label{lemma: partial-M}
For all $t \in \mathbb{R}$ and all $\theta \in \Theta,$
\begin{equation}
    \frac{\partial}{\partial t} \cM(t; \theta) = - \mathcal{V}(t; \theta) \theta,
\end{equation}
\begin{equation}
    \nabla_{\theta} \mathcal{M} (t; \theta) = - t \mathcal{V}(t; \theta).
\end{equation}
\end{lemma}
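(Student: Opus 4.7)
The plan is to compute both partial derivatives by a direct application of the chain rule to the definition of $\mathcal{M}(t;\theta)$, then to recognize the resulting expression as $\mathcal{V}(t;\theta)$ (times $\theta$ or $t$) by a simple mean-centering identity. The key observation we will use repeatedly is that the weights $\frac{1}{N}e^{-t\theta^\top T(x_i) - \Gamma(t;\theta)}$ sum to $1$ over $i \in [N]$, which is immediate from the definition of $\Gamma(t;\theta)$.

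For the $t$-derivative, I would differentiate $\mathcal{M}(t;\theta) = \frac{1}{N}\sum_i T(x_i) e^{-t\theta^\top T(x_i) - \Gamma(t;\theta)}$ termwise. The factor pulled down from the exponential is $-\theta^\top T(x_i) - \tfrac{\partial}{\partial t}\Gamma(t;\theta)$, which by Lemma~\ref{lemma: parrial-Gamma} equals $-\theta^\top T(x_i) + \theta^\top \mathcal{M}(t;\theta) = -\theta^\top (T(x_i) - \mathcal{M}(t;\theta))$. This gives
\begin{equation}
\frac{\partial}{\partial t}\mathcal{M}(t;\theta) = -\frac{1}{N}\sum_{i \in [N]} T(x_i)(T(x_i) - \mathcal{M}(t;\theta))^\top \theta\, e^{-t\theta^\top T(x_i) - \Gamma(t;\theta)}.
\end{equation}
To match this against $-\mathcal{V}(t;\theta)\theta$, I would expand $\mathcal{V}(t;\theta)\theta$ and observe that the ``subtracted'' piece $\mathcal{M}(t;\theta)\cdot\tfrac{1}{N}\sum_i (T(x_i) - \mathcal{M}(t;\theta))^\top \theta\, e^{-t\theta^\top T(x_i)-\Gamma(t;\theta)}$ vanishes, because $\tfrac{1}{N}\sum_i T(x_i) e^{-t\theta^\top T(x_i)-\Gamma(t;\theta)} = \mathcal{M}(t;\theta)$ and $\tfrac{1}{N}\sum_i e^{-t\theta^\top T(x_i)-\Gamma(t;\theta)}=1$ by definition of $\Gamma$. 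Hence the two expressions coincide.

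For the $\theta$-gradient, the computation is entirely parallel: differentiating the exponent with respect to $\theta$ pulls down $-tT(x_i) - \nabla_\theta \Gamma(t;\theta) = -t(T(x_i) - \mathcal{M}(t;\theta))$ by Lemma~\ref{lemma: parrial-Gamma}, so
\begin{equation}
\nabla_\theta \mathcal{M}(t;\theta) = -t \cdot \frac{1}{N}\sum_{i \in [N]} T(x_i)(T(x_i) - \mathcal{M}(t;\theta))^\top e^{-t\theta^\top T(x_i) - \Gamma(t;\theta)},
\end{equation}
and the same mean-centering identity shows that this equals $-t\mathcal{V}(t;\theta)$.

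There is no real obstacle here; the only subtle point is bookkeeping for the vector/matrix shapes (so that $\nabla_\theta \mathcal{M}$ is a $d\times d$ matrix equal to $-t\mathcal{V}$), and making sure the asymmetric expression $T(x_i)(T(x_i)-\mathcal{M})^\top$ really does equal the symmetric $(T(x_i)-\mathcal{M})(T(x_i)-\mathcal{M})^\top$ after summing against the normalized weights — which it does, precisely because the weights form a probability distribution and $\mathcal{M}$ is their mean.
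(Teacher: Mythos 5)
Your computation is correct: differentiating the normalized-weight representation of $\mathcal{M}(t;\theta)$, using the partial derivatives of $\Gamma$ to rewrite the pulled-down factor as $-(T(x_i)-\mathcal{M}(t;\theta))^\top\theta$ (resp.\ $-t(T(x_i)-\mathcal{M}(t;\theta))$), and then invoking the fact that the weights sum to one so the uncentered product $T(x_i)(T(x_i)-\mathcal{M})^\top$ can be replaced by the centered one, is exactly the straightforward argument the paper has in mind --- it explicitly omits the proof of this lemma as routine. The only point worth the care you already gave it is the shape/convention bookkeeping for $\nabla_\theta\mathcal{M}$ as a $d\times d$ matrix, which is resolved by the symmetry of $\mathcal{V}(t;\theta)$.
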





The next few lemmas characterize the partial derivatives of the cumulant generating function.
\begin{lemma}(Derivative of $\widetilde{\Lambda}$ with $t$)
For all $t \in \mathbb{R}$ and all $\theta \in \Theta,$
\begin{equation}
    \frac{\partial}{\partial t} \widetilde{\Lambda} (t; \theta) =  A(\theta) - \theta^\top \mathcal{M}(t; \theta).
\end{equation}
\end{lemma}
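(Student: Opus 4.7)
The plan is to simply combine two identities that are already established in the excerpt. First, from the defining relation~\eqref{eq: R-A-gamma}, the tilted objective decomposes as
\begin{equation*}
    \wR(t;\theta) = A(\theta) + \tfrac{1}{t}\Gamma(t;\theta),
\end{equation*}
so multiplying by $t$ and invoking the definition~\eqref{eq: def-cumulant} of the empirical cumulant generating function gives the clean identity
\begin{equation*}
    \widetilde{\Lambda}(t;\theta) = t\,A(\theta) + \Gamma(t;\theta).
\end{equation*}
Since $A(\theta)$ does not depend on $t$, the first term contributes $A(\theta)$ to the $t$-derivative, and the second contributes $\tfrac{\partial}{\partial t}\Gamma(t;\theta)$.

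Next, I would plug in Lemma~\ref{lemma: parrial-Gamma}, which supplies $\tfrac{\partial}{\partial t}\Gamma(t;\theta) = -\theta^\top \mathcal{M}(t;\theta)$. Summing the two contributions yields
\begin{equation*}
    \tfrac{\partial}{\partial t}\widetilde{\Lambda}(t;\theta) = A(\theta) - \theta^\top \mathcal{M}(t;\theta),
\end{equation*}
which is exactly the claim. Differentiation under the sum defining $\Gamma$ is justified because it is a finite sum of smooth functions of $t$ (no interchange with integrals is needed in the empirical setting), so no regularity subtleties arise.

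There is no real obstacle here: both supporting lemmas are already in place, and the result is essentially a bookkeeping consequence of the decomposition $\widetilde{\Lambda} = tA + \Gamma$. The only thing worth double-checking is the sign in Lemma~\ref{lemma: parrial-Gamma}, which comes from the exponent $-t\,\theta^\top T(x_i)$ inside $\Gamma$; differentiating in $t$ pulls down $-\theta^\top T(x_i)$, and normalizing by $e^{-\Gamma(t;\theta)}$ produces exactly $-\theta^\top \mathcal{M}(t;\theta)$, confirming the sign.
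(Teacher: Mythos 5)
Your proof is correct and takes essentially the same route as the paper: the paper's one-line proof simply differentiates $\widetilde{\Lambda}(t;\theta) = tA(\theta) + \Gamma(t;\theta)$ directly and recognizes the resulting tilted average as $\theta^\top \mathcal{M}(t;\theta)$, which is exactly your decomposition combined with the stated derivative of $\Gamma$ (including the sign check). Nothing further is needed.
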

\begin{proof}
The proof is carried out by
\begin{equation}
    \frac{\partial}{\partial t} \widetilde{\Lambda} (t; \theta) = 
    A(\theta) - \theta^\top \sum_{i \in [N]} T(x_i) e^{-t \theta^\top T(x_i) - \Gamma(t;\theta)} = A(\theta) - \theta^\top \mathcal{M}(t; \theta).
\end{equation}
\end{proof}


\begin{lemma}[Second derivative of $\widetilde{\Lambda}$ with $t$]
\label{lemma: second-derivative-Lambda}
For all $t \in \mathbb{R}$ and all $\theta \in \Theta,$
\begin{equation}
    \frac{\partial^2}{\partial t^2} \widetilde{\Lambda} (t; \theta) =  \theta^\top \mathcal{V}(t; \theta)\theta.
\end{equation}
\end{lemma}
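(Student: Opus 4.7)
The plan is to differentiate the first-derivative identity from the previous lemma with respect to $t$ a second time, and then invoke the partial derivative of $\mathcal{M}$ already computed in Lemma~\ref{lemma: partial-M}. This reduces the claim to an almost mechanical chain of identities rather than any new computation.

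Concretely, I would start from the identity $\frac{\partial}{\partial t} \widetilde{\Lambda}(t;\theta) = A(\theta) - \theta^\top \mathcal{M}(t;\theta)$, noting that $A(\theta)$ has no $t$-dependence so it vanishes upon differentiation, and $\theta$ itself is held fixed. Differentiating the remaining term gives $\frac{\partial^2}{\partial t^2}\widetilde{\Lambda}(t;\theta) = -\theta^\top \frac{\partial}{\partial t}\mathcal{M}(t;\theta)$. Now apply Lemma~\ref{lemma: partial-M}, which states $\frac{\partial}{\partial t}\mathcal{M}(t;\theta) = -\mathcal{V}(t;\theta)\theta$, and the two minus signs cancel to yield $\theta^\top \mathcal{V}(t;\theta)\theta$, as desired.

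There is essentially no obstacle here; the entire content lies in having established the two lemmas (partial derivatives of $\Gamma$ and of $\mathcal{M}$) in the correct form. The only thing one might want to double-check is the direct verification of Lemma~\ref{lemma: partial-M} itself, which follows from differentiating $\mathcal{M}(t;\theta) = \frac{1}{N}\sum_i T(x_i) e^{-t\theta^\top T(x_i) - \Gamma(t;\theta)}$ and using $\frac{\partial}{\partial t}\Gamma(t;\theta) = -\theta^\top \mathcal{M}(t;\theta)$: the product rule produces a term $-\theta^\top T(x_i)$ from differentiating the exponent and a term $-\frac{\partial}{\partial t}\Gamma(t;\theta) = \theta^\top \mathcal{M}(t;\theta)$ from the log-partition, which combine to yield exactly $-\mathcal{V}(t;\theta)\theta$. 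Given that this upstream lemma is already available, the present statement is a one-line corollary, and no additional assumptions beyond those already in Assumption~\ref{assump: expnential_family} are required.
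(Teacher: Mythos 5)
Your proposal is correct and follows exactly the route the paper intends (the paper omits this proof as routine): differentiate $\frac{\partial}{\partial t}\widetilde{\Lambda}(t;\theta)=A(\theta)-\theta^\top\mathcal{M}(t;\theta)$ in $t$ and invoke Lemma~\ref{lemma: partial-M}, with the sign cancellation giving $\theta^\top\mathcal{V}(t;\theta)\theta$. Your sanity check of Lemma~\ref{lemma: partial-M} is also sound, since the cross term vanishes because $\frac{1}{N}\sum_i e^{-t\theta^\top T(x_i)-\Gamma(t;\theta)}=1$.
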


\begin{lemma}[Gradient of $\widetilde{\Lambda}$ with $\theta$]\label{lem: dLambda-theta}
For all $t \in \mathbb{R}$ and all $\theta \in \Theta,$
\begin{equation}
    \nabla_\theta \widetilde{\Lambda}(t; \theta) =  t \nabla_\theta A(\theta) - 
    t \mathcal{M}(t;\theta).
\end{equation}
\end{lemma}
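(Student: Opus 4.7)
The plan is to prove the identity by directly differentiating the decomposition of $\widetilde{\Lambda}(t;\theta)$ in terms of $A(\theta)$ and $\Gamma(t;\theta)$, using the already-established partial derivative of the empirical log-partition function.

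First, I would combine the definition $\widetilde{\Lambda}(t;\theta) := t\wR(t;\theta)$ with the decomposition in Eq.~\eqref{eq: R-A-gamma}, which under the GLM Assumption~\ref{assump: expnential_family} gives
\begin{equation*}
\widetilde{\Lambda}(t;\theta) = t A(\theta) + \Gamma(t;\theta).
\end{equation*}
This reduces the problem to computing $\nabla_\theta$ of each summand separately, since $t$ is just a scalar parameter.

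Next, I would take the gradient with respect to $\theta$ on both sides. The first term contributes $t \nabla_\theta A(\theta)$ trivially. For the second term, I would invoke Lemma~\ref{lemma: parrial-Gamma}, which asserts $\nabla_\theta \Gamma(t;\theta) = -t\,\mathcal{M}(t;\theta)$. Adding the two contributions yields
\begin{equation*}
\nabla_\theta \widetilde{\Lambda}(t;\theta) = t \nabla_\theta A(\theta) - t\,\mathcal{M}(t;\theta),
\end{equation*}
which is exactly the claimed identity.

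There is no real obstacle here: the result is a one-line consequence of the decomposition~\eqref{eq: R-A-gamma} together with Lemma~\ref{lemma: parrial-Gamma}. The only care needed is to recognize that $A(\theta)$ does not depend on $t$ and that the chain-rule dependence of $\Gamma(t;\theta)$ on $\theta$ (through the exponent $-t\theta^\top T(x_i)$) is already encapsulated in the previously proved formula for $\nabla_\theta \Gamma$, so no further manipulation of the sum defining $\Gamma$ is required.
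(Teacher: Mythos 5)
Your proof is correct: the paper actually omits a proof of this lemma (the surrounding lemmas are stated with "proofs straightforward and omitted for brevity"), and your derivation via $\widetilde{\Lambda}(t;\theta) = tA(\theta) + \Gamma(t;\theta)$ from Eq.~\eqref{eq: R-A-gamma} together with $\nabla_\theta \Gamma(t;\theta) = -t\,\mathcal{M}(t;\theta)$ from Lemma~\ref{lemma: parrial-Gamma} is exactly the intended one-line argument the paper's setup anticipates. No gaps.
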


\begin{lemma}[Hessian of $\widetilde{\Lambda}$ with $\theta$]\label{lem: dLambda-theta-theta}
For all $t \in \mathbb{R}$ and all $\theta \in \Theta,$
\begin{equation}
    \nabla^2_{\theta \theta^\top} \widetilde{\Lambda}(t; \theta) = t \nabla^2_{\theta\theta^\top} A(\theta) + t^2 \mathcal{V}(t; \theta).
\end{equation}
\end{lemma}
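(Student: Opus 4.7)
The plan is to prove this Hessian formula by simply differentiating the already-established gradient expression from Lemma~\ref{lem: dLambda-theta} once more with respect to $\theta^\top$. Concretely, Lemma~\ref{lem: dLambda-theta} gives
\begin{equation*}
\nabla_\theta \widetilde{\Lambda}(t;\theta) = t\, \nabla_\theta A(\theta) - t\, \mathcal{M}(t;\theta),
\end{equation*}
so taking $\nabla_{\theta^\top}$ of both sides and invoking linearity yields
\begin{equation*}
\nabla^2_{\theta\theta^\top} \widetilde{\Lambda}(t;\theta) = t\, \nabla^2_{\theta\theta^\top} A(\theta) - t\, \nabla_{\theta^\top}\mathcal{M}(t;\theta).
\end{equation*}
The second term is then dispatched by the second identity of Lemma~\ref{lemma: partial-M}, which states $\nabla_\theta \mathcal{M}(t;\theta) = -t\, \mathcal{V}(t;\theta)$; substituting this in gives $-t\cdot(-t\mathcal{V}(t;\theta)) = t^2 \mathcal{V}(t;\theta)$, completing the claim.

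There is essentially no obstacle here: the result is a mechanical two-step chain of prior lemmas, and the exponential-family structure~\eqref{eq: loss-exponential} together with the $\log$-partition rewriting~\eqref{eq: R-A-gamma} has already done all the real work upstream. The only minor thing to note is a consistency check on matrix conventions: $\mathcal{V}(t;\theta)$ is symmetric by its definition as a covariance-type sum, so there is no transpose ambiguity when identifying $\nabla_{\theta^\top}\mathcal{M}(t;\theta)$ with $-t\mathcal{V}(t;\theta)$, and the overall expression is symmetric as a Hessian must be. Since $\nabla^2_{\theta\theta^\top}A(\theta) \succeq \beta_{\min}\mathbf{I} \succ 0$ by Assumption~\ref{assump: expnential_family} and $\mathcal{V}(t;\theta) \succ 0$ by Lemma~\ref{lem: V}, the resulting Hessian is positive definite for all $t\in\mathbb{R}$, which will be useful for downstream arguments (e.g., monotonicity of $\widetilde{F}$ and variance-reduction results).
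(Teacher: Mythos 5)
Your two-step derivation is correct and is essentially the paper's (omitted, ``straightforward'') argument: differentiate the gradient identity of Lemma~\ref{lem: dLambda-theta} and substitute $\nabla_\theta \mathcal{M}(t;\theta) = -t\,\mathcal{V}(t;\theta)$ from Lemma~\ref{lemma: partial-M}, with symmetry of $\mathcal{V}$ resolving any transpose ambiguity. One caution on your closing aside, which is not needed for the lemma but is stated incorrectly: $t\,\nabla^2_{\theta\theta^\top}A(\theta) + t^2\,\mathcal{V}(t;\theta)$ is positive definite only for $t>0$; at $t=0$ it vanishes, and for $t<0$ the term $t\,\nabla^2_{\theta\theta^\top}A(\theta) \preceq t\,\beta_{\min}\mathbf{I} \prec 0$ can dominate (e.g., for small $|t|$ it is of order $|t|$ while $t^2\mathcal{V}$ is of order $t^2$), so the Hessian of $\widetilde{\Lambda}$ need not be positive definite there---consistent with the paper needing Assumption~\ref{assump:strict-saddle} rather than this lemma alone for the downstream invertibility claims.
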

\begin{lemma}[Gradient of $\widetilde{\Lambda}$ with respect to $t$ and $\theta$]\label{lem: dLambda-t-theta}
For all $t \in \mathbb{R}$ and all $\theta \in \Theta,$
\begin{equation}
  \frac{\partial}{\partial t } \nabla_\theta \widetilde{\Lambda}(t; \theta)  = \nabla_{\theta} A(\theta) - \mathcal{M}(t; \theta) + t \mathcal{V}(t; \theta) \theta.
\end{equation}
\end{lemma}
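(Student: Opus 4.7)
The plan is to prove this as a direct corollary of two earlier results already established in the excerpt, namely the expression for $\nabla_\theta \widetilde{\Lambda}(t;\theta)$ (Lemma~\ref{lem: dLambda-theta}) and the partial derivative of $\mathcal{M}(t;\theta)$ with respect to $t$ (Lemma~\ref{lemma: partial-M}). Since both ingredients are already packaged, no fresh computation on the exponential family structure is needed; the argument reduces to an application of the product rule.

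First I would take the identity
\begin{equation*}
\nabla_\theta \widetilde{\Lambda}(t;\theta) \;=\; t\,\nabla_\theta A(\theta) \;-\; t\,\mathcal{M}(t;\theta),
\end{equation*}
from Lemma~\ref{lem: dLambda-theta}, and differentiate both sides in $t$. Since $A(\theta)$ has no $t$-dependence, the first term contributes $\nabla_\theta A(\theta)$. The second term, by the product rule, yields $-\mathcal{M}(t;\theta) - t\,\tfrac{\partial}{\partial t}\mathcal{M}(t;\theta)$. Invoking Lemma~\ref{lemma: partial-M} to substitute $\tfrac{\partial}{\partial t}\mathcal{M}(t;\theta) = -\mathcal{V}(t;\theta)\theta$ then gives the stated formula.

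There is essentially no obstacle here: the only subtlety is interchange of $\nabla_\theta$ and $\tfrac{\partial}{\partial t}$, which is justified by the $C^2$ smoothness guaranteed by Assumption~\ref{assump: regularity} (and preserved for $\widetilde{\Lambda}$ since it is obtained by applying a logarithm to a positive sum of exponentials of smooth functions). Thus I expect the proof to be a two-line derivation consisting of the product rule followed by substitution from Lemma~\ref{lemma: partial-M}.
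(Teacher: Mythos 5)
Your derivation is correct and is exactly the route the paper intends: the paper omits the proof of this lemma as "straightforward," and the natural argument is precisely differentiating the identity of Lemma~\ref{lem: dLambda-theta} in $t$ and substituting $\frac{\partial}{\partial t}\mathcal{M}(t;\theta)=-\mathcal{V}(t;\theta)\theta$ from Lemma~\ref{lemma: partial-M}. No gap; the smoothness remark about interchanging derivatives is not even needed since you differentiate the closed-form expression for $\nabla_\theta\widetilde{\Lambda}$ directly in $t$.
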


\paragraph{Proof of Theorem~\ref{thm: obj-increasing}.}
Following~\eqref{eq: R-A-gamma}, 
\begin{align}
    \frac{\partial}{\partial t}\wR(t ;\theta) &= \frac{\partial}{\partial t} \left\{\frac{1}{t} \Gamma(t; \theta) \right\} \\
    & = -\frac{1}{t^2} \Gamma(t; \theta) - \frac{1}{t}\theta^\top \mathcal{M}(t; \theta) , \label{eq: follow-partial-gamma}\\
    & =: g(t; \theta),\label{eq:def-g}
\end{align}
where~\eqref{eq: follow-partial-gamma} follows from Lemma~\ref{lemma: parrial-Gamma}, and~\eqref{eq:def-g} defines $g(t; \theta).$

Let $g(0; \theta) := \lim_{t \to 0} g(t; \theta)$
Notice that
\begin{align}
    g(0; \theta) &=  \lim_{t \to 0} \left\{-\frac{1}{t^2} \Gamma(t; \theta) -\frac{1}{t} \theta^\top \mathcal{M}(t; \theta) \right\}\\
    &= - \lim_{t \to 0} \left\{ \frac{\frac{1}{t} \Gamma(t; \theta) + \theta^\top \cM(t; \theta)}{t}\right\}\\
    & =  \theta^\top \cV(0;\theta) \theta, \label{eq: follow-lopital}
\end{align}
where~\eqref{eq: follow-lopital} is due to L'H\^0pital's rule and Lemma~\ref{lemma: second-derivative-Lambda}.
 Now consider
\begin{align}
 \frac{\partial}{\partial t} \left\{ t^2 g(t; \theta)\right\} & = 
 \frac{\partial}{\partial t} \left\{-\Gamma(t; \theta) - t\theta^\top \cM(t; \theta)   \right\} \\
 &=  \theta^\top \cM(t; \theta) \label{eq: follow-dgamma} \\
 & \quad - \theta^\top \cM(t; \theta) + t \theta^\top \cV(t; \theta) \theta \label{eq: follow-dM}\\
 & = t \theta^\top \cV(t; \theta) \theta, \label{eq:137}
\end{align}
where $g(t; \theta)= \frac{\partial}{\partial t} \wR(t; \theta)$,~\eqref{eq: follow-dgamma} follows from Lemma~\ref{lemma: parrial-Gamma},~\eqref{eq: follow-dM} follows from the chain rule and Lemma~\ref{lemma: partial-M}.
Hence, $t^2 g(t; \theta)$ is an increasing function of $t$ for $t \in \mathbb{R}^{>0}$, and a decreasing function of $t$ for $t \in \mathbb{R}^-$, taking its minimum at $t = 0.$
Hence, $t^2 g(t; \theta)\geq 0$ for all $t \in \mathbb{R}.$ This implies that $g(t; \theta) \geq 0$ for all $t \in \mathbb{R},$ which in conjunction with~\eqref{eq:def-g} implies the statement of the theorem.


\subsection{General Properties of TERM Solutions for GLMs} \label{app: general-TERM-GLM-solution}
Next, we characterize some of the general properties of the solutions of  TERM objectives. Note that these properties are established under Assumptions~\ref{assump: expnential_family} and~\ref{assump:strict-saddle}.

\begin{lemma} \label{lem: Lambda-solution}
For all $t \in \mathbb{R},$  
\begin{equation}
 \nabla_\theta \widetilde{\Lambda}(t; \breve{\theta}(t)) = 0.
\end{equation}
\end{lemma}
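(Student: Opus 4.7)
The plan is to use the direct relationship between $\widetilde{\Lambda}$ and $\wR$, namely $\widetilde{\Lambda}(t;\theta) = t\,\wR(t;\theta)$ from the definition in~\eqref{eq: def-cumulant}. Differentiating with respect to $\theta$ gives
\[
\nabla_\theta \widetilde{\Lambda}(t; \theta) \;=\; t\, \nabla_\theta \wR(t; \theta),
\]
so the claim reduces to verifying that the right-hand side vanishes at $\theta = \breve{\theta}(t)$ for every $t \in \mathbb{R}$.

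The argument splits into two cases. For $t \neq 0$, I would appeal to first-order optimality: by definition, $\breve{\theta}(t) \in \arg\min_{\theta \in \Theta}\wR(t;\theta)$ (Assumption~\ref{assump:strict-saddle} guarantees the argmin set is nonempty), and since $\Theta$ is assumed open in $\mathbb{R}^d$ and $\wR(t;\cdot)$ is $C^1$ under Assumption~\ref{assump:smoothness} (Lemma~\ref{lem: Hessian} in fact gives $C^2$), the minimizer is interior, so $\nabla_\theta \wR(t;\breve{\theta}(t)) = 0$. Multiplying by $t$ gives the desired identity.

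For $t = 0$, the cleanest route is to observe directly from the definition that $\widetilde{\Lambda}(0;\theta) = \log\!\left(\tfrac{1}{N}\sum_i e^{0\cdot f(x_i;\theta)}\right) = \log 1 = 0$ for every $\theta \in \Theta$, so $\nabla_\theta \widetilde{\Lambda}(0;\theta) \equiv 0$ as a function of $\theta$, and in particular vanishes at $\breve{\theta}(0)$. Alternatively one can take the limit $t \to 0$ of the identity $\nabla_\theta \widetilde{\Lambda}(t;\breve{\theta}(t)) = t\nabla_\theta \wR(t;\breve{\theta}(t))$, invoking continuity of $\nabla_\theta \wR$ and of $\breve{\theta}(\cdot)$.

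There is no real obstacle here; the lemma is essentially a bookkeeping step that rewrites the first-order optimality condition for $\wR(t;\cdot)$ in terms of the cumulant generating function $\widetilde{\Lambda}$, which will be the convenient form for the characterizations of $\breve{\theta}(t)$ carried out in the subsequent results. The only subtlety worth flagging is the case $t=0$, where $\nabla_\theta \wR(0;\theta)$ need not vanish at $\breve{\theta}(0)$ in a nontrivial sense, but the factor of $t$ out front renders the product zero regardless.
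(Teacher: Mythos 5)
Your proof is correct and follows essentially the same route as the paper's: the lemma is just the first-order optimality condition for the interior minimizer $\breve{\theta}(t)$ over the open set $\Theta$, rewritten via $\widetilde{\Lambda}(t;\theta)=t\,\wR(t;\theta)$. Your explicit treatment of the $t=0$ case (where $\widetilde{\Lambda}(0;\cdot)\equiv 0$) is a harmless extra bit of care that the paper leaves implicit.
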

\begin{proof}
The proof follows from definition and the assumption that $\Theta$ is an open set.
\end{proof}
\begin{lemma}
\label{lem: A-M}
For all $t \in \mathbb{R},$
\begin{equation}
 \nabla_\theta A(\breve{\theta}(t)) = \mathcal{M}(t; \breve{\theta}(t)).
\end{equation}
\end{lemma}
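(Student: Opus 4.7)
The plan is to combine the two immediately preceding lemmas. By Lemma~\ref{lem: Lambda-solution}, the gradient of the cumulant generating function vanishes at the tilted solution, i.e., $\nabla_\theta \widetilde{\Lambda}(t; \breve{\theta}(t)) = 0$. On the other hand, Lemma~\ref{lem: dLambda-theta} provides the closed-form expression $\nabla_\theta \widetilde{\Lambda}(t; \theta) = t \nabla_\theta A(\theta) - t \mathcal{M}(t;\theta)$. Evaluating at $\theta = \breve{\theta}(t)$ and setting the result to zero yields
\begin{equation}
    t \left( \nabla_\theta A(\breve{\theta}(t)) - \mathcal{M}(t; \breve{\theta}(t)) \right) = 0.
\end{equation}
For any $t \in \mathbb{R}^{\setminus 0}$, dividing by $t$ gives the claim directly.

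The only step requiring care is the boundary case $t = 0$, where dividing by $t$ is not permitted. Here I would argue by continuous extension. Since the loss is of GLM form $f(x_i;\theta) = A(\theta) - \theta^\top T(x_i)$, the stationarity condition $\nabla_\theta \wR(0;\breve{\theta}(0)) = \frac{1}{N}\sum_{i \in [N]} \nabla_\theta f(x_i; \breve{\theta}(0)) = 0$ becomes
\begin{equation}
    \nabla_\theta A(\breve{\theta}(0)) = \frac{1}{N}\sum_{i \in [N]} T(x_i).
\end{equation}
Meanwhile, since $\Gamma(0;\theta) = \log 1 = 0$, the definition of $\mathcal{M}$ evaluated at $t=0$ simplifies to $\mathcal{M}(0;\theta) = \frac{1}{N}\sum_{i \in [N]} T(x_i)$ for every $\theta$, matching $\nabla_\theta A(\breve{\theta}(0))$. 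Alternatively, one can appeal to continuity of both $\nabla_\theta A(\breve{\theta}(t))$ and $\mathcal{M}(t;\breve{\theta}(t))$ in $t$ (which follows from Assumption~\ref{assump: expnential_family} together with the implicit-function-theorem-type smoothness of $\breve{\theta}(t)$ guaranteed by the strict saddle property in Assumption~\ref{assump:strict-saddle} and the strong convexity from Lemma~\ref{lem: Hessian} at least for $t \geq 0$) and take the limit $t \to 0$ in the identity established for nonzero $t$.

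I do not expect any substantive obstacle: the content of the lemma is essentially a first-order optimality restatement, and both ingredients are already in hand. The only mild subtlety is making the $t = 0$ case rigorous, which is handled either by direct computation using $\Gamma(0;\theta) = 0$ (the cleanest route) or by a continuity argument as above.
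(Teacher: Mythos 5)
Your proposal is correct and follows essentially the same route as the paper, which likewise proves the lemma by simply combining Lemma~\ref{lem: Lambda-solution} ($\nabla_\theta \widetilde{\Lambda}(t;\breve{\theta}(t))=0$) with the closed form of Lemma~\ref{lem: dLambda-theta} and cancelling $t$. Your explicit treatment of the $t=0$ case (via $\Gamma(0;\theta)=0$ and the ERM stationarity condition) is a welcome patch to a subtlety the paper's one-line proof leaves implicit, but it does not change the substance of the argument.
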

\begin{proof}
The proof is completed by noting Lemma~\ref{lem: Lambda-solution} and Lemma~\ref{lem: dLambda-theta}.
\end{proof}

\begin{lemma}[Derivative of the solution with respect to tilt]
\label{lem:implicit-func-theorem}
Under Assumption~\ref{assump:strict-saddle}, for all $t \in \mathbb{R},$
\begin{equation}
    \frac{\partial}{\partial t} \breve{\theta}(t) =  -  \left(  \nabla^2_{\theta\theta^\top} A(\breve{\theta}(t)) + t \mathcal{V}(t; \breve{\theta}(t))  \right)^{-1}  \mathcal{V}(t; \breve{\theta}(t)) \breve{\theta}(t),
\end{equation}
where
\begin{equation}
    \nabla^2_{\theta\theta^\top} A(\breve{\theta}(t)) + t \mathcal{V}(t; \breve{\theta}(t)) \succ 0
\end{equation}
{is a symmetric positive definite matrix.}
\end{lemma}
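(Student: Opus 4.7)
The plan is to apply the implicit function theorem to the stationarity condition recorded in Lemma~\ref{lem: A-M}. For every $t \in \mathbb{R}$, the solution $\breve{\theta}(t)$ satisfies
\begin{equation}
    \nabla_\theta A(\breve{\theta}(t)) \;=\; \mathcal{M}(t; \breve{\theta}(t)).
\end{equation}
Viewing this as an identity in $t$ and differentiating both sides with respect to $t$ yields the desired formula after elementary rearrangement. The only substantive work is in computing the total derivative on the right-hand side and justifying invertibility of the matrix that appears.

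Concretely, I would apply the chain rule to the left-hand side to obtain $\nabla^2_{\theta\theta^\top} A(\breve{\theta}(t)) \cdot \partial_t \breve{\theta}(t)$. For the right-hand side, I would invoke Lemma~\ref{lemma: partial-M}, which supplies both the explicit partial $\partial_t \mathcal{M}(t;\theta) = -\mathcal{V}(t;\theta)\theta$ and the Jacobian $\nabla_\theta \mathcal{M}(t;\theta) = -t\mathcal{V}(t;\theta)$; combining these, the total derivative of $\mathcal{M}(t;\breve{\theta}(t))$ equals $-\mathcal{V}(t;\breve{\theta}(t))\breve{\theta}(t) - t\mathcal{V}(t;\breve{\theta}(t))\cdot \partial_t \breve{\theta}(t)$. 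Equating the two sides and gathering the $\partial_t \breve{\theta}(t)$ terms produces
\begin{equation}
    \bigl(\nabla^2_{\theta\theta^\top} A(\breve{\theta}(t)) + t\,\mathcal{V}(t;\breve{\theta}(t))\bigr)\,\frac{\partial \breve{\theta}(t)}{\partial t} \;=\; -\,\mathcal{V}(t;\breve{\theta}(t))\,\breve{\theta}(t),
\end{equation}
and left-multiplication by the inverse yields the stated expression.

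The main obstacle is certifying that the matrix $\nabla^2_{\theta\theta^\top} A(\breve{\theta}(t)) + t\,\mathcal{V}(t;\breve{\theta}(t))$ is positive definite; this is simultaneously what validates the use of the implicit function theorem (so that $\breve{\theta}(t)$ is indeed differentiable in $t$) and what appears inverted in the conclusion. To handle this, I would observe that by Lemma~\ref{lem: dLambda-theta-theta} and the identity $\widetilde{\Lambda}(t;\theta) = t\,\wR(t;\theta)$ (equivalently $\wR(t;\theta) = A(\theta) + \Gamma(t;\theta)/t$ from Eq.~\eqref{eq: R-A-gamma}), one has the exact identification
\begin{equation}
    \nabla^2_{\theta\theta^\top} A(\theta) + t\,\mathcal{V}(t;\theta) \;=\; \nabla^2_{\theta\theta^\top} \wR(t;\theta).
\end{equation}
For $t \in \mathbb{R}^+$, Lemma~\ref{lem: Hessian} immediately gives $\nabla^2_{\theta\theta^\top} \wR(t;\theta) \succ \beta_{\min}\mathbf{I}$ under Assumption~\ref{assump: regularity}. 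For $t \in \mathbb{R}^-$, I would invoke the strict saddle property (Assumption~\ref{assump:strict-saddle}): since $\breve{\theta}(t)$ is by definition a local minimum of $\wR(t;\cdot)$, the Hessian there is positive definite. Symmetry of the matrix is immediate from the symmetry of $\nabla^2 A$ and $\mathcal{V}$, so positive-definiteness entails invertibility, completing both the justification of the implicit function theorem and the final positivity claim.
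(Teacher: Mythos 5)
Your proposal is correct and follows essentially the same route as the paper: implicit differentiation of the first-order optimality condition (the paper differentiates $\nabla_\theta\widetilde{\Lambda}(t;\breve{\theta}(t))=0$ and then cancels a factor of $t$, while you differentiate the equivalent identity $\nabla_\theta A(\breve{\theta}(t))=\mathcal{M}(t;\breve{\theta}(t))$ from Lemma~\ref{lem: A-M}), using Lemma~\ref{lemma: partial-M} for the derivatives of $\mathcal{M}$ and the strict saddle property for invertibility. If anything, your explicit identification of the matrix as $\nabla^2_{\theta\theta^\top}\wR(t;\theta)$ with a case split ($t>0$ via Lemma~\ref{lem: Hessian}, general $t$ via Assumption~\ref{assump:strict-saddle}) is slightly more careful than the paper's one-line appeal to positive definiteness of $\nabla^2_{\theta\theta^\top}\widetilde{\Lambda}$.
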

\begin{proof}
By noting Lemma~\ref{lem: Lambda-solution}, and further differentiating with respect to $t$, we have
\begin{align}
     0 &= \frac{\partial}{\partial t}\nabla_\theta \widetilde{\Lambda}(t; \breve{\theta}(t)) \\
     & = \left.\frac{\partial}{\partial \tau}\nabla_\theta \widetilde{\Lambda}(\tau; \breve{\theta}(t))\right|_{\tau = t} +  \nabla^2_{\theta \theta^\top} \widetilde{\Lambda}(t; \breve{\theta}(t)) \left(\frac{\partial}{\partial t} \breve{\theta}(t) \right) \label{eq: eq:2}\\
     & = t \cV(t; \breve{\theta}(t)) \breve{\theta}(t) + \left( t \nabla^2_{\theta\theta^\top} A(\theta) + t^2 \mathcal{V}(t; \theta)\right)\left(\frac{\partial}{\partial t} \breve{\theta}(t) \right) \label{eq: eq:3},
\end{align}
where~\eqref{eq: eq:2} follows from the chain rule,~\eqref{eq: eq:3} follows from Lemmas~\ref{lem: dLambda-t-theta} and~\ref{lem: A-M} and~\ref{lem: dLambda-theta-theta}. The proof is completed by noting that $\nabla^2_{\theta \theta^\top} \widetilde{\Lambda}(t; \breve{\theta}(t))$ {is symmetric positive definite} for all $t \in \mathbb{R}$ under Assumption~\ref{assump:strict-saddle}.
\end{proof}


Finally, we state an auxiliary lemma that will be used in the proof of the main theorem.
\begin{lemma}\label{lem: auxiliary}
For all $t, \tau \in \mathbb{R}$ and all $\theta \in \Theta,$
\begin{align}
    \mathcal{M}(\tau; \theta) - \mathcal{M}(t; \theta) &= -\left(\int_{t}^{\tau} \mathcal{V}(\nu; \theta)  d\nu\right) \theta.
\end{align}
\end{lemma}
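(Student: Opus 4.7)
The plan is to obtain this identity as a direct consequence of the fundamental theorem of calculus applied to $\mathcal{M}(\cdot;\theta)$ on the interval with endpoints $t$ and $\tau$, holding the second argument $\theta$ fixed throughout. By Lemma~\ref{lemma: partial-M}, we already have the key derivative formula
\begin{equation*}
    \frac{\partial}{\partial \nu} \mathcal{M}(\nu; \theta) = -\mathcal{V}(\nu; \theta)\,\theta,
\end{equation*}
so the integrand in the stated identity is exactly the negative of the partial derivative we need.

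First, I would justify applicability of the fundamental theorem of calculus: for any fixed $\theta \in \Theta$, the map $\nu \mapsto \mathcal{M}(\nu;\theta)$ is continuously differentiable in $\nu \in \mathbb{R}$, since $\mathcal{M}(\nu;\theta) = \frac{1}{N}\sum_{i \in [N]} T(x_i)\, e^{-\nu \theta^\top T(x_i) - \Gamma(\nu;\theta)}$ is a finite sum of smooth functions of $\nu$. Hence $\nu \mapsto \mathcal{V}(\nu;\theta)\theta$ is continuous on $\mathbb{R}$ and the integral $\int_t^\tau \mathcal{V}(\nu;\theta)\,d\nu$ is well-defined.

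Second, I would integrate the derivative identity from Lemma~\ref{lemma: partial-M} with respect to $\nu$ from $t$ to $\tau$:
\begin{equation*}
    \mathcal{M}(\tau;\theta) - \mathcal{M}(t;\theta) = \int_t^\tau \frac{\partial}{\partial \nu} \mathcal{M}(\nu;\theta)\,d\nu = -\int_t^\tau \mathcal{V}(\nu;\theta)\,\theta\,d\nu,
\end{equation*}
and then factor the constant vector $\theta$ out of the matrix-valued integral to obtain $-\bigl(\int_t^\tau \mathcal{V}(\nu;\theta)\,d\nu\bigr)\theta$, completing the proof. There is no real obstacle here: the only subtlety is confirming that the formula from Lemma~\ref{lemma: partial-M} extends uniformly to all $\nu$ in the integration interval (not only the endpoint) and that swapping the order of the finite sum with the integral is justified, both of which follow from the smoothness of the exponential family parameterization under Assumption~\ref{assump: expnential_family}.
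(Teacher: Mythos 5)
Your proposal is correct and follows essentially the same route as the paper's own proof: integrate the derivative identity of Lemma~\ref{lemma: partial-M} from $t$ to $\tau$ via the fundamental theorem of calculus and factor out $\theta$. The extra smoothness justification you include is fine but not needed beyond what the paper already assumes.
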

\begin{proof}The proof is completed by noting that
\begin{align}
    \mathcal{M}(\tau; \theta) - \mathcal{M}(t; \theta) &= \int_{t}^{\tau} \frac{\partial}{\partial \nu} \mathcal{M}(\nu; \theta) d\nu = -\left(\int_{t}^{\tau} \mathcal{V}(\nu; \theta)  d\nu\right) \theta.
\end{align}
\end{proof}

\paragraph{Proof of Theorem~\ref{thm: opt-obj-increasing}.}
Notice that for all $\theta$, and all $\epsilon \in \mathbb{R}^{>0},$
\begin{align}
    \wR(t+\epsilon; \theta) &\geq \wR(t; \theta)\label{eq: follow-thm-2}\\
    & \geq \wR(t; \breve{\theta}(t)),\label{eq: optimality-follow}
\end{align}
where~\eqref{eq: follow-thm-2} follows from Theorem~\ref{thm: obj-increasing} and~\eqref{eq: optimality-follow} follows from the definition of $\breve{\theta}(t)$. Hence,
\begin{equation}
    \wR(t+\epsilon; \breve{\theta}(t+\epsilon)) = 
    \min_{\theta \in B(\breve{\theta}(t), r)} \wR(t+\epsilon; \theta) \geq \wR(t; \breve{\theta}(t)),
\end{equation}
which completes the proof. \hfill \qedsymbol

\paragraph{Proof of Theorem~\ref{thm: variance-reduction}.}
{Recall that 
$f(x_i; \theta ) = A(\theta) - \theta^\top T(x_i).$ Thus, 
\begin{equation}
  \widehat{E}_t(\mathbf{f}(\theta)) =  \sum_{i \in [N]} w_i(t; \breve{\theta}(t)) f(x_i;\theta) =  A(\theta) - \theta^\top \sum_{i \in [N]} w_i(t; \breve{\theta}(t)) T(x_i) = A(\theta) - \theta^\top \cM_t,
\end{equation}
where $\cM_t$ is defined in~\eqref{eq: t_tilted_mean_statistic}.
Consequently, 
\begin{align}
 \widehat{\var}_t(\mathbf{f}(\theta)) &=  \widehat{E}_t \left( f(x_i; \theta) - \widehat{E}_t(\mathbf{f}(\theta) \right)^2\\
 & = \widehat{E}_t \left( \theta^\top T(x_i) - \theta^\top \cM_t \right)^2\\
 & = \theta^\top \widehat{E}_t  \left( (T(x_i) - \cM_t)(T(x_i) - \cM_t)^\top \right) \theta\\
 & = \theta^\top \mathcal{V}_t \theta,
\end{align}
where $\mathcal{V}_t$ is defined in~\eqref{eq: t_tilted_variance_statistic}.
Hence, 
\begin{align}
     \frac{\partial}{\partial \tau} \left\{ \widehat{\var}_t (\mathbf{f}(\breve{\theta}(\tau)))\right\} & = \left( \frac{\partial}{\partial \tau} \breve{\theta}(\tau) 
 \right)^\top \nabla_\theta \left\{ \widehat{\var}_t (\mathbf{f}(\breve{\theta}(\tau)))\right\}\\
 & = 2\left( \frac{\partial}{\partial \tau} \breve{\theta}(\tau) 
 \right)^\top \mathcal{V}_t \breve{\theta}(\tau)\\
 & = -2\breve{\theta}^\top(\tau)
 \mathcal{V}(\tau; \breve{\theta}(\tau)) 
 \left(  \nabla^2_{\theta\theta} A(\breve{\theta}(\tau)) + \tau \mathcal{V}(\tau; \breve{\theta}(\tau))  \right)^{-1}  \mathcal{V}_t \breve{\theta}(\tau),
\end{align}
and in turn 
\begin{align}
     \left. \frac{\partial}{\partial \tau} \left\{ \widehat{\var}_t(\mathbf{f}(\breve{\theta}(\tau)))\right\} \right |_{\tau=t}  \leq 0,
\end{align}
where we have used the fact that $\mathcal{V}_\tau 
 \left(  \nabla^2_{\theta\theta} A(\breve{\theta}(\tau)) + \tau \mathcal{V}_\tau  \right)^{-1}  \mathcal{V}_\tau $ is a symmetric positive semidefinite matrix (due to Lemma~\ref{lem: V}), hence completing the proof. \hfill \qedsymbol}

\paragraph{Proof of Theorem~\ref{thm: uniform_gradient_weights}.}
Notice that
\begin{align}
    H\left({\bf w}(t; \theta)\right) &= -  \sum_{i \in [N]} w_i(t; \theta) \log w_i(t; \theta)  \\
    & = -\frac{1}{N} \sum_{i \in [N]}  (tf(x_i; \theta) - \widetilde{\Lambda}(t;\theta)) e^{tf(x_i; \theta) - \widetilde{\Lambda}(t;\theta)} \\
    & = \widetilde{\Lambda}(t;\theta) -t \frac{1}{N} \sum_{i \in [N]}  f(x_i; \theta)  e^{t f(x_i; \theta)  - \widetilde{\Lambda}(t;\theta) } \\
    & = \widetilde{\Lambda}(t;\theta) - t A(\theta) + t\theta^\top \mathcal{M}(t;\theta).
\end{align}
Thus,
\begin{align}
    \nabla_\theta H\left({\bf w}(t; \theta)\right) &=
\nabla_\theta  \left( \widetilde{\Lambda}(t;\theta) - t A(\theta) + t\theta^\top \mathcal{M}(t;\theta) \right)\\
& = t \nabla_\theta A(\theta) - t \mathcal{M}(t; \theta) - t\nabla_\theta A(\theta) + t \mathcal{M}(t;\theta) - t^2\mathcal{V}(t;\theta) \theta\\
&= - t^2 \mathcal{V}(t;\theta) \theta.
\end{align}
Hence,
\begin{align}
  \frac{\partial}{\partial \tau} H\left({\bf w}(t; \breve{\theta}(\tau))\right) &= \left( \frac{\partial}{\partial \tau} \breve{\theta}(\tau) 
 \right)^\top  \nabla_\theta H\left({\bf w}( t; \breve{\theta}(\tau))\right)
 \\
 &= \left( \frac{\partial}{\partial \tau} \breve{\theta}(\tau) 
 \right)^\top
\nabla_\theta  \left( \widetilde{\Lambda}(t;\theta) - t A(\theta) + t\theta^\top \mathcal{M}(t;\theta) \right)\\
& = t^2 \breve{\theta}^\top(\tau)
 \mathcal{V}(\tau; \breve{\theta}(\tau)) 
 \left(  \nabla^2_{\theta\theta} A(\breve{\theta}(\tau)) + \tau \mathcal{V}(\tau; \breve{\theta}(\tau))  \right)^{-1}   \mathcal{V}(t;\breve{\theta}(\tau)) \breve{\theta}(\tau)
\end{align}
and 
\begin{align}
   \left. \frac{\partial}{\partial \tau} H\left({\bf w}(t; \breve{\theta}(\tau))\right) \right|_{t=\tau} \geq 0,
\end{align}
completing the proof. \hfill \qedsymbol

{There are different ways to define performance uniformity. In Theorem~\ref{thm: cosine-similarity}, we further prove that the tilted cosine similarity between the scaled loss vector and the all-ones vector increases as $t$ decreases by a small amount, which shows that larger $t$ promotes a more \textit{uniform} performance across all losses and can have implications for fairness defined as representation disparity~\citep{hashimoto2018fairness} (Section~\ref{sec:exp:fairness}).} 

{
\begin{definition}[$t$-tilted cosine similarity]
For $\mathbf{u}, \mathbf{v} \in \mathbb{R}^N,$
let cosine similarity be defined as
\begin{equation}
    s(\mathbf{u}, \mathbf{v}) := \frac{\mathbf{u}^\top \mathbf{v}}{\|\mathbf{u}\|_2 \|\mathbf{v}\|_2}.
\end{equation}
For a weight vector $\mathbf{w},$ let the weighted cosine similarity be defined as
\begin{equation}
    s_{\mathbf{w}} (\mathbf{u}, \mathbf{v}) := s\left(\sqrt{\mathbf{W}} \mathbf{u}, \sqrt{\mathbf{W}} \mathbf{v}\right),
\end{equation}
where $\mathbf{W} := \text{diag}(\mathbf{w}).$
In particular, we call $ s_{\mathbf{w}(t; \breve{\theta}(t))} (\cdot, \cdot)$ the $t$-tilted cosine similarity.
\end{definition}

\begin{theorem}[$t$-tilted cosine similarity of the scaled loss vector and the all-ones vector increases with $t$]\label{thm: cosine-similarity}
Let
\begin{equation}
    \mathbf{f}^+(\theta):= \left\{f(x_i; \theta) - \wF(-\infty)\right\}_{i \in [N]},
\end{equation} 
where $\wF(-\infty)$ is defined in Eq.~\eqref{eq: opt_obj}, and let $\mathbf{1}_N$ denote the all-one $N$-vector.
 Then, under Assumption~\ref{assump: expnential_family} and Assumption~\ref{assump:strict-saddle}, for any $t \in \mathbb{R},$
\begin{equation}
    \left. \frac{\partial}{\partial t} \left\{ s_{\mathbf{w}(\tau, \breve{\theta}(\tau))}\left(\mathbf{f}^+(\breve{\theta}(t)) , \mathbf{1}_N \right)\right\} \right|_{\tau = t}>0,
\end{equation}
where $\mathbf{w}(t; \breve{\theta}(t))$ is the tilted weight vector defined in Eq.~\eqref{eq: w_i}.
\end{theorem}
\begin{proof}
Notice that 
\begin{equation}
    s_{\mathbf{w}(t; \breve{\theta}(t))}(\mathbf{f}^+(\theta), \mathbf{1}_N) = \frac{\widehat{E}_t f(x_i; \theta)-\wF(-\infty) }{\sqrt{\widehat{E}_t (f(x_i; \theta)-\wF(-\infty))^2}}.
\end{equation}
Hence,
\begin{align}
    \widehat{E}_t f(x_i; \theta) -\wF(-\infty) &=  A(\theta) - \theta^\top \mathcal{M}_t -\wF(-\infty),\\
    \widehat{E}_t (f(x_i; \theta) -\wF(-\infty))^2 & = (A(\theta) - \theta^\top \mathcal{M}_t -\wF(-\infty))^2 +  \theta^\top \mathcal{V}_t\theta,
\end{align}
where $\cM_t$ and $\cV_t$ are defined in~\eqref{eq: t_tilted_mean_statistic} and~\eqref{eq: t_tilted_variance_statistic}, respectively.
Notice that 
\begin{align}
       & \nabla_\theta \left\{ s_{\mathbf{w}(t; \breve{\theta}(t))}^2(\mathbf{f}^+(\theta), \mathbf{1}_N)\right\} \\ 
       & = \nabla_\theta \left\{ \frac{\left(\widehat{E}_t f(x_i; \theta)-\wF(-\infty) \right)^2}{\widehat{E}_t (f(x_i; \theta)-\wF(-\infty))^2}\right\}\\
       & = \nabla_\theta \left\{ \frac{(A(\theta) - \theta^\top \mathcal{M}_t -\wF(-\infty))^2}{(A(\theta) - \theta^\top \mathcal{M}_t-\wF(-\infty))^2 +  \theta^\top \mathcal{V}_t\theta} \right\}\\
       & = \frac{2(A(\theta) - \theta^\top \mathcal{M}_t-\wF(-\infty)) (\nabla_{\theta} A(\theta) - \mathcal{M}_t) \theta^\top \mathcal{V}_t\theta - 2(A(\theta) - \theta^\top \mathcal{M}_t-\wF(-\infty))^2\mathcal{V}_t\theta }
       {\left((A(\theta) - \theta^\top \mathcal{M}_t-\wF(-\infty))^2 +  \theta^\top \mathcal{V}_t\theta \right)^2}\\
       & = \frac{2(A(\theta) - \theta^\top \mathcal{M}_t -\wF(-\infty)) \left(  \theta^\top(\nabla_{\theta} A(\theta) - \mathcal{M}_t) - A(\theta)+ \theta^\top \mathcal{M}_t + \wF(-\infty)\right) \mathcal{V}_t\theta }
       {\left((A(\theta) - \theta^\top \mathcal{M}_t-\wF(-\infty))^2 +  \theta^\top \mathcal{V}_t\theta \right)^2}\\
       & = \frac{2(A(\theta) - \theta^\top \mathcal{M}_t-\wF(-\infty)) \left(  \theta^\top\nabla_{\theta} A(\theta)  - A(\theta) +\wF(-\infty)\right) \mathcal{V}_t\theta }
       {\left((A(\theta) - \theta^\top \mathcal{M}_t-\wF(-\infty))^2 +  \theta^\top \mathcal{V}_t\theta \right)^2}.
\end{align}
Hence, 
\begin{align}
     &\frac{\partial}{\partial \tau} \left\{s_{\mathbf{w}(t; \breve{\theta}(t))}^2(\mathbf{f}^+(\breve{\theta}(\tau)), \mathbf{1}_N)\right\} \\ & = \left( \frac{\partial}{\partial \tau} \breve{\theta}(\tau) 
 \right)^\top \nabla_\theta \left\{s_{\mathbf{w}(t; \breve{\theta}(t))}^2(\mathbf{f}^+(\breve{\theta}(\tau)), \mathbf{1}_N) \right\}\\
 & = -\breve{\theta}^\top(\tau)
 \mathcal{V}(\tau; \breve{\theta}(\tau)) 
 \left(  \nabla^2_{\theta\theta} A(\breve{\theta}(\tau)) + \tau \mathcal{V}(\tau; \breve{\theta}(\tau))  \right)^{-1}  \nonumber\\
& \quad \quad \times -\frac{2(A(\breve{\theta}(\tau)) - \breve{\theta}(\tau)^\top \mathcal{M}_t-\wF(-\infty)) (A(\breve{\theta}(\tau)) - \breve{\theta}(\tau)^\top \mathcal{M}(\tau;\breve{\theta}(\tau))-\wF(-\infty))  }
       {\left((A(\breve{\theta}(\tau)) - \breve{\theta}(\tau)^\top \mathcal{M}_t-\wF(-\infty))^2 +  \breve{\theta}(\tau)^\top \mathcal{V}_t\theta \right)^2} \mathcal{V}_t \breve{\theta}(\tau).\label{eq:184}
\end{align}
Note that $2(A(\breve{\theta}(\tau)) - \breve{\theta}(\tau)^\top \mathcal{M}_t-\wF(-\infty)) (A(\breve{\theta}(\tau)) - \breve{\theta}(\tau)^\top \mathcal{M}(\tau;\breve{\theta}(\tau))-\wF(-\infty)) > 0$ by definition, and $\mathcal{V}_\tau \left(  \nabla^2_{\theta\theta} A(\breve{\theta}(\tau)) + \tau \mathcal{V}_\tau  \right)^{-1} \mathcal{V}_\tau$ is a symmetric positive semi-definite matrix. Therefore,
The proof is completed following that the quantity in Eq.~\eqref{eq:184} is non-negative for $t = \tau.$ 
\end{proof}}

\newpage
\section{Connections to Other Objectives (Proofs and Additional Results)}
\label{app:superquantile}

\begin{figure}[h!]
    \centering
    \begin{subfigure}{0.49\textwidth}
        \centering
        \includegraphics[width=\textwidth]{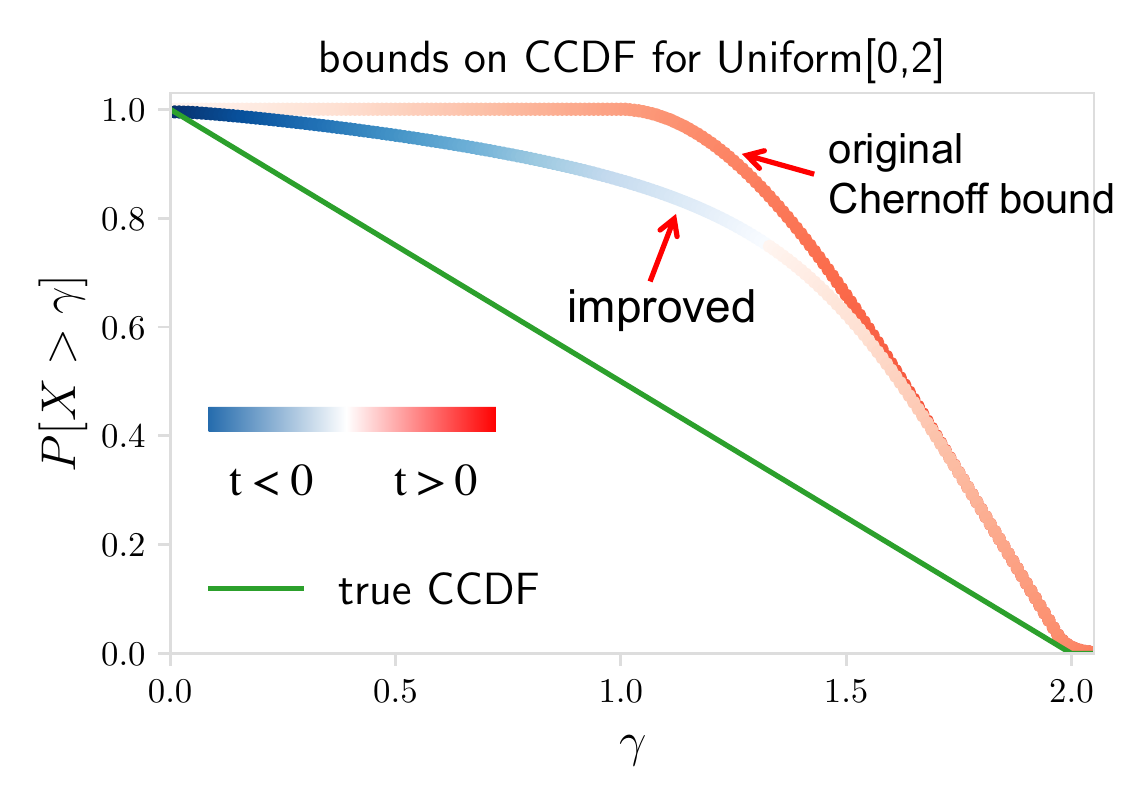}
    \end{subfigure}
    \hfill
    \begin{subfigure}{0.49\textwidth}
        \centering
        \includegraphics[width=\textwidth]{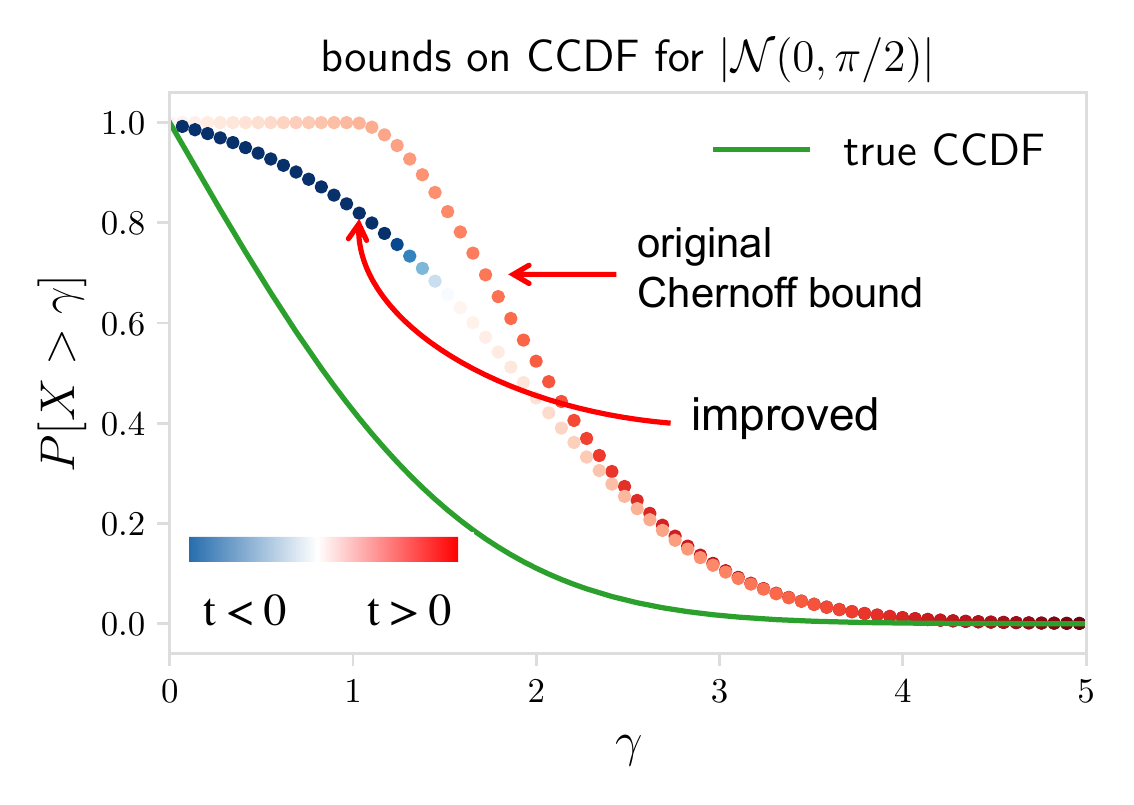}
    \end{subfigure}
    \caption{{Comparing the new Chernoff bound on complementary CDF (CCDF) (i.e., $P[X \geq \gamma]$) proposed in Theorem~\ref{thm: new-chernoff} (denoted as `improved') with the original Chernoff bound in two cases: $X \sim \text{Uniform}[0,2]$ and $X \sim \left|\mathcal{N}(0, \pi/2)\right|$. We see that by sweeping $t$ from all real numbers, our bound is significantly tighter than the generic Chernoff bound which optimizes over $t \in \mathbb{R}^{+}$, especially in the small deviations regime.}}
\label{fig:new_chernoff_bound}
\end{figure}

\begin{lemma}
If $a<\wF(- \infty)$ then $\wQ^0(\gamma) = 1$. Further, if $\gamma> \wF(+\infty)$ then $\wQ^0(\gamma) = 0$,
where $\wF(\cdot)$ is defined in Definition~\ref{def:wF}, and is reproduced here:
\begin{align}
    \wF(-\infty) &= \lim_{t \to -\infty} \wR(t; \breve{\theta}(t)) = \min_{\theta} \min_{i \in [N]} f(x_i; \theta),\\ 
    \wF(+\infty) &= \lim_{t \to +\infty} \wR(t; \breve{\theta}(t)) = \min_{\theta} \max_{i \in [N]} f(x_i; \theta).
\end{align}
\end{lemma}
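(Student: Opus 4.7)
The plan is to unpack the definitions directly; essentially no machinery beyond Lemma~\ref{lemma: special_cases} is needed. Recall that
\[
\widetilde{Q}(\gamma;\theta) = \frac{1}{N}\sum_{i \in [N]} \mathbb{I}\{f(x_i;\theta)\geq \gamma\}, \qquad \widetilde{Q}^0(\gamma) = \min_{\theta}\widetilde{Q}(\gamma;\theta),
\]
and, by Lemma~\ref{lemma: special_cases} together with the definition of $\widetilde{F}$,
\[
\widetilde{F}(-\infty) = \min_{\theta}\min_{i\in[N]} f(x_i;\theta), \qquad \widetilde{F}(+\infty) = \min_{\theta}\max_{i\in[N]} f(x_i;\theta).
\]

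For the first claim, I fix $\gamma<\widetilde{F}(-\infty)$ and an arbitrary $\theta\in\Theta$. By the definition of $\widetilde{F}(-\infty)$, every individual loss satisfies $f(x_i;\theta)\geq \min_{j}f(x_j;\theta)\geq \widetilde{F}(-\infty) > \gamma$, so each indicator $\mathbb{I}\{f(x_i;\theta)\geq\gamma\}$ equals $1$. Hence $\widetilde{Q}(\gamma;\theta)=1$ for every $\theta$, and taking the minimum gives $\widetilde{Q}^0(\gamma)=1$.

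For the second claim, I fix $\gamma>\widetilde{F}(+\infty)$. Under Assumption~\ref{assump:strict-saddle}, $\arg\min_{\theta}\max_{i\in[N]}f(x_i;\theta)$ is non-empty (this is the $t\to+\infty$ limit appearing in Lemma~\ref{lemma: special_cases}), so there exists $\theta^{\star}\in\Theta$ with $\max_{i\in[N]} f(x_i;\theta^{\star})=\widetilde{F}(+\infty)<\gamma$. Then $f(x_i;\theta^{\star})<\gamma$ for every $i$, so every indicator vanishes, $\widetilde{Q}(\gamma;\theta^{\star})=0$, and consequently $\widetilde{Q}^0(\gamma)=0$.

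The only potential obstacle is the attainment of the minimum in the definition of $\widetilde{F}(+\infty)$; if one wished to drop that attainment hypothesis, the argument would instead pick a sequence $\theta_k$ with $\max_i f(x_i;\theta_k)\to\widetilde{F}(+\infty)$ and observe that for all sufficiently large $k$ one has $\max_i f(x_i;\theta_k)<\gamma$, so $\widetilde{Q}(\gamma;\theta_k)=0$ and therefore $\widetilde{Q}^0(\gamma)=0$. In our setting this refinement is unnecessary since Assumption~\ref{assump:strict-saddle} guarantees non-emptiness of the relevant $\arg\min$.
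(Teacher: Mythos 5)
Your proof is correct. The paper states this lemma without proof, as an immediate consequence of the definitions, and your argument is exactly that intended unpacking: for $\gamma<\wF(-\infty)$ every loss exceeds $\gamma$ at every $\theta$, so $\wQ(\gamma;\theta)\equiv 1$; for $\gamma>\wF(+\infty)$ some $\theta$ (or, absent attainment, a near-optimal $\theta_k$ with $\max_i f(x_i;\theta_k)<\gamma$, which suffices since $\wQ$ takes values in the finite set $\{0,\tfrac1N,\dots,1\}$ and $0$ is then achieved) makes every indicator vanish, giving $\wQ^0(\gamma)=0$ --- your remark on the attainment issue is a sensible precaution but not a gap.
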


Next, we present our main result on the connection between {tail distribution of losses} and TERM, using Theorem~\ref{thm: new-chernoff}.

\begin{theorem}
For all $t \in \mathbb{R},$ and all $\theta$, and all $\gamma \in (\wF(-\infty), \wF(+\infty)),$\footnote{We define the RHS at $t=0$ via continuous extension.}
\begin{equation}
    \wQ(\gamma; \theta) \leq \overline{Q}(\gamma; t, \theta) := \frac{e^{ \wR(t; \theta) t} -  e^{\wF(-\infty) t}}{e^{\gamma t} -  e^{\wF(-\infty) t}}.
\label{eq:Q-thm}
\end{equation}
\label{thm:Q}
\end{theorem}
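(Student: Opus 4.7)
The bound has the unmistakable flavor of a Chernoff/Markov argument, and that is the route I would take: dominate the indicator $\mathbb{I}\{a \geq \gamma\}$ by an exponential expression that becomes tractable after averaging. Concretely, for a single scalar $a$ with the a priori lower bound $a \geq m := \wF(-\infty)$, I claim that for every $t \in \mathbb{R}^{\setminus 0}$ and every $\gamma \in (m, +\infty)$,
\begin{equation}
\mathbb{I}\{a \geq \gamma\} \;\leq\; \frac{e^{t a} - e^{t m}}{e^{t \gamma} - e^{t m}}. \label{eq:pointwise}
\end{equation}
The key step of the proof is to establish \eqref{eq:pointwise}; once it is in hand, everything else is just averaging.

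To verify \eqref{eq:pointwise}, I would split into the two cases $a \geq \gamma$ and $a < \gamma$, and within each case further split on the sign of $t$. For $t > 0$, both numerator and denominator of the RHS are nonnegative (using $a \geq m$ and $\gamma > m$), and the inequality reduces to the monotonicity of $x \mapsto e^{tx}$. For $t < 0$, the numerator and denominator are both nonpositive, and exactly the same monotonicity check (with a flipped sign) delivers the bound. The case $a < \gamma$ is immediate since the LHS is zero and the RHS has matching signs in numerator and denominator. The mild subtlety to watch here is sign-tracking in the $t < 0$ regime, which is where the condition $\gamma > \wF(-\infty)$ is genuinely used to ensure the denominator is nonzero.

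With \eqref{eq:pointwise} secured, I would instantiate $a = f(x_i; \theta)$ and $m = \wF(-\infty)$ (valid as a uniform lower bound on $f(x_i; \theta)$ by the definition of $\wF(-\infty)$), average over $i \in [N]$, and pull the constants out:
\begin{align}
\wQ(\gamma; \theta) &= \frac{1}{N} \sum_{i \in [N]} \mathbb{I}\{f(x_i; \theta) \geq \gamma\} \;\leq\; \frac{\frac{1}{N}\sum_{i \in [N]} e^{t f(x_i; \theta)} - e^{t \wF(-\infty)}}{e^{t \gamma} - e^{t \wF(-\infty)}}.
\end{align}
The numerator is, by the definition of TERM, exactly $e^{t \wR(t;\theta)} - e^{t \wF(-\infty)}$, which yields the stated bound for $t \neq 0$.

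Finally, I would extend to $t = 0$ by continuous extension: both sides of \eqref{eq:Q-thm} admit well-defined limits as $t \to 0$ (the right-hand side via L'H\^opital, giving $(\overline{R}(\theta) - \wF(-\infty))/(\gamma - \wF(-\infty))$, i.e., the Markov bound applied to the shifted nonnegative random variable $f(x_i;\theta) - \wF(-\infty)$), and the inequality is preserved in the limit. The main obstacle, as noted above, is the sign-bookkeeping in the proof of \eqref{eq:pointwise} for $t < 0$; once that pointwise inequality is established, the result follows from a one-line averaging argument.
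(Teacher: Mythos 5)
Your proposal is correct and is essentially the paper's argument in a slightly reorganized form: your pointwise bound \(\mathbb{I}\{a\geq\gamma\}\leq (e^{ta}-e^{tm})/(e^{t\gamma}-e^{tm})\) is equivalent to the paper's step of applying \(\mathbb{I}\{x\geq 1\}\leq x\) inside the exponent (after the exact identity of Lemma~\ref{lem:Q1}), with the same sign-tracking for \(t<0\) and the same identification \(\frac{1}{N}\sum_i e^{tf(x_i;\theta)}=e^{t\wR(t;\theta)}\). No gaps; the averaging step and the \(t=0\) continuous extension are handled correctly.
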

\begin{proof}
{The proof is a direct application of  Theorem~\ref{thm: new-chernoff} to the non-negative random variable $(f(X; \theta) - \wF(-\infty) )$, where $X$ is distributed according to the empirical distribution. 
}
\end{proof}

Recall that optimizing $\widetilde{\text{VaR}}$ is equivalent to optimizing $\wQ$. Next we show how TERM is related to optimizing $\wQ$. Recall that $\wQ^0(\gamma)$ denotes the optimal value of $\wQ(\gamma; \theta)$ optimized over $\theta.$ Let
\begin{equation}
     \wQ^1(\gamma) := \inf_{t\in \mathbb{R}} \left\{ \wQ(\gamma; \breve{\theta}(t))\right\},
\end{equation}
which denotes the value at risk optimized over the $t$-tilted solutions.
\begin{theorem}
For all $\gamma \in (\wF(-\infty), \wF(+\infty)),$ we have 
\begin{equation}
     \wQ^0(\gamma) \leq \wQ^1(\gamma) \leq \wQ^2(\gamma) \leq \wQ^3(\gamma) = \inf_{t \in \mathbb{R}} \left\{ \overline{Q}(\gamma, t)  \right\},
\label{eq:Q-2}
\end{equation}
where
\begin{align}
     \overline{Q}(\gamma, t)& :=  \frac{e^{\wF(t) t} - e^{\wF(-\infty) t} }{e^{\gamma t} - e^{\wF(-\infty) t}},\\
     \wt^3(\gamma) &:= \arg\inf_{t \in \mathbb{R}} \left\{ \overline{Q}(\gamma, t) \right\}, \\
    \wQ^2(\gamma) &:= \wQ(\gamma; \breve{\theta}(\wt^3(\gamma))),\\
    \wQ^3(\gamma) &:= \overline{Q}(\gamma, \wt^3(\gamma)) .
\end{align}
\label{thm:Q-opt}
\end{theorem}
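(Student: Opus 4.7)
The plan is to prove the chain of inequalities term by term, with the equality on the right following directly from the definition of $\wt^3(\gamma)$. Each inequality has a distinct justification: (i) $\wQ^0(\gamma) \leq \wQ^1(\gamma)$ is a restriction-of-domain argument; (ii) $\wQ^1(\gamma) \leq \wQ^2(\gamma)$ is an infimum-over-$t$ argument; (iii) $\wQ^2(\gamma) \leq \wQ^3(\gamma)$ is an application of Theorem~\ref{thm:Q} at a specific tilt and parameter.

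For inequality (i), I would observe that by definition $\wQ^0(\gamma) = \min_{\theta} \wQ(\gamma; \theta)$ is the infimum of $\wQ(\gamma; \cdot)$ over \emph{all} $\theta \in \Theta$, while $\wQ^1(\gamma) = \inf_{t \in \mathbb{R}} \wQ(\gamma; \breve{\theta}(t))$ restricts this infimum to the one-parameter family of tilted solutions $\{\breve{\theta}(t)\}_{t \in \mathbb{R}}$. Infimizing over a subset can only increase the value, so $\wQ^0(\gamma) \leq \wQ^1(\gamma)$. For inequality (ii), I would note that $\wQ^1(\gamma) = \inf_{t} \wQ(\gamma; \breve{\theta}(t))$ is bounded above by the value of this function at the particular choice $t = \wt^3(\gamma)$, which is precisely $\wQ^2(\gamma)$ by definition.

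The heart of the argument is inequality (iii). I would invoke Theorem~\ref{thm:Q} with $\theta = \breve{\theta}(\wt^3(\gamma))$ and $t = \wt^3(\gamma)$, which yields
\begin{equation}
    \wQ(\gamma; \breve{\theta}(\wt^3(\gamma))) \leq \frac{e^{\wR(\wt^3(\gamma); \breve{\theta}(\wt^3(\gamma)))\, \wt^3(\gamma)} - e^{\wF(-\infty)\, \wt^3(\gamma)}}{e^{\gamma\, \wt^3(\gamma)} - e^{\wF(-\infty)\, \wt^3(\gamma)}}.
\end{equation}
By the definition of $\wF$ in~\eqref{def:wF}, $\wR(\wt^3(\gamma); \breve{\theta}(\wt^3(\gamma))) = \wF(\wt^3(\gamma))$, so the right-hand side equals $\overline{Q}(\gamma, \wt^3(\gamma)) = \wQ^3(\gamma)$. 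The left-hand side is $\wQ^2(\gamma)$, giving the desired bound.

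Finally, the identity $\wQ^3(\gamma) = \inf_{t \in \mathbb{R}} \overline{Q}(\gamma, t)$ is immediate from the definitions $\wt^3(\gamma) := \arg\inf_{t} \overline{Q}(\gamma, t)$ and $\wQ^3(\gamma) := \overline{Q}(\gamma, \wt^3(\gamma))$. The main subtlety to check will be that the infimum over $t$ in the definition of $\wt^3(\gamma)$ is well-defined (attained) for $\gamma \in (\wF(-\infty), \wF(+\infty))$, and that the denominator $e^{\gamma t} - e^{\wF(-\infty)t}$ does not vanish for the relevant values of $t$; both should follow from the monotonicity of $\wF$ (Theorem~\ref{thm: opt-obj-increasing}) and the continuous extension at $t = 0$ noted beneath~\eqref{eq:Q-2}. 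No other machinery beyond Theorem~\ref{thm:Q} is needed, so I do not anticipate a major obstacle.
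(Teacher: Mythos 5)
Your proposal is correct and follows essentially the same route as the paper: the chain $\wQ^0 \leq \wQ^1 \leq \wQ^2$ and the identity $\wQ^3 = \inf_t \overline{Q}(\gamma,t)$ are handled as definitional, and the only substantive step, $\wQ^2(\gamma) \leq \wQ^3(\gamma)$, is obtained exactly as in the paper by applying Theorem~\ref{thm:Q} at $\theta = \breve{\theta}(\wt^3(\gamma))$ with $t = \wt^3(\gamma)$ and using $\wR(t;\breve{\theta}(t)) = \wF(t)$. Your added remarks about the non-vanishing denominator and the continuous extension at $t=0$ are fine housekeeping that the paper leaves implicit.
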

\begin{proof}
The only non-trivial step is to show that $Q^2(\gamma) \leq Q^3(\gamma).$
Following Theorem~\ref{thm:Q},
\begin{align}
   Q^2(\gamma) &= \wQ(\gamma; \breve{\theta}(\wt(\gamma))
   \leq \inf_{t \in \mathbb{R}} \overline{Q}(\gamma; t, \breve{\theta}(t)) = Q^3(\gamma),
\end{align}
which completes the proof.
\end{proof}
 Theorem~\ref{thm:Q-opt} motivates us with the following approximation on the solutions of the {minimizing the tail distribution of losses (Definition~\ref{def: superquantile-losses})}.
\begin{approximation}
For all $\gamma \in (\wF(-\infty), \wF(+\infty)),$
\begin{equation}
   \wQ(\gamma; \theta^0(\gamma)) = \wQ^0(\gamma) \approx \wQ^2(\gamma) =  \wQ(\gamma; \breve{\theta}(\wt(\gamma))),
\label{eq:approx-superquantile-sol}
\end{equation}
and hence, $\breve{\theta}(\wt(\gamma))$ is an approximate solution to the tail probability optimization problem.
\label{approx:superquantile}
\end{approximation}

\begin{figure}[t!]
    \centering
    \begin{subfigure}{0.49\textwidth}
        \centering
        \includegraphics[width=\textwidth]{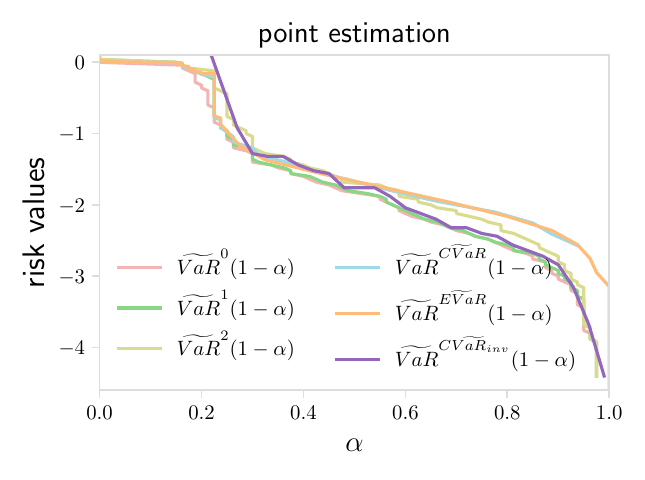}
    \end{subfigure}
    \hfill
    \begin{subfigure}{0.49\textwidth}
        \centering
        \includegraphics[width=\textwidth]{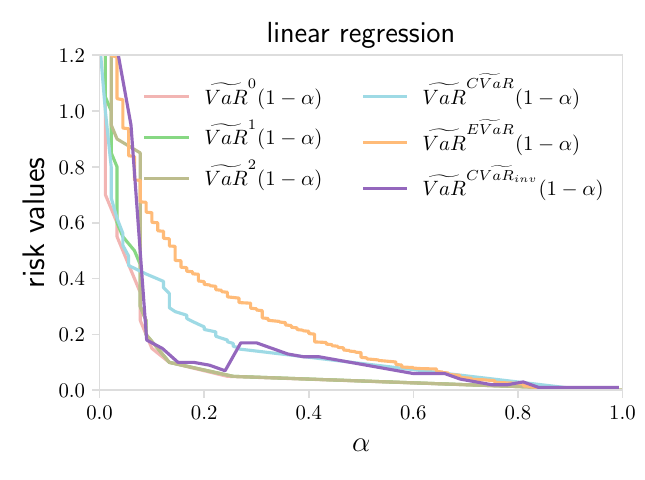}
    \end{subfigure}
    \caption{Comparing the solutions of different risks in terms of how well they solve VaR. For $i \in \{0, 1, 2\}$, $\widetilde{\text{VaR}}^i (1-\alpha) := \min_{\gamma} \{\gamma | \wQ^i (\gamma) \leq \alpha \}$. $\widetilde{\text{VaR}}^0 (1-\alpha) := \min_{\gamma} \{\gamma | \wQ^0 (\gamma) \leq \alpha \}$ is the optimal $\widetilde{\text{VaR}}(1-\alpha;\theta)$. By definition, $\widetilde{\text{VaR}}^2 (1-\alpha)$ is the risk value of  $\widetilde{\text{VaR}}(1-\alpha;\theta)$ with $\theta$ being the solutions of $\widetilde{\text{TiVaR}} (1-\alpha;\theta)$. $\widetilde{\text{VaR}}^{\widetilde{\text{CVaR}}}(1-\alpha)$ denotes the value of $\widetilde{\text{VaR}}(1-\alpha; \theta)$ evaluated at  $\arg\min_{\theta} \widetilde{\text{CVaR}}(1-\alpha;\theta)$, and \new{$\widetilde{\text{VaR}}^{\widetilde{\text{CVaR}}_{\text{inv}}}(1-\alpha)$  and $\widetilde{\text{VaR}}^{\widetilde{\text{EVaR}}}(1-\alpha)$ are defined in the similar way}. We see that $\widetilde{\text{VaR}}^1(1-\alpha)$ and $\widetilde{\text{VaR}}^2(1-\alpha)$ are close to $\widetilde{\text{VaR}}^0(1-\alpha)$, which indicates VaR with the solutions obtained from solving $\widetilde{\text{TiVaR}}(1-\alpha;\theta)$ (which is $\widetilde{\text{VaR}}^2(1-\alpha)$) is a tight upper bound of the globally optimal $\widetilde{\text{VaR}}(1-\alpha;\theta)$. $\widetilde{\text{VaR}}^2(1-\alpha)$ is also tighter than VaR under EVaR solutions when $\alpha$ is not small.}
\label{fig:super-quantile-1}
\end{figure}

While we have not characterized how tight this approximation is for  $\gamma \in (\wF(-\infty), \wF(+\infty))$, we believe that Approximation~\ref{approx:superquantile} provides a reasonable solution to the {tail distribution} optimization problem in general. This is evidenced empirically when the approximation is evaluated on the toy examples of Figure~\ref{fig:example}, and compared with the global solutions of the {tail distribution optimization} method, as shown in Figure~\ref{fig:super-quantile-1}. As can be seen, $\wQ^0(\gamma) \approx \wQ^2(\gamma)$ as suggested by Approximation~\ref{approx:superquantile}. Also, we can see that while the bound in Theorem~\ref{thm:Q-opt} ($\widetilde{Q}^3(\gamma)$) is not tight, the solution that is obtained from solving it ($\widetilde{Q}^2(\gamma)$) results in a good approximation to the {tail distribution minimization} ($\widetilde{Q}^0(\gamma)$).

\paragraph{Inverse CVaR.}  \new{We note that while the most popular form of CVaR focuses on upper quantiles (as discussed in the main text), one may explore `inverse' CVaR that can focus on lower quantiles, with its empirical form  $\widetilde{\text{CVaR}}_{\text{inv}}(1-\alpha; \theta)$ for $\alpha \in [0,1)$  defined as 
\begin{align}
    \widetilde{\text{CVaR}}_{\text{inv}}(1-\alpha; \theta) := -\min_{\gamma} \left\{\gamma + \frac{1}{1-\alpha}\frac{1}{N} \sum_{i \in [N]} [-f(x_i;\theta)-\gamma]_+\right\}.
\end{align}
As $\alpha$ ranges from 0 to 1, optimizing $\widetilde{\text{CVaR}}_{\text{inv}}(1-\alpha; \theta)$ transitions from solving avg-loss to min-loss. However, different from TiVaR or CVaR, $\text{CVaR}_{\text{inv}}$ is not a valid upper bound of VaR. Despite this, we optimize $\min_{\theta} \widetilde{\text{CVaR}}_{\text{inv}}(1-\alpha; \theta)$, plug in the optimal model parameters to evaluate VaR values, and compare with the approximate VaR values under the solutions of other risks including TiVaR.  From Figure~\ref{fig:super-quantile-1}, we see that VaR values under TiVaR solutions can be smaller than those under $\text{CVaR}_{\text{inv}}$ solutions on linear regression. Given any $\alpha$, our proposed TiVaR objective approximates VaR, ranging from min-loss to max-loss smoothly \textit{in a single formulation}, which can be more desirable than optimizing two objectives.}

\paragraph{Proof of Theorem~\ref{thm:var-TiVaR-cvar}.}
We first prove $\widetilde{\text{TiVaR}}(1-\alpha;\theta) \leq \widetilde{\text{EVaR}}(1-\alpha;\theta)$. 
\begin{align}
   \widetilde{\text{EVaR}}(1-\alpha;\theta) - \wF(-\infty) &=  \min_{t \in \mathbb{R}^{>0}} \frac{1}{t}\log\left(\frac{\frac{1}{N}\sum_{i\in [N]} e^{tf(x_i;\theta)}}{\alpha}\right) - \wF(-\infty) \\
   &=  \min_{t \in \mathbb{R}^{>0}} \frac{1}{t}\log\left(\frac{ e^{(\wR(t;\theta)-\wF(-\infty)) t}}{\alpha}\right) \\
   & \geq \min_{t \in \mathbb{R}^{>0}} \frac{1}{t}\log\left[\frac{ e^{(\wR(t;\theta)-\wF(-\infty)) t}-(1-
   \alpha)}{\alpha}\right]_+ \\
   &\geq \min_{t \in \mathbb{R}} \frac{1}{t}\log\left[\frac{ e^{(\wR(t;\theta)-\wF(-\infty)) t}-(1-
   \alpha)}{\alpha}\right]_+.
\end{align}
We next prove $\widetilde{\text{VaR}}(1-\alpha;\theta) \leq \widetilde{\text{TiVaR}}(1-\alpha;\theta)$. From Theorem~\ref{thm:Q}, we know that for any $t, \theta$,
\begin{align}
    \wQ(\gamma;\theta) \leq \min_{t\in \mathbb{R}} \left\{\frac{e^{\wR(t;\theta) t} - e^{-\wF(-\infty) t}}{e^{\gamma t} - e^{\wF(-\infty)t}}\right\}
\end{align}
Let $\wQ(\gamma;\theta) = \alpha$, and $\gamma^*=\widetilde{\text{VaR}}(1-\alpha;\theta)$. We have $\min_{t\in \mathbb{R}} \left\{\frac{e^{\wR(t;\theta) t} - e^{-\wF(-\infty) t}}{e^{\gamma^* t} - e^{\wF(-\infty)t}}\right\} \geq \alpha.$  We also note
\begin{align}
    \min_{t\in \mathbb{R}}\left\{ \frac{e^{\wR(t;\theta) t} - e^{-\wF(-\infty) t}}{e^{ \widetilde{\text{TiVaR}}(1-\alpha;\theta) t} - e^{\wF(-\infty)t}}\right\} = \alpha.
\end{align}
Hence,
\begin{align}
    \widetilde{\text{TiVaR}}(1-\alpha;\theta) \geq \gamma^* = \widetilde{\text{VaR}}(1-\alpha;\theta).
\end{align}

\paragraph{TERM and Entropic Value-at-Risk.}

Let  $\breve{\theta}_X(t)$ be the minimizer of entropic risk $R_X(t;\theta)$:
\begin{equation}
    \breve{\theta}_X(t):=\arg\min_{\theta \in \Theta} R_X(t;\theta). 
\end{equation}
Further, let $F_X(t)$ be the optimum value of entropic risk, i.e.,
\begin{equation}
    F_X(t) := R_X(t; \breve{\theta}_X(t)).
\label{eq: F_X}
\end{equation}
Our next result will relate EVaR to entropic risk.
\begin{lemma}[Relations between entropic risk and EVaR]
\label{lem:ER-EVaR relation}
Assume that for $t \in \mathbb{R}^{>0},$ $F_X(t)$ is a strongly convex function of $\frac{1}{t}$ . Further, let
\begin{align}
    \breve{t}_X(\alpha) \in \arg\min_{t \in \mathbb{R}^{>0}} \left\{F_X(t)-\frac{1}{t}\log \alpha\right\}, 
\end{align}
then
\begin{align}
     \arg\min_{\theta} \left\{\text{EVaR}_{X}(1-\alpha; \theta)\right\} &= \arg\min_{\theta} \left\{R_{X}(\breve{t}_X(\alpha);\theta)\right\} := \breve{\theta}_X(\breve{t}_X(\alpha)), \\
      R_X\left(\breve{t}_X(\alpha); \breve{\theta}_X(\breve{t}_X(\alpha))\right) &= F_{X}(\breve{t}_X(\alpha)) \leq \text{EVaR}_{X}\left(1-\alpha; \breve{\theta}_X(\breve{t}_X(\alpha))\right).
\end{align}
\end{lemma}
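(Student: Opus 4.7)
The plan is to attack this via a classical min-swap argument. Writing out the definition of EVaR and commuting the two minima yields
\begin{align*}
\min_{\theta} \text{EVaR}_X(1-\alpha;\theta)
&= \min_{\theta} \min_{t \in \mathbb{R}^+} \left\{ R_X(t;\theta) - \tfrac{1}{t}\log\alpha \right\} \\
&= \min_{t \in \mathbb{R}^+} \min_{\theta} \left\{ R_X(t;\theta) - \tfrac{1}{t}\log\alpha \right\}
= \min_{t \in \mathbb{R}^+} \left\{ F_X(t) - \tfrac{1}{t}\log\alpha \right\},
\end{align*}
since $-\tfrac{1}{t}\log\alpha$ does not depend on $\theta$ and the infimum over $\theta$ of $R_X(t;\cdot)$ is, by definition, $F_X(t)$, attained at $\breve{\theta}_X(t)$.

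Next I would argue that the outer minimization over $t$ has a well-defined minimizer $\breve{t}_X(\alpha)$. Using the reparametrization $s = 1/t \in \mathbb{R}^+$, the outer objective becomes $F_X(1/s) - s\log\alpha$, which under the strong-convexity hypothesis on $F_X$ in $1/t$ is a strongly convex function of $s$ (the added linear term $-s\log\alpha$ preserves strong convexity). Hence the minimizer $\breve{s}_X(\alpha)=1/\breve{t}_X(\alpha)$ exists and is unique, so $\breve{t}_X(\alpha)$ is well-defined. At this optimal tilt, the inner minimizer in $\theta$ is $\breve{\theta}_X(\breve{t}_X(\alpha))$, which then simultaneously minimizes $\text{EVaR}_X(1-\alpha;\cdot)$. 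This establishes the first displayed equality.

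For the inequality part, I would first observe the equality $F_X(\breve{t}_X(\alpha)) = R_X(\breve{t}_X(\alpha); \breve{\theta}_X(\breve{t}_X(\alpha)))$ which is just the definition of $F_X$ (see Eq.~\eqref{eq: F_X}). Then for the inequality, evaluating $\text{EVaR}_X$ at the point $\theta^\star := \breve{\theta}_X(\breve{t}_X(\alpha))$ gives
\begin{align*}
\text{EVaR}_X(1-\alpha; \theta^\star)
&= \min_{t \in \mathbb{R}^+} \left\{ R_X(t;\theta^\star) - \tfrac{1}{t}\log\alpha \right\}
\geq \min_{t \in \mathbb{R}^+} \left\{ F_X(t) - \tfrac{1}{t}\log\alpha \right\} \\
&= F_X(\breve{t}_X(\alpha)) - \tfrac{1}{\breve{t}_X(\alpha)}\log\alpha
\geq F_X(\breve{t}_X(\alpha)),
\end{align*}
where the first inequality uses $R_X(t;\theta^\star) \geq F_X(t)$ for every $t$, and the second uses $-\tfrac{1}{t}\log\alpha \geq 0$ since $\alpha \in (0,1]$ implies $\log\alpha \leq 0$.

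I do not anticipate a hard obstacle here: the argument is essentially bookkeeping around a min-swap and an inequality driven by the sign of $\log\alpha$. The one subtle point is justifying that the minimizer $\breve{t}_X(\alpha)$ is attained (and unique) so that notation like $\breve{\theta}_X(\breve{t}_X(\alpha))$ is unambiguous; this is exactly what the strong-convexity-in-$1/t$ assumption is introduced to guarantee, and it also rules out degenerate boundary behavior as $t \to 0^+$ or $t\to\infty$.
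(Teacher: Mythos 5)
Your min-swap route is genuinely different from the paper's proof (which proceeds by two explicit chains of inequalities, one for each inclusion of the argmin sets), and it is cleaner for what it does establish: the identity $\min_{\theta} \text{EVaR}_X(1-\alpha;\theta) = \min_{t\in\mathbb{R}^+}\{F_X(t)-\tfrac{1}{t}\log\alpha\}$, the fact that $\breve{\theta}_X(\breve{t}_X(\alpha))$ attains the EVaR minimum, and the inequality $F_X(\breve{t}_X(\alpha)) \le \text{EVaR}_X(1-\alpha;\breve{\theta}_X(\breve{t}_X(\alpha)))$ via the sign of $\log\alpha$. All of that is correct.

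However, there is a gap in the first claim: the lemma asserts an \emph{equality} of minimizer sets, $\arg\min_\theta \text{EVaR}_X(1-\alpha;\theta) = \arg\min_\theta R_X(\breve{t}_X(\alpha);\theta)$, and your argument only gives one inclusion (every minimizer of $R_X(\breve{t}_X(\alpha);\cdot)$ minimizes EVaR). The nontrivial direction---that \emph{every} minimizer $\theta_v^*$ of $\text{EVaR}_X(1-\alpha;\cdot)$ must minimize $R_X(\breve{t}_X(\alpha);\cdot)$---is exactly where the paper spends the second half of its proof, and it is where the uniqueness of the optimal tilt (your strong-convexity-in-$1/t$ observation) must actually be used, not just noted. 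The repair is short with the pieces you already have: for an EVaR minimizer $\theta_v^*$ with inner-optimal tilt $t_v$, one has $F_X(t_v)-\tfrac{1}{t_v}\log\alpha \le R_X(t_v;\theta_v^*)-\tfrac{1}{t_v}\log\alpha = \min_{t}\{F_X(t)-\tfrac{1}{t}\log\alpha\}$, so $t_v$ is an outer minimizer; by the uniqueness you proved, $t_v=\breve{t}_X(\alpha)$, and the sandwich forces $R_X(\breve{t}_X(\alpha);\theta_v^*)=F_X(\breve{t}_X(\alpha))$, i.e.\ $\theta_v^*\in\arg\min_\theta R_X(\breve{t}_X(\alpha);\theta)$. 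Without this step (or some substitute), the stated set equality is not established; with it, your proof is complete and arguably tidier than the paper's.
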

\begin{proof} 
Consider the any minimizer of $R_X(\breve{t}_X(\alpha), \theta)$, i.e.,
\begin{align}
    \breve{\theta}_X(\breve{t}_X(\alpha)) &\in \arg\min_{\theta} R_X(\breve{t}_X (\alpha);\theta),
\end{align}
we next prove 
\begin{align}
    \breve{\theta}_X (\breve{t}_X (\alpha)) \in \arg\min_{\theta}\left(\min_{t>0}\left(\frac{1}{t}\log \mathbb{E}[e^{tf(X;
    \theta)}]-\frac{1}{t}\log \alpha \right)\right).
\end{align}
Denote $\min_{t>0}\left(\frac{1}{t}\log \mathbb{E}[e^{tf(X;
    \theta)}]-\frac{1}{t}\log \alpha \right)$ as $h(\theta;\alpha)$. Let
\begin{align}
    \theta_{v}^* &\in \arg\min_{\theta}h(\theta;\alpha),\\
    {t}_v(\theta_v^*) &\in \arg\min_{t>0} \left(\frac{1}{t}\log \mathbb{E}[e^{tf(X;
    \theta_v^*)}]-\frac{1}{t}\log \alpha\right).
\end{align}
By the definition of $\breve{t}_X (\alpha)$ and $\breve{\theta}_X(t)$, we have
\begin{align}
    &\frac{1}{\breve{t}_X (\alpha)} \log \mathbb{E}[e^{\breve{t}_X (\alpha) f(X;\breve{\theta}_X(\breve{t}_X (\alpha)))}] - \frac{1}{\breve{t}_X (\alpha)}\log \alpha \\ &\leq  \frac{1}{{t}_v(\theta_v^*)} \log \mathbb{E}[e^{{t}_v(\theta_v^*) f(X;\breve{\theta}_X ({t}_v(\theta_v^*)))}] - \frac{1}{{t}_v(\theta_v^*)}\log \alpha \\
    &\leq \frac{1}{{t}_v(\theta_v^*)} \log \mathbb{E}[e^{{t}_v(\theta_v^*) f(X; \theta_v^*)}] - \frac{1}{{t}_v(\theta_v^*)}\log \alpha. \label{eq:tmp1}
\end{align}
By the definition of $\theta_v^*$, $h(\theta_v^*;\alpha) \leq h(\breve{\theta}_X (\breve{t}_X (\alpha));\alpha)$, i.e.,
\begin{align}
\min_{t>0} \left(\frac{1}{t}\log \mathbb{E}[e^{tf(X;
    \theta_v^*)}]-\frac{1}{t}\log \alpha\right) &\leq \min_{t>0} \left(\frac{1}{t}\log \mathbb{E}[e^{tf(X;
    \breve{\theta}_X (\breve{t}_X (\alpha)))}]-\frac{1}{t}\log \alpha\right).
\end{align}
We have
\begin{align}
    &\frac{1}{{t}_v(\theta_v^*)} \log \mathbb{E}[e^{{t}_v(\theta_v^*) f(X; \theta_v^*)}] - \frac{1}{{t}_v(\theta_v^*)}\log \alpha \\ &\leq \min_{t>0} \left(\frac{1}{t}\log \mathbb{E}[e^{tf(X;
    \breve{\theta}_X (\breve{t}_X(\alpha)))}]-\frac{1}{t}\log \alpha\right) \\
    &\leq \frac{1}{\breve{t}_X (\alpha)} \log \mathbb{E}[e^{\breve{t}_X(\alpha) f(X;\breve{\theta}_X(\breve{t}_X(\alpha)))}] - \frac{1}{\breve{t}_X (\alpha)}\log \alpha, 
\end{align}
Hence, $\breve{\theta}_X(\breve{t}_X(\alpha)) \in \arg\min_{\theta}\left(\min_{t>0}\left(\frac{1}{t}\log \mathbb{E}[e^{tf(X;
    \theta)}]-\frac{1}{t}\log \alpha \right)\right)$.
    
For the other direction, consider any minimizer of $\text{EVaR}_X(1-\alpha;\theta)$, i.e.,
\begin{align}
    \theta_v^* &\in \arg\min_{\theta} h(\theta;\alpha) \\
    {t}_v(\theta_v^*) &\in \arg\min_{t>0} \left(\frac{1}{t}\log \mathbb{E}[e^{f(X;\theta_v^*)}]-\frac{1}{t}\log \alpha \right).
\end{align}
We next prove $\theta_v^* \in \arg\min_{\theta} \frac{1}{\breve{t}_X (\alpha)}\log \mathbb{E}[e^{\breve{t}_X (\alpha) f(X;\theta)}]$. 
By the definition of $\breve{\theta}_X(t)$ and $\breve{t}_X(\alpha)$,
\begin{align}
    &\frac{1}{{t}_v(\theta_v^*)} \log \mathbb{E}[e^{{t}_v(\theta_v^*) f(X; \theta_v^*)}] - \frac{1}{{t}_v(\theta_v^*)}\log \alpha \\ &\geq \frac{1}{{t}_v(\theta_v^*)} \log \mathbb{E}[e^{{t}_v(\theta_v^*) f(X; {\breve{\theta}}_X({t}_v(\theta_v^*))}] - \frac{1}{{t}_v(\theta_v^*)}\log \alpha \\
    &\geq \frac{1}{\breve{t}_X(\alpha)} \log \mathbb{E}[e^{\breve{t}_X(\alpha) f(X; \breve{\theta}_X(\breve{t}_X(\alpha))}] - \frac{1}{\breve{t}_X(\alpha)}\log \alpha.
\end{align}
On the other hand, by the definition of $\theta_v^*$ and ${t}_v(\theta_v^*)$,
\begin{align}
    \frac{1}{{t}_v(\theta_v^*)} \log \mathbb{E}[e^{{t}_v(\theta_v^*) f(X; \theta_v^*)}] - \frac{1}{{t}_v(\theta_v^*)}\log \alpha  &\leq \min_{t>0} \left(\frac{1}{t}\log \mathbb{E}[e^{tf(X;
    \breve{\theta}_X(\breve{t}_X(\alpha)))}]-\frac{1}{t}\log \alpha\right) \\
    &\leq \frac{1}{\breve{t}_X(\alpha)} \log \mathbb{E}[e^{\breve{t}_X(\alpha) f(X;\breve{\theta}_X(\breve{t}_X(\alpha)))}] - \frac{1}{\breve{t}_X(\alpha)}\log \alpha.
\end{align}
Therefore,
\begin{align}
     &\frac{1}{{t}_v(\theta_v^*)} \log \mathbb{E}[e^{{t}_v(\theta_v^*) f(X; \theta_v^*)}] - \frac{1}{{t}_v(\theta_v^*)}\log \alpha \\&= \frac{1}{{t}_v(\theta_v^*)} \log \mathbb{E}[e^{t_v(\theta_v^*) f(X; \breve{\theta}_X({t}_v(\theta_v^*))}] - \frac{1}{{t}_v(\theta_v^*)}\log \alpha \\&= \frac{1}{\breve{t}_X(\alpha)} \log \mathbb{E}[e^{\breve{t}_X(\alpha) f(X;\breve{\theta}_X(\breve{t}_X(\alpha)))}] - \frac{1}{\breve{t}_X(\alpha)}\log \alpha.
\end{align}
{If 
\begin{align}
    s(t) := \frac{1}{t}\log \mathbb{E}[e^{tf(X;\breve{\theta}_X(t))}] - \frac{1}{t}\log \alpha
\end{align}
has a unique minimizer, 
\begin{align}
    \breve{t}_X(\alpha) = {t}_v(\theta_v^*),
\end{align}}
and
\begin{align}
    \theta_v^* \in \arg\min_{\theta} \frac{1}{\breve{t}_X(\alpha)} \log \mathbb{E}[e^{\breve{t}_X (\alpha) f(X;\theta)}].
\end{align}
Hence, we have proved
\begin{align}
    \arg\min_{\theta} \text{EVaR}_{X}(1-\alpha; \theta) = \arg\min_{\theta} R_{X}(\breve{t}_X(\alpha);\theta),
\end{align}
and
\begin{align}
   R_{X}(\breve{t}_X(\alpha);\breve{\theta}_X(\breve{t}_X(\alpha))) \leq \text{EVaR}_{X}(1-\alpha; \breve{\theta}_X(\breve{t}_X(\alpha))). 
\end{align}
\end{proof}
The lemma relates the solution and the optimal value of EVaR with those of entropic risk. We can extend Lemma~\ref{lem:ER-EVaR relation} to the empirical version below.

\begin{lemma}[Relations between empirical entropic risk and empirical EVaR]
\label{lem:ER-EVaR relation empirical}
Assume that  $\wF(t)$ is a strongly convex function of $\frac{1}{t}.$ For $\alpha \in \{\frac{k}{N}\}_{k \in [N]}$, let 
\begin{align}
    \breve{t}(\alpha) \in \arg\min_{t>0} \left\{\wF(t)-\frac{1}{t}\log \alpha\right\}, 
\end{align}
then
\begin{align}
     \arg\min_{\theta} \widetilde{\text{EVaR}}(1-\alpha; \theta) &= \arg\min_{\theta} \wR(\breve{t}(\alpha);\theta), \\
      \wF(\breve{t}(\alpha)) &\leq \widetilde{\text{EVaR}}(1-\alpha; \breve{\theta}(\breve{t}(\alpha))).
\end{align}
\end{lemma}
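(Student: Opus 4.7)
The plan is to mirror the two-direction argument used in the proof of Lemma~\ref{lem:ER-EVaR relation} with the empirical objects replacing their distributional counterparts: expectations $\mathbb{E}[e^{t f(X;\theta)}]$ become empirical averages $\tfrac{1}{N}\sum_{i\in[N]} e^{t f(x_i;\theta)}$, so $R_X(t;\theta)$ becomes $\wR(t;\theta)$, $F_X(t)$ becomes $\wF(t)$, and $\breve{\theta}_X(t)$ becomes $\breve{\theta}(t)$. Since the distributional proof only uses (i) the variational definition of EVaR, (ii) the fact that the inner minimum over $t$ at the outer optimum must equal $\breve{t}_X(\alpha)$, and (iii) strong convexity of $F_X(t)$ in $1/t$ to force uniqueness of the inner minimizer, each of these ingredients is available verbatim for the empirical version.

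First I would prove the inclusion $\arg\min_\theta \wR(\breve{t}(\alpha);\theta) \subseteq \arg\min_\theta \widetilde{\text{EVaR}}(1-\alpha;\theta)$. Pick any $\theta^\star \in \arg\min_\theta \wR(\breve{t}(\alpha);\theta)$ and any competitor $\theta_v \in \arg\min_\theta \widetilde{\text{EVaR}}(1-\alpha;\theta)$ with inner minimizer $t_v(\theta_v) \in \arg\min_{t>0}\{\wR(t;\theta_v) - \tfrac{1}{t}\log\alpha\}$. Chaining the inequalities
\begin{align*}
\wR(\breve{t}(\alpha);\theta^\star) - \tfrac{1}{\breve{t}(\alpha)}\log\alpha
&\le \wR(t_v(\theta_v);\breve{\theta}(t_v(\theta_v))) - \tfrac{1}{t_v(\theta_v)}\log\alpha \\
&\le \wR(t_v(\theta_v);\theta_v) - \tfrac{1}{t_v(\theta_v)}\log\alpha
\end{align*}
(the first by definition of $\breve{t}(\alpha)$ together with $\wR(\breve{t}(\alpha);\theta^\star) = \wF(\breve{t}(\alpha))$, the second by definition of $\breve{\theta}$) exhibits $\theta^\star$ as a minimizer of $\widetilde{\text{EVaR}}(1-\alpha;\cdot)$, and yields the stated bound $\wF(\breve{t}(\alpha)) \le \widetilde{\text{EVaR}}(1-\alpha;\breve{\theta}(\breve{t}(\alpha)))$ as a by-product.

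For the reverse inclusion, start from any $\theta_v \in \arg\min_\theta \widetilde{\text{EVaR}}(1-\alpha;\theta)$ with corresponding $t_v(\theta_v)$. A symmetric chain of inequalities (going through $\breve{\theta}(t_v(\theta_v))$ in one direction and $\breve{\theta}(\breve{t}(\alpha))$ in the other) forces equality throughout, so in particular $\tfrac{1}{t}\log(\tfrac{1}{N}\sum_{i} e^{t f(x_i;\breve{\theta}(t))}) - \tfrac{1}{t}\log\alpha$ attains its infimum at both $t = t_v(\theta_v)$ and $t = \breve{t}(\alpha)$. This is where the main obstacle lies: to conclude $t_v(\theta_v) = \breve{t}(\alpha)$ (and hence that $\theta_v$ minimizes $\wR(\breve{t}(\alpha);\cdot)$) we need uniqueness of this inner minimizer, which is precisely what the hypothesis that $\wF(t)$ is strongly convex in $1/t$ supplies, since $\wF(t) - \tfrac{1}{t}\log\alpha$ is then strongly convex in $1/t$ as well.

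The only empirical-specific caveat to check is that $\alpha \in \{k/N\}_{k\in[N]}$ is simply the natural grid for which $\widetilde{\text{EVaR}}(1-\alpha;\theta)$ is defined in the paper; no additional regularity is needed because the argument is purely algebraic in $\wR(t;\theta)$ and $\wF(t)$ and does not exploit any distributional smoothness of $X$. With these ingredients the two inclusions combine to the claimed equality of argmin sets, and the displayed inequality $\wF(\breve{t}(\alpha)) \le \widetilde{\text{EVaR}}(1-\alpha;\breve{\theta}(\breve{t}(\alpha)))$ follows from the fact that $\widetilde{\text{EVaR}}(1-\alpha;\theta)$ is a minimum over $t>0$ that includes $t = \breve{t}(\alpha)$ as a feasible point while $\wF(\breve{t}(\alpha)) = \wR(\breve{t}(\alpha);\breve{\theta}(\breve{t}(\alpha)))$ by definition.
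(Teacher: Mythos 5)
Your proposal is correct and follows essentially the same route as the paper: the paper gives no separate argument for the empirical case, simply asserting it as the direct extension of Lemma~\ref{lem:ER-EVaR relation}, and your two-direction chain of inequalities (with the strong-convexity-in-$1/t$ hypothesis invoked exactly where the paper needs uniqueness of the inner minimizer $t$) is the distributional proof transplanted to empirical averages. If anything, you spell out the uniqueness step and the final inequality slightly more explicitly than the paper does.
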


\newpage
\section{Solving TERM (Proofs and Details)} \label{app:solving}

\subsection{Hierarchical Multi-Objective Tilting} \label{app:hierarchical-TERM} 
We state the hierarchical multi-objective tilting for a hierarchy of depth $3.$ While we don't directly use this form, it is stated to clarify the experiments in Section~\ref{sec:experiments} where tilting is done at class level and annotator level, and the sample-level tilt value could be understood to be $0.$ 
\begin{align}
\wJ(m, t, \tau; \theta) &:=  \frac{1}{m} \log \left(\frac{1}{N} \sum_{G \in [GG]} \left(\sum_{g \in [G]} |g|\right) e^{m \widetilde{J}_{G} (\tau; \theta)}\right)\\
\wJ_{G}(t, \tau; \theta) &:=  \frac{1}{t} \log \left(\frac{1}{\sum_{g \in [G]} |g| } \sum_{g \in [G]} |g| e^{t \widetilde{R}_g (\tau; \theta)}\right)\\
\wR_g (\tau; \theta) &:= \frac{1}{\tau} \log \left(\frac{1}{|g|} \sum_{x \in g}  e^{\tau f(x; \theta)} \right),
\label{eq: tri-TERM}
\end{align}

\paragraph{Proof of Lemma~\ref{lemma: generalized-tilt-gradient}.}
We proceed as follows. First notice that by invoking Lemma~\ref{lemma:TERM-gradient},
\begin{align}
\label{eq:133}
    \nabla_\theta \wJ(t, \tau ;\theta) &= \sum_{g \in [G]} w_g(t, \tau; \theta) \nabla_\theta \wR_g(\tau; \theta)
\end{align}
where
\begin{equation}
    w_g (t, \tau; \theta) := \frac{|g|e^{t \wR_g(\tau; \theta)}}{\sum_{g' \in [G]} |g'|e^{t \wR_{g'}(\tau; \theta)}}.
\end{equation}
where $\wR_g (\tau; \theta)$ is defined  in \eqref{eq: class-TERM}, and is reproduced here:
\begin{equation}
    \wR_g (\tau; \theta) := \frac{1}{\tau} \log \left(\frac{1}{|g|} \sum_{x \in g}  e^{\tau f(x; \theta)} \right).
\end{equation}
On the other hand, by invoking Lemma~\ref{lemma:TERM-gradient}, 
\begin{equation}
\label{eq:136}
  \nabla_\theta \wR_g(\tau; \theta) =   \sum_{x \in g} w_{g,x}(\tau; \theta) \nabla_\theta f(x;\theta)
\end{equation}
where
\begin{equation}
    w_{g, x} (\tau; \theta) := \frac{e^{\tau f(x; \theta)}}{\sum_{y \in g} e^{\tau f(y; \theta)}}.
\end{equation}
Hence, combining~\eqref{eq:133} and~\eqref{eq:136},
\begin{align}
    \nabla_\theta \wJ(t, \tau ;\theta) &= 
 \sum_{g\in [G]} \sum_{x \in g} w_g(t, \tau; \theta)w_{g, x}(\tau; \theta)\nabla_\theta f(x;\theta).
\end{align}
The proof is completed by algebraic manipulations to show that
\begin{equation}
    w_{g, x}(t, \tau; \theta) =  w_g(t, \tau; \theta)w_{g, x}(\tau; \theta).
\end{equation}
\hfill \qedsymbol


\newpage
\subsection{Proofs of Convergence for TERM Solvers}
\label{app: solving-TERM}

\begin{algorithm}[h]
\SetKwInOut{Init}{Initialize}
\SetAlgoLined
\DontPrintSemicolon
\SetNoFillComment
\Init{$\theta, \doublewidetilde{R}_{t} = \frac{1}{t} \log\left(\frac{1}{N}\sum_{i \in [N]} e^{tf(x_i;\theta)}\right)$}
\KwIn{$t, \alpha, \lambda$}
\While{stopping criteria not reached}{
sample two independent minibatches $B_1, B_2$ uniformly at random from $[N]$\; compute the loss $f(x; \theta)$ and gradient $\nabla_\theta f(x; \theta)$ for all $x \in B_1$\;
$\wR_{B, t} \gets \text{$t$-tilted loss~\eqref{eq: TERM} on minibatch $B_2$}$\;
$\doublewidetilde{R}_{t} \gets \frac{1}{t} \log \left((1-\lambda) e^{t\doublewidetilde{R}_{t}} + \lambda e^{t \wR_{B, t}}\right)$\; $w_{t, x} \gets e^{t f(x; \theta) - t\doublewidetilde{R}_{t}}$\;
$\theta \gets \theta - \frac{\alpha}{|B_1|} \sum_{x\in B_1} w_{t, x} \nabla_{\theta} f(x; \theta)$\;
}
\caption{Stochastic Non-Hierarchical TERM with two mini-batches}\label{alg:stochastic-non-h-TERM-two-batch}
\end{algorithm}



To prove our convergence results in Theorem~\ref{thm: convergence_stochastic_convex}, we first prove a lemma below.
\begin{lemma}
Denote $k_t :=\arg\max_k \left(k < \frac{2e}{\mu}+\frac{etLB  e^{t(\widetilde{F}_{\max}-\widetilde{F}_{\min})}}{\mu k}\right)$. Let $\lambda = 1-\frac{1}{2e}$, and
\begin{align}\label{eq: learning_rate}
\alpha_k = 
    \begin{cases}
    \frac{1}{tLB e^{t(\widetilde{F}_{\max}-\widetilde{F}_{\min})}+1},& \text{if } k \leq k_t\\
    \frac{2e}{\mu k},              & \text{otherwise},
\end{cases}
\end{align} 
then for any $k$,
\begin{align}
    \mathbb{E}[e^{t(\doublewidetilde{R}_k - \wR_k)} | \theta_1, \ldots, \theta_k]  \leq 2e,
\end{align}
where $\wR_k := \wR(t;\theta_k) = \frac{1}{t} \log \left(\frac{1}{N} \sum_{i \in [N]} e^{tf(x_i; \theta_k)}\right)$.
\end{lemma}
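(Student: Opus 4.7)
The plan is to unroll the tilted averaging update into an explicit convex combination, take conditional expectation using the independence of the $B_2$ minibatches, and then bound the resulting geometric series via Lipschitz continuity of $f$ together with the prescribed step size schedule.

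First I would iterate the exponential-domain update $e^{t\doublewidetilde{R}_k} = (1-\lambda)e^{t\doublewidetilde{R}_{k-1}} + \lambda e^{t\wR_{B,t}(\theta_k)}$ from the initialization $\doublewidetilde{R}_0 = \wR_0$, obtaining
\begin{equation*}
e^{t\doublewidetilde{R}_k} = (1-\lambda)^k e^{t\wR_0} + \lambda \sum_{j=1}^{k}(1-\lambda)^{k-j}\, e^{t\wR_{B^{(j)},t}(\theta_j)},
\end{equation*}
where $B^{(j)}$ denotes the $B_2$-minibatch drawn at iteration $j$. Because each $B^{(j)}$ is sampled uniformly at random and, in the two-minibatch variant of the algorithm, is independent of the history that produced $\theta_j$, one has $\mathbb{E}\bigl[e^{t\wR_{B^{(j)},t}(\theta_j)} \mid \theta_1,\dots,\theta_k\bigr] = e^{t\wR_j}$. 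Dividing by $e^{t\wR_k}$ then yields
\begin{equation*}
\mathbb{E}\!\left[e^{t(\doublewidetilde{R}_k - \wR_k)} \,\middle|\, \theta_1,\dots,\theta_k\right] = (1-\lambda)^k e^{t(\wR_0 - \wR_k)} + \lambda \sum_{j=1}^{k}(1-\lambda)^{k-j}\, e^{t(\wR_j - \wR_k)}.
\end{equation*}

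Next I would control each ratio $e^{t(\wR_j - \wR_k)}$ using $L$-Lipschitzness of $f$ (hence of $\wR(t;\cdot)$), which gives $|\wR_j - \wR_k| \le L \sum_{\ell=j}^{k-1}\|\theta_{\ell+1}-\theta_\ell\|$. The stochastic update direction at step $\ell$ is a convex combination of terms $\tfrac{e^{tf(x;\theta_\ell)}}{e^{t\doublewidetilde{R}_\ell}}\nabla f(x;\theta_\ell)$, whose norm equals $\tfrac{e^{t\wR_\ell}}{e^{t\doublewidetilde{R}_\ell}}\cdot \bigl\|\tfrac{e^{tf(x;\theta_\ell)}}{e^{t\wR_\ell}}\nabla f(x;\theta_\ell)\bigr\|$; the bounded-stochastic-gradient hypothesis together with the pathwise worst case $e^{t\wR_\ell}/e^{t\doublewidetilde{R}_\ell} \le e^{t(\wF_{\max}-\wF_{\min})}$ yields $\|\theta_{\ell+1}-\theta_\ell\| \le \alpha_\ell B\, e^{t(\wF_{\max}-\wF_{\min})}$. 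The two-piece step size schedule in the lemma is calibrated so that $tL\alpha_\ell B e^{t(\wF_{\max}-\wF_{\min})} \le 1$ for every $\ell$, so that $t\,|\wR_j - \wR_k| \le k-j$ and hence $e^{t(\wR_j - \wR_k)} \le e^{\,k-j}$.

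Plugging these bounds in and using $\lambda = 1 - 1/(2e)$ gives $(1-\lambda)e = 1/2$, so the right-hand side becomes a geometric series:
\begin{equation*}
\mathbb{E}\!\left[e^{t(\doublewidetilde{R}_k - \wR_k)} \,\middle|\, \theta_1,\dots,\theta_k\right] \le \bigl[(1-\lambda)e\bigr]^k + \lambda\sum_{m=0}^{k-1}\bigl[(1-\lambda)e\bigr]^m = \Bigl(\tfrac{1}{2}\Bigr)^{k} + \Bigl(1-\tfrac{1}{2e}\Bigr)\cdot 2\Bigl(1-\bigl(\tfrac{1}{2}\bigr)^{k}\Bigr) \le 2 - \tfrac{1}{e} \le 2e,
\end{equation*}
which is the desired bound.

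The main obstacle is the self-referential nature of the update: the step length involves $1/e^{t\doublewidetilde{R}_\ell}$, which is precisely the quantity whose expectation we are trying to control. Breaking this circularity pathwise via $e^{t\doublewidetilde{R}_\ell} \ge e^{t\wF_{\min}}$ is what forces the factor $e^{t(\wF_{\max}-\wF_{\min})}$ into the step-size schedule and, in turn, defines the transition index $k_t$ in Theorem~\ref{thm: convergence_stochastic_convex}. A secondary issue is verifying that both the constant regime ($k \le k_t$) and the harmonic regime $\alpha_k = 2e/(\mu k)$ ($k > k_t$) satisfy the calibration $tL\alpha_\ell B e^{t(\wF_{\max}-\wF_{\min})} \le 1$; in the first regime this is immediate from the explicit formula for $\alpha_k$, and in the second it is precisely what the defining inequality of $k_t$ guarantees.
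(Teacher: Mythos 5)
Your proposal follows essentially the same route as the paper's proof: the tilted-averaging recursion with conditional unbiasedness of the minibatch tilt, the per-step displacement bound $\|\theta_{\ell+1}-\theta_\ell\|\le \alpha_\ell B e^{t(\wF_{\max}-\wF_{\min})}$ combined with Lipschitzness of $f$, the step-size calibration split at $k_t$, and the choice $\lambda = 1-\tfrac{1}{2e}$ so that $(1-\lambda)e=\tfrac12$ yields a bounded geometric accumulation; the only difference is that you solve the recursion in closed (unrolled) form and sum the series, whereas the paper argues by induction on the one-step inequality, which is a cosmetic rather than substantive distinction. Your write-up also matches the paper's level of rigor at its two delicate points (treating the $B_2$-randomness as conditionally unbiased given the full iterate history, and asserting the calibration $tL\alpha_k B e^{t(\wF_{\max}-\wF_{\min})}\lesssim 1$ for all $k>k_t$ from the definition of $k_t$), so nothing in your argument departs from, or falls short of, the paper's own proof.
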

\begin{proof}
We have the updating rule
\begin{align}
    e^{t\doublewidetilde{R}_{k+1}} &= \lambda e^{tf(\xi_k, \theta_k)} + (1-\lambda) e^{t\doublewidetilde{R}_k} \label{eq:update_R}. 
\end{align}
Taking conditional expectation $\mathbb{E}[\cdot | \theta_1, \ldots, \theta_{k+1}]$ on both sides of~\eqref{eq:update_R} gives
\begin{align}
    &\mathbb{E}[e^{t(\doublewidetilde{R}_{k+1}-\wR_k)} | \theta_1, \ldots, \theta_{k+1}] \\ &= \lambda \mathbb{E}[e^{t(f(\xi_k; \theta_k)-\wR_k)} |  \theta_1, \ldots, \theta_{k+1}] + (1-\lambda) \mathbb{E}[e^{t(\doublewidetilde{R}_k-\wR_k)} |  \theta_1, \ldots, \theta_{k+1}]  \\
    &= \lambda +  (1-\lambda) \mathbb{E}[e^{t(\doublewidetilde{R}_k-\wR_k)} |  \theta_1, \ldots, \theta_k].
\end{align}
For any $k$, we have
\begin{align}
    \| \theta_{k+1}-\theta_k \| = \alpha_k \left\|\frac{e^{t \wR_k}}{e^{t\doublewidetilde{R}_k}} \nabla \wR_k \right\| \leq \alpha_k e^{t(\widetilde{F}_{\max}-\widetilde{F}_{\min})} B. 
\end{align}
Therefore,
 \begin{align}
    |f(x_i; \theta_{k-1})-f(x_i; \theta_{k})|\leq L\|\theta_{k-1}-\theta_k\| \leq \alpha_k LB e^{t(\widetilde{F}_{\max}-\widetilde{F}_{\min})},
\end{align}
and
\begin{align}
    e^{-t\alpha_k LB e^{t(\widetilde{F}_{\max}-\widetilde{F}_{\min})}} \leq e^{t(\wR_{k}-\wR_{k+1})} = \frac{\sum_{i \in [N]} e^{tf(x_i; \theta_{k})}}{\sum_{i \in [N]} e^{tf(x_i; \theta_{k+1})}} \leq e^{t\alpha_k LB e^{t(\widetilde{F}_{\max}-\widetilde{F}_{\min})}},
\end{align}
\begin{align}
   &e^{-t\alpha_{k}LB e^{t(\widetilde{F}_{\max}-\widetilde{F}_{\min})}}  \mathbb{E}[e^{t(\doublewidetilde{R}_{k+1}-\wR_{k+1})} | \theta_1, \ldots, \theta_{k+1}] \\&\leq  \mathbb{E}[e^{t(\doublewidetilde{R}_{k+1}-\wR_k)} |\theta_1, \ldots, \theta_{k+1}] \nonumber \\
   &\leq e^{t\alpha_{k}LB e^{t(\widetilde{F}_{\max}-\widetilde{F}_{\min})}}  \mathbb{E}[e^{t(\doublewidetilde{R}_{k+1}-\wR_{k+1})}|\theta_1, \ldots, \theta_{k+1}].
\end{align}
Hence,
\begin{align}
    &e^{-t\alpha_{k} LB e^{t(\widetilde{F}_{\max}-\widetilde{F}_{\min})}} \mathbb{E}[e^{t(\doublewidetilde{R}_{k+1}-\wR_{k+1})} | \theta_1, \ldots, \theta_{k+1}] \\&\leq  \lambda + (1-\lambda) \mathbb{E}[e^{t(\doublewidetilde{R}_k- \wR_k)} | \theta_1, \ldots, \theta_{k}] \\
    &\leq e^{t\alpha_{k} LB e^{t(\widetilde{F}_{\max}-\widetilde{F}_{\min})}} \mathbb{E}[e^{t(\doublewidetilde{R}_{k+1}-\wR_{k+1})} | \theta_1, \ldots, \theta_{k+1}].
\end{align}
(i) When $k \leq k_t$, {under the learning rate $\alpha_k$ set as in Eq.~\eqref{eq: learning_rate}}, we have
\begin{align}
    \alpha_k LB  e^{t(\widetilde{F}_{\max}-\widetilde{F}_{\min})} < 1.
\end{align}
Hence,
\begin{align}
    \mathbb{E}[e^{t(\doublewidetilde{R}_k - \wR_k)} | \theta_1, \ldots, \theta_k] &\leq e(\lambda + (1-\lambda)\mathbb{E}[e^{t(\doublewidetilde{R}_{k-1}-\wR_{k-1})}| \theta_1, \ldots, \theta_{k-1}]) \\ &\leq e + \frac{1}{2} \mathbb{E}[e^{t(\doublewidetilde{R}_{k-1}-\wR_{k-1})}| \theta_1, \ldots, \theta_{k-1}] \\
    &\leq \cdots \leq e\left(2-\frac{1}{2^{k-2}}\right) + \frac{1}{2^{k-1}}\mathbb{E}[e^{t(\doublewidetilde{R}_1-\wR_1)}|\theta_1] \leq 2e. 
\end{align}
(ii) When $k > k_t$, 
\begin{equation}
    \alpha_k = \frac{2e}{\mu k} < \frac{k}{k+tLB e^{t(\widetilde{F}_{\max}-\widetilde{F}_{\min})}}.
\end{equation}
Similarly, we have 
\begin{align}
\mathbb{E}[e^{t(\doublewidetilde{R}_k - \wR_k)} | \theta_1, \ldots, \theta_k]  &\leq e^{t \alpha_k LB  e^{t(\widetilde{F}_{\max}-\widetilde{F}_{\min})}} \left(\lambda + (1-\lambda)\mathbb{E}[e^{t(\doublewidetilde{R}_{k-1}-\wR_{k-1})}| \theta_1, \ldots, \theta_{k-1}]\right) \\
&\leq \cdots \leq 2e,
\end{align}
which completes the proof.
\end{proof}

\paragraph{Proof of Theorem~\ref{thm: convergence_stochastic_convex}} 
Denote the empirical optimal solution $\breve{\theta}(t)$ as $\theta^*$.
Denote the tilted stochastic gradient on data $\zeta_k$ as $g_k$, where
\begin{align}
    g_k =  \frac{e^{tf(\zeta_k; \theta_k)}}{e^{t\doublewidetilde{R}_{k}}} \nabla f(\zeta_k; \theta_k) = \frac{e^{t\wR_k}}{e^{t\doublewidetilde{R}_{k}}}\frac{e^{tf(\zeta_k; \theta_k)}}{e^{t\wR_k}} \nabla f(\zeta_k; \theta_k) = \frac{e^{t\wR_k}}{e^{t\doublewidetilde{R}_{k}}} \nabla \wR_k(\zeta_k).
\end{align} 
Therefore, for any $k \geq 1$,
\begin{align}
    \mathbb{E}[\langle\theta_k-\theta^*, g_k \rangle] &= \mathbb{E}[\mathbb{E}[\langle \theta_k-\theta^*, g_k \rangle |\theta_1, \ldots, \theta_k]] \\
    &= \mathbb{E}[\langle \theta_k-\theta^*, \mathbb{E}[g_k  |\theta_1, \ldots, \theta_k] \rangle] \\
    &= \mathbb{E}[\langle \theta_k-\theta^*, \mathbb{E}[e^{t(\wR_k-\doublewidetilde{R}_k)} | \theta_1, \ldots, \theta_k] \mathbb{E}[\nabla \wR_k(\zeta_k) | \theta_1, \ldots, \theta_k] \rangle] \label{eq:decouple}
    \\& \geq \frac{1}{2e}\mathbb{E}[\langle \theta_k-\theta^*, \nabla \wR(\theta_k)\rangle]\quad(\mathbb{E}[e^{t(\wR_k-\doublewidetilde{R}_k)}|\theta_1, \ldots, \theta_k] \geq 1/\mathbb{E}[e^{t(\doublewidetilde{R}_k-\wR_k)}|\theta_1, \ldots, \theta_k] )\\
    & \geq \frac{\mu}{2e} \mathbb{E}[\|\theta_k-\theta^*\|^2] \quad (\mu\text{-strong convexity of }\wR),
\end{align}
where \eqref{eq:decouple} follows from the fact that $e^{t(\wR_k-\doublewidetilde{R}_k)}$ and $\nabla \wR_k(\zeta_k)$ are independent given $\{\theta_1,\ldots,\theta_k\}$. 
\noindent
For $k \geq k_t$ with $\alpha_k = \frac{2e}{\mu k}$,
\begin{align}
   \mathbb{E}[\|\theta_{k+1}-\theta^*\|^2] &= \mathbb{E}[\|\theta_k - \alpha_k g_k - \theta^*\|^2] \\
    &= \mathbb{E}[\|\theta_k-\theta^*\|^2] - 2\alpha_k \mathbb{E}[\langle \theta_k-\theta^*, g_k\rangle] + \alpha_k^2\mathbb{E}[\|g_k\|^2] \\
    & \leq \left(1-\frac{\alpha_k \mu}{e}\right) \mathbb{E}[\|\theta_k-\theta^*\|^2] + \alpha_k^2\mathbb{E}[\|e^{t(\wR_k-\doublewidetilde{R}_k)} \nabla \wR_k(\zeta_k)\|^2] \\
    & \leq \left(1-\frac{2}{k}\right) \mathbb{E}[\|\theta_k-\theta^*\|^2] + \frac{4e^2 B^2 e^{2t(\widetilde{F}_{\max}-\widetilde{F}_{\min})}}{\mu^2 k^2}. \label{eq:one_step}
\end{align}
When $k \leq k_t$ with $\alpha_k=\frac{1}{1+tLBe^{t(\wF_{\max}-\wF_{\min})} }$,
\begin{align}
    \mathbb{E}[\|\theta_{k}-\theta^*\|^2] 
    & \leq \left(1-\frac{ \mu}{e(tLBe^{t(\widetilde{F}_{\max}-\widetilde{F}_{\min})}+1)}\right) \mathbb{E}[\|\theta_{k-1}-\theta^*\|^2] + \frac{B^2 e^{2t(\widetilde{F}_{\max}-\widetilde{F}_{\min})}}{(1+tLBe^{t(\wF_{\max}-\wF_{\min})})^2}.\\
\end{align}
We can thus prove
\begin{align}
    \mathbb{E}[\|\theta_{k_t}-\theta^*\|^2] \leq \max\left\{\mathbb{E}[\|\theta_1-\theta^*\|^2], \frac{B^2e^{2t(\widetilde{F}_{\max}-\widetilde{F}_{\min})+1}}{\mu (1+tLBe^{t(\widetilde{F}_{\max}-\widetilde{F}_{\min})})}\right\}
\end{align}
Let 
\begin{align}
    V_t = \max\left\{k_t\mathbb{E}[\|\theta_{k_t}-\theta^*\|^2], \frac{4B^2e^{2+2t(\widetilde{F}_{\max}-\widetilde{F}_{\min})}}{\mu^2}\right\}.
\end{align}
We next prove for $k \geq k_t$, 
\begin{align}
    \mathbb{E}[\|\theta_k-\theta^*\|^2] \leq \frac{V_t}{k}.
\end{align}
Suppose $\mathbb{E}[\|\theta_k-\theta^*\|^2] \leq \frac{V_t}{k}$.  From \eqref{eq:one_step}, we have
\begin{align}
    \mathbb{E}[\|\theta_{k+1}-\theta^*\|^2] &\leq \left(1-\frac{2}{k}\right) \mathbb{E}[\|\theta_k-\theta^*\|^2] + \frac{4e^2B^2 e^{2t(\widetilde{F}_{\max}-\widetilde{F}_{\min})}}{k^2\mu^2} \\
    &\leq \left(1-\frac{2}{k}\right) \frac{V_t}{k} + \frac{V_t^2}{k^2} \\
    & \leq \frac{V_t}{k+1},
\end{align}
where $k \geq k_t = \left\lceil \frac{e+\sqrt{e^2+\mu tLBe^{t(\widetilde{F}_{\max}-\widetilde{F}_{\min})}+1}}{\mu}\right\rceil$. This completes the proof.\hfill \qedsymbol

\paragraph{Proof of Theorem~\ref{thm: convergence_stochastic_NCSM}.}
Assume $\wR(t;\theta)$ is non-convex and $\beta$-smooth, we have
\begin{align}
    \wR_{k+1} - \wR_k - \langle \nabla \wR_k, \theta_{k+1}-\theta_k \rangle \leq \frac{\beta}{2} \|\theta_{k+1}-\theta_k\|^2,
\end{align}
where $\wR_k := \wR(t;\theta_k)  = \frac{1}{t} \log\left(\frac{1}{N} \sum_{i \in [N]} e^{tf(x_i; \theta_k)}\right)$. Plugging in the updating rule
\begin{align}
    \theta_{k+1}-\theta_k = -\alpha_k \frac{e^{t(\zeta_k; \theta_k)}}{e^{t\doublewidetilde{R}_k}} \nabla f(\zeta_k; \theta_k) = -\alpha_k e^{t(\wR_k-\doublewidetilde{R}_k)} \nabla \wR_k(\zeta_k)
\end{align}
gives
\begin{align}
    \wR_{k+1}-\wR_k + \alpha_k \langle \nabla \wR_k, e^{t(\wR_k-\doublewidetilde{R}_k)} \nabla \wR_k(\zeta_k) \rangle \leq \frac{\beta}{2} \left\|\alpha_k e^{t(\wR_k-\doublewidetilde{R}_k)} \nabla \wR_k(\zeta_k) \right\|^2. \label{eq:smooth}
\end{align}
First, we note
\begin{align}
    \left\|\alpha_k^2 e^{t(\wR_k-\doublewidetilde{R}_k)} \nabla \wR_k(\zeta_k)\right\|^2 \leq \alpha_k^2 e^{2t(\wF_{\max}-\wF_{\min})} \|\nabla \wR_k(\zeta_k)\|^2.
\end{align}
Take expectation on both sides of~\eqref{eq:smooth},
\begin{align}
    \mathbb{E}[\wR_{k+1}] - \mathbb{E}[\wR_k] + \alpha_k \mathbb{E}[\langle \nabla \wR_k, e^{t(\wR_k-\doublewidetilde{R}_k)} \nabla \wR_k (\zeta_k) \rangle] \leq \frac{\beta \alpha_k^2 e^{2t(\wF_{\max}-\wF_{\min})} B^2}{2}. \label{eq:one_step_2}
\end{align}
Let
\begin{align}
    k_t := \left\lceil \frac{2(\wF_{\max}-\wF_{\min})t^2L^2}{\beta e^2} \right\rceil.
\end{align}
For any $k \geq k_t$, let 
\begin{align}
    \alpha_k = \frac{\sqrt{2(\wF_{\max}-\wF_{\min})}}{e^{t(\wF_{\max}-\wF_{\min})}\sqrt{\beta B^2K}}.
\end{align}
For $k < k_t$, let 
\begin{align}
    \alpha_k = \frac{1}{tLBe^{t(\wF_{\max}-\wF_{\min})}+1}.
\end{align}
We have for any $k \geq 1$,
\begin{align}
    \alpha_k tLBe^{t(\wF_{\max}-\wF_{\min})} \leq 1.
\end{align}
Therefore, for any $k \geq 1$,
\begin{align}
    \mathbb{E}[e^{t(\doublewidetilde{R}_k-\wR_k)} | \theta_1,\ldots, \theta_k] \leq 2e.
\end{align}
Thus, for any $k\geq 1$,
\begin{align}
    \mathbb{E}[\langle \nabla \wR_k, e^{t(\wR_k-\doublewidetilde{R}_k)} \nabla \wR_k (\zeta_k) \rangle] &=  \mathbb{E}[ \mathbb{E}[\langle \nabla \wR_k, e^{t(\wR_k-\doublewidetilde{R}_k)} \nabla \wR_k (\zeta_k) \rangle|\theta_1, \ldots, \theta_k]] \\
    &= \mathbb{E}[\langle \nabla \wR_k, \mathbb{E} [e^{t(\wR_k-\doublewidetilde{R}_k)} \nabla \wR_k (\zeta_k) |\theta_1, \ldots, \theta_k] \rangle]  \\
    &= \mathbb{E}[\langle \nabla \wR_k, \mathbb{E} [e^{t(\wR_k-\doublewidetilde{R}_k)} |\theta_1, \ldots, \theta_k] \mathbb{E}[\nabla \wR_k (\zeta_k) |\theta_1, \ldots, \theta_k] \rangle] \\
    &= \mathbb{E}[\langle \nabla \wR_k, \mathbb{E} [e^{t(\wR_k-\doublewidetilde{R}_k)} |\theta_1, \ldots, \theta_k] \nabla \wR_k \rangle] \\
    & \geq \frac{1}{2e} \mathbb{E}[\|\nabla \wR_k\|^2]. \label{eq:tilted_g}
\end{align}
Plug \eqref{eq:tilted_g} into~\eqref{eq:one_step_2},
\begin{align}
     \mathbb{E}[\|\nabla \wR_k\|^2] + \frac{2e}{\alpha_k}(\mathbb{E}[\wR_{k+1}]-\mathbb{E}[\wR_k]) \leq \beta \alpha_k e^{2t(\wF_{\max}-\wF_{\min})} eB^2.
\end{align}
Apply telescope sum from $k_t+1$ to $K$ and divide both sides by $K$,
\begin{align}
    \frac{1}{K}\sum_{k=k_t}^K  \mathbb{E}[\|\nabla \wR_k\|^2] + \frac{2e(\mathbb{E}[\wR_{K+1}]-\mathbb{E}[\wR_{k_t}])}{\alpha_k K} \leq \beta \alpha_k e^{2t(\wF_{\max}-\wF_{\min})} eB^2.
\end{align}
\begin{align}
    \frac{1}{K}\sum_{k=k_t}^K  \mathbb{E}[\|\nabla \wR_k\|^2] &\leq \beta \alpha_k e^{2t(\wF_{\max}-\wF_{\min})} eB^2 +\frac{2e(\mathbb{E}[\wR_{k_t}-\wR_{K+1}])}{\alpha_k K} \\
    &\leq \beta \alpha_k e^{2t(\wF_{\max}-\wF_{\min})} eB^2 +\frac{2e(\wF_{\max}-\wF_{\min})}{\alpha_k K} 
\end{align}
Consider that $\alpha_k=\frac{\sqrt{2 (\wF_{\max}-\wF_{\min}) }}{e^{t(\wF_{\max}-\wF_{\min})}\sqrt{\beta B^2K}}$, 
\begin{align}
    \frac{1}{K}\sum_{k=k_t}^K  \mathbb{E}[\|\nabla \wR_k\|^2] \leq \sqrt{8}B e^{t(\wF_{\max}-\wF_{\min})+1} \sqrt{\frac{\beta (\wF_{\max}-\wF_{\min})}{K}},
\end{align}
completing the proof. \hfill \qedsymbol

\paragraph{Proof of Theorem~\ref{thm: convergence_stochastic_NCSM_PL}.}
From the assumptions, we have $\wR(t;\theta)$ is $\frac{\mu}{2}$-PL, i.e.,
\begin{align}
    \mu (\wR(t;\theta)-\wR^*) \leq \|\nabla \wR(t;\theta)\|^2,
\end{align}
where $\wR^* := \wR(t;\breve{\theta}(t))$.
Let
\begin{align}
    k_t := \arg\max_k \left(k < \frac{4e}{\mu}+\frac{4etLBe^{t(\wF_{\max}-\wF_{\min})}}{\mu k}\right),
\end{align}
and
\begin{align}
\alpha_k = 
    \begin{cases}
    \frac{1}{tLB e^{t(\widetilde{F}_{\max}-\widetilde{F}_{\min})}+1},& \text{if } k \leq k_t\\
    \frac{4e}{\mu k},              & \text{otherwise}.
\end{cases}
\end{align}
Similarly, we can prove for any $k \geq 1$, 
\begin{align}
\mathbb{E}[e^{t(\doublewidetilde{R}_k-\wR_k)} | \theta_1, \ldots, \theta_k] \leq 2e.
\end{align}
Similarly,
\begin{align}
     \mathbb{E}[\wR_{k+1}]-\mathbb{E}[\wR_k] + \frac{\alpha_k}{2e} \mathbb{E}[\|\nabla \wR_k\|^2] \leq \frac{\beta \alpha_k^2 e^{2t(\wF_{\max}-\wF_{\min})}B^2}{2}.
\end{align}
Therefore,
\begin{align}
     \mathbb{E}[\wR_{k+1}]-\mathbb{E}[\wR_k] + \frac{\alpha_k}{2e} \mu \mathbb{E}[\wR_k-\wR^*] &\leq \frac{\beta \alpha_k^2 e^{2t(\wF_{\max}-\wF_{\min})}B^2}{2} \\
     \mathbb{E}[\wR_{k+1}-\wR^*]-\mathbb{E}[\wR_k-\wR^*] + \frac{\alpha_k}{2e} \mu  \mathbb{E}[\wR_k-\wR^*] &\leq \frac{\beta \alpha_k^2 e^{2t(\wF_{\max}-\wF_{\min})}B^2}{2} \\
     \mathbb{E}[\wR_{k+1}-\wR^*] &\leq \left(1- \frac{\alpha_k}{2e} \mu\right)  \mathbb{E}[\wR_k-\wR^*] +\frac{\beta \alpha_k^2 e^{2t(\wF_{\max}-\wF_{\min})}B^2}{2}
\end{align}
Let $\alpha_k = \frac{4e}{\mu k}$, and
\begin{align}
    V_t=\max\left\{k_t \mathbb{E}[\wR_{k_t}-\wR^*], \frac{8\beta B^2 e^{2t(\wF_{\max}-\wF_{\min})+2}}{\mu^2}\right\}.
\end{align}
We next prove $\mathbb{E}[\wR_k-\wR^*] \leq \frac{1}{k}$ ($k \geq k_t$) by induction. Suppose $\mathbb{E}[\wR_k-\wR^*]\leq \frac{V_t}{k}$, then
\begin{align}
    \mathbb{E}[\wR_{k+1}-\wR^*] &\leq \left(1-\frac{2}{k}\right) \mathbb{E}[\wR_k-\wR^*] + \frac{V_t}{k^2} \\
    &\leq \left(1-\frac{2}{k}\right)\frac{V_t}{k} + \frac{V_t}{k^2} \\
    &\leq \frac{V_t}{k+1},
\end{align}
which concludes the proof. \hfill \qedsymbol

\begin{figure}[h]
    \centering
    \begin{subfigure}[b]{0.95\textwidth}
        \centering
        \includegraphics[width=0.45\linewidth]{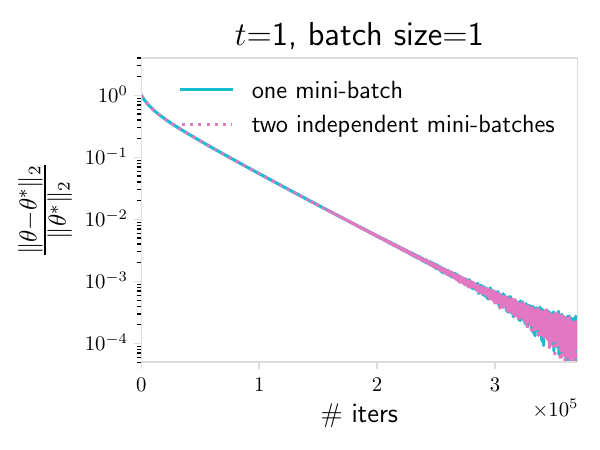}
        \hfill
        \includegraphics[width=0.45\linewidth]{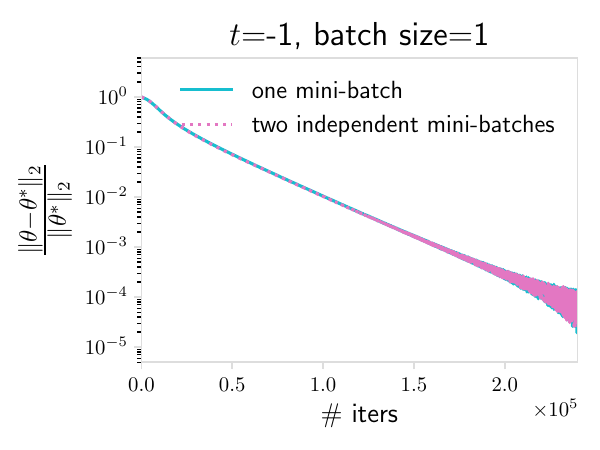}
    \end{subfigure}
    \caption{Convergence of Algorithm~\ref{alg:stochastic-non-h-TERM} using two independent mini-batches to update $\doublewidetilde{R}_{t}$ and calculate $e^{tf(x;\theta)} \nabla_{\theta}f(x;\theta)$ and a simpler variant using only one mini-batch to query $w_{t,x} \nabla_{\theta}f(x;\theta)$. We plot the optimality gap versus the number of iterations on the point estimation example (Figure~\ref{fig:example} (a)) with batch size being 1. While Algorithm~\ref{alg:stochastic-non-h-TERM} allows us to get better convergence guarantees theoretically, we find that these two variants perform similarly empirically.}
    \label{fig:convergence_compare}
\end{figure}

\begin{table}[h!]
\centering
\caption{TERM Applications and their corresponding solvers.}
\scalebox{1}{
    \begin{tabular}{l|l}
      Three toy examples (Figure~\ref{fig:example})   & ~~Algorithm~\ref{alg:batch-non-h-TERM} \\
       Robust regression (Table~\ref{table: outlier})  & ~~Algorithm~\ref{alg:batch-non-h-TERM} \\ 
       Robust classification (Table~\ref{table:label_noise_cnn})  & ~~Algorithm~\ref{alg:stochastic-non-h-TERM} \\ 
       Low-quality annotators (Figure~\ref{fig:noisy_annotator})~~  & ~~Algorithm~\ref{alg:stochastic-TERM} ($\tau=0$) \\ 
       Fair PCA (Figure~\ref{fig:pca}) & ~~Algorithm~\ref{alg:batch-TERM} ($\tau=0$) \\
       Class imbalance (Figure~\ref{fig:class_imabalance}) & ~~Algorithm~\ref{alg:stochastic-TERM} ($\tau=0$) \\
       Variance reduction (Table~\ref{table: dro}) & ~~Algorithm~\ref{alg:batch-TERM} ($\tau=0$) \\
       Hierarchical TERM (Table~\ref{table:hierarchical}) & ~~Algorithm~\ref{alg:batch-TERM} \\
    \end{tabular}}
    \label{table: alg_choice}
\vspace{0.15in}
\end{table}

\clearpage
\section{Additional Experiments and Experimental Details}
\label{appen:exp_full}
In Appendix~\ref{appen:complete_exp}, we provide complete experimental results on the properties or 
the use-cases of TERM. Details on how the experiments in Section~\ref{sec:experiments} were executed are provided in Appendix~\ref{appen:exp_detail}.



\subsection{Complete Results}\label{appen:complete_exp}

Recall that in Section~\ref{sec:term_properties},  Interpretation 1 is that TERM can be tuned to re-weight samples to magnify or suppress the influence of outliers. In Figure~\ref{fig:highlight_samples} below, we visually show this effect by highlighting the samples with the largest weight for $t\to +\infty$ and $t \to -\infty$ on the logistic regression example previously described in Figure~\ref{fig:example}. 

\begin{figure}[h!]
    \centering
    \begin{subfigure}{0.49\textwidth}
        \centering
        \includegraphics[width=0.82\textwidth]{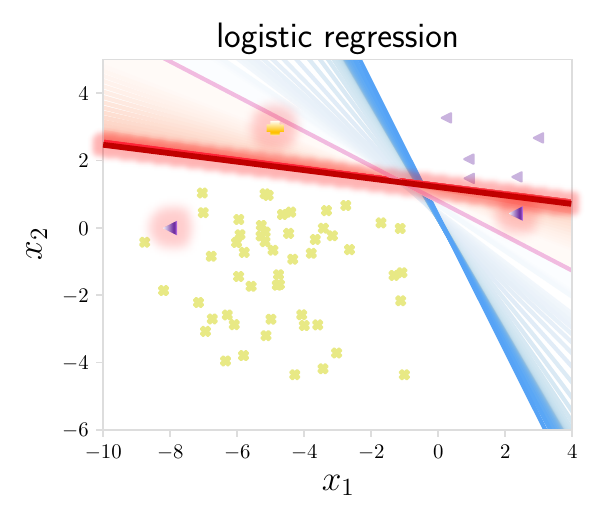}
        \caption{Samples with the largest weights as $t \to + \infty$.}
    \end{subfigure}
    \hfill
    \begin{subfigure}{0.49\textwidth}
        \centering
        \includegraphics[width=0.82\textwidth]{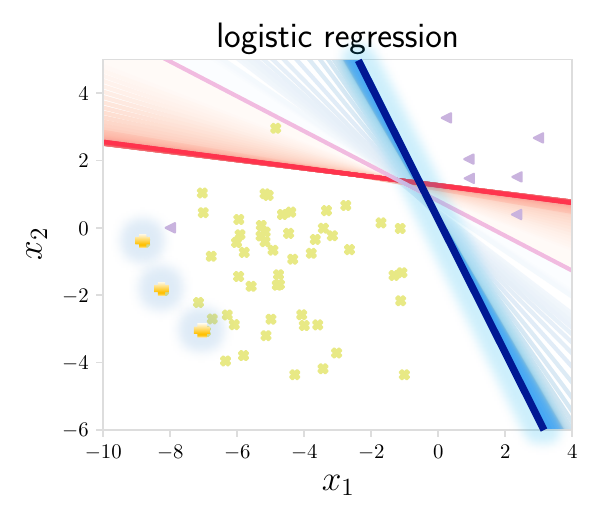}
        \caption{Samples with the largest weights as $t \to - \infty$.
        }
    \end{subfigure}
    \caption{For positive values of $t$, TERM focuses on the samples with relatively large losses (rare instances). When $t \to +\infty$ (left), a few misclassified samples have the largest weights and are highlighted. On the other hand, for negative values of $t$, TERM suppresses the effect of the outliers, and as $t \to -\infty$ (right), samples with the smallest losses hold the the largest weights. }
    \label{fig:highlight_samples}
\end{figure}




Next, we provide complete results of applying TERM to a diverse set of applications.

\paragraph{Robust classification.} Recall that in Section~\ref{sec:exp:robustness}, for classification in the presence of label noise, we only compare with baselines which do not require clean validation data. In Table~\ref{table:label_noise_cnn_full} below, we report the complete results of comparing TERM with all baselines, including MentorNet-DD~\citep{jiang2018mentornet} which needs additional clean data. In particular, in contrast to the other methods, MentorNet-DD uses 5,000 clean validation images. TERM is competitive with  the performance of MentorNet-DD, even though it does not have access to this clean data.

\begin{table}[h]
\caption{A complete comparison including two MentorNet variants. TERM is able to match the performance of MentorNet-DD, which needs additional clean labels.
}
\centering
\label{table:label_noise_cnn_full}
\scalebox{0.9}{
\begin{tabular}{lccc}
	   \toprule[\heavyrulewidth]
        \multicolumn{1}{l}{\multirow{2}{*}{\textbf{objectives}}} & \multicolumn{3}{c}{\textbf{test accuracy} ({\fontfamily{qcr}\selectfont{CIFAR-10}}, Inception)} \\
        \cmidrule(r){2-4}
          &  20\% noise & 40\% noise & 80\% noise\\
        \midrule
 \rule{0pt}{2ex}ERM & 0.775 {\tiny (.004)} &  0.719 {\tiny (.004)} & 0.284 {\tiny (.004)}  \\
 RandomRect~\citep{ren2018learning} & 0.744 {\tiny (.004)} & 0.699 {\tiny (.005)} & 0.384 {\tiny (.005)}  \\
SelfPaced~\citep{kumar2010self} & 0.784 {\tiny (.004)} & 0.733 {\tiny (.004)} & 0.272 {\tiny (.004)}  \\
 MentorNet-PD~\citep{jiang2018mentornet} & 0.798 {\tiny (.004)} & 0.731 {\tiny (.004)} & 0.312 {\tiny (.005)} \\
 GCE~\citep{zhang2018generalized} & \textbf{0.805} {\tiny (.004)}  & 0.750 {\tiny (.004)} & 0.433 {\tiny (.005)} \\
 MentorNet-DD~\citep{jiang2018mentornet} & \textbf{0.800} {\tiny (.004)} & \textbf{0.763} {\tiny (.004)} & \textbf{0.461}{\tiny (.005)} \\
 \rowcolor{myblue}
 TERM & 0.795 {\tiny (.004)}  & \textbf{0.768} {\tiny (.004)} & \textbf{0.455} {\tiny (.005)} \\
 \hline 
 \rule{0pt}{2ex}Genie ERM & 0.828 {\tiny (.004)} & 0.820 {\tiny (.004)} & 0.792 {\tiny (.004)} \\
\bottomrule[\heavyrulewidth]
\end{tabular}}
\end{table}

To interpret the noise more easily, we provide a toy logistic regression example with synthetic data here. In Figure~\ref{fig:robust_classification_synthetic}, we see that TERM with $t=-2$ (blue) can converge to the correct classifier under 20\%, 40\%, and 80\% noise.

\begin{figure}[h]
    \centering
    \includegraphics[width=\textwidth]{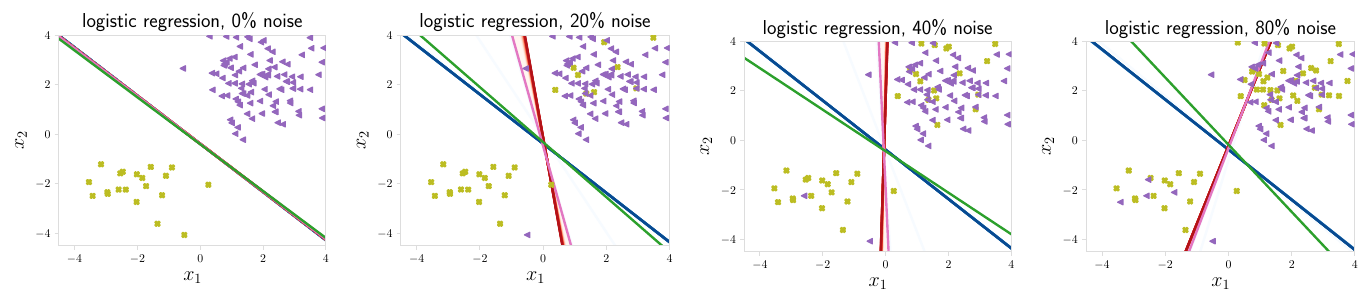}
    \caption{{Robust classification using synthetic data. On this toy problem, we show that TERM with negative $t$'s (blue) can be robust to random noisy samples. The green line corresponds to the solution of the generalized cross entropy (GCE) baseline~\citep{zhang2018generalized}. Note that on this toy problem, GCE is as good as TERM with negative $t$'s, despite its inferior performance on the real-world CIFAR10 dataset.}}
    \label{fig:robust_classification_synthetic}
\end{figure}


\begin{figure}[h]
\centering
  \includegraphics[width=0.5\linewidth]{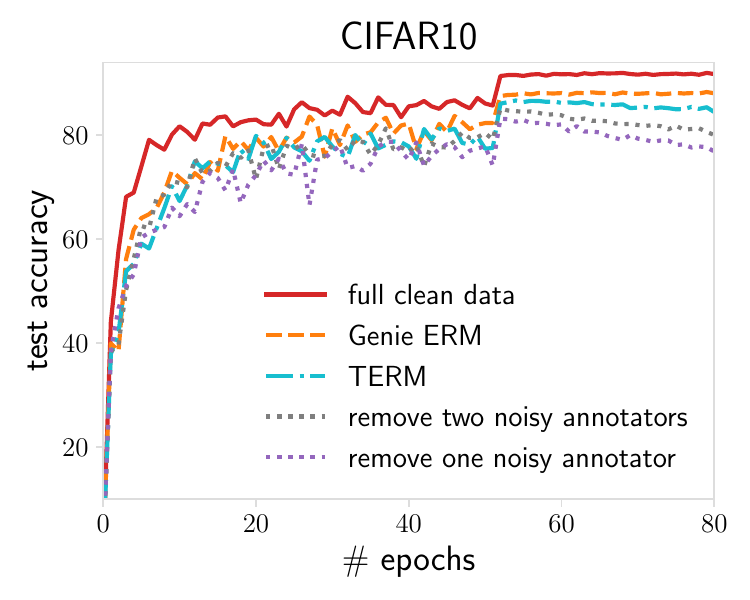}
  \caption{TERM achieves higher test accuracy  than the baselines, and can match the performance of Genie ERM (i.e., training on all the clean data combined).}
  \label{fig:noisy_annotator_2}
\end{figure}

\newpage
\paragraph{Low-quality annotators.} In Section~\ref{sec:exp:noisy_annotator}, we demonstrate that TERM can be used to mitigate the effect of noisy annotators, and we assume each annotator is either always correct, or always uniformly assigning random labels. 
Here, we explore a different and possibly more practical scenario where there are four noisy annotators who corrupt 0\%, 20\%, 40\%, and 100\% of their data by assigning labels uniformly at random, and there is one additional adversarial annotator who always assigns wrong labels. We assume the data points labeled by each annotator do not overlap, since~\citep{khetan2017learning} show that obtaining one label per sample is optimal for the data collectors under a fixed annotation budget. We compare TERM with several baselines: (a) training without the data coming from the adversarial annotator, (b) training without the data coming from the worst two annotators, and (c) training with all the clean data combined (Genie ERM). The results are shown in Figure~\ref{fig:noisy_annotator_2}. We see that TERM outperforms the strong baselines of removing one or two noisy annotators, and closely matches the performance of training with all the available clean data.

\subsection{Experimental Details} \label{appen:exp_detail}
We first describe the datasets and models used in each experiment presented in Section~\ref{sec:experiments}, and then provide a detailed setup including the choices of hyperparameters. All code and datasets are publicly available at \href{https://github.com/litian96/TERM}{github.com/litian96/TERM}.

\subsubsection{Datasets and models}


In Section~\ref{sec:exp:robustness}, for regression tasks, we use the drug discovery data extracted from~\citet{diakonikolas2019sever} which is originally curated from~\citet{olier2018meta} and train linear regression models with different losses. 
There are 4,085 samples in total with each having 411 features. We randomly split the dataset into 80\% training set, 10\% validation set, and 10\% testing set.
For mitigating noise on classification tasks, we use the standard CIFAR-10 data and their standard train/val/test partitions along with a standard inception network~\citep{szegedy2016rethinking}. For experiments regarding mitigating noisy annotators, we again use the CIFAR-10 data and their standard partitions with a ResNet20 model. The noise generation procedure is described in Section~\ref{sec:exp:noisy_annotator}. 

In Section~\ref{sec:exp:fairness}, for fair PCA experiments, we use the complete Default Credit data to learn low-dimensional approximations and the loss is computed on the full training set. We follow the exact data processing steps described in the work~\citep{samadi2018price} we compare with. There are 30,000 total data points with 21-dimensional features (after preprocessing). Among them, the high education group has 24,629 samples and the low education group has 5,371 samples. For meta-learning experiments, one the popular sine wave regression problem~\citep{finn2017model}, we generate 5,000 meta-training and 5,000 meta-testing tasks. Following~\citet{collins2020task}, there are 250 hard meta-training tasks with amplitudes drawn from $[4.95, 5]$ and 4,750 easy meta-training tasks with amplitudes drawn from $[0.01,1]$. The amplitudes of meta-testing tasks are drawn uniformly from $[0.1, 5]$. The phase values are drawn uniformly from $[0,\pi]$ for all tasks.
For class imbalance experiments, we directly take the unbalanced data extracted from MNIST~\citep{lecun1998gradient} used in~\citet{ren2018learning}.
When demonstrating the variance reduction of TERM, we use the HIV-1 dataset~\citep{hiv1} as in~\citep{duchi2019variance} and randomly split it into 80\% train, 10\% validation, and 10\% test set. There are 6,590 total samples and each has 160 features. We report results based on five such random partitions of the data. We train logistic regression models (without any regularization) for this binary classification task for TERM and the baseline methods. We also investigate the performance of a linear SVM. 

In Section~\ref{sec:exp:multi_obj}, the HIV-1 data are the same as that in Section~\ref{sec:exp:fairness}. We also manually subsample the data to make it more imbalanced, or inject random noise, as described in Section~\ref{sec:exp:multi_obj}. The CIFAR10 dataset used in this section is a standard benchmark, and we follow the same procedures in~\citet{cao2021heteroskedastic} to generate a noisy and imbalanced variant.

\subsubsection{Hyperparameters}

\paragraph{Selecting $t$.} In  Section~\ref{sec:exp:fairness} where we consider positive $t$'s, we select $t$ from a limited candidate set of $\{0.1, 1, 2, 5, 10, 50, 100, 200\}$ on the held-out validation set. 
For initial robust regression experiments, RMSE changed by only
0.08 on average across t; we thus used $t=-2$ for all experiments
involving noisy training samples (Section~\ref{sec:exp:robustness} and Section~\ref{sec:exp:multi_obj}).

\paragraph{Other parameters.} 
For all experiments, we tune all other hyperparameters (the learning rates, the regularization parameters, the decision threshold for ERM$_{+}$, $\rho$ for~\citep{duchi2019variance}, {the quantile value for CVaR (i.e., $\alpha$ in Eq.~\eqref{eq: cvar})~\citep{rockafellar2000optimization}}, $\alpha$ and $\gamma$ for focal loss~\citep{lin2017focal}) based on a validation set, and select the best one.  For experiments regarding focal loss~\citep{lin2017focal}, we select the class balancing parameter ($\alpha$ in the original focal loss paper) from $\texttt{range}(0.05, 0.95, 0.05)$  and select the main parameter $\gamma$ from $\{0.5, 1, 2, 3, 4, 5\}$. We tune $\rho$ in~\citep{duchi2019variance} such that $\frac{\rho}{n}$ is selected from $\{0.5, 1, 2, 3, 4, 5, 10\}$ where $n$ is the training set size. {We tune $\alpha$ for CVaR from $\{0.1, 0.3, 0.5, 0.7, 0.9\}$.} All regularization parameters including regularization for linear SVM are selected from $\{0.0001, 0.01, 0.1, 1, 2\}$.
For all experiments on the baseline methods, we use the default hyperparameters in the original paper (or the open-sourced code).

We summarize a complete list of main hyperparameter values as follows.

\noindent
\textit{Section~\ref{sec:exp:robustness}:}
    \begin{itemize}[leftmargin=*]
        \item Robust regression. The threshold parameter $\delta$ for Huber loss for all noisy levels is 1, the corruption parameter $k$ for CRR is: 500 (20\% noise), 1000 (40\% noise), and 3000 (80\% noise); and  TERM uses $t=-2$.
        \item Robust classification. The results are all based on the default hyperparameters provided by the open-sourced code of MentorNet~\citep{jiang2018mentornet}, if applicable. We tune the $q$ parameter for generalized cross entropy (GCE) from $\{0.4, 0.8, 1.0\}$ and select a best one for each noise level. For TERM, we scale $t$ linearly as the number of iterations from 0 to -2 for all noise levels.
        \item Low-quality annotators. For all methods, we use the same set of hyperparameters. The initial step-size is set to 0.1 and decayed to 0.01 at epoch 50. The batch size is 100. 
    \end{itemize}
    \textit{Section~\ref{sec:exp:fairness}:}
    \begin{itemize}[leftmargin=*]
        \item Fair PCA. We use the default hyperparameters and directly run the public code of~\citep{samadi2018price} to get the results on the min-max fairness baseline. We use a learning rate of 0.001 for our gradient-based solver for all target dimensions. 
        \item Fair meta-learning. We use a fixed learning rate of 0.01 for all methods, and tune a best learning rate for the task weights for the work of~\citet{collins2020task}. Similar as~\citet{collins2020task}, for all methods, we run one step of mini-batch SGD for inner optimization.
        \item Handling class imbalance. We take the open-sourced code of LearnReweight~\citep{ren2018learning} and use the default hyperparameters for the baselines of LearnReweight, HardMine, and ERM. We implement focal loss, and select $\alpha=0.05, \gamma=2$.
        \item Variance reduction. The regularization parameter for linear SVM is 1. $\gamma$ for focal loss is 2. We perform binary search on the decision thresholds for ERM$_{+}$ and RobustRegRisk$_{+}$, and choose 0.26 and 0.49, respectively.
    \end{itemize}
    \textit{Section~\ref{sec:exp:multi_obj}:}
    \begin{itemize}[leftmargin=*]
        \item Logistic regression on HIV. We tune the $q$ parameter for GCE based on validation data. We use $q=0,0,0.7, 0.3$ respectively for the four scenarios we consider. For RobustlyRegRisk, we use $\frac{\rho}{n}=10$ (where $n$ is the training sample size) and we find that the performance is not sensitive to the choice of $\rho$. {For CVaR, the tuned $\alpha$ value is 0.5 when the data imbalance ratio is 1:4, and 0.1 when the imbalance ratio is 1:20.} For focal loss, we tune the hyperparameters for best performance and select $\gamma=2$, $\alpha=$0.5, 0.1, 0.5, and 0.2 for four scenarios. For HAR, we tune the regularization parameter $\lambda$ via grid search from $\{0.1, 1, 2, 5, 10\}$ and select the best one. We use $t=-2$ for TERM in the presence of noise, and tune the positive $t$'s based on validation data. In particular, the values of tilts under four cases are: (0, 0.1), (0, 50), (-2, 5), and (-2, 10) for TERM$_{sc}$ and (0.1, 0), (50, 0), (1, -2) and (50, -2) for TERM$_{ca}$.
        \item ResNet32 on CIFAR10. We reproduce (and then directly take) the results from~\citep{cao2021heteroskedastic} for all baseline methods. For hierarchical TERM, we scale $t$ from 0 to 3 for group-level tilting, and scale $t$ from 0 to -2 for sample-level tilting within each group. $\lambda$ is set to 0.2. We use the default hyperparameters (batch size, learning rate, etc) in the open-sourced code of HAR~\citep{cao2021heteroskedastic} for TERM.
    \end{itemize}

\clearpage
\bibliography{ref}

\begin{thebibliography}{136}
\providecommand{\natexlab}[1]{#1}
\providecommand{\url}[1]{\texttt{#1}}
\expandafter\ifx\csname urlstyle\endcsname\relax
  \providecommand{\doi}[1]{doi: #1}\else
  \providecommand{\doi}{doi: \begingroup \urlstyle{rm}\Url}\fi

\bibitem[Abdelkarim et~al.(2020)Abdelkarim, Achlioptas, Huang, Li, Church, and
  Elhoseiny]{abdelkarim2020long}
Sherif Abdelkarim, Panos Achlioptas, Jiaji Huang, Boyang Li, Kenneth Church,
  and Mohamed Elhoseiny.
\newblock Long-tail visual relationship recognition with a visiolinguistic
  hubless loss.
\newblock \emph{arXiv preprint arXiv:2004.00436}, 2020.

\bibitem[Abernethy et~al.(2022)Abernethy, Awasthi, Kleindessner, Morgenstern,
  Russell, and Zhang]{abernethy2020active}
Jacob Abernethy, Pranjal Awasthi, Matth{\"a}us Kleindessner, Jamie Morgenstern,
  Chris Russell, and Jie Zhang.
\newblock Active sampling for min-max fairness.
\newblock In \emph{International Conference on Machine Learning}, 2022.

\bibitem[Ahmadi-Javid(2012)]{ahmadi2012entropic}
Amir Ahmadi-Javid.
\newblock Entropic value-at-risk: A new coherent risk measure.
\newblock \emph{Journal of Optimization Theory and Applications}, 2012.

\bibitem[Alghamdi et~al.(2020)Alghamdi, Asoodeh, Wang, Calmon, Wei, and
  Ramamurthy]{alghamdi2020model}
Wael Alghamdi, Shahab Asoodeh, Hao Wang, Flavio~P Calmon, Dennis Wei, and
  Karthikeyan~Natesan Ramamurthy.
\newblock Model projection: Theory and applications to fair machine learning.
\newblock In \emph{IEEE International Symposium on Information Theory}, 2020.

\bibitem[Ar{\i}kan(1996)]{Arikan-guesswork}
Erdal Ar{\i}kan.
\newblock An inequality on guessing and its application to sequential decoding.
\newblock \emph{IEEE Transactions on Information Theory}, 1996.

\bibitem[Artzner(1997)]{artzner1997thinking}
Philippe Artzner.
\newblock Thinking coherently.
\newblock \emph{Risk}, 1997.

\bibitem[Artzner et~al.(1999)Artzner, Delbaen, Eber, and
  Heath]{artzner1999coherent}
Philippe Artzner, Freddy Delbaen, Jean-Marc Eber, and David Heath.
\newblock Coherent measures of risk.
\newblock \emph{Mathematical Finance}, 1999.

\bibitem[Baharlouei et~al.(2020)Baharlouei, Nouiehed, Beirami, and
  Razaviyayn]{baharlouei2019r}
Sina Baharlouei, Maher Nouiehed, Ahmad Beirami, and Meisam Razaviyayn.
\newblock R\'enyi fair inference.
\newblock \emph{International Conference on Learning Representations}, 2020.

\bibitem[Beirami et~al.(2018)Beirami, Calderbank, Christiansen, Duffy, and
  M{\'e}dard]{beirami2018characterization}
Ahmad Beirami, Robert Calderbank, Mark~M Christiansen, Ken~R Duffy, and Muriel
  M{\'e}dard.
\newblock A characterization of guesswork on swiftly tilting curves.
\newblock \emph{IEEE Transactions on Information Theory}, 2018.

\bibitem[Bennett(1962)]{bennett1962probability}
George Bennett.
\newblock Probability inequalities for the sum of independent random variables.
\newblock \emph{Journal of the American Statistical Association}, 1962.

\bibitem[Bertsekas(1997)]{bertsekas1997nonlinear}
Dimitri~P Bertsekas.
\newblock Nonlinear programming.
\newblock \emph{Journal of the Operational Research Society}, 1997.

\bibitem[Bhatia et~al.(2015)Bhatia, Jain, and Kar]{bhatia2015robust}
Kush Bhatia, Prateek Jain, and Purushottam Kar.
\newblock Robust regression via hard thresholding.
\newblock In \emph{Advances in Neural Information Processing Systems}, 2015.

\bibitem[Bhatia et~al.(2017)Bhatia, Jain, Kamalaruban, and
  Kar]{bhatia2017consistent}
Kush Bhatia, Prateek Jain, Parameswaran Kamalaruban, and Purushottam Kar.
\newblock Consistent robust regression.
\newblock In \emph{Advances in Neural Information Processing Systems}, 2017.

\bibitem[Borkar(2002)]{borkar2002q}
Vivek~S Borkar.
\newblock Q-learning for risk-sensitive control.
\newblock \emph{Mathematics of Operations Research}, 2002.

\bibitem[Boucheron et~al.(2013)Boucheron, Lugosi, and
  Massart]{boucheron2013concentration}
St{\'e}phane Boucheron, G{\'a}bor Lugosi, and Pascal Massart.
\newblock \emph{Concentration inequalities: A nonasymptotic theory of
  independence}.
\newblock 2013.

\bibitem[Butler(2007)]{butler2007saddlepoint}
Ronald~W Butler.
\newblock \emph{Saddlepoint approximations with applications}.
\newblock Cambridge University Press, 2007.

\bibitem[Calafiore and El~Ghaoui(2014)]{calafiore2014optimization}
Giuseppe~C Calafiore and Laurent El~Ghaoui.
\newblock \emph{Optimization Models}.
\newblock Cambridge University Press, 2014.

\bibitem[Cao et~al.(2021)Cao, Chen, Lu, Arechiga, Gaidon, and
  Ma]{cao2021heteroskedastic}
Kaidi Cao, Yining Chen, Junwei Lu, Nikos Arechiga, Adrien Gaidon, and Tengyu
  Ma.
\newblock Heteroskedastic and imbalanced deep learning with adaptive
  regularization.
\newblock In \emph{International Conference on Learning Representations}, 2021.

\bibitem[Chang et~al.(2017)Chang, Learned-Miller, and
  McCallum]{chang2017active}
Haw-Shiuan Chang, Erik Learned-Miller, and Andrew McCallum.
\newblock Active bias: Training more accurate neural networks by emphasizing
  high variance samples.
\newblock In \emph{Advances in Neural Information Processing Systems}, 2017.

\bibitem[Chen and Paschalidis(2020)]{chen2020distributionally}
Ruidi Chen and Ioannis~Ch Paschalidis.
\newblock Distributionally robust learning.
\newblock \emph{Foundations and Trends{\textregistered} in Optimization}, 2020.

\bibitem[Cohen and Shashua(2014)]{cohen2014simnets}
Nadav Cohen and Amnon Shashua.
\newblock {Simnets}: A generalization of convolutional networks.
\newblock \emph{arXiv preprint arXiv:1410.0781}, 2014.

\bibitem[Cohen et~al.(2016)Cohen, Sharir, and Shashua]{cohen2016deep}
Nadav Cohen, Or~Sharir, and Amnon Shashua.
\newblock Deep simnets.
\newblock In \emph{Conference on Computer Vision and Pattern Recognition},
  2016.

\bibitem[Collins et~al.(2020)Collins, Mokhtari, and
  Shakkottai]{collins2020task}
Liam Collins, Aryan Mokhtari, and Sanjay Shakkottai.
\newblock Task-robust model-agnostic meta-learning.
\newblock In \emph{Advances in Neural Information Processing Systems}, 2020.

\bibitem[Cotter et~al.(2019)Cotter, Jiang, Gupta, Wang, Narayan, You, and
  Sridharan]{cotter2019optimization}
Andrew Cotter, Heinrich Jiang, Maya~R Gupta, Serena Wang, Taman Narayan,
  Seungil You, and Karthik Sridharan.
\newblock Optimization with non-differentiable constraints with applications to
  fairness, recall, churn, and other goals.
\newblock \emph{Journal of Machine Learning Research}, 2019.

\bibitem[Cover and Thomas(1991)]{cover1991information}
Thomas~M Cover and Joy~A Thomas.
\newblock Information theory and statistics.
\newblock \emph{Elements of Information Theory}, 1991.

\bibitem[Curi et~al.(2020)Curi, Levy, Jegelka, and Krause]{curi2019adaptive}
Sebastian Curi, Kfir~Y Levy, Stefanie Jegelka, and Andreas Krause.
\newblock Adaptive sampling for stochastic risk-averse learning.
\newblock In \emph{Advances in Neural Information Processing Systems}, 2020.

\bibitem[Dembo and Zeitouni(2009)]{dembo-zeitouni}
A.~Dembo and O.~Zeitouni.
\newblock \emph{Large deviations techniques and applications}.
\newblock Springer Science \& Business Media, 2009.

\bibitem[Diakonikolas et~al.(2019)Diakonikolas, Kamath, Kane, Li, Steinhardt,
  and Stewart]{diakonikolas2019sever}
Ilias Diakonikolas, Gautam Kamath, Daniel Kane, Jerry Li, Jacob Steinhardt, and
  Alistair Stewart.
\newblock Sever: A robust meta-algorithm for stochastic optimization.
\newblock In \emph{International Conference on Machine Learning}, 2019.

\bibitem[Donini et~al.(2018)Donini, Oneto, Ben-David, Shawe-Taylor, and
  Pontil]{donini2018empirical}
Michele Donini, Luca Oneto, Shai Ben-David, John~S Shawe-Taylor, and
  Massimiliano Pontil.
\newblock Empirical risk minimization under fairness constraints.
\newblock In \emph{Advances in Neural Information Processing Systems}, 2018.

\bibitem[Dua and Graff(2019)]{dua2019uci}
D~Dua and C~Graff.
\newblock {UCI} machine learning repository [http://archive. ics. uci. edu/ml].
  https://archive. ics. uci. edu/ml/datasets.
\newblock 2019.

\bibitem[Duarte and Hu(2004)]{duarte2004vehicle}
Marco~F Duarte and Yu~Hen Hu.
\newblock Vehicle classification in distributed sensor networks.
\newblock \emph{Journal of Parallel and Distributed Computing}, 2004.

\bibitem[Duchi and Namkoong(2018)]{duchi2018learning}
John Duchi and Hongseok Namkoong.
\newblock Learning models with uniform performance via distributionally robust
  optimization.
\newblock \emph{arXiv preprint arXiv:1810.08750}, 2018.

\bibitem[Duchi and Namkoong(2019)]{duchi2019variance}
John Duchi and Hongseok Namkoong.
\newblock Variance-based regularization with convex objectives.
\newblock \emph{Journal of Machine Learning Research}, 2019.

\bibitem[Dupuis and Ellis(1997)]{dupuis1997weak}
Paul Dupuis and Richard~S Ellis.
\newblock \emph{A Weak Convergence Approach to the Theory of Large Deviations}.
\newblock 1997.

\bibitem[Fan et~al.(2017)Fan, Lyu, Ying, and Hu]{NIPS2017_6c524f9d}
Yanbo Fan, Siwei Lyu, Yiming Ying, and Baogang Hu.
\newblock Learning with average top-k loss.
\newblock In \emph{Advances in Neural Information Processing Systems}, 2017.

\bibitem[Finn et~al.(2017)Finn, Abbeel, and Levine]{finn2017model}
Chelsea Finn, Pieter Abbeel, and Sergey Levine.
\newblock Model-agnostic meta-learning for fast adaptation of deep networks.
\newblock In \emph{International Conference on Machine Learning}, 2017.

\bibitem[F{\"o}llmer and Knispel(2011)]{follmer2011entropic}
Hans F{\"o}llmer and Thomas Knispel.
\newblock Entropic risk measures: Coherence vs. convexity, model ambiguity and
  robust large deviations.
\newblock \emph{Stochastics and Dynamics}, 2011.

\bibitem[F{\"o}llmer and Schied(2004)]{follmer2004stochastic}
Hans F{\"o}llmer and Alexander Schied.
\newblock Stochastic finance: an introduction in discrete time.
\newblock 2004.

\bibitem[Gallager(1968)]{Gallager-book}
Robert~G Gallager.
\newblock \emph{Information theory and reliable communication}.
\newblock Springer, 1968.

\bibitem[Gao et~al.(2015)Gao, Jagadish, and Ooi]{gao2015active}
Jinyang Gao, HV~Jagadish, and Beng~Chin Ooi.
\newblock Active sampler: Light-weight accelerator for complex data analytics
  at scale.
\newblock \emph{arXiv preprint arXiv:1512.03880}, 2015.

\bibitem[Ge et~al.(2015)Ge, Huang, Jin, and Yuan]{ge2015escaping}
Rong Ge, Furong Huang, Chi Jin, and Yang Yuan.
\newblock Escaping from saddle points—online stochastic gradient for tensor
  decomposition.
\newblock In \emph{Conference on Learning Theory}, 2015.

\bibitem[Ghadimi et~al.(2020)Ghadimi, Ruszczynski, and Wang]{ghadimi2020single}
Saeed Ghadimi, Andrzej Ruszczynski, and Mengdi Wang.
\newblock A single timescale stochastic approximation method for nested
  stochastic optimization.
\newblock \emph{SIAM Journal on Optimization}, 2020.

\bibitem[G{\"u}rb{\"u}zbalaban et~al.(2022)G{\"u}rb{\"u}zbalaban,
  Ruszczy{\'n}ski, and Zhu]{gurbuzbalaban2022stochastic}
Mert G{\"u}rb{\"u}zbalaban, Andrzej Ruszczy{\'n}ski, and Landi Zhu.
\newblock A stochastic subgradient method for distributionally robust
  non-convex and non-smooth learning.
\newblock \emph{Journal of Optimization Theory and Applications}, 2022.

\bibitem[Hardt et~al.(2016)Hardt, Price, and Srebro]{hardt2016equality}
Moritz Hardt, Eric Price, and Nati Srebro.
\newblock Equality of opportunity in supervised learning.
\newblock In \emph{Advances in Neural Information Processing Systems}, 2016.

\bibitem[Hashimoto et~al.(2018)Hashimoto, Srivastava, Namkoong, and
  Liang]{hashimoto2018fairness}
Tatsunori Hashimoto, Megha Srivastava, Hongseok Namkoong, and Percy Liang.
\newblock Fairness without demographics in repeated loss minimization.
\newblock In \emph{International Conference on Machine Learning}, 2018.

\bibitem[Hendrycks et~al.(2018)Hendrycks, Mazeika, Wilson, and
  Gimpel]{hendrycks2018using}
Dan Hendrycks, Mantas Mazeika, Duncan Wilson, and Kevin Gimpel.
\newblock Using trusted data to train deep networks on labels corrupted by
  severe noise.
\newblock In \emph{Advances in Neural Information Processing Systems}, 2018.

\bibitem[Hoeffding(1994)]{hoeffding1994probability}
Wassily Hoeffding.
\newblock Probability inequalities for sums of bounded random variables.
\newblock In \emph{The Collected Works of Wassily Hoeffding}. 1994.

\bibitem[Holland and Ikeda(2019)]{holland2019better}
Matthew Holland and Kazushi Ikeda.
\newblock Better generalization with less data using robust gradient descent.
\newblock In \emph{International Conference on Machine Learning}, 2019.

\bibitem[Howard and Matheson(1972)]{howard1972risk}
Ronald~A Howard and James~E Matheson.
\newblock Risk-sensitive markov decision processes.
\newblock \emph{Management science}, 1972.

\bibitem[Huber(1964)]{Huber-loss}
Peter~J Huber.
\newblock Robust estimation of a location parameter.
\newblock \emph{The Annals of Mathematical Statistics}, 1964.

\bibitem[Jiang et~al.(2019)Jiang, Wong, Zhou, Andersen, Dean, Ganger, Joshi,
  Kaminksy, Kozuch, Lipton, et~al.]{jiang2019accelerating}
Angela~H Jiang, Daniel L-K Wong, Giulio Zhou, David~G Andersen, Jeffrey Dean,
  Gregory~R Ganger, Gauri Joshi, Michael Kaminksy, Michael Kozuch, Zachary~C
  Lipton, et~al.
\newblock Accelerating deep learning by focusing on the biggest losers.
\newblock \emph{arXiv preprint arXiv:1910.00762}, 2019.

\bibitem[Jiang et~al.(2018)Jiang, Zhou, Leung, Li, and
  Fei-Fei]{jiang2018mentornet}
Lu~Jiang, Zhengyuan Zhou, Thomas Leung, Li-Jia Li, and Li~Fei-Fei.
\newblock {MentorNet}: Learning data-driven curriculum for very deep neural
  networks on corrupted labels.
\newblock In \emph{International Conference on Machine Learning}, 2018.

\bibitem[Jin et~al.(2017)Jin, Ge, Netrapalli, Kakade, and
  Jordan]{jin2017escape}
Chi Jin, Rong Ge, Praneeth Netrapalli, Sham~M Kakade, and Michael~I Jordan.
\newblock How to escape saddle points efficiently.
\newblock In \emph{International Conference on Machine Learning}, 2017.

\bibitem[Jin et~al.(2020)Jin, Netrapalli, and Jordan]{jin2019minmax}
Chi Jin, Praneeth Netrapalli, and Michael Jordan.
\newblock What is local optimality in nonconvex-nonconcave minimax
  optimization?
\newblock In \emph{International Conference on Machine Learning}, 2020.

\bibitem[Jindal et~al.(2016)Jindal, Nokleby, and Chen]{jindal2016learning}
Ishan Jindal, Matthew Nokleby, and Xuewen Chen.
\newblock Learning deep networks from noisy labels with dropout regularization.
\newblock In \emph{International Conference on Data Mining}, 2016.

\bibitem[Jindal et~al.(2019)Jindal, Pressel, Lester, and
  Nokleby]{jindal2019effective}
Ishan Jindal, Daniel Pressel, Brian Lester, and Matthew Nokleby.
\newblock An effective label noise model for dnn text classification.
\newblock In \emph{Conference of the North {A}merican Chapter of the
  Association for Computational Linguistics}, 2019.

\bibitem[Jorion(1996)]{jorion1996value}
Philippe Jorion.
\newblock Value at risk: a new benchmark for measuring derivatives risk.
\newblock \emph{Irwin Professional Pub}, 1996.

\bibitem[Kalogerias and Powell(2018)]{kalogerias2018recursive}
Dionysios~S Kalogerias and Warren~B Powell.
\newblock Recursive optimization of convex risk measures: Mean-semideviation
  models.
\newblock \emph{arXiv preprint arXiv:1804.00636}, 2018.

\bibitem[Kamani et~al.(2019)Kamani, Haddadpour, Forsati, and
  Mahdavi]{kamani2019efficient}
Mohammad~Mahdi Kamani, Farzin Haddadpour, Rana Forsati, and Mehrdad Mahdavi.
\newblock Efficient fair principal component analysis.
\newblock \emph{arXiv preprint arXiv:1911.04931}, 2019.

\bibitem[Karimi et~al.(2016)Karimi, Nutini, and Schmidt]{karimi2016linear}
Hamed Karimi, Julie Nutini, and Mark Schmidt.
\newblock Linear convergence of gradient and proximal-gradient methods under
  the polyak-{\l}ojasiewicz condition.
\newblock In \emph{Joint European Conference on Machine Learning and Knowledge
  Discovery in Databases}, 2016.

\bibitem[Katharopoulos and Fleuret(2017)]{katharopoulos2017biased}
Angelos Katharopoulos and Fran{\c{c}}ois Fleuret.
\newblock Biased importance sampling for deep neural network training.
\newblock \emph{arXiv preprint arXiv:1706.00043}, 2017.

\bibitem[Khetan et~al.(2018)Khetan, Lipton, and Anandkumar]{khetan2017learning}
Ashish Khetan, Zachary~C Lipton, and Anima Anandkumar.
\newblock Learning from noisy singly-labeled data.
\newblock In \emph{International Conference on Learning Representations}, 2018.

\bibitem[Kort and Bertsekas(1972)]{kort1972new}
Barry~W Kort and Dimitri~P Bertsekas.
\newblock A new penalty function method for constrained minimization.
\newblock In \emph{IEEE Conference on Decision and Control and 11th Symposium
  on Adaptive Processes}, 1972.

\bibitem[Krizhevsky et~al.(2009)Krizhevsky, Hinton,
  et~al.]{krizhevsky2009learning}
Alex Krizhevsky, Geoffrey Hinton, et~al.
\newblock Learning multiple layers of features from tiny images.
\newblock 2009.

\bibitem[Kumar et~al.(2010)Kumar, Packer, and Koller]{kumar2010self}
M~Pawan Kumar, Benjamin Packer, and Daphne Koller.
\newblock Self-paced learning for latent variable models.
\newblock In \emph{Advances in Neural Information Processing Systems}, 2010.

\bibitem[Laguel et~al.(2021)Laguel, Pillutla, Malick, and
  Harchaoui]{laguel2020device}
Yassine Laguel, Krishna Pillutla, J{\'e}r{\^o}me Malick, and Zaid Harchaoui.
\newblock A superquantile approach for federated learning with heterogeneous
  devices.
\newblock In \emph{Annual Conference on Information Sciences and Systems},
  2021.

\bibitem[Lahoti et~al.(2020)Lahoti, Beutel, Chen, Lee, Prost, Thain, Wang, and
  Chi]{lahoti2020fairness}
Preethi Lahoti, Alex Beutel, Jilin Chen, Kang Lee, Flavien Prost, Nithum Thain,
  Xuezhi Wang, and Ed~Chi.
\newblock Fairness without demographics through adversarially reweighted
  learning.
\newblock \emph{Advances in Neural Information Processing Systems}, 2020.

\bibitem[LeCun et~al.(1998)LeCun, Bottou, Bengio, and
  Haffner]{lecun1998gradient}
Yann LeCun, L{\'e}on Bottou, Yoshua Bengio, and Patrick Haffner.
\newblock Gradient-based learning applied to document recognition.
\newblock \emph{Proceedings of the IEEE}, 1998.

\bibitem[Leqi et~al.(2019)Leqi, Prasad, and Ravikumar]{leqi2019human}
Liu Leqi, Adarsh Prasad, and Pradeep~K Ravikumar.
\newblock On human-aligned risk minimization.
\newblock In \emph{Advances in Neural Information Processing Systems}, 2019.

\bibitem[Levy et~al.(2020)Levy, Carmon, Duchi, and Sidford]{levy2020large}
Daniel Levy, Yair Carmon, John~C Duchi, and Aaron Sidford.
\newblock Large-scale methods for distributionally robust optimization.
\newblock In \emph{Advances in Neural Information Processing Systems}, 2020.

\bibitem[Li et~al.(2020{\natexlab{a}})Li, Sahu, Talwalkar, and
  Smith]{li2020survey}
Tian Li, Anit~Kumar Sahu, Ameet Talwalkar, and Virginia Smith.
\newblock Federated learning: Challenges, methods, and future directions.
\newblock \emph{IEEE Signal Processing Magazine}, 2020{\natexlab{a}}.

\bibitem[Li et~al.(2020{\natexlab{b}})Li, Sanjabi, Beirami, and
  Smith]{li2019fair}
Tian Li, Maziar Sanjabi, Ahmad Beirami, and Virginia Smith.
\newblock Fair resource allocation in federated learning.
\newblock In \emph{International Conference on Learning Representations},
  2020{\natexlab{b}}.

\bibitem[Li et~al.(2021)Li, Beirami, Sanjabi, and Smith]{TERM}
Tian Li, Ahmad Beirami, Maziar Sanjabi, and Virginia Smith.
\newblock Tilted empirical risk minimization.
\newblock In \emph{International Conference on Learning Representations}, 2021.

\bibitem[Lin et~al.(2017)Lin, Goyal, Girshick, He, and
  Doll{\'a}r]{lin2017focal}
Tsung-Yi Lin, Priya Goyal, Ross Girshick, Kaiming He, and Piotr Doll{\'a}r.
\newblock Focal loss for dense object detection.
\newblock In \emph{International Conference on Computer Vision}, 2017.

\bibitem[Liu and Theodorou(2019)]{liu2019deep}
Guan-Horng Liu and Evangelos~A Theodorou.
\newblock Deep learning theory review: An optimal control and dynamical systems
  perspective.
\newblock \emph{arXiv preprint arXiv:1908.10920}, 2019.

\bibitem[Lowy and Razaviyayn(2021)]{lowy2021output}
Andrew Lowy and Meisam Razaviyayn.
\newblock Output perturbation for differentially private convex optimization
  with improved population loss bounds, runtimes and applications to private
  adversarial training.
\newblock \emph{arXiv preprint arXiv:2102.04704}, 2021.

\bibitem[Lowy et~al.(2021)Lowy, Baharlouei, Pavan, Razaviyayn, and
  Beirami]{lowy2021fermi}
Andrew Lowy, Sina Baharlouei, Rakesh Pavan, Meisam Razaviyayn, and Ahmad
  Beirami.
\newblock A stochastic optimization framework for fair risk minimization.
\newblock \emph{arXiv preprint arXiv:2102.12586}, 2021.

\bibitem[Malisiewicz et~al.(2011)Malisiewicz, Gupta, and
  Efros]{malisiewicz2011ensemble}
Tomasz Malisiewicz, Abhinav Gupta, and Alexei~A Efros.
\newblock Ensemble of exemplar-{SVM}s for object detection and beyond.
\newblock In \emph{International Conference on Computer Vision}, 2011.

\bibitem[Mason et~al.(1999)Mason, Baxter, Bartlett, and
  Frean]{mason1999boosting}
Llew Mason, Jonathan Baxter, Peter Bartlett, and Marcus Frean.
\newblock Boosting algorithms as gradient descent.
\newblock In \emph{Advances in Neural Information Processing Systems}, 1999.

\bibitem[Massey(1994)]{massey94}
James~L Massey.
\newblock Guessing and entropy.
\newblock In \emph{IEEE International Symposium on Information Theory}, 1994.

\bibitem[Maurer and Pontil(2009)]{maurer2009empirical}
Andreas Maurer and Massimiliano Pontil.
\newblock Empirical bernstein bounds and sample variance penalization.
\newblock \emph{arXiv preprint arXiv:0907.3740}, 2009.

\bibitem[McMahan et~al.(2017)McMahan, Moore, Ramage, Hampson, and
  Arcas]{mcmahan2016FedAvg}
H~Brendan McMahan, Eider Moore, Daniel Ramage, Seth Hampson, and Blaise
  Ag{\"u}era~y Arcas.
\newblock Communication-efficient learning of deep networks from decentralized
  data.
\newblock In \emph{International Conference on Artificial Intelligence and
  Statistics}, 2017.

\bibitem[Menon et~al.(2020)Menon, Rawat, Reddi, and Kumar]{menon2020can}
Aditya~Krishna Menon, Ankit~Singh Rawat, Sashank~J Reddi, and Sanjiv Kumar.
\newblock Can gradient clipping mitigate label noise?
\newblock In \emph{International Conference on Learning Representations}, 2020.

\bibitem[Merhav(2014)]{Merhav-list}
Neri Merhav.
\newblock List decoding---{R}andom coding exponents and expurgated exponents.
\newblock \emph{IEEE Transactions on Information Theory}, 2014.

\bibitem[Mohri et~al.(2019)Mohri, Sivek, and Suresh]{mohri2019agnostic}
Mehryar Mohri, Gary Sivek, and Ananda~Theertha Suresh.
\newblock Agnostic federated learning.
\newblock In \emph{International Conference on Machine Learning}, 2019.

\bibitem[Mukhoty et~al.(2019)Mukhoty, Gopakumar, Jain, and
  Kar]{pmlr-v89-mukhoty19a}
Bhaskar Mukhoty, Govind Gopakumar, Prateek Jain, and Purushottam Kar.
\newblock Globally-convergent iteratively reweighted least squares for robust
  regression problems.
\newblock In \emph{International Conference on Artificial Intelligence and
  Statistics}, 2019.

\bibitem[Namkoong and Duchi(2017)]{namkoong2017variance}
Hongseok Namkoong and John~C Duchi.
\newblock Variance-based regularization with convex objectives.
\newblock In \emph{Advances in Neural Information Processing Systems}, 2017.

\bibitem[Nouiehed et~al.(2019)Nouiehed, Sanjabi, Huang, Lee, and
  Razaviyayn]{nouiehed2019solving}
Maher Nouiehed, Maziar Sanjabi, Tianjian Huang, Jason~D Lee, and Meisam
  Razaviyayn.
\newblock Solving a class of non-convex min-max games using iterative first
  order methods.
\newblock In \emph{Advances in Neural Information Processing Systems}, 2019.

\bibitem[Olier et~al.(2018)Olier, Sadawi, Bickerton, Vanschoren, Grosan,
  Soldatova, and King]{olier2018meta}
Ivan Olier, Noureddin Sadawi, G~Richard Bickerton, Joaquin Vanschoren, Crina
  Grosan, Larisa Soldatova, and Ross~D King.
\newblock Meta-qsar: a large-scale application of meta-learning to drug design
  and discovery.
\newblock \emph{Machine Learning}, 2018.

\bibitem[Ostrovskii et~al.(2020)Ostrovskii, Lowy, and
  Razaviyayn]{ostrovskii2020efficient}
Dmitrii~M Ostrovskii, Andrew Lowy, and Meisam Razaviyayn.
\newblock Efficient search of first-order nash equilibria in nonconvex-concave
  smooth min-max problems.
\newblock \emph{arXiv preprint arXiv:2002.07919}, 2020.

\bibitem[Pace and Barry(1997)]{pace1997sparse}
R~Kelley Pace and Ronald Barry.
\newblock Sparse spatial autoregressions.
\newblock \emph{Statistics \& Probability Letters}, 1997.

\bibitem[Pee and Royset(2011)]{pee2011solving}
EY~Pee and Johannes~O Royset.
\newblock On solving large-scale finite minimax problems using exponential
  smoothing.
\newblock \emph{Journal of Optimization Theory and Applications}, 2011.

\bibitem[Prost et~al.(2019)Prost, Qian, Chen, Chi, Chen, and
  Beutel]{prost2019toward}
Flavien Prost, Hai Qian, Qiuwen Chen, Ed~H Chi, Jilin Chen, and Alex Beutel.
\newblock Toward a better trade-off between performance and fairness with
  kernel-based distribution matching.
\newblock \emph{arXiv preprint arXiv:1910.11779}, 2019.

\bibitem[Qi et~al.(2020{\natexlab{a}})Qi, Guo, Xu, Jin, and
  Yang]{qi2020practical}
Qi~Qi, Zhishuai Guo, Yi~Xu, Rong Jin, and Tianbao Yang.
\newblock A practical online method for distributionally deep robust
  optimization.
\newblock \emph{arXiv preprint arXiv:2006.10138}, 2020{\natexlab{a}}.

\bibitem[Qi et~al.(2020{\natexlab{b}})Qi, Xu, Jin, Yin, and
  Yang]{qi2020attentional}
Qi~Qi, Yi~Xu, Rong Jin, Wotao Yin, and Tianbao Yang.
\newblock Attentional biased stochastic gradient for imbalanced classification.
\newblock \emph{arXiv preprint arXiv:2012.06951}, 2020{\natexlab{b}}.

\bibitem[Ren et~al.(2018)Ren, Zeng, Yang, and Urtasun]{ren2018learning}
Mengye Ren, Wenyuan Zeng, Bin Yang, and Raquel Urtasun.
\newblock Learning to reweight examples for robust deep learning.
\newblock In \emph{International Conference on Machine Learning}, 2018.

\bibitem[Rezaei et~al.(2020)Rezaei, Fathony, Memarrast, and
  Ziebart]{rezaei2019fair}
Ashkan Rezaei, Rizal Fathony, Omid Memarrast, and Brian~D Ziebart.
\newblock Fairness for robust log loss classification.
\newblock In \emph{Proceedings of the AAAI Conference on Artificial
  Intelligence}, 2020.

\bibitem[Rockafellar and Uryasev(2002)]{rockafellar2002conditional}
R~Tyrrell Rockafellar and Stanislav Uryasev.
\newblock Conditional value-at-risk for general loss distributions.
\newblock \emph{Journal of Banking \& Finance}, 2002.

\bibitem[Rockafellar et~al.(2000)Rockafellar, Uryasev,
  et~al.]{rockafellar2000optimization}
R~Tyrrell Rockafellar, Stanislav Uryasev, et~al.
\newblock Optimization of conditional value-at-risk.
\newblock \emph{Journal of risk}, 2000.

\bibitem[Roh et~al.(2020)Roh, Lee, Whang, and Suh]{roh2020fr}
Yuji Roh, Kangwook Lee, Steven~Euijong Whang, and Changho Suh.
\newblock Fr-train: A mutual information-based approach to fair and robust
  training.
\newblock In \emph{International Conference on Machine Learning}, 2020.

\bibitem[Rögnvaldsson(2013)]{hiv1}
Thorsteinn Rögnvaldsson.
\newblock {UCI} repository of machine learning databases.
\newblock
  \emph{https://archive.ics.uci.edu/ml/datasets/HIV-1+protease+cleavage}, 2013.

\bibitem[Salamatian et~al.(2019)Salamatian, Liu, Beirami, and
  M{\'e}dard]{salamatian2019mismatched}
Salman Salamatian, Litian Liu, Ahmad Beirami, and Muriel M{\'e}dard.
\newblock Mismatched guesswork.
\newblock \emph{arXiv preprint arXiv:1907.00531}, 2019.

\bibitem[Samadi et~al.(2018)Samadi, Tantipongpipat, Morgenstern, Singh, and
  Vempala]{samadi2018price}
Samira Samadi, Uthaipon Tantipongpipat, Jamie~H Morgenstern, Mohit Singh, and
  Santosh Vempala.
\newblock The price of fair {PCA}: One extra dimension.
\newblock In \emph{Advances in Neural Information Processing Systems}, 2018.

\bibitem[Shapiro et~al.(2014)Shapiro, Dentcheva, and
  Ruszczy{\'n}ski]{shapiro2014lectures}
Alexander Shapiro, Darinka Dentcheva, and Andrzej Ruszczy{\'n}ski.
\newblock \emph{Lectures on Stochastic Programming: Modeling and Theory}.
\newblock 2014.

\bibitem[Shen and Li(2010)]{shen2010dual}
Chunhua Shen and Hanxi Li.
\newblock On the dual formulation of boosting algorithms.
\newblock \emph{IEEE Transactions on Pattern Analysis and Machine
  Intelligence}, 2010.

\bibitem[Shrivastava et~al.(2016)Shrivastava, Gupta, and
  Girshick]{shrivastava2016training}
Abhinav Shrivastava, Abhinav Gupta, and Ross Girshick.
\newblock Training region-based object detectors with online hard example
  mining.
\newblock In \emph{Conference on Computer Vision and Pattern Recognition},
  2016.

\bibitem[Shu et~al.(2019{\natexlab{a}})Shu, Xie, Yi, Zhao, Zhou, Xu, and
  Meng]{shu2019meta}
Jun Shu, Qi~Xie, Lixuan Yi, Qian Zhao, Sanping Zhou, Zongben Xu, and Deyu Meng.
\newblock Meta-weight-net: Learning an explicit mapping for sample weighting.
\newblock In \emph{Advances in Neural Information Processing Systems},
  2019{\natexlab{a}}.

\bibitem[Shu et~al.(2019{\natexlab{b}})Shu, Xie, Yi, Zhao, Zhou, Xu, and
  Meng]{sun2019mw-net}
Jun Shu, Qi~Xie, Lixuan Yi, Qian Zhao, Sanping Zhou, Zongben Xu, and Deyu Meng.
\newblock Meta-weight-net: Learning an explicit mapping for sample weighting.
\newblock In \emph{Advances in Neural Information Processing Systems},
  2019{\natexlab{b}}.

\bibitem[Siegmund(1976)]{siegmund1976importance}
David Siegmund.
\newblock Importance sampling in the monte carlo study of sequential tests.
\newblock \emph{The Annals of Statistics}, 1976.

\bibitem[Sinha et~al.(2018)Sinha, Namkoong, and Duchi]{sinha2017certifying}
Aman Sinha, Hongseok Namkoong, and John Duchi.
\newblock Certifying some distributional robustness with principled adversarial
  training.
\newblock In \emph{International Conference on Learning Representations}, 2018.

\bibitem[S{\l}owik and Bottou(2022)]{slowik2022distributionally}
Agnieszka S{\l}owik and L{\'e}on Bottou.
\newblock On distributionally robust optimization and data rebalancing.
\newblock In \emph{International Conference on Artificial Intelligence and
  Statistics}, 2022.

\bibitem[Soma and Yoshida(2020)]{soma2020statistical}
Tasuku Soma and Yuichi Yoshida.
\newblock Statistical learning with conditional value at risk.
\newblock \emph{arXiv preprint arXiv:2002.05826}, 2020.

\bibitem[Stelmakh et~al.(2019)Stelmakh, Shah, and
  Singh]{stelmakh2019peerreview4all}
Ivan Stelmakh, Nihar~B Shah, and Aarti Singh.
\newblock {PeerReview4All}: Fair and accurate reviewer assignment in peer
  review.
\newblock In \emph{Algorithmic Learning Theory}, 2019.

\bibitem[Sypherd et~al.(2019)Sypherd, Diaz, Cava, Dasarathy, Kairouz, and
  Sankar]{sypherd2019tunable}
Tyler Sypherd, Mario Diaz, John~Kevin Cava, Gautam Dasarathy, Peter Kairouz,
  and Lalitha Sankar.
\newblock A tunable loss function for robust classification: Calibration,
  landscape, and generalization.
\newblock \emph{arXiv preprint arXiv:1906.02314}, 2019.

\bibitem[Szabo et~al.(2021)Szabo, Jamali-Rad, and Mannava]{szabo2021tilted}
Attila Szabo, Hadi Jamali-Rad, and Siva-Datta Mannava.
\newblock Tilted cross entropy ({TCE}): Promoting fairness in semantic
  segmentation.
\newblock \emph{arXiv preprint arXiv:2103.14051}, 2021.

\bibitem[Szegedy et~al.(2016)Szegedy, Vanhoucke, Ioffe, Shlens, and
  Wojna]{szegedy2016rethinking}
Christian Szegedy, Vincent Vanhoucke, Sergey Ioffe, Jon Shlens, and Zbigniew
  Wojna.
\newblock Rethinking the inception architecture for computer vision.
\newblock In \emph{Conference on Computer Vision and Pattern Recognition},
  2016.

\bibitem[Tantipongpipat et~al.(2019)Tantipongpipat, Samadi, Singh, Morgenstern,
  and Vempala]{tantipongpipat2019multi}
Uthaipon Tantipongpipat, Samira Samadi, Mohit Singh, Jamie~H Morgenstern, and
  Santosh Vempala.
\newblock Multi-criteria dimensionality reduction with applications to
  fairness.
\newblock In \emph{Advances in Neural Information Processing Systems}, 2019.

\bibitem[Veit et~al.(2017)Veit, Alldrin, Chechik, Krasin, Gupta, and
  Belongie]{veit2017learning}
Andreas Veit, Neil Alldrin, Gal Chechik, Ivan Krasin, Abhinav Gupta, and Serge
  Belongie.
\newblock Learning from noisy large-scale datasets with minimal supervision.
\newblock In \emph{Conference on Computer Vision and Pattern Recognition},
  2017.

\bibitem[Volpi et~al.(2018)Volpi, Namkoong, Sener, Duchi, Murino, and
  Savarese]{volpi2018generalizing}
Riccardo Volpi, Hongseok Namkoong, Ozan Sener, John~C Duchi, Vittorio Murino,
  and Silvio Savarese.
\newblock Generalizing to unseen domains via adversarial data augmentation.
\newblock In \emph{Advances in Neural Information Processing Systems}, 2018.

\bibitem[Wainwright and Jordan(2008)]{wainwright2008graphical}
Martin~J Wainwright and Michael~I Jordan.
\newblock Graphical models, exponential families, and variational inference.
\newblock \emph{Foundations and Trends{\textregistered} in Machine Learning},
  2008.

\bibitem[Wainwright et~al.(2005)Wainwright, Jaakkola, and
  Willsky]{wainwright2005new}
Martin~J Wainwright, Tommi~S Jaakkola, and Alan~S Willsky.
\newblock A new class of upper bounds on the log partition function.
\newblock \emph{IEEE Transactions on Information Theory}, 2005.

\bibitem[Wang et~al.(2016{\natexlab{a}})Wang, Liu, and
  Fang]{wang2016accelerating}
Mengdi Wang, Ji~Liu, and Ethan Fang.
\newblock Accelerating stochastic composition optimization.
\newblock In \emph{Advances in Neural Information Processing Systems},
  2016{\natexlab{a}}.

\bibitem[Wang et~al.(2017)Wang, Fang, and Liu]{wang2017stochastic}
Mengdi Wang, Ethan~X Fang, and Han Liu.
\newblock Stochastic compositional gradient descent: algorithms for minimizing
  compositions of expected-value functions.
\newblock \emph{Mathematical Programming}, 2017.

\bibitem[Wang et~al.(2013)Wang, Jiang, Huang, and Zhang]{wang2013robust}
Xueqin Wang, Yunlu Jiang, Mian Huang, and Heping Zhang.
\newblock Robust variable selection with exponential squared loss.
\newblock \emph{Journal of the American Statistical Association}, 2013.

\bibitem[Wang et~al.(2016{\natexlab{b}})Wang, Oates, and Lo]{wang2016adaptive}
Zhiguang Wang, Tim Oates, and James Lo.
\newblock Adaptive normalized risk-averting training for deep neural networks.
\newblock In \emph{AAAI Conference on Artificial Intelligence},
  2016{\natexlab{b}}.

\bibitem[Weyl(1912)]{weyl1912asymptotische}
Hermann Weyl.
\newblock Das asymptotische verteilungsgesetz der eigenwerte linearer
  partieller differentialgleichungen (mit einer anwendung auf die theorie der
  hohlraumstrahlung).
\newblock \emph{Mathematische Annalen}, 1912.

\bibitem[Yang and Rosenthal(2017)]{yang2017complexity}
Jun Yang and Jeffrey~S Rosenthal.
\newblock Complexity results for mcmc derived from quantitative bounds.
\newblock \emph{arXiv preprint arXiv:1708.00829}, 2017.

\bibitem[Yang et~al.(2010)Yang, Xu, White, Schuurmans, and Yu]{yang2010relaxed}
Min Yang, Linli Xu, Martha White, Dale Schuurmans, and Yao-liang Yu.
\newblock Relaxed clipping: A global training method for robust regression and
  classification.
\newblock In \emph{Advances in Neural Information Processing Systems}, 2010.

\bibitem[Yeh and Lien(2009)]{yeh2009comparisons}
I-Cheng Yeh and Che-hui Lien.
\newblock The comparisons of data mining techniques for the predictive accuracy
  of probability of default of credit card clients.
\newblock \emph{Expert Systems with Applications}, 2009.

\bibitem[Yu et~al.(2012)Yu, Aslan, and Schuurmans]{yu2012polynomial}
Yao-liang Yu, {\"O}zlem Aslan, and Dale Schuurmans.
\newblock A polynomial-time form of robust regression.
\newblock In \emph{Advances in Neural Information Processing Systems}, 2012.

\bibitem[Zafar et~al.(2017)Zafar, Valera, Gomez~Rodriguez, and
  Gummadi]{zafar2017fairness}
Muhammad~Bilal Zafar, Isabel Valera, Manuel Gomez~Rodriguez, and Krishna~P
  Gummadi.
\newblock Fairness beyond disparate treatment \& disparate impact: Learning
  classification without disparate mistreatment.
\newblock In \emph{Conference on World Wide Web}, 2017.

\bibitem[Zafar et~al.(2019)Zafar, Valera, Gomez-Rodriguez, and
  Gummadi]{zafar2019fairness}
Muhammad~Bilal Zafar, Isabel Valera, Manuel Gomez-Rodriguez, and Krishna~P
  Gummadi.
\newblock Fairness constraints: A flexible approach for fair classification.
\newblock \emph{Journal of Machine Learning Research}, 2019.

\bibitem[Zhang et~al.(2017)Zhang, Bengio, Hardt, Recht, and
  Vinyals]{zhang2016understanding}
Chiyuan Zhang, Samy Bengio, Moritz Hardt, Benjamin Recht, and Oriol Vinyals.
\newblock Understanding deep learning requires rethinking generalization.
\newblock In \emph{International Conference on Learning Representations}, 2017.

\bibitem[Zhang and Sabuncu(2018)]{zhang2018generalized}
Zhilu Zhang and Mert Sabuncu.
\newblock Generalized cross entropy loss for training deep neural networks with
  noisy labels.
\newblock In \emph{Advances in Neural Information Processing Systems}, 2018.

\bibitem[Zhou et~al.(2021)Zhou, Wang, and Bilmes]{zhou2021robust}
Tianyi Zhou, Shengjie Wang, and Jeff Bilmes.
\newblock Robust curriculum learning: from clean label detection to noisy label
  self-correction.
\newblock In \emph{International Conference on Learning Representations}, 2021.

\bibitem[Zhu et~al.(2023)Zhu, G{\"u}rb{\"u}zbalaban, and
  Ruszczy{\'n}ski]{zhu2023distributionally}
Landi Zhu, Mert G{\"u}rb{\"u}zbalaban, and Andrzej Ruszczy{\'n}ski.
\newblock Distributionally robust learning with weakly convex losses:
  Convergence rates and finite-sample guarantees.
\newblock \emph{arXiv preprint arXiv:2301.06619}, 2023.

\end{thebibliography}

\end{document}